\journal{Computational Statistics \& Data Analysis}
\newtheorem{proposition}{Proposition}
\newtheorem{corollary}{Corollary}
\newtheorem{theorem}{Theorem}
\theoremstyle{definition} 
\newtheorem{definition}{Definition}
\newtheorem{remark}{Note}
\newcommand{\cNML}{c_{~\textnormal{NML}}}
\newcommand{\cMODL}{c_{~\textnormal{Enum}}}
\newcommand{\cGMODL}{c_{~\textnormal{G-Enum}}}
\newcommand{\cNMLindex}{\log  \binom{E}{K-1}}
\newcommand{\cNMLCOMP}{\log  ~{\mathcal{R}}^n_{\mathcal{M}}}
\newcommand{\cNMLlikelihoodpartition}{\log  \frac{n^n}{{h_1}^{h_1} \dotsm {h_K}^{h_K} }}
\newcommand{\cMODLindex}{\log ^* K}
\newcommand{\cMODLcutpointchoice}{\log  \binom{E+K-1}{K-1}}
\newcommand{\cMODLpartitionchoice}{\log  \binom{n+K-1}{K-1}}
\newcommand{\cMODLlikelihoodpartition}{\log  \frac{n!}{h_1!... h_K!}}
\newcommand{\clikelihooddistrib}{\sum^K_{k=1} h_k \log  E_k}
\newcommand{\cGMODLcutpointchoice}{\log^* G + \log  {{G+K-1}\choose{K-1}}}
\newcommand{\NMLname}{\texttt{NML}\xspace}
\newcommand{\ENUMname}{\texttt{Enum}\xspace}
\newcommand{\GENUMname}{\texttt{G-Enum}\xspace}
\newif\iffastdraft
\begin{document}

\begin{frontmatter}

\title{Fast and fully-automated histograms for large-scale data sets}

\author[univ1,company]{Valentina Zelaya Mendizábal}
\ead{valentina.zelaya@etu.univ-paris1.fr}

\author[company]{Marc Boullé}
\ead{marc.boulle@orange.com}

\author[univ2]{Fabrice Rossi\corref{mycorrespondingauthor}}
\cortext[mycorrespondingauthor]{Corresponding author}
\ead{Fabrice.Rossi@dauphine.psl.eu}

\address[univ1]{SAMM EA 4543, Université Paris 1 Panthéon-Sorbonne}
\address[univ2]{CEREMADE, CNRS, UMR 7534, Université Paris-Dauphine,
PSL University, 75016 Paris, France}
\address[company]{Orange Labs}

\begin{abstract}
  G-Enum histograms are a new fast and fully automated method for
  irregular histogram construction. By framing histogram construction
  as a density estimation problem and its automation as a model
  selection task, these histograms leverage the \emph{Minimum
    Description Length principle (MDL)} to derive two different model
  selection criteria. Several proven theoretical results about these
  criteria give insights about their asymptotic behavior and are used 
  to speed up their optimisation. These insights, combined to
  a greedy search heuristic, are used to construct histograms in
  linearithmic time rather than the polynomial time incurred by
  previous works. The capabilities of the proposed MDL density
  estimation method are illustrated with reference to other fully
  automated methods in the literature, both on synthetic and large
  real-world data sets.
\end{abstract}

\begin{keyword}
Density estimation \sep Histograms \sep Model selection \sep Minimum description length
\end{keyword}

\end{frontmatter}


\section{Introduction}
Histograms are popular non-parametric density estimators available in all
statistical computing packages. They are particularly adapted for univariate
visualisation (see for instance \cite{Zubiaga2016GraphicalPO}) and can serve as the
starting point for more complex analyses. Histograms are also quite useful
because they provide a compressed representation of the
distribution of a random variable. This property has been used for instance in
database query optimization, as illustrated in \cite{10.1093/comjnl/45.5.494,IOANNIDIS200319}. 

In theory, histograms require very few parameters and can adapt to
any kind of distribution given enough bins. In practice however, the choice of the
binning can have an unforeseeable effect on the accuracy of data estimation,
on the readability of the visualisation and on the size of the representation. 

Numerous strategies exist to automatically select
the number of bins in a regular histogram with equal width bins (for instance the
pioneering Sturges' formula \citep{SturgesRule1926} and the Freedman-Diaconis'
rule \citep{Freedman1981}). A major problem with regular histograms is that regions with higher or lower densities are treated the same, which gives histograms their limited reputation. For complex distributions, irregular histograms with bins of different widths are more adapted \citep{Rissanen1992-densityestimation}, but fully automated construction methods are scarce or limited, as pointed out in
\cite{Davies2009,RMG2010}: many methods rely on user adjustable
parameters, while others have high computational loads that hinder their use on large scale data. Scalable and fully automated approaches that specify the location, number and widths of histogram bins, based only on the observed properties of the data, are still rare overall.

Although alternative methods of density estimation, such as kernel density
estimators, are usually recommended as a fitting solution to this drawback, we
argue that histograms can stay relevant for density estimation if the choice
of binning is done properly, especially because of the ease with which they
are interpreted \citep{Zubiaga2016GraphicalPO}. Most fully automated methods in the literature
view histogram construction as a model selection task and implement it via
some form of penalized quality criterion that balances the quality of the
histogram as a representation of the data with its complexity. Among these,
the \emph{Minimum Description Length principle} (MDL
\citep{Rissanen78,Grunwald07}) provides a sound general framework to implement
model selection. The key idea of MDL is that any regularity in the data can be
used to ``compress the data'', i.e. to describe it in a shorter manner. More
formally, the best choice for a model and its parameters is the one that
minimizes the coding length of the model parameters and of the data given the
model.

MDL has been applied to histogram construction, using different derivations of
the coding length. For instance, the Bayesian Mixture criterion and a uniform
prior were used to construct irregular histograms in
\cite{Rissanen1992-densityestimation}. Another work formalized a MDL criterion
via the {\em Normalized Maximum likelihood (NML)} distribution - which, unlike
the Bayesian Mixture criterion, has several desirable optimality properties
\citep{pmlr-v2-kontkanen07a}. Notice however that the NML criterion for
histograms relies on a single user-defined parameter, the \emph{accuracy} at
which the data is to be approximated, $\epsilon$. In addition, while the
experimental results reported on Gaussian distributions are very good, this
approach has some scalability limitations. These issues are consequences of the
computational complexities of the NML criteria evaluation and of the search
for the optimal model in the space of all histograms.

We introduce a new MDL based histogram construction method that
aims to reduce the computational burden of previous solutions and to enable
automatic tuning of the accuracy parameter $\epsilon$. Our formulation is
based on a Bayesian maximum \emph{a posteriori} interpretation of MDL. The
criterion derived from this formulation does not contain the NML parametric
complexity term from \cite{pmlr-v2-kontkanen07a} which enables both a simpler
analysis and a faster evaluation. Leveraging the theoretical properties of the
proposed criterion, we derive a simple search heuristic over the space of
histograms with a  linearithmic time
complexity ($\mathcal{O}(n \log  n)$) rather than a polynomial one. As previous MDL based criteria, our
criterion uses an accuracy parameter. We study the effect of this parameter on
the histograms obtained by optimising our criterion. Then we introduce a new
\emph{granulated} criterion, derived from the base one, that automates the
choice of $\epsilon$.

The rest of this paper is organized as follows. Section \ref{sec:related-work}
provide a short overview of other fully automated histogram construction
methods. Section \ref{sec:formulation} specifies in details the
histogram construction problem. Section \ref{sec:nml} describes the NML approach
from \cite{pmlr-v2-kontkanen07a} and discusses its limitations. Section
\ref{sec:gmodl} introduces two enumerative criteria for histogram
construction, while their theoretical properties and consistency are discussed
in section \ref{sec:properties}. An efficient optimisation algorithm that
benefits from these properties is then presented in section
\ref{sec:optimisation}. In section \ref{sec:experiments}, experiments
demonstrate the performance of all three criteria and other state-of-the-art methods on synthetic and real large-scale datasets. Concluding remarks are given in section
\ref{sec:conclusion}.

\section{Related work}\label{sec:related-work}
As pointed out in \cite{Birge2006} and \cite{Davies2009}, among others, almost all non-heuristic
automatic histogram construction is based on a notion of risk. The
goal is to minimise
\begin{equation}\label{eq:risk}
R_n(f, \hat{f}_{\theta}, l)=\mathbb{E}_f\left[l\left(f,\hat{f}_{\theta}(x_1,\ldots,x_n) \right)\right],
\end{equation}
where $f$ is the true unknown density of the data, $x_1,\ldots,$ $x_n$ a 
sample from $f$, $l$ a loss function, and $\hat{f}_{\theta}$ the estimation
procedure with its parameters $\theta$ (e.g. the number of bins in a regular
histogram). An ideal choice of $\theta$ would minimise the risk. 

Most statistical computing software still include simple
methods, such as the Sturges rule \citep{SturgesRule1926}, which divides the
data domain into $K^*=1 + \log _2 n$ intervals. Other more
principled approaches derived from asymptotic analyses of the risk are also included. For
instance, Scott's rule \citep{Scott1979OnOA} is derived from the
asymptotic behaviour of the risk for the squared L2-loss $l(f,
g)=\|f-g\|_2^2$. Notice that this type of analysis relies on smoothness
assumptions on the true density $f$, which are used to derive the optimal bin
width from characteristics of this unknown density. Those characteristics are
in turn estimated from the data either using the Normal distribution as a
reference as in Scott's rule, using heuristic consideration as in Freedman-Diaconis'
rule \citep{Freedman1981} or based on plug-in methods as in
\cite{Wand1997plugin}. See \cite{Sulewksi2020} and the references therein
for other examples of such approaches. Notice however that none of these approaches
can be used to build {\em irregular} histograms. As pointed out earlier, being able to have intervals of varying widths is advantageous to describe denser and narrower ranges of values.  

An alternative to asymptotic analyses is to leverage the
cross-validation (CV) principle to directly estimate the risk. For instance
\cite{Rudemo1982EmpiricalChoice} proposes to use leave-one-out estimates of
the risk. Those can be computed in closed-form for the squared L2-loss and the
Kullback-Leibler loss (see \cite{hall1990akaike} for the latter). Other
versions of the CV principle have been applied to risk estimation and model
selection, in particular the leave-p-out cross-validation which generalizes
leave-one-out and the V-fold cross-validation. While leave-p-out CV is
generally too computationally intensive as it needs evaluating $n \choose p$
models, Celisse and Robin provide in \cite{CelisseRobin2008} an efficient
closed-form formula for the squared L2-loss of histograms, including irregular
ones. The same paper shows that the V-fold CV has a larger variance than the
equivalent leave-p-out CV (i.e. when $p=\frac{n}{V}$) and is therefore less
reliable. Additional results on leave-p-out CV are proved in
\cite{Celisse2014Optimal}. It is shown that the leave-one-out CV is optimal in
terms of risk estimation (for the squared L2-loss) but not in terms of model
selection, i.e. when the risk estimate is used to select $\theta$ in
\eqref{eq:risk}. In this latter task a leave-p-out CV is preferable and p
should be adapted both to the size of the data and to the complexity of the
model collection. Unfortunately, no rule for choosing an optimal value is currently known and experiments on simulated data confirm a strong link between
$\frac{p}{n}$ and the quality of the associated histogram, see \cite{Celisse2014Optimal}. 

Another way to address limitations of the simple techniques consists in using
a penalized likelihood approach, a very common approach in model selection
problems. The classical penalties are Akaike's information
criterion AIC \citep{Akaike1998} (used by \cite{Taylor1987AIC} for histograms) and
Schwarz' Bayesian information criterion BIC \citep{Schwarz1978BIC}. As shown in
\cite{RMG2010} AIC systematically underpenalises complex histograms (even
regular ones) which makes it unsuitable for model section in this particular
task (BIC does not suffer from this issue). Like simpler rules, AIC and BIC are
based on asymptotic considerations only. This motivated Birgé and Rozenholc to use
non asymptotic results on penalized maximum likelihood from
\cite{Castellan1999} to derive a modified AIC
criterion for regular histograms \citep{Birge2006}. An even stronger penalty is
needed for irregular histograms, as shown in \cite{RMG2010}. 

Penalized estimators can also be constructed using Rissanen's approaches to
minimal complexity. Hall and Hannan derive in \cite{HallHannan1988Stochastic}
two such criteria: one is based on Rissanen's stochastic complexity ideas
\citep{Rissanen1986} and the other one on Rissanen's minimum description length
(MDL, \cite{Rissanen78}). An extension of the stochastic complexity based
model to irregular histograms is proposed in
\citep{Rissanen1992-densityestimation}. As pointed out in the introduction, another MDL
approach is proposed \citep{pmlr-v2-kontkanen07a}, using Normalized Maximum
Likelihood (NML). We will describe in more detail this approach in Section
\ref{sec:nml}. 

Penalized likelihood approaches can generally be seen from a Bayesian point of
view as instances of the maximum a posteriori (MAP) principle. Several authors have
argued the advantages of using a MAP approach for histogram construction. For instance,
the Bayesian block method \citep{Scargle2013} addresses optimal segmentation
with a MAP approach and proposes to apply its general framework to irregular
histogram construction. A similar solution that uses a different likelihood and a
different prior on parameters is proposed in \cite{knuth2019histograms} for
regular histograms. 

Most of other methods introduced are for regular histograms (see for
instance references in \cite{Birge2006,Davies2009,RMG2010}). Among methods
adapted to irregular histograms, the taut string approach
\citep{Davies2004,Davies2009} is interesting as it provides one of the few
fully automated construction methods. It has been shown to give
good results in extensive benchmarks, especially in terms of identifying the
modes of a distribution \citep{Davies2009,RMG2010}. The approach
is based on constructing a piecewise linear spline of minimal length, the taut
string, that lies in a tube around the empirical cumulative distribution
function. A histogram is constructed from the derivative of the taut string,
see \cite{Davies2004,Davies2009} for details. While the method is fully
automated, it leverages the so-called $\kappa$-order Kuiper metrics whose
parameters are set to some default values based on the data size via a
heuristic. As stated by the authors, the performance of the taut string
approach depends on those parameters. In addition, histograms
built by this approach are not maximum likelihood histograms:
densities associated to intervals are not directly derived from data
frequencies in those intervals.

Another approach that does not fit directly into the penalized likelihood
framework is the essential histogram proposed by
\cite{LiEtAl2020EssentialHistogram}. The main idea of the method is to
build a set of histograms that are close enough to the empirical
distribution with respect to a collection of statistical tests. The
essential histogram is the simplest of the collection. While
appealing, the method uses a user defined confidence region specified
by a unique parameter whose influence on the result cannot be
discarded. The method is therefore not fully automated. 

In the present section, we focused on the criteria and heuristics proposed to
automatically select a possibly irregular histogram from the data. We have
deliberately postponed the discussion of the algorithmic considerations to
the next section.

\section{Irregular histograms and dynamic programming}\label{sec:formulation}
A histogram on an interval $[a, b]$ is associated to a partition of $[a, b]$
into $K$ intervals $\{ [a, c_1],]c_1, c_2], ..., ]c_{K-1},b]\}$. Optimising a
quality criterion for a histogram implies to optimise over a subset of such
partitions. The case of regular histograms is very simple, as there is only one
partition associated to a number of sub-intervals $K$. On the contrary,
irregular histograms are associated to an infinite set of partitions, which is
not tractable in general.

\subsection{Restricted irregular histograms}\label{sec:restr-irreg-hist}
There are essentially two ways to address this problem. The data driven
approach consists in restricting the sub-interval endpoints $c_k$ to be
observations, as in \cite{Davies2009,RMG2010}. Additional regularisation can
be implemented by forbidding too short intervals but experiments in
\cite{RMG2010} show this has a limited and mostly negative impact on performance. The second approach consists in using a regular grid from which
the endpoints are selected
\citep{CelisseRobin2008,pmlr-v2-kontkanen07a,Rissanen1992-densityestimation}. This
can be seen as enforcing an approximation accuracy on the observations, as
detailed below. In both cases, the optimisation problem is now restricted to a
finite set of partitions.

In this paper we use the fixed grid approach. We assume (for now) a given
approximation accuracy $\epsilon$. An observation $x$ in the interval $[x_{\min},
x_{\max}]$  is approximated by the closest value $\widetilde{x}$
in $\mathcal{X}=\{ x_{\min} + t\epsilon ; t = 0,... , E-1\}$, $E =
1+\dfrac{L}{\epsilon}$ and  $L = x_{\max} - x_{\min}$ is
the `{\em domain length}' of the data. We restrict the possible accuracies
such that $E \in \mathbb{N}$. Following \cite{pmlr-v2-kontkanen07a}, we define
the set of possible endpoints for sub-intervals as
\begin{equation*}
\mathcal{C} = \left\{x_{\min} - \frac{\epsilon}{2} + t\epsilon ; t = 0,... , E \right\}.
\end{equation*}
These endpoints define $E$ {\em
  elementary bins}  of length $\epsilon$, which we call $\epsilon$-bins (see
figure \ref{fig:interval-choices}).
\begin{figure}[htbp]
\centering
\includegraphics[width=\linewidth]{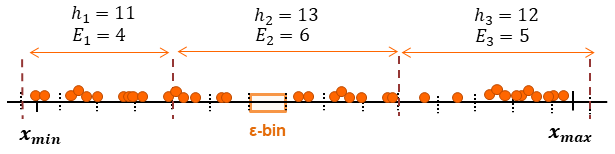}
\caption{A possible choice of intervals, with their respective data-counts
  $(h_k)_{1\leq k\leq K}$ and lengths expressed in number of $\epsilon$-bins $(E_k)_{1\leq k\leq
  K}$.}
\label{fig:interval-choices}
\end{figure}
They are the building blocks of histogram intervals: possible grouping of consecutive $\epsilon$-bins into $K$ intervals, with $K$ ranging from 1 to $E$, defines a
histogram. A histogram $\mathcal{M}$ is fully specified by $\mathcal{M} = (K,
C, (h_k)_{1 \leq k \leq K})$ where $K$ is the number of intervals,
$C=(c_k)_{0\leq k\leq K}\subset
\mathcal{C}$ are the endpoints and $h_k$ are the counts in each bin (we have
$c_0=x_{\min} - \epsilon/2$ and $c_K=x_{\max}+\epsilon/2$). Notice
that such a histogram is not bound to any given data set, despite the use of the
term ``counts'' for the $h_k$. Most of the methods studied in this
paper, including our proposals, follow a maximum likelihood principle
and therefore given $K$ and $\mathcal{C}$, the $h_k$ will be directly
computed from the observations. Considering arbitrary histograms is
nevertheless useful to include non maximum likelihood methods such as
the taut string approach from \cite{Davies2004,Davies2009}, and to
define easily prior distributions on histograms (see Section 
\ref{sec:prior-distribution}). 

The piecewise constant density associated to $\mathcal{M}= (K,
C, (h_k)_{1 \leq k \leq K})$ is given by
\begin{align}
  \label{eq:density}
  f_{\mathcal{M}}(x)=&\frac{1}{n}\sum_{k=1}^K\frac{h_k}{c_k-c_{k-1}}\mathbb{I}_{]c_{k-1},
                       c_k]}(x),\\
  =&\frac{1}{n\epsilon}\sum_{k=1}^K\frac{h_k}{E_k}\mathbb{I}_{]c_{k-1},
                       c_k]}(x),\nonumber
\end{align}
where $E_k$ is the number of $\epsilon$-bins in interval $[c_{k-1}, c_k]$,
$n=\sum_{k=1}^Kh_k$ and $\mathbb{I}_A$ is the indicator function of the 
set $A$. Notice that the intervals are exclusive of their lower bound, even
for the first interval $]c_0, c_1]$. This is by virtue of the definition of
$c_0$ which is outside of the range of the data: no observation can be
missed despite the exclusion of $c_0$ from the formal definition of the
density. This enables us to keep a simple notation but it has no effect
on the rest of the analysis.  

\subsection{Dynamic programming}\label{sec:dynamic-programming}
While the two classical restrictions over the partitions we highlighted (data-driven grids and fixed grids) make the optimisation space finite, it remains large. There are $E\choose K-1$ different partitions of $E$ elementary bins in $K$ intervals, and $E$ may be big.
However, most criteria
used in the literature are additive and can therefore be optimised using the
dynamic programming principle \citep{Bellman1961Function}. This was proposed
originally in \cite{Kanazawa198OptimalVariable} for irregular histograms with
endpoints constrained to observations. Since then, dynamic programming has
been used systematically for all irregular histogram estimation techniques we
are aware of.

Provided the chosen criterion can be computed efficiently, dynamic programming
has a computational complexity of $\mathcal{O}(E^2 \cdot K_{max})$ when the
histograms have at most $K_{max}$ bins (and thus a maximal complexity of
$\mathcal{O}(E^3)$).

\subsection{Data driven grid versus fixed grid}\label{sec:data-driven-grid}
Both finite grids have advantages and drawbacks. Data driven grids do not
introduce an additional precision parameter but this comes with a potential
large computational cost. Dynamic programming has a computational
cost of $\mathcal{O}(n^2 \cdot K_{max})$ for a data driven grid with $n$
observations and a limit of $K_{max}$ bins in the histograms. If one sets $K_{max}=n$
to avoid introducing yet another parameter, data driven grids lead to a cubic
cost $\mathcal{O}(n^3)$ and thus do not scale to large data sets. To
circumvent this problem \citep{RMG2010} use a greedy clustering of the data
driven grid into $\max(n^{\frac{1}{3}},100)$ bins prior to applying a dynamic
programming approach. This blurs the distinction between data driven grids and
fixed ones, and introduces a new parameter (the cut-off value of 100 used in
\cite{RMG2010}). 

Another limitation of data driven grids is that they assume a perfect
representation of real numbers. This is acceptable for small data sets but for
large scale data sets, the probability of getting identical observations
increases and the limits of real number representation start to play a
role. This problem is discussed in
\cite{knuth2019histograms,KnuthEtAl2006Rounded} and in Section
\ref{sec:role-appr-accur}.

Finally in a Bayesian perspective, endpoints prior distributions have
to be specified. With a data driven grid, continuous densities are
needed while discrete distributions are needed when using a fixed
grid. As a consequence, specifying priors that favor simple histograms is easier in
the discrete case than in the continuous one. 

Fixed grids bypass the above issues, at the cost of introducing a new
accuracy parameter. We propose in Section \ref{sec:g-enum-criterion} a way to automate the choice of this parameter. This mitigates the drawbacks of the fixed grid, while keeping all its advantages.

\section{An NML criterion for histogram density estimation} \label{sec:nml}
Among minimum description length approaches, Kontkanen and Myllymäki's solution
\citep{pmlr-v2-kontkanen07a} derived using the Normalized Maximum
Likelihood (NML) is the closest to our proposal. This approach consists in
minimising a criterion $\cNML$ over all possible histograms defined on a fixed
grid as in Section \ref{sec:restr-irreg-hist}. The criterion is evaluated for
a data set $D=(x_i)_{1\leq i\leq n}$ of $n$ observations. 

To ease the comparison of
this criterion to our own, we report the following simplified expression for
$\cNML$, using notations of Section \ref{sec:restr-irreg-hist}
(the derivation is provided in \ref{apx:KM-rewrite}):
\begin{align}\label{eq:cNML}
\cNML(\mathcal{M}|D)   = &\cNMLindex + \cNMLCOMP\\ 
& + \cNMLlikelihoodpartition +  \clikelihooddistrib,   \nonumber
\end{align} 
where $\cNMLCOMP$ is called the {\em NML parametric complexity}. Notice that
the $h_k$ are computed from the data set $D$ based on the histogram
specification (see Section \ref{sec:likelihood} for a discussion on this aspect). 

The NML parametric complexity can be interpreted as the logarithm of the number of
different distributions for a given model class \citep{Grunwald07}. The term
$\cNMLindex$ represents the code length of the choice of $K-1$ cut-points
among the $E$ possible candidates. Notice that this term is not
directly derived from the NML distribution but was added by the authors as a
way to index the different model choices, as recommended in \cite{Grunwald07}.
The last two terms represent the log-likelihood of the partition of $n$
entries into $K$ parts of size $(h_k)_{1 \leq k \leq K}$ and the log-likelihood of
data distribution within each interval respectively.

The NML density has several important theoretical optimality properties: it is
the {\em minimax optimal universal model} \citep{Shtarkov1987}. Additionally,
codes that are derived from it minimize the expected regret when choosing the
worst-case generating density model for the data \citep{Rissanen2001}. In
\cite{pmlr-v2-kontkanen07a}, the authors show how to
find the NML-optimal cut points via a dynamic programming in a polynomial time
with respect to $E$, the total number of cut-points available for
selection. Furthermore, their experiments showed that the minimization
of the \NMLname criterion produced good quality histograms of several Normal
distributions of $n=10,000$ samples, all for a fixed approximation accuracy
($\epsilon = 0.1$).

We argue however that the \NMLname approach has the following limitations.
\begin{description}
\item[\textbf{NML computation:}] the NML approach results in a
  criterion whose exact computation remains costly. The cost of computing the NML parametric
  complexity was reduced from $\mathcal{O}(n^2)$
  \citep{KontkanenBMRT03} to  $\mathcal{O}(n+K)$ algorithm in
  \cite{Kontkanen2007}. 
          
Approximations of the NML parametric complexity have been introduced to reduce its computation time. For instance a $\mathcal{O}(\sqrt{dn}+K)$ algorithm (where $d$
is the precision in digits) is given in \cite{Mononen2008}. Szpankowski
proposes in \cite{Szpankowski98} an approximate computation in  $O(1)$ time
but with an asymptotical error in $\mathcal{O}(n^{-3/2})$. The
trade-off of using these faster methods is either a loss in precision or an
error that is hard to evaluate non asymptotically w.r.t. n and K.

  Notice that the main issue of the computational cost of the NML complexity is
  induced by the need to evaluate it at least $\mathcal{O}(n)$ times to build an
  optimal histogram, leading to minimal total cost of $\mathcal{O}(n^2)$ for an
  exact calculation. 

	\smallskip
  \item[\textbf{Choice of $\epsilon$:}] Although the authors argue that the
    sole user-defined parameter $\epsilon$ does not play a role in the model selection
    process, we show on the contrary in Section \ref{sec:role-appr-accur} that its impact 
    cannot be overlooked. An automated choice is therefore needed. 
  \end{description}
Notice also that \citep{pmlr-v2-kontkanen07a} uses a dynamic programming
approach to optimise $\cNML$ which has a $\mathcal{O}(E^2 \cdot K_{max})$ or a
$\mathcal{O}(E^3)$ computational cost, as already explained in Section
\ref{sec:dynamic-programming}. 

We improve over the NML approach on those three issues by introducing in
Section \ref{sec:gmodl} an enumerative criterion that avoids the NML computation
cost, in Section \ref{sec:g-enum-criterion} a granulated version of the our
criterion that automates the choice of the precision parameter and finally in
Section \ref{sec:optimisation} greedy search heuristics that together reduce the
global computational cost from $\mathcal{O}(E^3)$ to $\mathcal{O}(n\log n)$. 

\section{G-Enum: a granulated enumerative criterion}\label{sec:gmodl}
This paper's first contribution is the introduction of a  granulated enumerative two-part code for histogram density estimation, which we call \GENUMname. It builds on a simpler enumerative approach, for which we introduce granularity later.  

\subsection{Enumerative criterion}
Our enumerative criterion for histogram model selection exploits a maximum a
posteriori Bayesian interpretation of the MDL principle which has the same
optimality properties as the NML distribution \citep{BoulleEtAlArxiv16}. In
this framework, the best model is the one that maximises the probability of
the model given the data, $p(\mathcal{M} |D)$ where $D=(x_i)_{1\leq i\leq n}$
is a data set of $n$ observations (we allow non unique values in $D$). This
formulation is well known to be equivalent to a penalised likelihood
approach. Indeed we have
\begin{equation*}
\log p(\mathcal{M} |D)=\log p(\mathcal{M})+\log p(D|\mathcal{M})-\log p(D).
\end{equation*}
As $\log p(D)$ does not depend on $\mathcal{M}$, maximising $p(\mathcal{M}
|D)$ is equivalent to maximising a compromise between a large likelihood $\log
p(D|\mathcal{M})$ and a small prior probability $\log p(\mathcal{M})$ for a
complex model and the reverse for a simple one. In practice, we minimise an enumerative
criterion 
\begin{equation}
\cMODL(\mathcal{M}|D)= -\log p(\mathcal{M}) - \log p(D|\mathcal{M}).
\end{equation}

\subsubsection{Prior distribution}\label{sec:prior-distribution}
Using notations from Section \ref{sec:restr-irreg-hist}, where $K$ is the number of intervals, $C$ is the set of endpoints of the histograms invertals and $\{h_k\}$ are the counts of values in each of these intervals,
we factor $p(\mathcal{M})$ as 
\begin{align*}
 p(\mathcal{M})= p(K) \cdot~ p(C|K) \cdot~ p &((h_k)_{1 \leq k \leq K} | K, C),
\end{align*}
without loss of generality. Then, we use a hierarchical prior on the
parameters and we assume conditional independence of $C$ and $\{h_k\}_{1 \leq
  k \leq K} $ given $K$. Thus, we have
\begin{align*}
 p(\mathcal{M})= p(K) \cdot~ p(C|K) \cdot~ p &((h_k)_{1 \leq k \leq K} | K),
\end{align*}
with the following components:
\begin{itemize}
\item \textbf{Number of intervals}~

  For the number of intervals $K$, we could use a uniform prior distribution
  leading to $p(K)=\frac{1}{E}$, but Rissanen's universal prior for integers \citep{rissanen1983}
  is more adapted as it favors small integers, i.e. simpler histograms. We
  have
  \begin{equation}
p(K)=\exp(-\cMODLindex),
\end{equation}
with 
$\log^* K=  \log _{2}\left(\kappa_{0}\right)+\sum_{j>1} \max \left(\log _{2}^{(j)}(K), 0\right)$,
where $\kappa_{0}=\sum_{p>1} 2^{-\log _{2}^{*}(p)}=2.865$ and $\log _{2}^{(j)}(K)$
is the $j$-th composition of $\log _{2}$, i.e. $\log _{2}^{(1)}(K)=\log
_{2}(K)$ and $\log _{2}^{(j)}(K)=\log_2(\log _{2}^{(j-1)}(K))$.

\medskip

For the rest of the parameters, we use uniform priors.

\medskip

\item \textbf{Interval composition}~

  Specifying interval endpoints $C$ is equivalent to specifying the number of
  elementary bins gathered into each interval. We use a uniform distribution
  over all the ordered partitions of the $E$ elementary bins into $K$
  contiguous subsets of size $E_1, E_2, \dots E_K$ such that $E_1 + E_2 + \dots + E_K=E$. We
  allow empty subsets, which leads to
\begin{equation}
p(C|K) = \frac{1}{\binom{E+K-1}{K-1}}.
\end{equation}
Notice that we could forbid empty subsets (i.e. identical endpoints) which
would lead to $p(C)=\frac{1}{\binom{E}{K-1}}$ but this later solution is non
monotone with respect to $K$ and favors values around $\frac{E}{2}$ which is
not a desirable property.

\item \textbf{Prior on $(h_k)_{1 \leq k \leq K}$}~

  For a given number of intervals $K$, the $(h_k)_{1 \leq k \leq K}$ specify
  the number of observations in each interval. A value for $\{h_k\}_{1 \leq k
    \leq K}$ is therefore a vector of $K$ non negative integers which sum to
  $n$. There are of $\binom{n+K-1}{K-1}$ such vectors and we use a uniform distribution
  over them, leading to 
\begin{equation}
p ((h_k)_{1 \leq k \leq K} | K) = \frac{1}{\binom{n+K-1}{K-1}}
\end{equation}
\end{itemize}

\subsubsection{Likelihood}\label{sec:likelihood}
The likelihood $p(D|\mathcal{M})$ is obtained using a generative model for
histograms specified by $\mathcal{M}$ which is also based on uniform
distributions. Importantly, we do not assume
the observations to be independent, contrarily to most of the histogram models. 

Notice also that the $\{h_k\}_{1 \leq k\leq K}$ in $\mathcal{M}$ are not
computed from the data but parameters. This means that $p(D|\mathcal{M})$ is
non zero if and only if $\mathcal{M}$ and $D$ are \emph{compatible}.
\begin{definition}
A data set $D=(x_i)_{1\leq i\leq n}$ and a histogram $\mathcal{M}= (K,
C, (h_k)_{1 \leq k \leq K})$ are \emph{compatible} if and only if 
\begin{equation*}
  \forall k\in\{1,\ldots, K\},
  \left|\left\{i\in\{1,\ldots, n\}\mid x_i\in ]c_{k-1},c_k] \right\}\right|=h_k,  
\end{equation*}
where $|S|$ denotes the cardinal of set $S$.
\end{definition}
Notice that in practice the data ``set'' $D$ can contain multiple times the
same value and thus we take into account the index set when counting observations in
this definition (see Section \ref{sec:role-appr-accur} for additional details
on this aspect).

Given $\mathcal{M}$, a data set $D$ is generated by a hierarchical
distribution based on uniform distributions.
\begin{enumerate}
\item Observations are generated by first choosing which interval is
  responsible for generating each individual observation. This corresponds to
  building a mapping $I_{map}$ from $1,\ldots,n$ to $1,\ldots, K$ specifying
  that $x_i$ will be generated in interval $I_{map}(i)$. $I_{map}$ is compatible
  with $\mathcal{M}$ if
  \begin{equation*}
\forall k \in\{1,\ldots, K\}, \left|\left\{i\in \{1,\ldots, n\}\mid
    I_{map}(i)=k\right\}\right|=h_k. 
\end{equation*}
There are $\frac{n!}{h_1!... h_K!}$ such mappings and we assume a uniform
distribution on them, that is
\begin{equation}
p(I_{map}|\mathcal{M})=\frac{h_1!... h_K!}{n!}
\end{equation}
\item Then given $I_{map}$ we generate observations assigned independently to distinct intervals, i.e.
  \begin{equation}
p(D|I_{map},\mathcal{M})=\prod_{k=1}^Kp((x_i)_{I_{map}(i)=k}|\mathcal{M}).
\end{equation}
\item In a given interval, observations are generated by choosing their elementary
  bins independently and uniformly. That is we assume
  \begin{equation}
p((x_i)_{I_{map}(i)=k}|\mathcal{M})=\left(\frac{1}{E_k}\right)^{h_k},
\end{equation}
as there are $E_k$ elementary bins in interval $k$ and we have $h_k$ data
points to generate. Notice than if $h_k=0$, then we can have $E_k=0$ as
in this case $\forall i,\ I_{map(i)}\neq k$ by compatibility. 
\end{enumerate}
Then when $\mathcal{M}$ and $D$ are compatible, we have
\begin{equation}
p(D| \mathcal{M}) = \frac{h_1!... h_K!}{n!} \cdot \prod^K_{k=1} \left( \frac{1}{E_k} \right)^{h_k}.
\end{equation}
The simple enumerative criterion for histogram density estimation is thus
given by
\begin{align}\label{eq:modl}
  \cMODL(\mathcal{M}|D)=& \cMODLindex+ \cMODLcutpointchoice \nonumber\\
&  + \cMODLpartitionchoice \nonumber\\
& + \cMODLlikelihoodpartition + \clikelihooddistrib.
\end{align}
We use the convention that $0\times \log 0=0$ to avoid the case
where $h_k=0$. Notice that equation \eqref{eq:modl} is valid only when $\mathcal{M}$ and
$D$ are compatible. When $\mathcal{M}$ and $D$ are not compatible,
$p(D|\mathcal{M})=0$ and we set accordingly
$\cMODL(\mathcal{M}|D)=+\infty$. As a consequence, histograms that are
not compatible with the data are excluded from the solution space of
our approach. This ensure that histograms obtained by minimizing the
proposed criterion use a classical maximum likelihood estimation of
the density in each of their intervals. 

\subsubsection{Comparison to the NML criterion}

A side by side comparison of the \ENUMname and \NMLname criteria terms is presented in table \ref{tab:comp}. 
\renewcommand{\arraystretch}{2.6}
\begin{table*}[htbp]
\centering
\caption{Term comparison of the \NMLname   and enumerative criteria}
\begin{tabular}{p{2.5cm}p{3.5cm}p{4.8cm}p{2.8cm}}
\hline
Crit. & Indexing terms & Multinomial terms &  Bin index terms\\\hline\hline
\NMLname \newline \cite{pmlr-v2-kontkanen07a}& $ \displaystyle \cNMLindex$ &$\displaystyle \cNMLCOMP +  \cNMLlikelihoodpartition$ & $ \displaystyle \clikelihooddistrib$ \\\hline
\ENUMname &  $\displaystyle \cMODLindex + \cMODLcutpointchoice$& $\displaystyle  \cMODLpartitionchoice + \cMODLlikelihoodpartition$ & $\displaystyle \clikelihooddistrib$\\ \hline
\end{tabular}
\label{tab:comp}
\end{table*}
\renewcommand{\arraystretch}{1.5}

The terms that represent the indexing of $\epsilon$-bins within each interval are exactly the same for both criteria. The differences lie in model indexing and the encoding of the multinomial distributions. 

\paragraph*{Model indexing terms}~~
As in the \NMLname   criterion, we too have a term that serves as an index for model families. In the \ENUMname criterion, model indexing is done through a $\cMODLcutpointchoice$ term. This term is monotone with $K$: it penalizes models with many intervals. The corresponding term in the \NMLname  criterion is not monotone and might thus favour choices with many cut-points (especially when $K\sim E$). 
 
\paragraph*{Multinomial terms}~~
These terms encode the multinomial distributions of $n$ observations
into $K$ intervals. Both versions are universal codes
\citep{Grunwald07} and have been proved to have the same optimality
properties \citep{BoulleEtAlArxiv16}. What sets them apart is that the
enumerative version is easier to compute and to analyze.

The classical \NMLname  parametric complexity ($\cNMLCOMP$) is replaced by a simpler
term in the \ENUMname approach ($\cMODLpartitionchoice$). As shown in
\cite{pmlr-v2-kontkanen07a}, the \NMLname  complexity can be computed exactly in
$\mathcal{O}(n+K)$ time. In stark contrast, the equivalent term in our
enumerative criterion can be computed in $\mathcal{O}(1)$ time. 

As we can see, both criteria are very similar, though the \ENUMname criterion is clearly simpler to compute and to analyse theoretically, using its closed form expression (see Section \ref{sec:properties}).

\subsection{G-Enum criterion}\label{sec:g-enum-criterion}
The \ENUMname criterion solves the computational issues induced by the NML
parametric complexity without introducing approximations, but we still have to
choose the approximation accuracy $\epsilon$ associated to the fixed grid. We
introduce \emph{granularity} to mitigate this issue. 

\subsubsection*{Granularity and $\epsilon$}
The main issue with $\epsilon$ is that it plays two roles. It serves as a way
to set a precision limit for the representation of real numbers, as discussed
in Section \ref{sec:data-driven-grid}. It also plays a crucial role in setting
the resolution of the histograms themselves through $E$. The first role is
more related to the data collection process and to the way the fixed grid is specified than to modelling, while the
second appears directly in the model selection process via $E$.  

We propose to explicitly separate those roles by introducing an intermediate parameter $G$,
the \emph{granularity}. For a given $E$, we assume that the numerical domain
can be split into $G$ bins ($1 \leq G \leq E$) of equal width. Each of these
new elementary bins, that we call {\em $g$-bins}, is composed of $g = E/G$
$\epsilon$-bins. Each of the intervals of any constructed histogram has a
length that is a multiple of these $g$-bins. In other words, each interval is
no longer composed of a multiple of $\epsilon$-bins but rather composed of
$G_k$ $g$-bins. To better grasp this new setting, Figure
\ref{fig:setting-granularity} shows the case where three intervals were made
from $g$-bins. In this illustration, we have $E=20$ and $G=5$, so that each
$g$-bin is composed of $4$ $\epsilon$-bins. 

\begin{figure}[htbp]
\centering
\includegraphics[width=\linewidth]{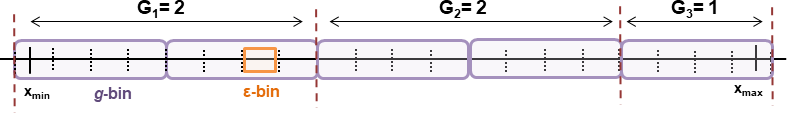}
\caption{Replacing $\epsilon$-bins by $g$-bins in the model building}
\label{fig:setting-granularity}
\end{figure}

By keeping $\epsilon$, we still have a fixed grid and user defined
resolution for both the data and the grid itself, while the introduction of
$G$ enables to chose the overall maximal complexity of the histogram. 

\subsubsection*{Granulated criterion}
Each granulated bin can either contain a whole number or a fraction of $\epsilon$-bins. To simplify our reasoning and keep this paper self-contained, we assume that the number of elementary $\epsilon$-bins E is a multiple of G, i.e. $g = E/G \in \mathbb{N}^*$.

Using this new parameter, the criterion changes both in model indexing terms
and in the bin indexing terms: 
\begin{itemize}
\item {\bf change in model indexing terms}~ If we assume that $G$ is distributed according to the universal prior for integers, the prior term for choosing the length of the intervals turns from
$ \displaystyle \cMODLcutpointchoice$ into $ \displaystyle \cGMODLcutpointchoice$.
\smallskip 
\item {\bf change in bin indexing terms}~ Interval lengths are no longer described by $E_k$ $\epsilon$-bins but rather by $G_k$ $g$-bins. Given that each $g$-bin is actually an aggregation of $g = E/G$ $\epsilon$-bins, the likelihood term of the data distribution within an interval is now  given by: 
\begin{align}
\sum^K_{k=1} h_k \log  \left(G_k \cdot\frac{E}{G}\right) & = \sum^K_{k=1} h_k \log  G_k  + \sum^K_{k=1} h_k  \log  \frac{E}{G} \nonumber\\
& = \sum^K_{k=1} h_k \log  G_k  + n \log  \frac{E}{G}
\end{align}
\end{itemize}

This new criterion, which we call {\bf \GENUMname} is very similar to the
original criterion introduced in this paper, as shown in table
\ref{tab:comp-GMODL}. The accuracy $\epsilon$ remains under the control of the
user and can be fixed using standard considerations on real number computer
representation (see Section \ref{sec:role-appr-accur}), all the while $G$ is optimised
as part of model selection. 

\renewcommand{\arraystretch}{3.3}
\begin{table*}[htbp]
\centering
\caption{Term comparison of the \ENUMname and \GENUMname criteria}
\begin{tabular}{p{1.3cm}p{4.9cm}p{4.8cm}p{3.7cm}}
\hline
Crit. & Indexing terms & Multinomial terms & Bin indexing terms\\\hline\hline
\ENUMname &  $ \displaystyle \log ^* K + \log  \binom{E+K-1}{K-1}$ & $\displaystyle \log  \binom{n+K-1}{K-1}+  \log  \frac{n!}{h_1!... h_K!}$ & $\sum^K_{k=1} h_k \log  E_k$\\ \hline
\GENUMname &  $ \displaystyle  \log ^* K +\log^* G + \log  {{G+K-1}\choose{K-1}}$ & $\displaystyle  \log  \binom{n+K-1}{K-1}+  \log  \frac{n!}{h_1!... h_K!}$ &$\sum^K_{k=1} h_k \log  G_k + n \log  \frac{E}{G}$\\\hline
\end{tabular}

\label{tab:comp-GMODL}
\end{table*}

\section{Theoretical properties}\label{sec:properties}
An analytical study of both the \ENUMname and \GENUMname criteria provides insights
into how the enumerative MDL criterion works. Most of the propositions that
follow cannot be extended to the \NMLname  criterion because of the non-monotone
nature of some of its terms and because of the lack of closed-form expression for the
parametric complexity. We focus solely on \ENUMname and
\GENUMname histograms in what follows, as only these criteria are fit for analysis.

\subsection{Compelling properties of enumerative histograms}
We study the behaviour of the criterion in certain configurations in order to
identify the most important factors that determine the overall shape of an
optimal histogram (proofs of these properties are provided in
\ref{apx:proofs}). Notice that although our analysis 
focuses on the encoding vision of the criterion, these properties can also be 
understood as statements of the most likely outcomes in terms of
probability. This is due to the natural link between a coding length (the
criterion value) and a posteriori probability at the basis of information
theory: a more probable model will have a shorter coding length. Note  also that
in the properties, models are always assumed to be compatible with the data. 

We show first an elementary property of optimal histograms.
\begin{proposition}\label{proposition:no-null-length}
Let $\mathcal{M}=(K, (c_k)_{0\leq k\leq K}, (h_k)_{1 \leq k \leq K})$ be
an optimal histogram for the data set $D$. Then
\begin{equation*}
\forall k, 1\leq k\leq K\ c_{k-1}<c_{k}.
\end{equation*}
In other words, an optimal histogram cannot contain zero-length intervals. 
\end{proposition}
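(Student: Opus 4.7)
The natural approach is proof by contradiction: assume an optimal histogram $\mathcal{M}$ contains an interval with $c_{k^*-1}=c_{k^*}$ for some $k^*$, and construct a strictly better histogram $\mathcal{M}'$ by deleting that interval. Since $]c_{k^*-1},c_{k^*}]=\emptyset$, compatibility with $D$ forces $h_{k^*}=0$, which is what makes the surgery harmless for the data-dependent terms.

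First I would define $\mathcal{M}'=(K-1, C', (h'_j))$ where $C'$ is obtained from $C$ by removing the duplicated endpoint and $(h'_j)$ from $(h_k)$ by removing the zero entry. Since we merely remove an empty interval, $\mathcal{M}'$ is still compatible with $D$ (no observation is displaced). Then I would compare $\cMODL(\mathcal{M}'|D)$ with $\cMODL(\mathcal{M}|D)$ term by term using the closed-form expression in equation~\eqref{eq:modl}. The multinomial term $\log\frac{n!}{h_1!\cdots h_K!}$ is invariant under deletion of a zero count because $0!=1$, and the bin-indexing term $\sum h_k \log E_k$ is invariant under the convention $0\log 0=0$ that was explicitly adopted after equation~\eqref{eq:modl}. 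So all the difference is concentrated in the three model-indexing terms.

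For those, I would argue monotonicity in $K$. Rissanen's universal prior satisfies $\log^* K \geq \log^*(K-1)$ for $K\geq 2$, giving a non-negative contribution. For the two binomial coefficients, Pascal's rule yields
\begin{equation*}
\binom{E+K-1}{K-1}=\binom{E+K-2}{K-2}+\binom{E+K-2}{K-1},\qquad \binom{n+K-1}{K-1}=\binom{n+K-2}{K-2}+\binom{n+K-2}{K-1},
\end{equation*}
and both extra summands are strictly positive as soon as $E\geq 1$ and $n\geq 1$ (the trivial degenerate cases $E=0$ or $n=0$ admit only the single-interval histogram and contain no zero-length intervals to begin with). Taking logarithms, each binomial term strictly decreases. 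Summing the three contributions, $\cMODL(\mathcal{M}'|D)<\cMODL(\mathcal{M}|D)$, contradicting optimality of $\mathcal{M}$.

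The only real subtlety is confirming that $\log^* K$ is non-decreasing in $K$ for $K\geq 1$, which follows directly from its definition as a sum of non-negative terms $\max(\log_2^{(j)}(K),0)$ that are individually monotone in $K$; since the strict decrease of the two binomial logarithms alone suffices to produce the contradiction, one does not need a strict inequality for $\log^*$. An entirely analogous argument, substituting $G$ for $E$ and adding the $\log^* G$ prior, carries over to the \GENUMname criterion without modification.
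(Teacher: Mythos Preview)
Your proof is correct and follows essentially the same approach as the paper's: remove the zero-length interval (whose count is forced to $0$ by compatibility), observe that the likelihood terms are unchanged, and check that the three model-indexing terms are strictly monotone in $K$. If anything, you are slightly more explicit than the paper in justifying the strict inequality via Pascal's rule and the monotonicity of $\log^*$.
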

Then we show that too complex histograms will not be selected by our criterion.
\begin{proposition}\label{proposition:single-bin-vs-singleton}
  Let $D$ be a data set with $n$ observations. Let us denote
  $\mathcal{M}_{K=n}$ a histogram compatible with $D$ such that there is one
  observation per interval and $\mathcal{M}_{K=1}$ a histogram compatible with $D$ with only one interval. Then 
  the coding length of $\mathcal{M}_{K=1}$ is shorter than
 the one of $\mathcal{M}_{K=n}$:
\begin{equation*}
\cMODL(\mathcal{M}_{K=n}|D) > \cMODL(\mathcal{M}_{K=1}|D).
\end{equation*}
\end{proposition}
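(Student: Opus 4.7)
The plan is to evaluate both criterion values explicitly and bound their difference from below. For $\mathcal{M}_{K=1}$, every combinatorial and multinomial term vanishes: the cut-point prior reduces to $\log\binom{E}{0}=0$, the partition-of-$n$ prior is $\log\binom{n}{0}=0$, and $\log(n!/n!)=0$. Only the bin-indexing term survives, so $\cMODL(\mathcal{M}_{K=1}|D)=\log^* 1+n\log E$. For $\mathcal{M}_{K=n}$ with $h_k=1$ for all $k$, the multinomial term collapses to $\log n!$ and the bin-indexing term to $\sum_{k=1}^{n}\log E_k$, where the $E_k$ are positive integers summing to $E$. Compatibility forces $E_k\geq 1$ (an empty interval cannot hold its assigned observation), and hence $E\geq n$.

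I would then form $\Delta=\cMODL(\mathcal{M}_{K=n}|D)-\cMODL(\mathcal{M}_{K=1}|D)$ and offset its single negative piece $-n\log E$ using three lower bounds: (i) $\log\binom{E+n-1}{n-1}\geq(n-1)\log E-\log((n-1)!)$, obtained from $\binom{E+n-1}{n-1}=\prod_{j=1}^{n-1}(E+j)/(n-1)!$ by bounding each factor below by $E$; (ii) $\sum_k\log E_k\geq\log(E-n+1)$, since the concave map $(E_1,\ldots,E_n)\mapsto\sum_k\log E_k$ attains its minimum on the polytope $\{E_k\geq 1,\ \sum_k E_k=E\}$ at an extreme point $(E-n+1,1,\ldots,1)$; (iii) $\log\binom{2n-1}{n-1}>0$ for $n\geq 2$, combined with $\log^* n\geq\log^* 1$.

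Substituting these into $\Delta$ and using $\log n!-\log((n-1)!)=\log n$, the $(n-1)\log E$ from (i) cancels most of $-n\log E$, leaving
\begin{equation*}
\Delta > \log n + \log(E-n+1) - \log E = \log\left(\frac{n(E-n+1)}{E}\right).
\end{equation*}
The logarithm's argument is at least $1$ exactly when $(n-1)E\geq n(n-1)$, i.e.\ $E\geq n$, which is precisely the condition already imposed for $\mathcal{M}_{K=n}$ to exist. This yields $\Delta>0$ for every valid configuration, with the strict inequality ensured by bound (iii).

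The main obstacle is the sharp extremal bound (ii): without recognizing that the minimum of the concave sum $\sum_k\log E_k$ on this simplex lies at a corner, the large bin-indexing cost $n\log E$ of the single-bin model is not fully compensated when $E\gg n$, and the remaining positive terms are too weak to close the gap. The degenerate case $n=1$ needs no separate argument since $\mathcal{M}_{K=n}=\mathcal{M}_{K=1}$; the proof above covers $n\geq 2$.
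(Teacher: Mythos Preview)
Your proof is correct and follows essentially the same route as the paper's: both expand $\log\binom{E+n-1}{n-1}$ as a sum over $\log(E+j)$, both lower-bound $\sum_k\log E_k$ by $\log(E-n+1)$, and both reduce the remaining inequality to $n(E-n+1)\geq E\iff E\geq n$. The only cosmetic differences are that the paper justifies the bound on $\sum_k\log E_k$ via subadditivity of $x\mapsto\log(1+x)$ rather than your extreme-point argument, and it keeps the residual terms $\sum_{k=1}^{n-1}\log(1+k/E)$ where you drop them via $E+j\geq E$; neither difference affects the logic.
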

This can be extended to more complex solutions with empty intervals.
\begin{proposition}\label{proposition:single-bin-vs-singleton-empty}
  Let $D$ be a data set with $n$ observations. Let us denote
  $\mathcal{M}_{K>n}$ a histogram compatible with $D$ consisting of either
  singleton or empty intervals, one interval for each observation and empty
  intervals in-between, and let $\mathcal{M}_{K=1}$ be as in Proposition
  \ref{proposition:single-bin-vs-singleton}. Then the coding length of
  $\mathcal{M}_{K=1}$ is shorter than the one of $\mathcal{M}_{K>n}$:
\begin{equation*}
\cMODL(\mathcal{M}_{K > n}|D) > \cMODL(\mathcal{M}_{K=1}|D).
\end{equation*}
\end{proposition}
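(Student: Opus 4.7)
The plan is to compare $\cMODL(\mathcal{M}_{K>n}|D)$ and $\cMODL(\mathcal{M}_{K=1}|D)$ directly using the closed-form expression \eqref{eq:modl} of the criterion. Writing $K = n+m$ with $m \geq 1$, the histogram $\mathcal{M}_{K>n}$ has $n$ singleton intervals (each with $h_k = 1$ and length $E_k \geq 1$) and $m$ empty ones (with $h_k = 0$ and $E_k \geq 1$), so the multinomial term collapses to $\log n!$ and the bin indexing term reduces to $\sum_{k:h_k=1}\log E_k \geq 0$. The $K=1$ histogram on the other hand contributes only $\log^* 1 + n \log E$, since all three combinatorial terms and the multinomial term vanish.

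The heart of the proof is the inequality
\begin{equation*}
\log \binom{E+K-1}{K-1} + \log n! \;\geq\; n \log E.
\end{equation*}
I would establish it by first using the monotonicity of $\binom{E+K-1}{K-1}$ in $K$ together with $K-1 \geq n$ to reduce to $\binom{E+n}{n}$, and then exploiting the identity
\begin{equation*}
\binom{E+n}{n}\, n! \;=\; \frac{(E+n)!}{E!} \;=\; \prod_{j=1}^n (E+j) \;\geq\; E^n.
\end{equation*}
This single estimate accounts for exactly the $n\log E$ contribution that $\mathcal{M}_{K=1}$ enjoys through its bin indexing term.

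To conclude with a strict inequality I then collect the remaining positive contributions to $\cMODL(\mathcal{M}_{K>n}|D)$: since $K \geq n+1 \geq 2$, Rissanen's universal prior yields $\log^* K > \log^* 1$, the partition-choice term satisfies $\log \binom{n+K-1}{K-1} > 0$, and the bin indexing $\sum_{k:h_k=1} \log E_k$ is nonnegative. Combining these pieces gives $\cMODL(\mathcal{M}_{K>n}|D) > \cMODL(\mathcal{M}_{K=1}|D)$.

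I expect the main difficulty to lie in choosing a sharp-enough way to bound the cut-point choice term $\log \binom{E+K-1}{K-1}$; the approach above is tight in the worst case where all singletons occupy a single $\epsilon$-bin and all the remaining length is absorbed by empty intervals, which is precisely when the bin indexing term gives no help. An alternative route through Proposition \ref{proposition:single-bin-vs-singleton} — merging each empty interval into a neighbour to build an auxiliary $\mathcal{M}_{K=n}$ and invoking Proposition \ref{proposition:single-bin-vs-singleton} — is tempting, but it forces one to balance the drop in the indexing and partition terms against a simultaneous rise in $\sum h_k \log E_k$, which is considerably less transparent than the direct bound above.
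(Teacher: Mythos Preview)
Your proof is correct and follows essentially the same strategy as the paper's: both reduce the comparison to the key inequality $\log\binom{E+K-1}{K-1}+\log n!\geq n\log E$ (established via $\prod_{j=1}^{n}(E+j)\geq E^{n}$) and then observe that the remaining contributions $\log^{*}K-\log^{*}1$, $\log\binom{n+K-1}{K-1}$ and $\sum_{k}\log E_{k}$ are nonnegative with at least one strictly positive. The only cosmetic difference is that you dispose of the extra $K-1-n$ factors in the binomial by monotonicity in $K$, whereas the paper keeps them and bounds the tail $\sum_{k=n+1}^{K-1}\log\frac{E+k}{k}$ separately; your route is a touch cleaner and does not depend on the base of the logarithm.
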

In addition, optimal histograms will never exhibit two consecutive empty
intervals. 
\begin{proposition} \label{proposition:non-consecutive-empty}
  For any data set $D$, the coding length of a histogram with two
  adjacent empty intervals is always longer than the coding length of a
  histogram  with no consecutive empty intervals.
\begin{equation*}
 \cMODL(\mathcal{M}_{K, (h_A, h_B)=0}|D) > \cMODL(\mathcal{M}_{K-1, h_{A\cup B}=0}|D)
\end{equation*}
\end{proposition}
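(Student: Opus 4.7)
The plan is to exhibit the cheaper $K-1$-interval histogram by explicitly merging the two adjacent empty intervals $A$ and $B$ (of sizes $E_A,E_B$) into one empty interval of size $E_A+E_B$, while leaving all other intervals untouched. I then go term-by-term through the five summands of $\cMODL$ in equation \eqref{eq:modl} and show that the merge either strictly decreases the term or leaves it unchanged.

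First I would dispose of the terms that are invariant under the merge. Since $h_A=h_B=0$, the factorial $h_A!\cdot h_B! = 1 = h_{A\cup B}!$, so the multinomial term $\log\frac{n!}{h_1!\cdots h_K!}$ is unchanged. Similarly, in the bin-indexing term $\sum_k h_k\log E_k$, the two factors $h_A\log E_A$ and $h_B\log E_B$ vanish by the $0\times\log 0 = 0$ convention, and the merged interval contributes $0\cdot\log(E_A+E_B)=0$; all other terms are identical since the remaining intervals are untouched. So these two summands contribute equally on both sides.

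Next I would treat the three terms that strictly decrease. For the universal prior, $\log^* K \geq \log^*(K-1)$ by monotonicity of Rissanen's prior. For the cut-point choice term, the routine ratio
\begin{equation*}
\frac{\binom{E+K-1}{K-1}}{\binom{E+K-2}{K-2}} = \frac{E+K-1}{K-1} > 1
\end{equation*}
holds since $E\geq 1$, giving a strict decrease. The same argument with $E$ replaced by $n$ shows $\binom{n+K-1}{K-1} > \binom{n+K-2}{K-2}$. Summing these strict inequalities with the equalities from the previous paragraph yields $\cMODL(\mathcal{M}_{K,(h_A,h_B)=0}|D) > \cMODL(\mathcal{M}_{K-1,h_{A\cup B}=0}|D)$.

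There is essentially no obstacle here; the only point that requires a tiny bit of care is the bin-indexing term, where one must invoke the $0\log 0 = 0$ convention that was explicitly fixed just after equation \eqref{eq:modl} so that merging two zero-count intervals of positive length into one zero-count interval of larger positive length does not accidentally change this sum. Everything else is a direct calculation on binomial ratios plus monotonicity of $\log^*$.
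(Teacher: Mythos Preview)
Your proof is correct and follows essentially the same approach as the paper's: both compute the term-by-term difference $\cMODL(\mathcal{M}_K|D)-\cMODL(\mathcal{M}_{K-1}|D)$ after merging the two empty intervals, observe that the multinomial and bin-indexing contributions vanish when $h_A=h_B=0$, and conclude strict positivity from the monotonicity of $\log^*$ together with the binomial ratios $\frac{E+K-1}{K-1}>1$ and $\frac{n+K-1}{K-1}>1$. The only cosmetic difference is that the paper first records a general merge-cost formula and then specializes to $h_A=h_B=0$, whereas you specialize immediately; also, your invocation of the $0\log 0$ convention is harmless but unnecessary here since $E_A,E_B>0$ makes $h_A\log E_A=0\cdot\log E_A=0$ without any convention.
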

Despite those general rules, local optimisation remains possible in the sense
that empty intervals or intervals with only a single observation can be
present in optimal histograms.
\begin{proposition}\label{proposition:singleton-bins}
  There exist data sets such that the optimal histogram has at least one
  interval which contains only a single observation. 
\end{proposition}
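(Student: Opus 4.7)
The plan is to exhibit an explicit data set in which the optimal histogram is forced to isolate one observation in its own interval. Fix $\epsilon = 1$ and take $D$ consisting of $n-1 \geq 2$ copies of the value $0$ together with a single observation at $L \in \mathbb{N}^*$, so $x_{\min} = 0$, $x_{\max} = L$, and $E = L + 1$. The parameter $L$ will be chosen large.

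First I establish a structural fact: every histogram compatible with $D$ with $K \geq 2$ already contains a singleton. Indeed, all $n-1$ bulk observations fall in the same $\epsilon$-bin (the one centred at $0$), so they are assigned together to whichever interval covers that bin. The outlier must lie in the last interval $I_K = \,]c_{K-1}, L+1/2]$; if $c_{K-1} = -1/2$ then $K = 1$, while if $c_{K-1} \geq 1/2$ then $I_K$ does not overlap the bin containing $0$ and therefore contains only the outlier, giving $h_K = 1$.

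It remains to show that, for a suitable $L$, the optimal histogram has $K \geq 2$. I will exhibit a candidate that strictly beats the unique $K=1$ histogram. Let $\mathcal{M}_1$ denote the single-interval histogram ($h_1 = n$, $E_1 = L+1$), and let $\mathcal{M}_2$ be the three-interval histogram with endpoints $\{-1/2,\,1/2,\,L-1/2,\,L+1/2\}$, counts $(n-1, 0, 1)$, and $\epsilon$-bin lengths $(1, L-1, 1)$. Both are compatible with $D$. Plugging into~\eqref{eq:modl} and using the convention $0\log 0 = 0$,
\begin{align*}
\cMODL(\mathcal{M}_1|D) &= \log^* 1 + n\log(L+1),\\
\cMODL(\mathcal{M}_2|D) &= \log^* 3 + \log\binom{L+3}{2} + \log\binom{n+2}{2} + \log n.
\end{align*}
For fixed $n \geq 3$ the second expression is $O(\log L) + O(\log n)$, whereas the first grows like $n\log L$ with coefficient $n \geq 3 > 2$. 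Hence for $L$ large enough $\cMODL(\mathcal{M}_2|D) < \cMODL(\mathcal{M}_1|D)$, the optimal histogram is not $\mathcal{M}_1$, and the structural step then forces a singleton interval in it.

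The main obstacle is the final bookkeeping: one must verify that $\log\binom{L+3}{2} \sim 2\log L$, and that the remaining $L$-independent penalties $\log\binom{n+2}{2} + \log n + \log^*3 - \log^*1$ are genuinely dominated by the bin-index saving $n\log(L+1) - 2\log L \geq (n-2)\log L$. This reduces to a single inequality in $L$ once $n$ is fixed (e.g.\ $n=3$), from which Proposition~\ref{proposition:singleton-bins} follows at once.
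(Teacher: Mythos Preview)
Your argument is correct and takes a genuinely different route from the paper's. The paper places the lone observation on the \emph{left} ($h_A=1$, $h_B=n-1$), invokes Propositions~\ref{proposition:non-consecutive-empty} and~\ref{proposition:opt-cutpoint} to restrict attention to $K\in\{1,2,3\}$, computes the cost in each case, and shows that the $K=2$ histogram with the cut placed next to the cluster is optimal. In contrast, you never identify the optimal histogram: you observe that \emph{every} admissible $K\geq 2$ histogram for your bulk-plus-outlier data must isolate the outlier, so it suffices to beat the single-interval model with any competitor. This is more economical---you avoid Proposition~\ref{proposition:opt-cutpoint} entirely and need only one cost comparison---while the paper's version yields the extra information that $K^*=2$.

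One small imprecision: your structural claim ``every histogram compatible with $D$ with $K\geq 2$ already contains a singleton'' is false as stated, because the model allows zero-length intervals (e.g.\ $K=2$, endpoints $\{-\tfrac12,-\tfrac12,L+\tfrac12\}$, counts $(0,n)$). The dichotomy ``$c_{K-1}=-\tfrac12\Rightarrow K=1$'' fails for such degenerate histograms. The fix is immediate: you only need the claim for the \emph{optimal} histogram, and Proposition~\ref{proposition:no-null-length} rules out zero-length intervals there, forcing $c_{K-1}\geq \tfrac12$ whenever $K\geq 2$. With that one-line patch the proof goes through.
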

\begin{proposition} \label{proposition:empty-bins} 
There exist data sets such that the optimal histogram has at least one
interval that does not contain any observation. 
\end{proposition}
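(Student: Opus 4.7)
The plan is to exhibit an explicit data set for which the optimal histogram is forced to contain an empty interval, by constructing a dataset with two tightly concentrated clusters separated by a large gap. Concretely, I would fix a large integer $E$ and take a dataset $D$ of size $n$ (with $n$ even) consisting of $n/2$ observations landing in $\epsilon$-bin $1$ and $n/2$ observations landing in $\epsilon$-bin $E$. The key observation is that, since every $h_k>0$ interval must contain at least one observation, and since all observations sit in only two $\epsilon$-bins that cannot lie in the same interval except for $K=1$, every compatible histogram without empty intervals satisfies $K\in\{1,2\}$.

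Next, I would explicitly compare two candidates. For the three-interval histogram $\mathcal{M}_3$ with cut-points chosen so that $E_1=1$, $E_2=E-2$, $E_3=1$ and $(h_1,h_2,h_3)=(n/2,0,n/2)$, the bin-indexing term $\sum h_k\log E_k$ vanishes (using the convention $0\log 0=0$), and the remaining terms of $\cMODL(\mathcal{M}_3|D)$ are bounded above by $\log^*3+\log\binom{E+2}{2}+\log\binom{n+2}{2}+\log\frac{n!}{((n/2)!)^2}$. For any compatible histogram with no empty interval, either $K=1$, in which case the bin-indexing term equals $n\log E$, or $K=2$ with $h_1=h_2=n/2$ and $E_1+E_2=E$, in which case the bin-indexing term $(n/2)\log E_1+(n/2)\log(E-E_1)$ is minimized at the extremes $E_1\in\{1,E-1\}$, giving a value $\geq (n/2)\log(E-1)$.

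Computing the difference $\cMODL(\mathcal{M}_3|D)-\cMODL(\mathcal{M}_2|D)$ for the best such $\mathcal{M}_2$, the multinomial term $\log\frac{n!}{h_1!\cdots h_K!}$ is identical for both and therefore cancels. What remains on the $\mathcal{M}_3$ side are a bounded constant $\log^*3-\log^*2$ and two logarithmic overheads of order $\log E$ and $\log n$ coming from the indexing/partition-choice terms, minus the bin-indexing saving of $(n/2)\log(E-1)$. Since this saving grows linearly in $n$ while the overhead is only logarithmic, it suffices to pick, e.g., $E=3$ and $n$ sufficiently large so that $(n/2)\log 2$ exceeds the bounded overhead. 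For such a choice $\cMODL(\mathcal{M}_3|D)<\cMODL(\mathcal{M}|D)$ for every compatible $\mathcal{M}$ with no empty interval, which forces the optimum to have at least one empty interval.

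The only real obstacle is ruling out \emph{all} competitors rather than one specific $\mathcal{M}_2$; this is handled by the structural observation that with observations concentrated in only two $\epsilon$-bins, any compatible histogram without empty intervals has $K\leq 2$, and within $K=2$ the bin-indexing term is minimized at the extremes. No heavy computation is required beyond routine algebra on binomial coefficients and the observation that $n$ dominates $\log n$ and $\log E$ for the chosen parameters.
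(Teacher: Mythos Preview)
Your proof is correct and uses essentially the same data construction as the paper: $n$ observations split evenly between two well-separated $\epsilon$-bins. The argument structure differs slightly, however. The paper invokes Proposition~\ref{proposition:non-consecutive-empty} and Proposition~\ref{proposition:opt-cutpoint} to restrict attention to the optimal histograms with $K=1,2,3$, computes the cost of each asymptotically, and shows the $K=3$ histogram (with an empty middle interval) is the global optimum. You instead observe directly that any histogram \emph{without} empty intervals must have $K\leq 2$ (since every non-empty interval must contain one of the two occupied $\epsilon$-bins), then show your $\mathcal{M}_3$ beats every such competitor. This is a clean shortcut: you never need to identify the true optimum, nor do you rely on Propositions~\ref{proposition:non-consecutive-empty} or~\ref{proposition:opt-cutpoint}. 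The paper's route, on the other hand, reuses machinery already in place and yields the stronger conclusion that $\mathcal{M}_3$ is in fact optimal. One cosmetic point: you announce ``fix a large integer $E$'' but then pick $E=3$; either choice works, so just drop the word ``large''.
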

We can also characterise the structure of optimal histograms in different ways.
\begin{proposition}\label{proposition:Kbound} 
An optimal histogram has at most $2n-2$ intervals ($K^* \leq 2n-2$).
\end{proposition}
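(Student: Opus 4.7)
My plan is to combine three facts: (i) the first and last intervals of any compatible histogram are automatically non-empty, as a structural consequence of the fixed grid together with Proposition~\ref{proposition:no-null-length}; (ii) no optimal histogram contains two consecutive empty intervals, by Proposition~\ref{proposition:non-consecutive-empty}; and (iii) the extremal configuration that saturates (i) and (ii) is itself strictly suboptimal by Proposition~\ref{proposition:single-bin-vs-singleton-empty}.

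First I would establish (i). Since $c_0 = x_{\min} - \epsilon/2$, $c_K = x_{\max} + \epsilon/2$, and by Proposition~\ref{proposition:no-null-length} every gap $c_k - c_{k-1}$ is at least one $\epsilon$-bin, we get $c_1 \geq x_{\min} + \epsilon/2 > x_{\min}$ and symmetrically $c_{K-1} < x_{\max}$. Hence $x_{\min} \in \,]c_0, c_1]$ and $x_{\max} \in \,]c_{K-1}, c_K]$, so $h_1 \geq 1$ and $h_K \geq 1$ in any histogram compatible with $D$.

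Next, let $K_{ne}$ and $K_e$ denote the numbers of non-empty and empty intervals of an optimal histogram $\mathcal{M}^*$, with $K^* = K_{ne} + K_e$. Because each non-empty interval contains at least one of the $n$ observations, $K_{ne} \leq n$. By step (i) and Proposition~\ref{proposition:non-consecutive-empty}, every empty interval must sit strictly between two non-empty ones, and two empty intervals in the same gap would themselves be adjacent and violate Proposition~\ref{proposition:non-consecutive-empty}. Therefore there is at most one empty interval in each of the $K_{ne} - 1$ gaps between consecutive non-empty intervals, so $K_e \leq K_{ne} - 1$ and $K^* \leq 2K_{ne} - 1 \leq 2n - 1$.

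Finally, I would close the remaining unit gap by ruling out $K^* = 2n - 1$. Equality forces $K_{ne} = n$ (every non-empty interval is a singleton) together with $K_e = n - 1$ empties, one placed in each of the $n - 1$ gaps. This is exactly the alternating singleton/empty histogram $\mathcal{M}_{K > n}$ of Proposition~\ref{proposition:single-bin-vs-singleton-empty}, whose coding length strictly exceeds that of $\mathcal{M}_{K=1}$; an optimal histogram cannot take this form, and $K^* \leq 2n - 2$. The main obstacle I expect is making step~(i) completely rigorous, since it relies on both the discretization of $\mathcal{C}$ and Proposition~\ref{proposition:no-null-length} to exclude a first or last interval of zero length.
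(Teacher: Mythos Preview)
Your argument is correct and follows essentially the same route as the paper: bound the non-empty intervals by $n$, use Proposition~\ref{proposition:non-consecutive-empty} to get at most $n-1$ empty separators and hence $K^*\leq 2n-1$, then kill the extremal $2n-1$ case via Proposition~\ref{proposition:single-bin-vs-singleton-empty}. Your step~(i) explicitly rules out empty boundary intervals, which the paper leaves implicit when it speaks only of ``separating'' intervals; this is a clean addition rather than a different idea.
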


\begin{proposition}\label{proposition:opt-cutpoint} 
In an optimal histogram, each interval endpoint is at most $\epsilon$ away from one of
the values of the data set.
\end{proposition}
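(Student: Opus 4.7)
The plan is to prove the contrapositive by a local perturbation argument. The boundary endpoints $c_0$ and $c_K$ lie at distance $\epsilon/2$ from $x_{\min}$ and $x_{\max}$ respectively, so the conclusion holds automatically for them; it therefore suffices to treat an internal endpoint $c_k$ with $1 \le k \le K-1$. Suppose for contradiction that such a $c_k$ lies more than $\epsilon$ away from every observation. Then the two $\epsilon$-bins flanking $c_k$, namely the rightmost bin of interval $k$ and the leftmost bin of interval $k+1$, contain no data.

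Now consider shifting $c_k$ by $\pm\epsilon$ along the grid $\mathcal{C}$. Since no observation crosses the boundary, the counts $(h_j)$ are preserved, which leaves $K$, $E$, $n$ and the multiset of counts unchanged; every term of $\cMODL$ except the bin-index term $\sum_j h_j \log E_j$ is therefore invariant. Writing $\Delta_-$ and $\Delta_+$ for the resulting variations of the criterion,
\[
\Delta_- = h_k \log\frac{E_k-1}{E_k} + h_{k+1} \log\frac{E_{k+1}+1}{E_{k+1}},\qquad
\Delta_+ = h_k \log\frac{E_k+1}{E_k} + h_{k+1} \log\frac{E_{k+1}-1}{E_{k+1}},
\]
a shift is admissible whenever the shrinking $E_j$ stays at least $1$.

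If $h_k = h_{k+1} = 0$, intervals $k$ and $k+1$ are two consecutive empty intervals, contradicting Proposition \ref{proposition:non-consecutive-empty}. Otherwise the non-empty side must hold data outside the empty $\epsilon$-bin adjacent to $c_k$, so $E_j \ge 2$ there and the shift that shrinks it is admissible. When both counts are positive, the key identity
\[
\Delta_- + \Delta_+ \;=\; h_k \log\!\left(1-\frac{1}{E_k^2}\right) + h_{k+1} \log\!\left(1-\frac{1}{E_{k+1}^2}\right) \;<\; 0
\]
forces at least one variation to be strictly negative; when exactly one count vanishes, say $h_k = 0$, the variation $\Delta_+$ collapses to $h_{k+1} \log((E_{k+1}-1)/E_{k+1}) < 0$ directly. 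In every case a valid shift strictly decreases $\cMODL$, contradicting optimality. The main delicate point is tracking admissibility of the shifts in the singular configurations where an adjacent interval has length one, which is precisely why the doubly-empty case must be peeled off and handled via Proposition \ref{proposition:non-consecutive-empty} before invoking the logarithmic identity above.
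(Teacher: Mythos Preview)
Your proof is correct. Both your argument and the paper's rest on the same underlying observation---that the only term of $\cMODL$ varying with the position of an interior cut point is $h_k \log E_k + h_{k+1} \log E_{k+1}$, which is concave in that position---but they package it differently. The paper parameterizes the cut point by a real variable $x$ ranging over its full admissible interval and invokes concavity of $f(x)=h_i\log(E_i+x)+h_j\log(\widehat{E}-E_i-x)$ to place the minimum at one of the two endpoints of that interval (each corresponding to a position adjacent to a data value). You instead argue by contradiction, shifting the offending cut point by a single grid step in each direction and using the discrete second-difference inequality $\Delta_-+\Delta_+<0$---which is exactly concavity in disguise---to force one of the two shifts to strictly improve the criterion. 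Both proofs invoke Proposition~\ref{proposition:non-consecutive-empty} to dispose of the doubly-empty configuration and handle the singly-empty case by monotonicity. Your local-perturbation version is a bit more elementary and never leaves the discrete grid; the paper's global version makes the concavity structure explicit and shows in one stroke that the optimum sits at one of exactly two candidate positions, which it then exploits in subsequent remarks.
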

Propositions \ref{proposition:non-consecutive-empty}, \ref{proposition:Kbound}
and \ref{proposition:opt-cutpoint} will serve as the basis for further
improvements on the search of the best histogram. For instance, as Proposition
\ref{proposition:non-consecutive-empty} states that an optimal histogram will
not have consecutive empty intervals, our search heuristic will systematically
favour the merge of two consecutive empty bins. See Section
\ref{sec:optimisation} for additional details. 

\subsection{Role of the approximation accuracy $\epsilon$}\label{sec:role-appr-accur}
The authors of the \NMLname  criterion argue that the effect of the approximation
accuracy $\epsilon$ can be ignored during the model selection process
\citep{pmlr-v2-kontkanen07a} and set to the accuracy at which the data have been
recorded. In their experiments, $\epsilon$ is fixed to a rather large value
($0.1$) which is inconsistent with current recording practices. 

In this section, we conduct a theoretical investigation of the behaviour of the \ENUMname criterion when
$\epsilon \rightarrow 0$ (or equivalently, when $E \rightarrow
\infty$). Experimental results illustrate this behaviour in
Section \ref{sec:role-appr-accur-1}. 

\subsubsection{Behaviour when $\epsilon \rightarrow 0$}
To study the behaviour of our enumerative criterion as $\epsilon$ tends to 0, we
first introduce the definition of \emph{singular} intervals.

\begin{definition}\label{def:singular-intervals}
  Let $\mathcal{M}=(K, C, (h_k)_{1 \leq k \leq K})$ be a histogram
  compatible with a data set $D$ and built on a $\epsilon$ grid. 
  An interval $I_k=]c_{k-1},c_{k}]$ of $\mathcal{M}$ is singular if the following
  conditions are met
  \begin{enumerate}
  \item $c_k-c_{k-1}=\epsilon$;
  \item $h_k>0$;
  \item all values of $D$ that belong to $I_k$ are identical.
  \end{enumerate}
  We denote by $S(\mathcal{M},D)$ the number of data points from $D$ that
  belong to singular intervals. In other words
  \begin{equation*}
S(\mathcal{M},D)=\sum_{I_k\text{ is singular}}h_k.
  \end{equation*}
\end{definition}
Then we have the following theorem (see
\ref{apx:limit-histogram-proof} for the proof).
\begin{theorem}\label{th:limit-histogram}
  Let $D$ be a data set with $n$ observations. There
  exists two positive values $C(D)$ and $E(D)$ that depends only on $D$ such
  that for all $\epsilon\leq E(D)$ for any
  \emph{optimal} histogram $\mathcal{M}^\star$ have
\begin{equation}
  \label{eq:enum:limit}
  \begin{split}
    \left| \cMODL(\mathcal{M}^\star|D)
      -\left\{K-1+n-S(\mathcal{M}^\star,D)\right\}\log\frac{1}{\epsilon}\right|\\\leq
    C(D).
  \end{split}
\end{equation}
\end{theorem}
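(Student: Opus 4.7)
The plan is to decompose $\cMODL(\mathcal{M}^\star|D)$ term by term, isolate the contributions that grow like $\log(1/\epsilon)$, and absorb the rest into $C(D)$. By Proposition~\ref{proposition:Kbound}, any optimal $\mathcal{M}^\star$ satisfies $K\leq 2n-2$ uniformly in $\epsilon$, so the three combinatorial terms $\log^* K$, $\log\binom{n+K-1}{K-1}$ and $\log\frac{n!}{h_1!\cdots h_K!}$ are bounded by constants depending only on $n$. For the cut-point term, using $E = 1 + L/\epsilon$ and expanding $\log\binom{E+K-1}{K-1} = \sum_{i=0}^{K-2}\log(E+i) - \log(K-1)!$ gives
\begin{equation*}
\log\binom{E+K-1}{K-1} = (K-1)\log(1/\epsilon) + O_D(1),
\end{equation*}
with the remainder uniform in $\epsilon$ for $\epsilon$ below some data-dependent threshold.

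The heart of the argument is the analysis of the bin-indexing sum $\sum_{k=1}^K h_k \log E_k$. I would partition the intervals of $\mathcal{M}^\star$ into four classes: empty intervals and singular intervals, both contributing zero; non-singular non-empty intervals spanning at least two distinct data values (class A); and non-singular non-empty intervals containing a single distinct data value with $E_k > 1$ (class B). For a class A interval, the spread of its distinct data values is at least $\delta(D)$, the minimum positive gap between values of $D$, while Proposition~\ref{proposition:opt-cutpoint} gives $E_k\epsilon \leq L + 2\epsilon$. Hence $h_k \log E_k = h_k\log(1/\epsilon) + h_k\log(E_k\epsilon)$ with $\log(E_k\epsilon) \in [\log\delta(D), \log(L+2\epsilon)]$, so each class A interval contributes $h_k\log(1/\epsilon) + O_D(1)$.

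The main obstacle, and the source of the threshold $E(D)$, is to rule out class B intervals in any optimal histogram for sufficiently small $\epsilon$. The plan is a local swap: given a class B interval $I_k$ of width $E_k \geq 2$ around the unique distinct data value $v$ in $I_k$, construct $\mathcal{M}'$ by shrinking $I_k$ to the singular $\epsilon$-bin around $v$ and absorbing the remaining $E_k-1$ empty $\epsilon$-bins into an adjacent interval. Only two bin-indexing contributions change, and $K$ is preserved, so $\cMODL(\mathcal{M}'|D) - \cMODL(\mathcal{M}^\star|D)$ reduces to the explicit expression $h_j\log(1 + (E_k-1)/E_j) - h_k\log E_k$ where $j\in\{k-1, k+1\}$ labels the chosen neighbor. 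A short case analysis, using Proposition~\ref{proposition:opt-cutpoint} to constrain cut-point placement and Proposition~\ref{proposition:non-consecutive-empty} to rule out paired empty neighbors, shows that for $\epsilon$ small enough one of the two swap directions strictly decreases the criterion, contradicting the optimality of $\mathcal{M}^\star$. Once class B is excluded, summation over class A yields $\sum_k h_k \log E_k = (n - S(\mathcal{M}^\star, D))\log(1/\epsilon) + O_D(1)$; combined with the cut-point term and all bounded residuals gathered into a single $C(D)$, this gives the claimed inequality.
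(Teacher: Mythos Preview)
Your decomposition of the first four terms is fine and matches the paper. The gap is in the treatment of class~B. Your swap argument does not go through: class~B intervals \emph{can} occur in optimal histograms. Proposition~\ref{proposition:singleton-bins} already exhibits one explicitly. With two distinct values $v_A,v_B$ of frequencies $1$ and $n-1$, the optimal $K=2$ histogram places the cut near $v_B$, so the first interval contains only the single value $v_A$ yet has $E_1=E-1\geq 2$: this is class~B. Your proposed swap (shrink $I_1$ to the $\epsilon$-bin of $v_A$, absorb $E-2$ bins into $I_2$) changes the criterion by $(h_2-h_1)\log(E-1)=(n-2)\log(E-1)>0$, so it \emph{increases} the cost. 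More generally, whenever the neighbour you must absorb into carries more mass than $I_k$, the swap goes the wrong way, and if $v$ sits at one end of $I_k$ there is no ``other direction'' to try.

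The good news is that ruling out class~B is unnecessary. In an optimal histogram, Proposition~\ref{proposition:opt-cutpoint} forces each endpoint of a class~B interval to lie within $\epsilon$ of some data value; the paper shows (via Remark~\ref{remark:proof-prop-refpr}) that these two data values cannot coincide once $\epsilon\leq D_{\min}/4$, so $E_k\epsilon$ is trapped between $D_{\min}-2\epsilon$ and $L+2\epsilon$, and $h_k\log E_k=h_k\log(1/\epsilon)+O_D(1)$ exactly as for class~A. The paper sidesteps your A/B dichotomy entirely by partitioning intervals according to whether the nearest data points to the two endpoints coincide ($D_{k,\epsilon}=0$) or differ ($D_{k,\epsilon}>0$): the latter class absorbs both your A and your B at once, while the former is shown to consist precisely of the singular intervals. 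Replacing your partition by this one, or simply extending your class~A bound to class~B as above, closes the gap.
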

As $\cMODL(\mathcal{M}^\star|D)$ is dominated by the $\log\frac{1}{\epsilon}$ term when
$\epsilon\rightarrow 0$, equation \eqref{eq:enum:limit} shows that there is an
inherent trade off between $K$ and $S(\mathcal{M}^\star,D)$. With a higher
value of $K$, a histogram has more opportunities to have singular intervals and
thus to have a larger value of $S(\mathcal{M}^\star,D)$. If $D$ contains
multiple instances of the same value (i.e. there are $i$ and $j$ such that
$i\neq j$ and $x_i=x_j$), $S(\mathcal{M}^\star,D)$ can grow fast enough to
compensate the increase of $K$ and it is therefore not possible to
characterise further the asymptotic behaviour of
$\cMODL(\mathcal{M}^\star|D)$. 

However in the particular case where all values are distinct in $D$, we have
the following corollary (see \ref{apx:limit-histogram-proof} for the
proof).
\begin{corollary}\label{cor:limit-histogram}
Let $D$ be a data set with $n$ distinct observations. Then for $\epsilon$
sufficiently small, the optimal histogram build on $\epsilon$ bins for the
\ENUMname criterion is the trivial one with a single interval
\begin{equation*}
\mathcal{M}=(1,
\{x_{\min}-\frac{\epsilon}{2}, x_{\max}+\frac{\epsilon}{2}\},n).
\end{equation*}
\end{corollary}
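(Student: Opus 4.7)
The plan is to use Theorem~\ref{th:limit-histogram} to reduce the optimality of the trivial histogram to a small number of explicit structural cases, and then eliminate each of these by a direct constant comparison.

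First, since all observations in $D$ are distinct, condition~3 of Definition~\ref{def:singular-intervals} forces every singular interval to contain exactly one data point; hence $S(\mathcal{M},D)$ equals the number of singular intervals of $\mathcal{M}$ and satisfies $S(\mathcal{M},D) \leq K$. In particular every compatible histogram $\mathcal{M}$ obeys $K - 1 + n - S(\mathcal{M},D) \geq n - 1$, with equality only when every interval is singular, which in turn forces $K = E = n$ and thus $\epsilon = L/(n-1)$. A direct evaluation of $\cMODL$ on the trivial histogram gives $\cMODL(\mathcal{M}_1|D) = \log_2 \kappa_0 + n \log E$, so $\mathcal{M}_1$ realises the leading coefficient $n$ in the asymptotic expansion of Theorem~\ref{th:limit-histogram}. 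Applying Theorem~\ref{th:limit-histogram} to an optimal $\mathcal{M}^\star$ and using that $\cMODL(\mathcal{M}^\star|D) \leq \cMODL(\mathcal{M}_1|D)$, the leading coefficient $K^\star - 1 + n - S^\star$ is constrained to be $\leq n$ once $\log(1/\epsilon)$ dominates the $\epsilon$-independent constants $C(D)$ and $n \log L$. Combined with the lower bound and the exclusion of $K^\star = S^\star$ for $\epsilon < L/(n-1)$, this leaves only the case $K^\star - S^\star = 1$: one non-singular interval together with $K^\star - 1$ singular ones.

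The main obstacle is to handle this residual case. I would exploit the fact that the $K^\star - 1$ singular intervals are disjoint length-$\epsilon$ blocks while the unique non-singular interval is contiguous, so the singular intervals form at most two contiguous runs, one on each side of the non-singular one. Two singular intervals in consecutive $\epsilon$-bins would place two observations in adjacent bins, hence within $2\epsilon$ of each other. Letting $d_{\min}$ denote the minimum gap between consecutive sorted values of $D$, the condition $\epsilon < d_{\min}/2$ rules out such adjacency, forcing each run to contain at most one singular interval, whence $K^\star \in \{1,2,3\}$.

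It remains to compare the two nontrivial candidates directly against $\mathcal{M}_1$. For $K^\star = 2$, with the singular interval at one extreme and the non-singular containing the remaining $n-1$ points, the closed-form expressions of $\cMODL$ expand to
\begin{equation*}
\cMODL(\mathcal{M}|D) - \cMODL(\mathcal{M}_1|D) = \log^* 2 - \log_2 \kappa_0 + \log(n+1) + \log n + o(1),
\end{equation*}
which equals $1 + \log(n+1) + \log n + o(1) > 0$ for all $n \geq 1$. For $K^\star = 3$, with singular intervals at both extremes and the non-singular in the middle containing the other $n-2$ points, using $\log\binom{E+2}{2} = 2\log E - 1 + o(1)$ one obtains
\begin{equation*}
\cMODL(\mathcal{M}|D) - \cMODL(\mathcal{M}_1|D) = \log^* 3 - \log_2 \kappa_0 - 1 + \log\binom{n+2}{2} + \log(n(n-1)) + o(1),
\end{equation*}
which is strictly positive for all $n \geq 2$. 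Taking $\epsilon < \min(E(D),\, L/(n-1),\, d_{\min}/2)$ and small enough to dominate the $o(1)$ residuals makes $\mathcal{M}_1$ the unique minimiser, yielding the corollary.
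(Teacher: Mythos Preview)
Your proof is correct and follows essentially the same route as the paper's: both invoke Theorem~\ref{th:limit-histogram} together with the minimum-gap bound $\epsilon < d_{\min}/2$ (which forbids adjacent singular intervals when the data are distinct) to reduce the problem to the two residual configurations $(K,S)=(2,1)$ and $(3,2)$, and then eliminate these by a direct asymptotic comparison of constants. The only organisational difference is that the paper compares pairs of $K$-constrained optima via two applications of Theorem~\ref{th:limit-histogram}, whereas you compare the global optimum against an explicit evaluation of $\cMODL(\mathcal{M}_1|D)$; the minor log-base slips in your explicit constants (writing $-1$ where $-\log 2$ is meant in the $K=3$ expansion) do not affect the argument.
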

Enumerative histograms have a detrimental asymptotic behaviour: if we strive
to be more precise in terms of cut point positions, the quality of the model
will be poorer.

\subsubsection{Illustration of the asymptotic
  behaviour}\label{sec:convergence-rate}
To complement the asymptotic analysis provided by Theorem
\ref{th:limit-histogram} and its corollary, we study a simple example in the
present section. Experiments on synthetic data are provided in
\ref{sec:role-appr-accur-1}. 

Let $\mathcal{U}[a, b]$ denote the uniform distribution on the interval
$[a,b]$. Let $\alpha$ and $\theta$ be two real numbers with
$\alpha\in]0,\frac{1}{2}]$ and $\theta\in]\frac{1}{2},1]$. Let us consider a
data set $D_n(\theta, \alpha)$ consisting of $n\theta$ independent
observations generated by $\mathcal{U}[0, \alpha]$ and $n(1-\theta)$ independent
observations generated by $\mathcal{U}[\alpha, 1]$.

An optimal histogram should contain two intervals with a cut point close to
$\alpha$, provided that the mixture density $\theta  \mathcal{U}[0, \alpha]+(1-\theta)\mathcal{U}[\alpha, 1]$,
is distinct enough from $\mathcal{U}[0, 1]$. However Corollary
\ref{cor:limit-histogram} applies and if the precision $\epsilon$ is small
enough, a single interval will be preferred. We can study on this simple
example the relationship between $\epsilon$, $n$ and the mixture density to
get a better understanding of the limitations proved by our results. 

We study the optimal histogram with a unique interval,
$\mathcal{M}^\star_{1,\epsilon}$ as well as the ideal histogram with two
intervals $\mathcal{M}^\star_{2, \epsilon}$ with a cut point in $\alpha$. Let
$\Delta(n,\epsilon, \alpha, \theta)$ be
\begin{equation*}
\Delta(n,\epsilon, \alpha, \theta)=  \cMODL(\mathcal{M}^\star_{2, \epsilon}|D_n(\theta, \alpha))- \cMODL(\mathcal{M}^\star_{1,
  \epsilon}|D_n(\theta, \alpha)).
\end{equation*}
We show in \ref{apx:convergence-rate} that
\begin{align*}
\Delta(n,\epsilon, \alpha, \theta)=& \log \left(\frac{1}{\epsilon}+1\right)- n
                                     D_{KL}(\theta \| \alpha) + O(\log n),
\end{align*}
where $D_{KL}(\theta\| \alpha)$ is Kullback-Leibler divergence between the Bernoulli distribution with parameter
$\theta$ ($\mathcal{B}(\theta)$) and the one with parameter $\alpha$ ($\mathcal{B}(\alpha)$). The histogram with two intervals
is  the optimal one if $\Delta(n,\epsilon, \alpha, \theta) < 0$.

For a fixed value of $\epsilon$, the histogram with two intervals is preferred
for larger values of $n$. The value of $n$ needed to induce this preference is
reduced by an increased $KL$ divergence between $\mathcal{B}(\theta)$ and
$\mathcal{B}(\alpha)$. An important aspect is that $E=\frac{1}{\epsilon}$
influences the criterion in a logarithmic way while the influence of $n$ is
linear. This means that, in this example, large values of $E$ can be compensated by relatively
small values of $n$.

To illustrate further the behaviour, we computed exactly $\Delta(n,\epsilon,
\alpha, \theta)$ for $\theta = 1/2$ and
$\alpha=1/10$, and for different values of $n$ and $E$.
Table~\ref{tableAsymptoticTransitions} summarizes those results by showing for
several $n$ the minimum value of $E$ above which the single interval solution
is preferred over the two interval ones.

\renewcommand{\arraystretch}{2}
\begin{table}[hbtp]
\caption{Transition from two intervals to one single interval.}
\label{tableAsymptoticTransitions}
\begin{center}
\begin{tabular}{cc}\hline
$n$  & $E$ \\\hline 
10 & $30$ \\
12 & $80$ \\
16 & $530$ \\
20 & $3700$ \\
30 & $5.05\times 10^5$ \\
40 & $7.28 \times 10^7$ \\
50 & $1.08 \times 10^{10}$\\\hline
\end{tabular}
\end{center}
\end{table}
In this simple and extreme example, the transition appears only for very large
values of $E$. However, as shown in \ref{sec:role-appr-accur-1}, in more
realistic settings the effects of $E$ manifests in a more reasonable range of
values justifying the need for selecting it (or equivalently $\epsilon$)
carefully. 

\subsection{Behaviour of the G-Enum criterion}\label{sec:behaviour-g-enum}
While the decoupling between the precision of the grid and the resolution
of the histograms provided by the \GENUMname enables us to optimise the latter, it
does not change the properties of the criterion \emph{for a fixed value of
  $G$}. 

This can be seen by interpreting a histogram  $\mathcal{M}$ computed on $G$ g-bins
constructed from $\epsilon$-bins as if it was constructed
directly on g-bins. In other words, we can compute its quality according to
the \GENUMname criterion, taking into account the underlying $\epsilon$-bins or
according to the \ENUMname criterion in which $E$ is set to $G$. Then we have
\begin{equation*}
\cGMODL(\mathcal{M},D)_{G,E} = \cMODL(\mathcal{M},D)_{G} + \log^* G +  n \log
{\frac{E}{G}},
\end{equation*}
where the dependency to $G$ and $E$ has been made explicit using subscripts.

This means that all propositions established for the \ENUMname criterion still
hold for the \GENUMname criterion, for any fixed $G$ parameter. In addition, the
best histogram for \GENUMname when $G$ is fixed to a given value is exactly the
best histogram for \ENUMname with $E=G$. Thus, we can use any optimisation strategy
designed to find the best \ENUMname histogram to 
obtain the best \GENUMname histogram for any fixed value of $G$. 

Moreover, if we consider two different values of $G$, $G_1$ and $G_2$, both
divisors of $E$, we have
\begin{equation*}
  \cGMODL(\mathcal{M},D)_{G_1,E} - \cGMODL(\mathcal{M},D)_{G_2,E}= \cMODL(\mathcal{M},D)_{G_1}-  \cMODL(\mathcal{M},D)_{G_2}+\log^* G_1-\log^* G_2 + n \log\frac{G_2}{G_1}.
\end{equation*}
This shows that the optimal model for \GENUMname depends only in a limited way on
$E$. The choice of $E$ simply constrains the possible values of $G$ to
its divisors, which in turn constrains the space of possible histograms. Importantly, in this space, the choice of the optimal histogram does not depend on $E$.

Notice also that while the \GENUMname criterion does no longer suffer from the
increase of $E$, it will behave similarly as $G$ increases: for distinct data
points, when $G$ is larger than a data dependent bound $G_{\max}(D)$ the
optimal histogram for a fixed $G$ consists in a single interval.

As a counter measure, we propose to set $E$ to a very large value using the limits
of numerical representation precision as a guideline. Given that integer
values are encoded using four bytes and have their values between
$\textnormal{INT}_{\min} \approx -2.10^9$ and
$\textnormal{INT}_{\max} \approx 2. 10^9$, one can set $E$ up to
$E_{\max} = 10^9 \approx \textnormal{INT}_{\max}/2$.

Then we can select a set of values of $G$ among the divisors of $E$, compute
the optimal histogram for each of those values according to \ENUMname and select the best overall one
using \GENUMname. Details about this procedure are given in Section
\ref{sec:granular-optimisation}. 

\section{Efficient search for MDL-optimal histograms}\label{sec:optimisation}
As recalled in Section \ref{sec:dynamic-programming}, most irregular histogram
construction methods leverage the dynamic programming principle to obtain an
optimal histogram (with respect to the chosen criterion). This is in
particular the case of the \NMLname  approach which has as a consequence a
$\mathcal{O}(E^3)$ complexity \citep{pmlr-v2-kontkanen07a}.

We describe in this Section several techniques used to reduce this
complexity. First, we introduce a greedy approach that can be applied to any
additive criterion to obtain in $\mathcal{O}(E \log  E)$ time a sub-optimal
histogram of good quality. Then, we show how to reduce this complexity to
$\mathcal{O}(n \log n)$ leveraging properties of enumerative
histograms. Finally, we detail and discuss the particularities of the optimisation of granular models.

\subsection{Greedy search}
We propose a greedy search heuristic that combines a classic bottom-up
approach and post-optimisations. It is based on a similar greedy algorithm to
minimize additive criteria in the case of supervised discretisation
\citep{BoulleML06}. It applies to any additive criterion, defined as follows.
\begin{definition}\label{def:additive-criterion}
  Let $\mathcal{M}=(K, C, (h_k)_{1 \leq k \leq K})$ be a histogram
  compatible with a data set $D$ of size $n$. The histogram is based on the
  intervals $]c_{k-1},c_k]$ defined by $C=(c_k)_{0\leq k\leq K}$. A criterion
  that evaluate the quality of $\mathcal{M}$ with respect to $D$,
  $q(\mathcal{M}|D)$, is \emph{additive} if it can be written
\begin{equation*}
 q(\mathcal{M}|D)=q_1(n, K)+\sum_{k=1}^Kq_2(c_{k-1},c_k,h_k).
\end{equation*}
\end{definition}
The main consequence of using an additive criterion is it locality. For
instance when two adjacent intervals are merged, the value of
$q(\mathcal{M}|D)$ for the resulting histogram can be computed from the value
of the criterion of the original model considering only the two merged
intervals. We exploit this property in the greedy search approach described by
Algorithm \ref{algo:greedy-search-optimisation}. 

\alglanguage{pseudocode}
\begin{algorithm}[htbp]
\small
\caption{Optimisation with a greedy search}
\label{algo:greedy-search-optimisation}
\begin{algorithmic}[1]
\Require $\epsilon$, $D=(x_i)_{1\leq i\leq n}$, additive criterion $q$
\Ensure Histogram model $\mathcal{M} = (K,
C, (h_k)_{1 \leq k \leq K})$
\Statex 
\Statex \textbf{Initialisation:}
\State \Call{Sort}{$D$} \Comment{in $\mathcal{O}(n \log  n)$}
\State $L_{\textnormal{bins}}$ = \Call{Create\_grid}{$D$, $\epsilon$} \Comment{with $E$ bins}
\Statex 
\For{consecutive bins $A$ and $B$} \Comment{in $\mathcal{O}(E)$ }
\State $\Delta q \gets  q(\mathcal{M}_{K-1, h_A + h_B}|D) - q(\mathcal{M}_{K, (h_A, h_B)}|D)$ 
\State $L_{\textnormal{merges}} \gets \Delta q $
\EndFor 
\State \Call{Sort}{$L_{\textnormal{merges}}$} \Comment{in $\mathcal{O}(E \log  E)$}
\Statex 
\Statex \textbf{Optimisation:}
\State $\Delta \gets$ \Call{Head}{$L_{\textnormal{merges}}$}
\While{$\Delta < 0$} 
\State $C$ = \Call{Merge}{$A$, $B$}
\State \Call{Update}{$L_{\textnormal{bins}}$, $C$}
\For{each bin adjacent to $C$}
\State Compute new cost variation $\Delta q$ \Comment{in $\mathcal{O}(1)$} 
\State \Call{Update} {$L_{\textnormal{merges}}$} \Comment{in $\mathcal{O}(\log  E)$}
\EndFor 
\EndWhile
\end{algorithmic}
\end{algorithm}

The algorithm is rather simple: we start with the most refined histogram based
of $E$ $\epsilon$-bins. A priority queue is used to store the effects on the
quality criterion of merging two adjacent intervals (here
$\epsilon$-bins). Then we coarsen the histogram by implementing the best merge
until the criterion cannot be improved ($\Delta < 0$). The key point is that
the merge qualities can be updated efficiently as a consequence of the
additivity of the criterion. 

Like most regularized histogram quality criteria, the \ENUMname and \NMLname  criteria are
additive: the overall computation time of the search for the best model can
thus go from $\mathcal{O}(E ^3)$ to $\mathcal{O}(E \log  E)$. 

Although the heuristic has the advantage of being
time and memory efficient, it may fall into a local optimum. We use two
heuristics suggested in \cite{BoulleML06} in to improve the quality of the final
model. Rather than stopping the merge process as soon as $\Delta < 0$, we
merge intervals up to the final histogram with a single interval. Then the
best histogram is selected among all the histograms explored by this greedy
merging approach. We post-optimise this histogram using local modifications of
the intervals, as detailed in \cite{BoulleML06}. This consists in choosing a
set of simple operation on contiguous intervals (split, merge, merge and
split, etc.) and in applying those operations in a greedy way until no
improvement is possible. These operations as chosen in a way that do not
modify the overall complexity of the algorithm.

Both heuristics are an important addition: extensive experiments reported in
\cite{BoulleML06} show that the greedy search alone produces an optimal
solution only in roughly 50~\% of the test cases, while the success rate
increases to 95~\%  when heuristics are included. 

\subsection{Optimisation gains for enumerative
  histograms}\label{sec:optimisation-gains}
The reduction from $\mathcal{O}(E ^3)$ to $\mathcal{O}(E \log  E)$ is very
important but our recommendation for setting $E$ (see Section
\ref{sec:convergence-rate}) would yield unacceptable running time even with
the greedy algorithm. However theoretical properties of the \ENUMname criterion can
be leveraged to reduce further the complexity.

Proposition \ref{proposition:opt-cutpoint}, in particular, shows that interval
end points must be close to data points: regardless of the precision of the
grid, i.e. of $E$, we need only to consider $2n-1$ candidate splits which are
the $\epsilon$ approximations of the data points. Thus, regardless of $E$, the
exact optimisation of the \ENUMname criterion can be done in $\mathcal{O}(n^3)$
while the greedy search has a complexity of $\mathcal{O}(n \log n)$.

\subsection{Optimisation of granular models}\label{sec:granular-optimisation}

To optimise the \GENUMname criterion, we use a large precision parameter $E=2^{30} \approx 10^9$, as large as possible w.r.t. the limits of numerical accuracy on computers (see section~\ref{sec:behaviour-g-enum}).
We then exploit a loop on power of two granularities $G=2^i, 0 \leq i \leq 30$ to retrieve the best model per granularity. As $E$ is a multiple of $G$ for each optimised granularity, the exact criterion of Table~\ref{tab:comp-GMODL} holds.
We exploit only the power of two granularities as a trade-off to both keep the computation time tractable and to explore a wide set of granularities.\\
Assuming that $n \leq E$, each step has a O$(G_i \log{G_i})$ time complexity for $G_i \leq n$ and O$(n \log{n})$ otherwise, since at most $2n$ intervals need to be considered for the optimisation of \ENUMname histograms. Overall, the total number of operations $t(n,E)$ is defined as follows.

\begin{eqnarray*}
t(n,E) &=& \sum_{i=1}^{\log_2 n} {2^i \log{2^i}} + \sum_{i=1+\log_2 n}^{\log_2 E} {n \log{n}},\\
       &\leq& \sum_{i=1}^{\log_2 n} {2^i \log{n}} + (\log_2 E - \log_2 n) n \log n,\\
			 &\leq& 2 n \log{n} + (\log_2 E - \log_2 n)n \log n,\\
			 &\leq& (2 + \log_2 E - \log_2 n)n \log n.
\end{eqnarray*}
As $E=10^9$ is a constant, the time complexity w.r.t. $n$ is O$(n \log n)$.
For a given $G$ parameter, the optimisation of the \GENUMname criterion can thus be performed in O$(n \log n)$.

\subsection{Efficient search for other methods}\label{sec:effic-search-other}
As pointed out in Section \ref{sec:data-driven-grid}, methods that use data
driven grid are generally based on dynamic programming and have a
$\mathcal{O}(n^3)$ complexity. Thus, they do not face the 
computational issues associated to a very fine fixed grid with a cost of
$\mathcal{O}(E^3)$, contrarily to e.g. the \NMLname  criterion
\citep{pmlr-v2-kontkanen07a}. An additional heuristic for fixed grid methods
consists in restricting the cut points between intervals to $\epsilon$
approximations of the data points, i.e. to apply Proposition
\ref{proposition:opt-cutpoint} even when its applicability has not been
proved. 

In the irregular histogram context, the applicability of dynamic programming is directly linked to the additivity of the criterion. Therefore, most of the methods reviewed earlier could benefit
from the greedy search algorithm proposed above. Note however that its complexity is
tied to the fact that the criterion can be updated in $\mathcal{O}(1)$. This is
the case for instance for the penalized likelihood variants studied in \cite{RMG2010}
but not for the NML criterion \citep{pmlr-v2-kontkanen07a}. As recalled
in Section \ref{sec:nml}, the parametric complexity term must be evaluated at
least $n$ times with an overall $\mathcal{O}(n^2)$ cost for an exact
calculation, which would mitigate any advantage the heuristic could provide.

\section{Experimental evaluation}\label{sec:experiments}
The experimental evaluation is divided in three parts. First, we compare the
performance of the three MDL criteria analysed in this paper. Secondly, we
compare our MDL methods to the state-of-the-art algorithms identified as
fully automated approaches to histogram building in the related work
section. Both of these benchmarking comparisons are done on synthetic data sets
of varying sizes, for which the performance of the different estimators can be
objectively measured. Finally, we showcase the performance and practical relevance of
our method for exploring real-world data of large scale.

A binary standalone implementation of the \GENUMname based histogram
construction is available here: \url{http://marc-boulle.fr/genum/}. 

\subsection{Experimental protocol}
The methods are evaluated on a collection of data sets. For artificial data
set, we use samples of increasing size from $n=10$ to
$n=100,000$ or $n=1,000,000$ samples when possible. 

For each evaluation metric, we report the mean and standard deviation of
results obtained over $10$ independent samples (for each distribution and
sample size). 

\subsubsection{Implementation}
All experiments are carried out on a Windows 10 machine equipped with an AMD
Ryzen 2-core processor and 6 GB of RAM memory. 

The MDL criteria presented in this paper as well as the optimisation
algorithms are implemented in C++.
As proposed in Section \ref{sec:effic-search-other}, we restrict the search of
the cut points of for  \NMLname  criterion to $\epsilon$ approximations of the
data points, even if Proposition \ref{proposition:opt-cutpoint} does not
apply. This brings some computational efficiency to this method even when $E$
is large.

\subsubsection{Metrics} 
The comparison between histograms models is based on the number of intervals
each histogram has, as well as the time it took to compute it. Additionally,
the relevance of each histogram is evaluated by computing the Hellinger
distance to the original model density. 

The Hellinger Distance (HD) $H(p,q)$ for $p,q$ being probability density functions, is defined as 
$$H(p,q)=\frac{1}{\sqrt{2}}\sqrt{\int(\sqrt{p(x)}-\sqrt{q(x)})^2} dx$$\\

A HD close to 0 indicates a strong similarity between probability distributions. It is worth noting that most authors used the {\bf squared} Hellinger distance, which may produce an impression of a better convergence of the estimations \citep{pmlr-v2-kontkanen07a,Davies2009,Luosto2012}. The Hellinger distances presented in this paper {\bf are not squared}.
HD measures reported are obtained via numerical integration to estimate the probability distributions of the model density and of the histogram that models it. 

\subsubsection{Reference distributions}\label{sec:exp-data sets}
For the benchmarking experiments, we compare all methods over the 6 distributions described in table \ref{tab:densities}.

\renewcommand{\arraystretch}{1.6}
\begin{table}[!h]
\centering
\caption{All the densities used for benchmarking \label{tab:densities}}
\begin{tabular}{p{2.5cm}p{4cm}}
\hline
 Normal &  $\mathcal{N}(0, 1)$\\
 Cauchy & $\mathcal{N}(0, 1)/ \mathcal{N}(0, 1)$\\
 Uniform & $\mathcal{U}([0;1])$\\
 Triangle as in \cite{Davies2009} & $\mathcal{T}(0.158)$\\
 Triangle mixture & $0.1 ~ \mathcal{T}(0.158) + 0.3 ~ \mathcal{T}(0.258) + 0.4 ~ \mathcal{T}(0.500) + 0.2 ~ \mathcal{T}(0.858)$\\
 Claw as in \cite{Davies2009} & $0.5 ~ \mathcal{N}(0.0, 1.0) + 0.1 ~ \mathcal{N}(-1.0, 0.1) + 0.1 ~ \mathcal{N}(-0.5, 0.1) + 0.1 ~ \mathcal{N}(0.0, 0.1) + 0.1 ~ \mathcal{N}(0.5, 0.1) + 0.1 ~ \mathcal{N}(1.0, 0.1)$ \\
\hline
\end{tabular}
\end{table}

\subsection{Comparison of MDL-based methods}
In this subset of experiments, we focus on illustrating the similarities and
differences between histograms produced by the \NMLname  , \ENUMname and
\GENUMname criteria. The \NMLname   and \ENUMname criteria were optimised
either using the optimal dynamic programming algorithm presented in
\cite{pmlr-v2-kontkanen07a} or with the search heuristic presented in this
paper.

We tested the \NMLname  , \ENUMname and \GENUMname criteria over the synthetic
data sets described in table \ref{tab:densities}. We set the approximation
accuracy to $\epsilon=0.01$ across all experiments on synthetic data for the \NMLname   and
\ENUMname methods. The accuracy of \GENUMname is optimised as part of the
estimation process. \\
Detailed
results are provided in the appendix for each distribution (figures
\ref{fig:comparison-MDL-normal} to \ref{fig:comparison-MDL-claw}). Section
\ref{sec:role-appr-accur-1} provides in addition a study of the impact of
$\epsilon$ on the results on a synthetic data set. In this section, we provide only the main insights we could gather from this benchmark.

\begin{table*}\scriptsize
\centering
\caption{Hellinger distance values over different distributions of sample size $n=10^4$, for MDL methods} \label{tab:MDL-hd-n10-4}
\begin{tabular}{p{2.5cm}|cccccc}
\hline
Distribution & \NMLname   + optimal & \NMLname   + heuristic & \ENUMname + optimal & \ENUMname + heuristic & \GENUMname     \\
\hline
Normal & $ 0.046 \pm 0.002$ & $ 0.047 \pm 0.002$ & $ 0.046 \pm 0.002$ & $ 0.047 \pm 0.002$ & $ 0.045 \pm 8 \cdot 10^{-4} $   \\ 
Cauchy & $ 0.074 \pm 0.003$ & $ 0.073 \pm 0.003$ & $ 0.075 \pm 0.004 $ & $ 0.075 \pm 0.004$ & $ 0.060 \pm 0.004 $  \\ 
Uniform & $ 0.051 \pm 0.005$ & $ 0.051 \pm 0.005 $ & $ 0.052 \pm 0.006$ & $ 0.052 \pm 0.006$ & $ 0.024 \pm 0.001$  \\ 
Triangle & $ 0.038 \pm 0.002$ & $ 0.038 \pm 0.002$ & $ 0.038 \pm 0.002$ & $ 0.039 \pm 0.002$ & $ 0.039 \pm 0.002$  \\ 
Triangle mixture & $ 0.038 \pm 0.003$ & $ 0.038 \pm 0.003 $ & $ 0.039 \pm 0.003 $ & $ 0.038 \pm 0.003$ & $ 0.039 \pm 0.002 $ \\ 
Gaussian mixture & $ 0.059 \pm 0.002$ & $ 0.059 \pm 0.002 $ & $ 0.058 \pm 0.002 $ & $ 0.058 \pm 0.002 $ & $ 0.057 \pm 0.002 $   \\ 
 \hline
\end{tabular}
\end{table*}

\paragraph*{Intervals}  ~~ Across all data sets and all sample sizes, \NMLname  , \ENUMname  and \GENUMname  histograms have the same number of intervals. This confirms that the criteria are very similar. Moreover,  we can see that choosing either the dynamic programming algorithm or the search heuristic for the optimisation results in almost identical interval counts.

\paragraph*{Hellinger distance} ~~ The same can be said for HD values: \NMLname  , \ENUMname  and \GENUMname  histograms are very similar estimators. A steady decrease in HD values can be observed for the 5 variants, which means that all methods produce better estimations as the sample size increases. We highlight however that across all data sets, the \GENUMname  method has slightly better HD values, especially for large sample sizes (see table \ref{tab:MDL-hd-n10-4} for an overview).

\paragraph*{Computation time}~~ The main difference between the methods lies in the computation time. The results obtained across the different distributions showcase the advantage of using the greedy search heuristic to optimise MDL criteria. A particularly striking difference is observed for the Cauchy distribution (figure \ref{fig:comparison-MDL-cauchy}), where \NMLname   histograms are computed up to 10 times faster with the heuristic than with the optimal algorithm for sample sizes from $n=100$ to $n=10^4$. From $n=10^5$, the optimal algorithm took over an hour long. Additionally, we observed that for the uniform, triangle and triangle mixture distributions, the same amount of time is needed to compute \NMLname histograms with either algorithm as the sample sizes increase. This shows that, for some data sets, the cost of computing the \NMLname   criterion itself can outweigh the benefits of using the search heuristic.\\
For the \ENUMname  histograms the gap in computation time is not as easily
closed. The \ENUMname  histograms computed with the optimal algorithm take
slightly less time than the \NMLname   histograms for the Normal, Cauchy and Gaussian
mixture distributions. For the uniform, triangle and triangle mixtures
however, using the \ENUMname  even with the optimal algorithm can cut
computation time by a factor of $100$ as sample sizes increase. Enumerative
histograms produced with the heuristic remain the fastest of all MDL-based
methods throughout all experiments.

It is worth noting that, although the \GENUMname  method takes slightly longer, it remains consistently at a mid-point between the methods.

\paragraph*{Overall conclusion} ~~ \NMLname   and \ENUMname  histograms are
interchangeable in terms of interval count and Hellinger distance, and this
regardless of what algorithm is chosen to optimise the criteria. In terms of
computation time however, there is a clear advantage in preferring the search
heuristic and the simpler enumerative criteria. Finally, the use of 
\GENUMname  histograms increases slightly the computational time and improves
also slightly the estimation quality in most cases in terms of Hellinger distance.

\subsection{Comparison with other fully automated histogram methods}
We compare in this section our MDL based methods with state-of-the-art fully
automated methods. More precisely, the comparison includes:
\begin{itemize}
\item \texttt{NML + heuristic}, the \NMLname   criterion optimised with our search heuristic;
\item \GENUMname , our fully automated enumerative criterion;
\item \texttt{Taut string} histograms \citep{Davies2004,Davies2009},
  as implemented in the \texttt{ftnonpar} R package \citep{ftnonpar};

\item \texttt{RMG} histograms \citep{RMG2010}, as implemented in the
  \texttt{histogram} R package \citep{mildenberger2019package};

\item Sturges rule histograms, as implemented in Python's \texttt{numpy}
  library \citep{harris2020array};

\item \texttt{Freedman-Diaconis} rule histograms \citep{Freedman1981}, as implemented in Python's \texttt{numpy} library;

\item \texttt{Bayesian blocks} \citep{Scargle2013}, as implemented in
  Python's \texttt{AstroPy} library \citep{astropy:2022}.
\end{itemize}

A very complete evaluation provided in \cite{RMG2010} concluded
that cross-validation based estimators were not among the best performing
solutions and we excluded them from the comparison. In addition \citep{RMG2010}
showed that \texttt{RMG} histograms tend to provide the best overall
performances. 

For each studied distribution, we provide a visual example of histograms
produced by the 7 methods, as well as the variations in interval counts,
computation times and HD values as sample sizes increase. Detailed results are
provided in \ref{apx:exp-others}.

Given the extensive nature of these experiments, we selected a subset of the
results to provide an 
overview for a $n=10^4$ sample size in tables \ref{tab:hd-n10-4},
\ref{tab:computation-n10-4} and \ref{tab:nbK-n10-4}. 

\begin{table*}\scriptsize
\centering
\caption{Hellinger distance values over different distributions of sample size $n=10^4$} \label{tab:hd-n10-4}
\begin{tabular}{p{2cm}|p{1.5cm}p{1.5cm}p{1.5cm}p{1.5cm}p{1.5cm}|p{1.5cm}p{1.5cm}}
\hline
Distribution & \GENUMname  & \NMLname  \newline  \cite{pmlr-v2-kontkanen07a} & BB \newline  \cite{Scargle2013} & TS \newline  \cite{Davies2009} & RMG \newline  \cite{RMG2010} & FD \newline  \cite{Freedman1981} & Sturges    \\
\hline
Normal & $ 0.045 \pm 6 {.} 10^{-4} $ & $ 0.046 \pm 0.002 $ & $ 0.047 \pm 0.002 $ & $ 0.040 \pm 0.002 $ & $ 0.034 \pm 0.002 $ & $ 0.033 \pm 0.002 $ & $ 0.055 \pm 0.002 $  \\ 
Cauchy & $ 0.061 \pm 0.004 $ & $ 0.074 \pm 0.003 $ & $ 0.064 \pm 0.002 $ & $ 0.045 \pm 0.005 $ & $ 0.064 \pm 0.001 $ & $ 0.138 \pm 0.002 $ & $ 0.862 \pm 0.036 $  \\ 
Uniform & $ 0.024 \pm 0.001 $ & $ 0.050 \pm 0.005 $ & $ 0.025 \pm 0.004 $ & $ 0.031 \pm 0.015 $ & $ 0.029 \pm 0.011$ & $ 0.082 \pm 0.011 $ & $ 0.028 \pm 0.002 $  \\ 
Triangle & $ 0.039 \pm 0.002 $ & $ 0.038 \pm 0.0025 $ & $ 0.039 \pm 0.001$ & $ 0.084 \pm 0.024 $ & $ 0.084 \pm 0.029 $ & $ 0.032 \pm 0.002 $ & $ 0.049 \pm 9 {.} 10^{-4} $  \\ 
Triangle mixture & $ 0.037 \pm 0.002 $ & $ 0.038 \pm 0.003 $ & $ 0.040 \pm 0.003 $ & $ 0.078 \pm 0.029 $ & $ 0.069 \pm 0.026 $ & $ 0.032 \pm 0.002 $ & $ 0.043 \pm 4 {.} 10^{-4} $  \\ 
Gaussian mixture & $ 0.057 \pm 0.002 $ & $ 0.059 \pm 0.002 $ & $ 0.060 \pm 0.002 $ & $ 0.040 \pm 0.001 $ & $ 0.052 \pm 0.002 $ & $ 0.060 \pm 0.001 $ & $ 0.142 \pm 0.013 $  \\ 
 \hline
\end{tabular}
\end{table*}

\begin{table*}\scriptsize
\centering
\caption{Computation times (in seconds) over different distributions of sample size $n=10^4$} \label{tab:computation-n10-4}
\begin{tabular}{p{2cm}|p{1.5cm}p{1.5cm}p{1.5cm}p{1.5cm}p{1.5cm}|p{1.5cm}p{1.5cm}}
\hline
Distribution & \GENUMname  & \NMLname  \newline  \cite{pmlr-v2-kontkanen07a} & BB \newline \cite{Scargle2013} & TS \newline \cite{Davies2009} & RMG \newline \cite{RMG2010} & FD \newline \cite{Freedman1981} & Sturges    \\
\hline
Normal & $ 0.014 \pm 0.003 $ & $ 2.724 \pm 0.283 $ & $ 5.785 \pm 0.479 $ & $ 0.014 \pm 0.002 $ & $ 1.239 \pm 0.085$ & $ 0.002 \pm 2. 10^{-4} $ & $ 6. 10^{-4} \pm 2. 10^{-4}$  \\ 
Cauchy & $ 0.028 \pm 0.006 $ & $ 121.60 \pm 4.99 $ & $ 3.250 \pm 0.112 $ & $ 0.116 \pm 0.205 $ & $ 0.906 \pm 0.107 $ & $ 0.009 \pm 0.014 $ & $ 0.001 \pm 0.003 $  \\ 
Uniform & $ 0.015 \pm 0.002 $ & $ 0.168 \pm 0.012$ & $ 5.989 \pm 0.167 $ & $ 0.011 \pm 0.002 $ & $ 1.387 \pm 0.139 $ & $ 0.002 \pm 3. 10^{-4} $ & $ 6. 10^{-4} \pm 2. 10^{-4}$   \\ 
Triangle & $ 0.014 \pm 0.005 $ & $ 0.169 \pm 0.005 $ & $ 5.962 \pm 0.113 $ & $ 0.015 \pm 0.002 $ & $ 1.291 \pm 0.091 $ & $ 0.002 \pm 2. 10^{-4} $ & $ 6. 10^{-4} \pm 2. 10^{-4}$   \\ 
Triangle mixture & $ 0.012 \pm 0.006 $ & $ 0.103 \pm 0.027 $ & $ 3.004 \pm 0.245 $ & $ 0.013 \pm 0.006 $ & $ 0.954 \pm 0.138 $ & $ 0.002 \pm 0.005 $ & $ 0.0 \pm 0.0$  \\ 
Gaussian mixture & $ 0.017 \pm 0.002 $ & $ 1.91 \pm 0.085 $ & $ 4.165 \pm 0.369 $ & $ 0.048 \pm 0.005 $ & $ 1.056 \pm 0.077 $ & $ 0.006 \pm 0.008 $ & $ 0.002 \pm 0.005 $  \\ 
 \hline
\end{tabular}
\end{table*}

\begin{table*}\scriptsize
\centering
\caption{Optimal bin counts over different distributions of sample size $n=10^4$} \label{tab:nbK-n10-4}
\begin{tabular}{p{2cm}|p{1.5cm}p{1.5cm}p{1.5cm}p{1.5cm}p{1.5cm}|p{1.5cm}p{1.5cm}}
\hline
Distribution & \GENUMname  & \NMLname  \newline  \cite{pmlr-v2-kontkanen07a} & BB \newline  \cite{Scargle2013} & TS \newline \cite{Davies2009} & RMG \newline \cite{RMG2010} & FD \newline  \cite{Freedman1981} & Sturges    \\
\hline
Normal & $ 16.30 \pm 0.46 $ & $ 15.60 \pm 1.02 $ & $ 15.90 \pm 1.04 $ & $ 72.50 \pm 4.41 $ & $ 39.70 \pm 7.34$ & $ 62.40 \pm 2.29 $ & $ 15.0 \pm 0.0 $  \\ 
Cauchy & $ 30.90 \pm 2.43 $ & $ 23.60 \pm 1.02 $ & $ 29.40 \pm 1.56 $ & $ 144.90 \pm 9.26 $ & $ 29.50 \pm 1.57 $ & $ 110711.90 \pm 132580.43 $ & $ 15.0 \pm 0.0 $ \\ 
Uniform & $ 1.0 \pm 0.0$ & $ 2.80 \pm 0.60 $ & $ 1.30 \pm 0.90 $ & $ 3.70 \pm 5.44 $ & $ 1.70 \pm 1.80 $ & $ 22.0 \pm 0.0 $ & $ 15.0 \pm 0.0 $  \\ 
Triangle & $ 12.50 \pm 0.92 $ & $ 13.60 \pm 0.66 $ & $ 12.60 \pm 0.66 $ & $ 48.0 \pm 5.85 $ & $ 33.70 \pm 8.25 $ & $ 32.10 \pm 0.30 $ & $ 15.0 \pm 0.0 $  \\ 
Triangle mixture & $ 11.20 \pm 0.75 $ & $ 12.00 \pm 0.77 $ & $ 10.90 \pm 0.70 $ & $ 42.30 \pm 4.90 $ & $ 27.60 \pm 8.0 $ & $ 30.80 \pm 0.40 $ & $ 15.0 \pm 0.0 $  \\ 
Gaussian mixture & $ 28.90 \pm 1.22 $ & $ 27.30 \pm 1.49 $ & $ 27.00 \pm 2.00 $ & $ 134.90\pm 9.37 $ & $ 100.40 \pm 11.60 $ & $ 66.40 \pm 2.42 $ & $ 15.0 \pm 0.0 $  \\ 
 \hline
\end{tabular}
\end{table*}

Both the tables and the more detailed plots show that the other fully automated methods compared here work their best in the specific cases they were designed for. Although rarely the best for each distribution type, \GENUMname  histograms are consistently among the best estimators, and this without the high variability of the other methods. Focusing on irregular histograms, \GENUMname is certainly among the most parsimonious in number of intervals. For exploratory analysis, this is an important quality because it makes the interpretation of the results easier and more reliable. \GENUMname is also by far the fastest of irregular methods, making it suitable to large data sets. 

\subsection{Illustration on a large-scale real-world data set}
We focus now on the performance and practical relevance of the \GENUMname
method for modelling an unknown distribution from a real-world large scale
data set.

The {\em Lunar Crater Database}
\footnote{\url{https://astrogeology.usgs.gov/search/map/Moon/Research/Craters/lunar_crater_database_robbins_2018}}
contains 1.3 million entries on lunar impact craters larger than 1 to 2 km in
diameter \citep{Robbins2018}. Craters were manually identified and measured
from images of NASA's Lunar Reconnaissance Orbiter (LRO), taken from 2011
until 2018. The run time and histogram sizes obtained on this data set are
given in Table \ref{tab:lunar}.

\begin{table*}\scriptsize
\centering
\caption{Results for the Lunar data set} \label{tab:lunar}
\begin{tabular}{lp{1.5cm}p{1.5cm}p{1.5cm}p{1.5cm}p{1.5cm}p{1.5cm}}
\hline
 & \GENUMname  & \NMLname \newline   \cite{pmlr-v2-kontkanen07a} & BB \newline  \cite{Scargle2013}& TS \newline  \cite{Davies2009} & RMG \newline  \cite{RMG2010}   \\
\hline
Number of intervals & 75 & 65 & 86 &478 &73\\ 
Computation time (seconds) & 1.3 & 1025 & 4489.26 & 53.59 & 37.02 \\ 
\hline
\end{tabular}
\end{table*}

Although we ignore which law governs craters' diameter distribution, the
granulated criterion produces a smooth-looking histogram with only  $K^* = 75$
intervals for the 1.3 million entries (Figure \ref{fig:lunar-gmodl}, with both
axis in log scale). This fairly compact representation is easy to
interpret. We notice for example, that the first 10 intervals are the most
dense ones; they account for about 40\% of all entries. Intervals become less
dense for higher crater diameters. The last interval spans over about 1500 km
of crater diameters and only accounts for 3 data entries.

\begin{figure}
\centering
\includegraphics[width=\linewidth]{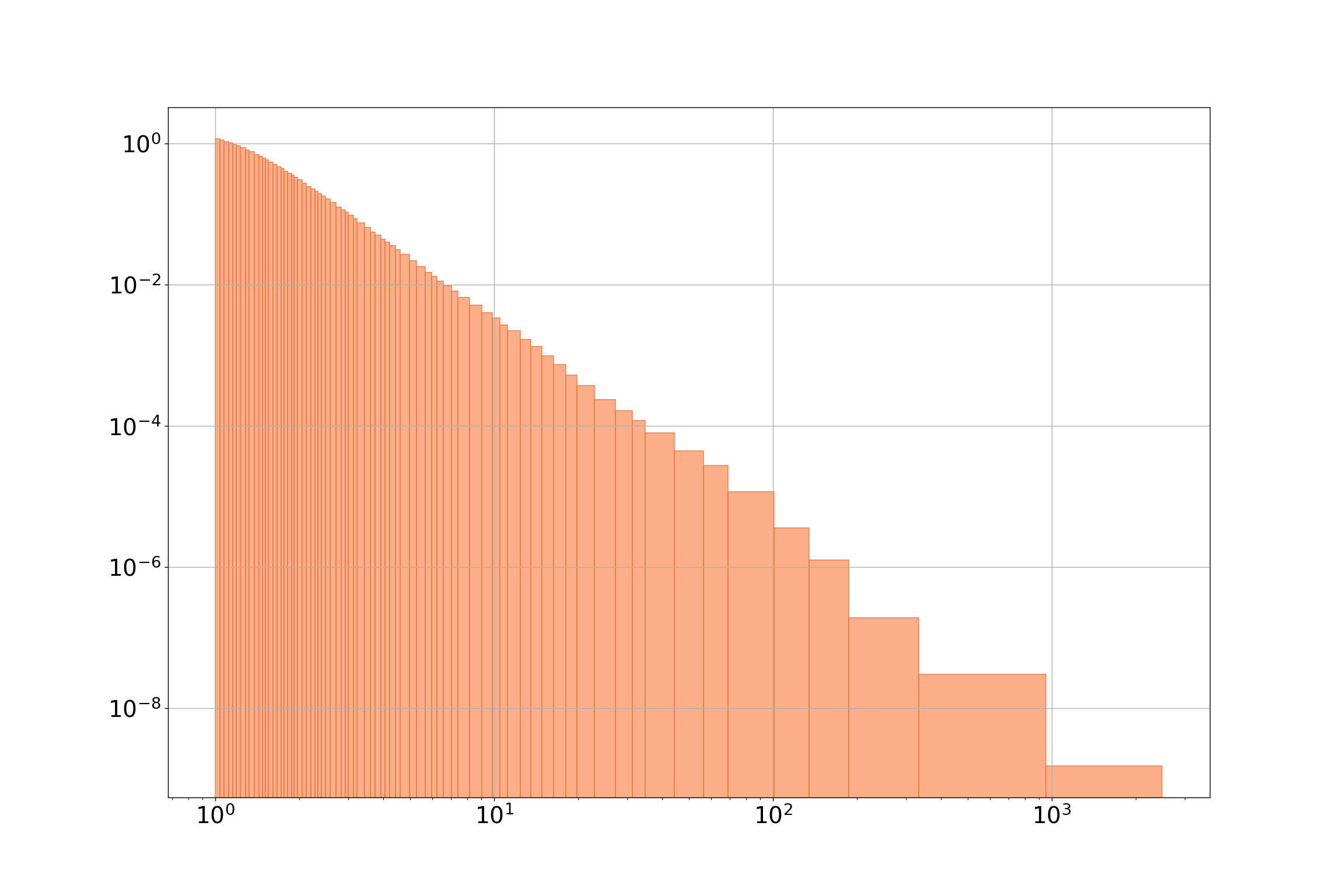}
\caption{\GENUMname histogram of the Lunar crater data set in a log scale ($K^* = 75$)}
\label{fig:lunar-gmodl}
\end{figure}

Overall, the shape of our irregular histogram also shows that our approach can
capture a power law decrease of the densities. This is in line with
astrophysics literature: power or multiple power laws are often used to fit
the crater size distribution \citep{Wang2016,MINTON201963}. The same
information would be hard to convey with an equal-width histogram and an
arbitrary number of bins. Our irregular histogram reveals interesting patterns
and only requires a reasonable number of intervals to do so.

The density estimation experiment on this data set also shows the scalability
of our method: the granular histogram was computed in about $1.3$ seconds. The
taut-string method and the RMG approach remain usable with respectively about
54 seconds and 37 seconds, but both \NMLname   histogram ($\approx 17$ minutes of
calculation) and the Bayesian Blocks method (almost 1 hour and 15 minutes of
run time) use an unreasonable quantity of computational resources.

\begin{figure}
  \centering  
  \caption{Different histograms obtained for the Lunar data set\label{fig:lunar-hists}}
  \setkeys{Gin}{width=0.25\textwidth}
	\subfloat[NML + heuristic ($K^*=65$)]
  {\includegraphics{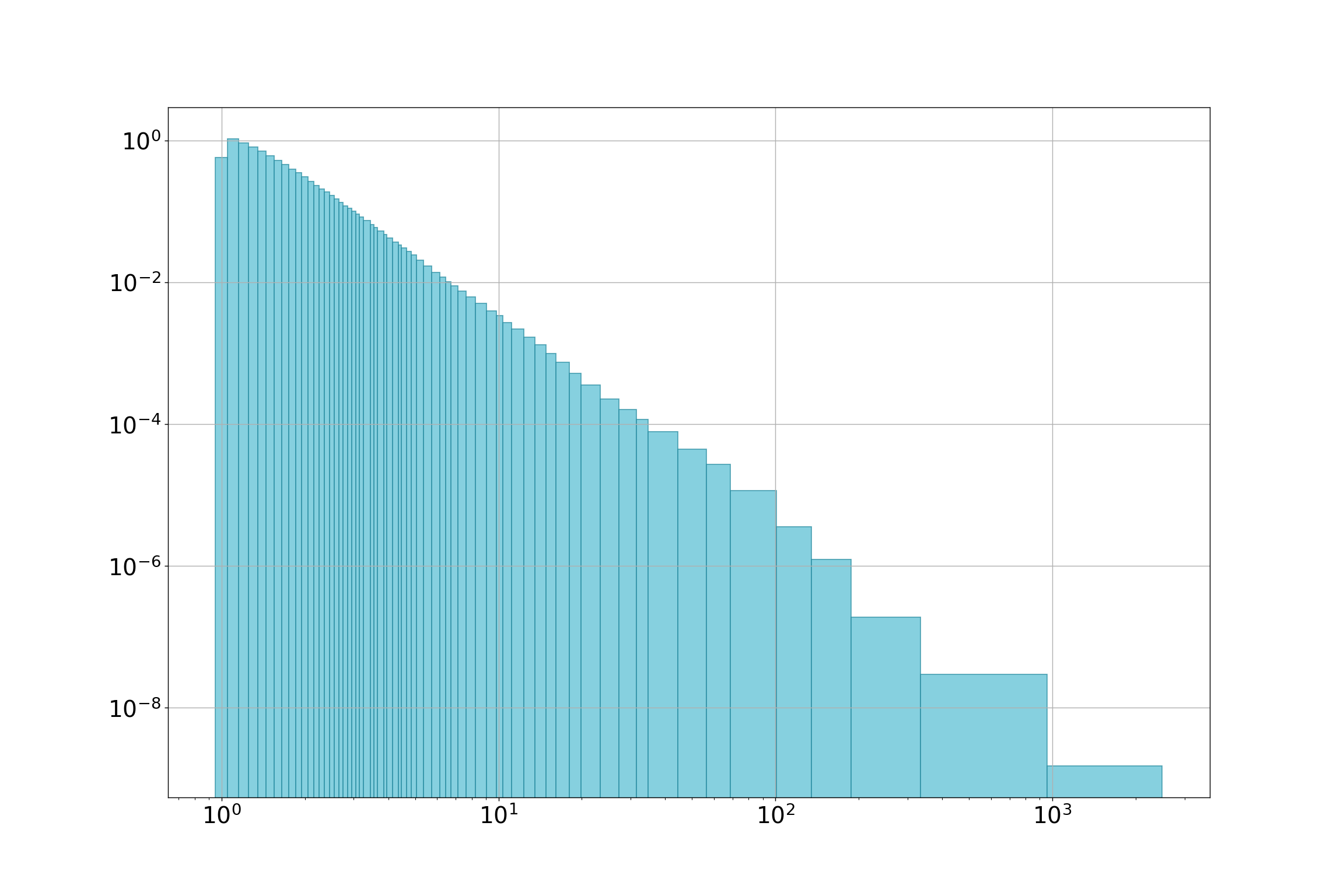}}%
	\hfill
	\subfloat[Taut string ($K^*= 478$)]
  {\includegraphics{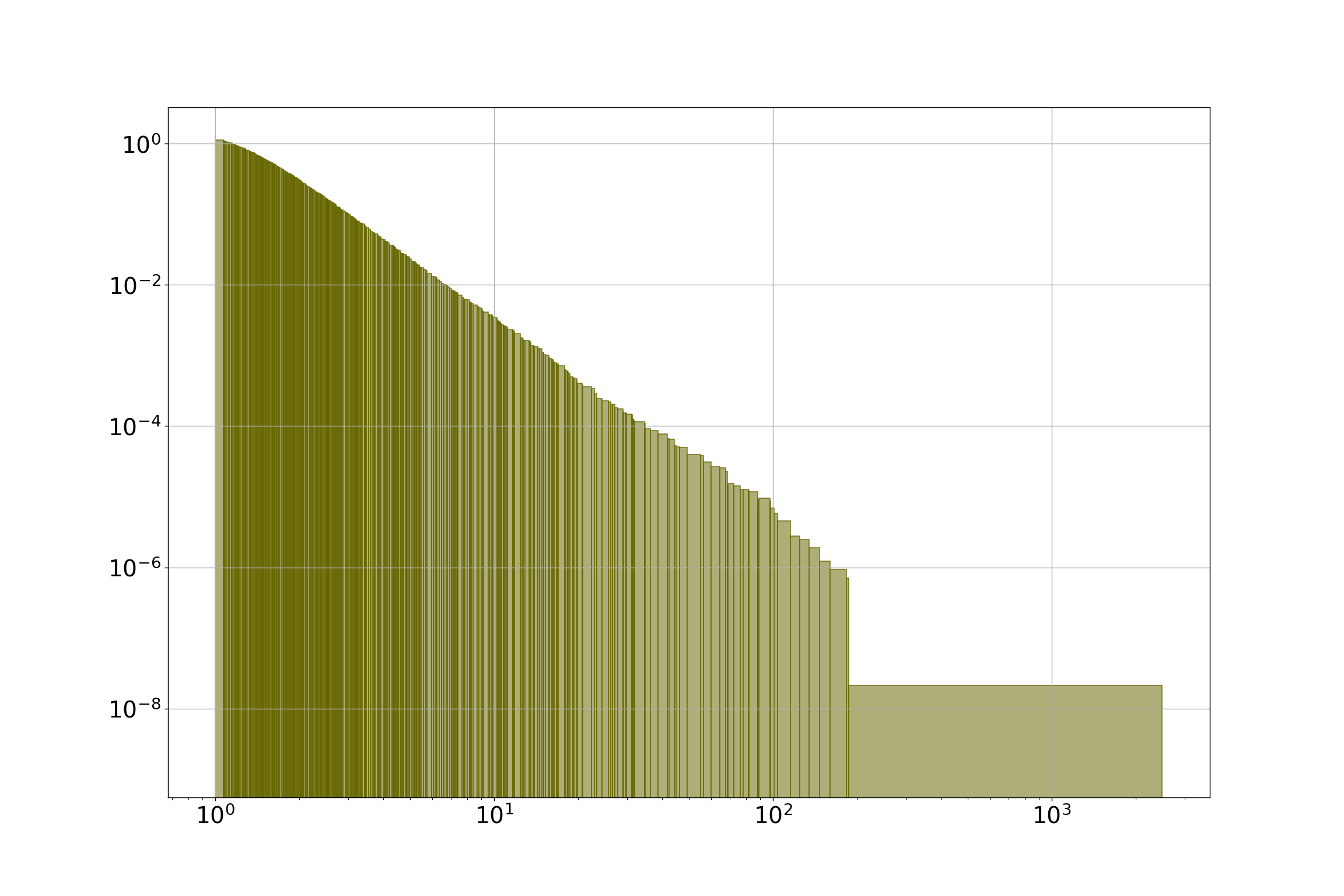}}%
	\hfill
	\subfloat[RMG ($K^*=73$)]
        {\includegraphics{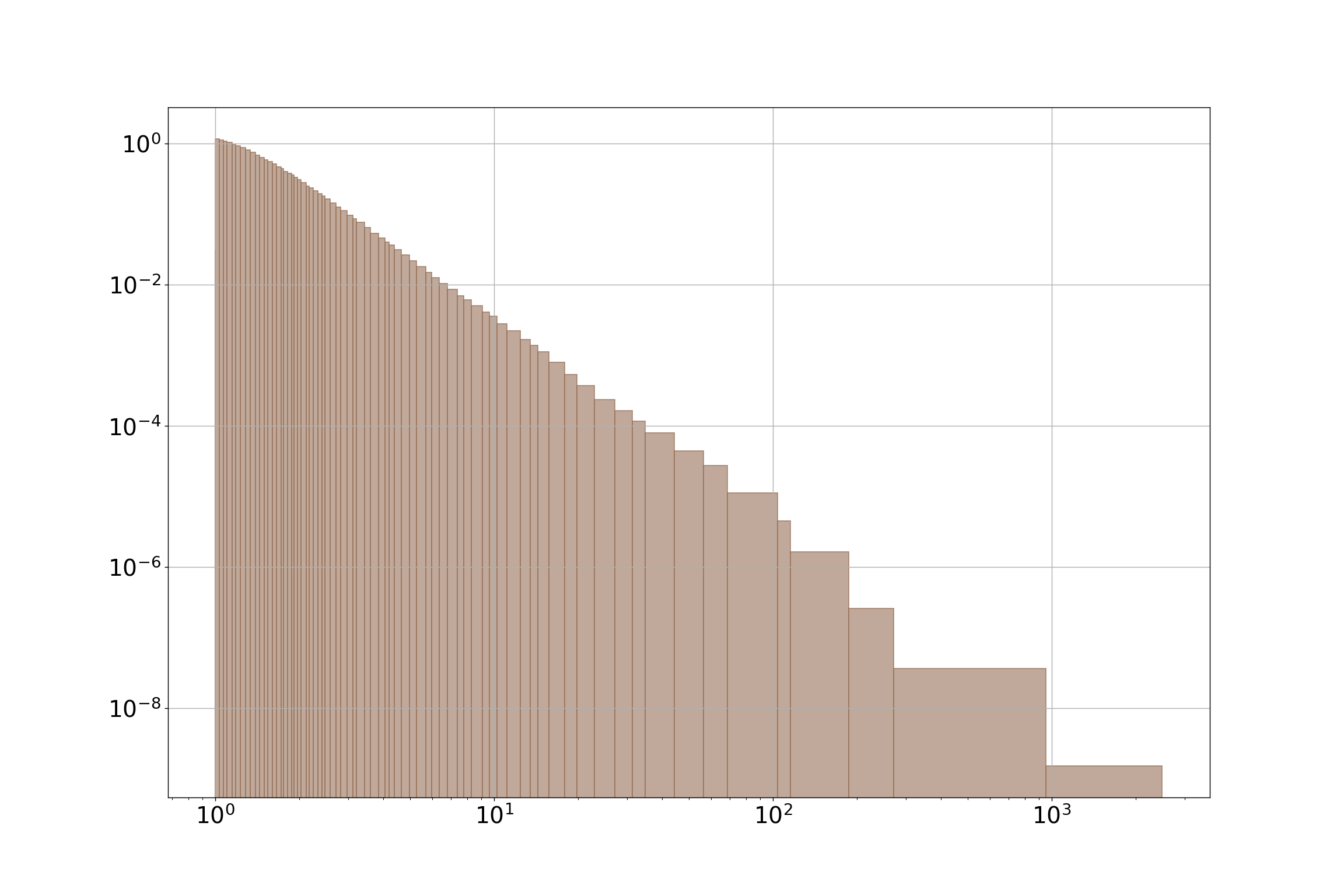}}%
        \hfill
	\subfloat[Bayesian Blocks ($K^*=86$)]
  {\includegraphics{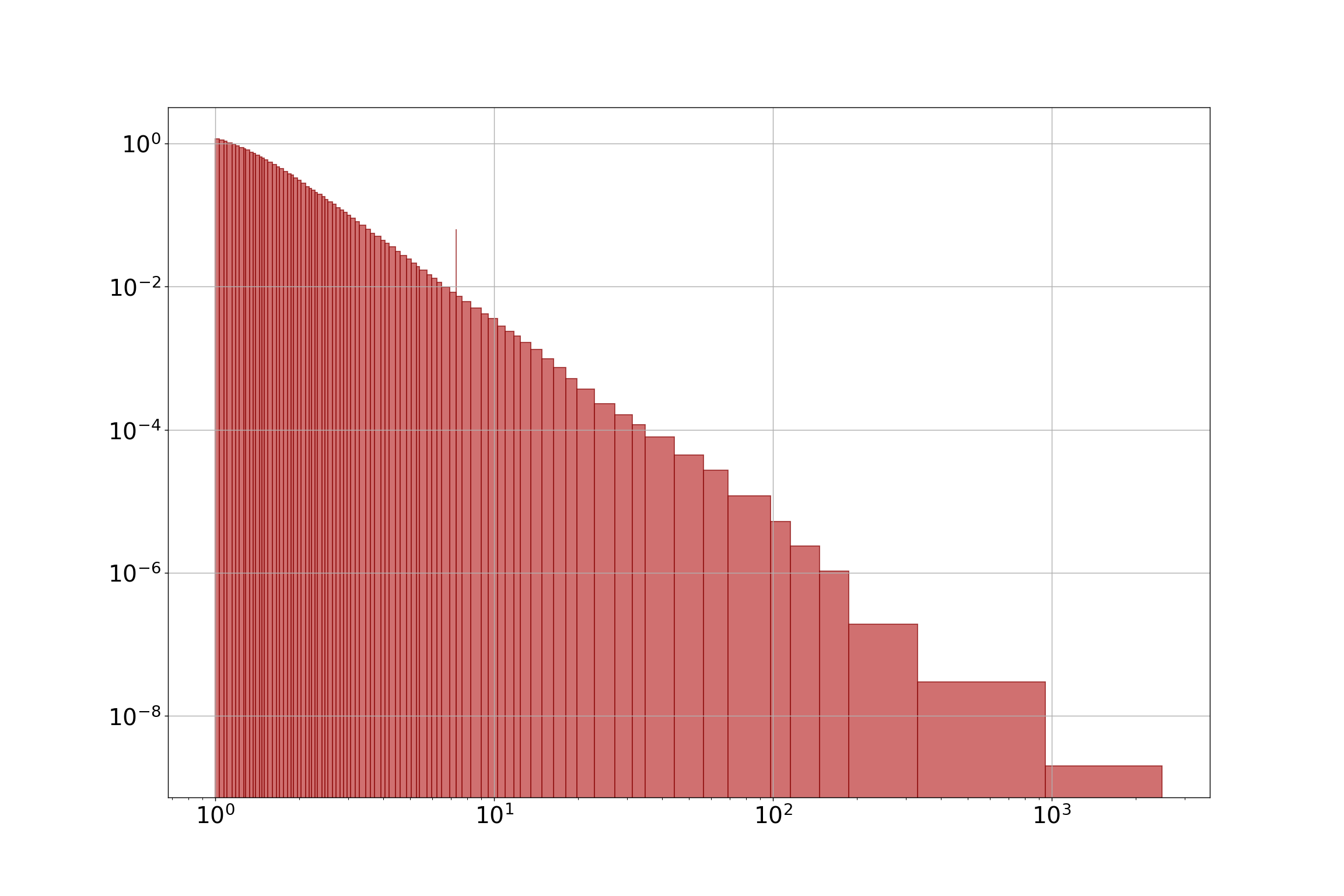}}%
\end{figure}

Figure \ref{fig:lunar-hists} displays the histograms obtained by those methods
while Table \ref{tab:lunar:hd} gives the Hellinger distances between the
densities estimated by the different methods. The \NMLname   histogram is the most
unique one. The log log display used on the figures hides to some extent the
main source of disagreement between the histograms which is the shorter first
interval used by the \NMLname   histogram compared to the others. All other
histograms are fairly close to one another. This emphasizes the non-parsimonious nature of the Taut string method which uses 478 intervals to
produce an histogram that is fairly close to the one obtained by the
Bayesian Blocks method with only 86 intervals. In addition, while this does
not play a major role in the Hellinger distance because of its quite low
support, the estimation of the tail of the distribution by the Taut string
histogram seems to be more crude than other solutions.

\begin{table}[htbp]
  \centering
  \begin{tabular}{c|ccccc}
    \hline
    Method & \GENUMname   & \NMLname   & RMG & TS & BB\\ \hline
    \GENUMname   & - & 0.131 & 0.0118 & 0.0103 & 0.0112\\
    \NMLname   & - & - & 0.131 & 0.123 & 0.130\\
    RMG & - & - & - & 0.0106 & 0.0114\\
    TS & - & - & - & - & 0.00846\\ \hline
  \end{tabular}
  
  \caption{Hellinger distances between the densities estimated by the methods
    on the Lunar Crater data set.}
  \label{tab:lunar:hd}
\end{table}

Overall, this experiment confirms the computational efficiency of \GENUMname
and its ability to produce compact representations of complex and unknown distributions. Importantly, the resulting models are better or comparable to the ones obtained by less efficient solutions. G-Enum histograms provide interesting insights on the previously unknown distribution laws of large data sets without needing much computation time.

\section{Conclusion}\label{sec:conclusion} 

We presented a simple yet robust and very efficient enumerative criterion for histogram model
selection that produces indistinguishable results to those obtained with much more compute-costly \NMLname  approach presented in a previous work.

By pairing our criterion with a search heuristic rather than the optimal but costly original optimisation algorithm, we achieve substantial gains in computation time.
By introducing granularity to alleviate our dependency to the sole user parameter of this problem, the approximation accuracy $\epsilon$, we achieve substantial gains in robustness.

With our theoretical and  experimental evaluation of these criteria we
show that our granulated MDL criterion fills a gap in the current histogram model
selection landscape : it's a resilient, efficient and fully automated approach to histogram density estimation that can scale to explore known or unknown distribution laws in large data sets.

\section*{Acknowledgments}
  We thanks the two anonymous reviewers and the associated editor for
  their insightful comments which help us to improve the quality of
  the manuscript. 

%
%

\newpage
\bibliographystyle{elsarticle-harv}
\bibliography{MDLhistspaper}   

\appendix
\newpage
\setcounter{proposition}{0}
\setcounter{theorem}{0}
\setcounter{corollary}{0}
\section{Rewrite of K\&M's NML criterion}\label{apx:KM-rewrite} 
The final form of K\&M's NML criterion in their paper is as follows. 
\begin{align*}
B(x^n | E, K, C) & = \mathcal{SC}(x^n | \mathcal{M} ) + \log  \binom{E}{K-1}\\
& = \sum^K_{k=1} -h_k (\log (\epsilon \cdot h_k) - \log  (L_k \cdot n)) \nonumber\\
& + \log  ~{\mathcal{R}}^n_{\mathcal{M}}  + \log  \binom{E}{K-1} \nonumber 
\end{align*}

To ease the comparison of both criteria, we simplified the likelihood term as described heareafter.
\begin{align*}
 \sum^K_{k=1} -h_k (\log (\epsilon \cdot h_k) - \log  (L_k \cdot n))&= \sum^K_{k=1} -h_k \left[\log  \epsilon -\log  L_k \right] \\
 &+ \sum^K_{k=1} -h_k \left[\log  h_k - \log  n \right]
\end{align*}
Then, we have 
\begin{align*}
\sum^K_{k=1} -h_k \left[\log  h_k - \log  n \right] &=  -\log  \left( \prod^K_{k=1} {h_k}^{h_k}  \right) +\log  n^n \\
&=  \log  \frac{n^n}{{h_1}^{h_1} \dotsm {h_K}^{h_K}}
\end{align*}
And
\begin{equation*}
\sum^K_{k=1} -h_k \left[\log  \epsilon -\log  L_k \right]= \sum^K_{k=1} h_k \log  \frac{L_k}{\epsilon}=\sum^K_{k=1} h_k \log  E_k 
\end{equation*}

\section{Proofs}\label{apx:proofs}
Proofs are not given in the same order as in the main text. Indeed
proofs of propositions \ref{proposition:singleton-bins} and
\ref{proposition:empty-bins} are based on proposition
\ref{proposition:opt-cutpoint} but for the clarity of exposure, we preferred
to present this latter proposition before the former ones. Of course, the
proof of proposition \ref{proposition:opt-cutpoint} uses neither proposition
\ref{proposition:singleton-bins} nor proposition \ref{proposition:empty-bins},
which is emphasized by the natural mathematical order used in the present
section. 

\subsection{Proof of proposition \ref{proposition:no-null-length}}
\begin{proposition}
Let $\mathcal{M}=(K, (c_k)_{0\leq k\leq K}, (h_k)_{1 \leq k \leq K})$ be
an optimal histogram for the data set $D$. Then
\begin{equation*}
\forall k, 1\leq k\leq K\ c_{k-1}<c_{k}.
\end{equation*}
In other words, an optimal histogram cannot contain zero-length intervals. 
\end{proposition}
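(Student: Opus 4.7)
The plan is to argue by contradiction. Suppose $\mathcal{M}=(K,(c_k)_{0\le k\le K},(h_k)_{1\le k\le K})$ is optimal but contains a zero-length interval, i.e.\ there exists $k_0$ with $c_{k_0-1}=c_{k_0}$. Then $E_{k_0}=0$, and since $]c_{k_0-1},c_{k_0}]=\emptyset$ no data point can fall in it, so $h_{k_0}=0$. Note that this situation requires $K\ge 2$: when $K=1$ the unique interval $]c_0,c_K]$ has length $L+\epsilon>0$ by construction.

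Construct a reduced histogram $\mathcal{M}'$ with $K'=K-1$ intervals by simply deleting the empty interval, keeping all remaining endpoints and counts unchanged. Since the remaining intervals still cover all observations with the same counts, $\mathcal{M}'$ is compatible with $D$. I now compare the two criterion values term by term using \eqref{eq:modl}.

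The two data-dependent terms are unchanged: for the within-interval log-likelihood, $\sum_k h_k\log E_k$, the deleted term is $h_{k_0}\log E_{k_0}=0\cdot\log 0=0$ by our convention; and for the mapping term, $\log\frac{n!}{h_1!\cdots h_K!}$ is preserved because $h_{k_0}!=0!=1$. The three structural terms change as follows: $\log^* K$ is non-increasing in $K$, while by Pascal's identity
\begin{equation*}
\binom{E+K-1}{K-1}=\binom{E+K-2}{K-1}+\binom{E+K-2}{K-2}>\binom{E+K-2}{K-2},
\end{equation*}
(strict because $E\ge 1$ and $K\ge 2$), and similarly
\begin{equation*}
\binom{n+K-1}{K-1}>\binom{n+K-2}{K-2}.
\end{equation*}
Combining these, $\cMODL(\mathcal{M}'|D)<\cMODL(\mathcal{M}|D)$, contradicting the optimality of $\mathcal{M}$. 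Hence no zero-length interval can occur.

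This proof is essentially an accounting argument; the only subtlety is checking that the two data-dependent terms truly do not change once the $0\log 0=0$ convention is applied and that the strict decrease in the two binomial terms is genuine (which requires the observation $K\ge 2$ made at the start). I expect no real obstacle beyond that bookkeeping.
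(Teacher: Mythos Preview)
Your proof is correct and follows essentially the same approach as the paper's: delete the degenerate interval, observe that the two data-dependent terms are unchanged (via $0\log 0=0$ and $0!=1$), and check that the three structural terms strictly decrease. One small slip in wording: you write that ``$\log^* K$ is non-increasing in $K$'', but Rissanen's universal code length is \emph{increasing} in $K$; what you actually need (and what holds) is $\log^*(K-1)\le\log^* K$, so this term does not grow when passing from $\mathcal{M}$ to $\mathcal{M}'$.
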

\begin{proof}
  Let $\mathcal{M}=(K, (c_k)_{0\leq k\leq K}, (h_k)_{1 \leq k \leq K})$ be a
  histogram compatible with $D$. 
  Let us assume that $ c_{l-1}=c_{l}$ for some $l>0$ and let
  $\mathcal{M'}=(K-1, (c'_k)_{0\leq k\leq K-1}, (h_k')_{1 \leq k > K})$ be the
  histogram derived from $\mathcal{M}$ by removing the zero-length
  interval. It is defined formally by
\begin{equation*}
  c'_k=  \begin{cases}
    c_k&\text{when } k<l\\
    c_{k+1}&\text{when } k\geq l,
  \end{cases}
\end{equation*}
and
\begin{equation*}
  h'_k=  \begin{cases}
    h_k&\text{when } k<l\\
    h_{k+1}&\text{when } k\geq l.
  \end{cases}
\end{equation*}
Obviously, $\mathcal{M'}$ defines the same density as $\mathcal{M}$.
In addition, as $h_l=0$ 
  \begin{align*}
    \sum^K_{k=1} h_k \log E_k&=  \sum^K_{k=1,k\neq l} h_k \log E_k,  \\
    &=  \sum^K_{k=1,k\neq l} h_k \log \frac{c_k-c_{k-1}}{\epsilon},\\
    &=\sum^{K-1}_{k=1} h'_k \log \frac{c'_k-c'_{k-1}}{\epsilon},
    &=\sum^{K-1}_{k=1} h'_k \log E'_k.,
  \end{align*}
and   
  \begin{align*}
    \log \frac{n!}{\prod_{k=1}^Kh_k!}&=\log \frac{n!}{\prod_{k=1,k\neq
                                       l}^Kh_k!},\\
    &=\log \frac{n!}{\prod_{k=1}^{K-1}h'_k!}.
  \end{align*}
  Therefore $\Delta=\cMODL(\mathcal{M}|D)-\cMODL(\mathcal{M'}|D)$ is given by
  \begin{align*}
\Delta    =&\log^* K-\log^* (K-1)\\
                                                 &+\log\binom{E+K-1}{K-1}-\log\binom{E+K-2}{K-2}\\
                                                 &+\log\binom{n+K-1}{K-1}-\log\binom{n+K-2}{K-2},\\
  \end{align*}
which shows that $\Delta>0$ and thus that $\mathcal{M}$ cannot be optimal. 
\end{proof}

\subsection{Proof of proposition \ref{proposition:single-bin-vs-singleton}}\label{apx:th-single-bin-vs-singleton-proof}
\begin{proposition} 
  Let $D$ be a data set with $n$ observations. Let us denote
  $\mathcal{M}_{K=n}$ a histogram compatible with $D$ such that there is one
  observation per interval and $\mathcal{M}_{K=1}$ a histogram compatible with $D$ with only one interval. Then 
  the coding length of $\mathcal{M}_{K=1}$ is shorter than
 the one of $\mathcal{M}_{K=n}$:
\begin{equation*}
\cMODL(\mathcal{M}_{K=n}|D) > \cMODL(\mathcal{M}_{K=1}|D).
\end{equation*} 
\end{proposition}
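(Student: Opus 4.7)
The plan is a direct calculation of the two criteria followed by careful bounding of their difference. For $\mathcal{M}_{K=1}$ the two combinatorial terms $\log\binom{E}{0}$ and $\log\binom{n}{0}$ vanish and the partition likelihood $\log(n!/n!) = 0$, so $\cMODL(\mathcal{M}_{K=1}|D) = \log^* 1 + n \log E$. For $\mathcal{M}_{K=n}$ each $h_k = 1$, so the partition term simplifies to $\log n!$. Writing the interval lengths as positive integers $E_1,\dots,E_n$ with $\sum_k E_k = E$ (each $E_k \geq 1$ since every interval must contain at least one observation), forming the difference $\Delta = \cMODL(\mathcal{M}_{K=n}|D) - \cMODL(\mathcal{M}_{K=1}|D)$ yields
\begin{equation*}
  \Delta = \log^* n - \log^* 1 + \log\binom{E+n-1}{n-1} + \log\binom{2n-1}{n-1} + \log n! + \sum_{k=1}^n \log E_k - n\log E.
\end{equation*}

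The difficult term to control is the $-n\log E$, since $E$ can be enormous. The key observation is that the cut-point choice term and the bin-index likelihood term should be bounded \emph{jointly}: the former alone supplies only $(n-1)\log E$ (via $\binom{E+n-1}{n-1} \geq E^{n-1}/(n-1)!$), while the latter can be as small as $\log(E-n+1)$, reached at an extreme point of the polytope $\{E_k \geq 1,\ \sum_k E_k = E\}$ where one coordinate carries the entire excess mass $E-n+1$ and the others sit at their lower bound (this is the minimum of a concave symmetric function on a convex polytope, or equivalently an AM-GM-style bound on $\prod_k E_k$). Added together,
\begin{equation*}
  \log\binom{E+n-1}{n-1} + \sum_{k=1}^n \log E_k - n\log E \;\geq\; \log\frac{E-n+1}{E} - \log(n-1)!,
\end{equation*}
and since $E \geq n$, one has $(E-n+1)/E \geq 1/n$, so this lower bound is at least $-\log n - \log(n-1)! = -\log n!$.

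This negative contribution is precisely absorbed by the $+\log n!$ from the partition term, leaving $\Delta \geq \log^* n - \log^* 1 + \log\binom{2n-1}{n-1}$. Both summands are strictly positive for $n \geq 2$ (indeed $\log^* 2 - \log^* 1 = 1$ and $\binom{2n-1}{n-1} \geq 3$), which closes the argument; the case $n=1$ is vacuous since $\mathcal{M}_{K=n}$ and $\mathcal{M}_{K=1}$ coincide. The main subtlety, and the reason the proof is not entirely routine, is that the naive bound $\sum_k \log E_k \geq 0$ is too weak: it loses an extra factor of $\log E$ that only shows up when one examines the worst-case configuration of interval lengths, so one really must minimize over the simplex to extract enough cancellation against the $-n\log E$ term.
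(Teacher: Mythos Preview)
Your proof is correct and follows essentially the same route as the paper's. Both compute the same difference $\Delta$, both bound $\sum_{k=1}^n \log E_k \geq \log(E-n+1)$ (the paper via subadditivity of $x\mapsto\log(1+x)$, you via the extreme-point/concavity argument, which is the same idea), and both use $E\geq n$ to control the residual $\log\frac{E-n+1}{E}\geq -\log n$. The only cosmetic difference is that the paper expands $\log\binom{E+n-1}{n-1}=\sum_{k=1}^{n-1}\log(E+k)-\log(n-1)!$ exactly before extracting the $n\log E$ cancellation, whereas you drop to the slightly weaker $\binom{E+n-1}{n-1}\geq E^{n-1}/(n-1)!$ immediately; this costs you the nonnegative term $\sum_{k=1}^{n-1}\log(1+k/E)$ but leads to the same conclusion with a marginally cleaner final inequality.
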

\begin{proof}
  If $\mathcal{M}_{K=n}$ is compatible with $D$ and there is one observation
  per interval, then $\forall k, h_k=1$ and obviously, $E>n$. Also $\forall k, 
  E_k > 0$. 

Let us define $$\delta \cMODL(n, 1) = \cMODL(\mathcal{M}_{K=n}) -
\cMODL(\mathcal{M}_{K=1}).$$

Then we have 
\begin{eqnarray*}
\delta \cMODL(n, 1) &=& \log  \binom{E+n-1}{n-1} + \log  \binom{2n-1}{n-1} \\
            && + \log  n! + \sum^n_{k=1} \log  E_k \\
						&& + \log ^* n -\log ^* 1- n \log  E.
\end{eqnarray*}
Since $ \log  \binom{E+n-1}{n-1} = \sum^{n-1}_{k=1}{\log  (E+k)} - \log  (n-1)!$, we get
\begin{eqnarray*}
\delta \cMODL(n, 1) &=& \sum^{n-1}_{k=1}{\log  (E+k)} + \log  \binom{2n-1}{n-1} \\
            && + \log  n + \sum^n_{k=1} \log  E_k \\
						&& + \log ^* n -\log ^* 1- n \log  E.
\end{eqnarray*}
As function $f(x)=\log (1+x)$ is strictly concave and $f(0) = 0$, it is sub-additive on $[0, +\infty[$.
We obtain
\begin{eqnarray*}
\sum^n_{k=1} \log  E_k &=& \sum^n_{k=1} 	f(E_k-1) \\
            &\geq& f\left(\sum^n_{k=1} (E_k-1)\right) \\
						&\geq& \log  (E-n+1).
\end{eqnarray*}

Back to $\delta \cMODL(n, 1)$, we get
\begin{eqnarray*}
\delta \cMODL(n, 1) &\geq& \sum^{n-1}_{k=1}{\log  E(1+\frac{k}{E})} +  \log  \binom{2n-1}{n-1} \\
            && \log  n  + \log  E(1-\frac{n-1}{E})\\
						&&  + \log ^* n -\log ^* 1- n \log  E \\
            &\geq& \log ^* n  -\log ^* 1+ \log  n  + \log  \binom{2n-1}{n-1} \\
            && +\sum^{n-1}_{k=1}{\log  (1+\frac{k}{E})} + \log  (1-\frac{n-1}{E}).
\end{eqnarray*}

As $n \leq E$, we get
\begin{eqnarray*}
\log  (1-\frac{n-1}{E}) + \log n &=& \log  (n-\frac{n}{E}(n-1))\\
																	&\geq& \log  (n-(n-1))\\
																	&\geq& 0.
\end{eqnarray*}

Finally, $\delta \cMODL(n, 1) >0$, which proves the claim that the histogram with one single interval is always more probable than the one with $n$ singleton intervals.
\end{proof}

\subsection{Proof of proposition \ref{proposition:single-bin-vs-singleton-empty}}\label{apx:th-single-bin-vs-singleton-empty-proof}
\begin{proposition}
  Let $D$ be a data set with $n$ observations. Let us denote
  $\mathcal{M}_{K>n}$ a histogram compatible with $D$ consisting of either
  singleton or empty intervals, one interval for each observation and empty
  intervals in-between, and let $\mathcal{M}_{K=1}$ be as in Proposition
  \ref{proposition:single-bin-vs-singleton}. Then the coding length of
  $\mathcal{M}_{K=1}$ is shorter than the one of $\mathcal{M}_{K>n}$:
\begin{equation*}
\cMODL(\mathcal{M}_{K > n}|D) > \cMODL(\mathcal{M}_{K=1}|D).
\end{equation*}
\end{proposition}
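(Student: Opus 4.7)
The plan is to bound the difference $\delta = \cMODL(\mathcal{M}_{K>n}|D) - \cMODL(\mathcal{M}_{K=1}|D)$ directly and show that it is strictly positive. First I would exploit the special structure of $\mathcal{M}_{K>n}$: every bin count $h_k$ equals $0$ or $1$, so the multinomial term collapses to $\log\frac{n!}{\prod_k h_k!}=\log n!$, and the bin-indexing term $\sum_k h_k \log E_k$ reduces to $\sum_{k:h_k=1}\log E_k$, which is nonnegative since each (nonempty) elementary interval satisfies $E_k\geq 1$. For $\mathcal{M}_{K=1}$, the criterion collapses to $\log^* 1 + n\log E$, because the two combinatorial coefficients equal $1$ and the multinomial term vanishes.

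The core estimate is to show that on the $\mathcal{M}_{K>n}$ side the interval composition term together with $\log n!$ already dominates $n\log E$. Since $K \geq n+1$ and $\binom{E+K-1}{K-1}$ is nondecreasing in $K$, one has $\binom{E+K-1}{K-1} \geq \binom{E+n}{n}$, and a direct product expansion gives
\begin{equation*}
\binom{E+n}{n}\, n! \;=\; \prod_{j=1}^{n}(E+j) \;\geq\; E^{n},
\end{equation*}
hence $\log\binom{E+K-1}{K-1} + \log n! \geq n\log E$. Combining this with the reductions above yields
\begin{equation*}
\delta \;\geq\; \log^* K - \log^* 1 + \log\binom{n+K-1}{K-1},
\end{equation*}
and both surviving quantities are strictly positive: $K \geq 2$ (since $K > n \geq 1$) forces $\log^* K > \log^* 1$, and $\log\binom{n+K-1}{K-1} > 0$ whenever $n \geq 1$ and $K \geq 2$. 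This gives $\delta > 0$, which is the claim.

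The main obstacle, such as it is, lies in the product identity $\binom{E+n}{n}\,n! = \prod_{j=1}^{n}(E+j)$ and its bound by $E^{n}$; once this is in place the rest is bookkeeping. An alternative route would chain through Proposition \ref{proposition:single-bin-vs-singleton} by first comparing $\mathcal{M}_{K>n}$ to $\mathcal{M}_{K=n}$, but this is less clean because merging empty intervals into singletons decreases the $K$-dependent terms while increasing the bin-indexing sum, so the monotonicity does not run in the right direction term by term. The direct route above bypasses this issue entirely and, as a bonus, the proof does not need any assumption on how the empty intervals are interleaved among the singletons.
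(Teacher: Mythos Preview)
Your proof is correct and follows essentially the same approach as the paper: both compute the difference $\delta$ directly and hinge on the same core estimate $\log\binom{E+K-1}{K-1}+\log n!\geq n\log E$, obtained from $\prod_{j=1}^{n}(E+j)\geq E^{n}$, after which the remaining terms $\log^* K-\log^* 1$ and $\log\binom{n+K-1}{K-1}$ are strictly positive. Your presentation is in fact a bit cleaner, since you replace the paper's split of $\sum_{k=1}^{K-1}\log\frac{E+k}{k}$ into two ranges by a single monotonicity step in $K$, which avoids the paper's somewhat loose lower bound $K-n-1$ on the tail sum.
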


\begin{proof}
Let us consider a histogram consists with $K > n$ intervals composed of $E_k$
$\epsilon$-bins, with either $h_k=1$ for the singleton intervals and $h_k=0$
for the empty intervals. 
\begin{eqnarray*}
\delta \cMODL(K, 1) &=& \log  \binom{E+K-1}{K-1} + \log  \binom{n+K-1}{K-1} \\
            && +\log ^* K - \log ^* 1 + \log  n! \\
						&& + \sum^n_{k=1} \log  E_k - n \log  E.
\end{eqnarray*}

We have
\begin{eqnarray*}
\log  \binom{E+K-1}{K-1} &=& \sum^{K-1}_{k=1}{\frac {\log (E+k)}{\log k}}\\
												  &=& \sum^{n}_{k=1}{\frac {\log (E+k)}{\log k}}\\
                          & & +\sum^{K-1}_{k=n+1}{\frac {\log (E+k)}{\log k}}			  
\end{eqnarray*}
 Given that $\log  (E+k) > \log  E$, we have $ \sum^{n}_{k=1}{\frac {\log (E+k)}{\log k}} > n \log E - \log n! $. Likewise, we know that $\log  (E + k) > \log  k$, so the sum $\sum^{K-1}_{k=n+1}{\frac {\log (E+k)}{\log k}}$ is greater than $\sum^{K-1}_{k=n+1}{1} = K - n - 1$. Given these two lower bounds, we can write that 
 \begin{equation*}
\log  \binom{E+K-1}{K-1} >  n \log E - \log n! +  (K - n - 1) 
\end{equation*}

We then obtain
\begin{eqnarray*}
\delta \cMODL(K, 1) &>& (K - n - 1) \\
            && +\log  \binom{n+K-1}{K-1} \\
            && +\log ^* K - \log ^* 1 + \sum^n_{k=1} \log  E_k.
\end{eqnarray*}
Finally, $\delta \cMODL(K, 1) >0$, which proves the claim that the histogram with one single interval is more probable that any histogram where entries are in singleton intervals.

\end{proof}
\begin{remark}
 Similar results to  Proposition~\ref{proposition:single-bin-vs-singleton} and
 Proposition~\ref{proposition:single-bin-vs-singleton-empty} could not be
 obtained for the NML criterion given by equation \eqref{eq:cNML} for two reasons. First, the analysis of the criterion for histograms with $K$ singleton intervals ($n \leq K \leq E$) is complex, since the prior term $\log  \binom{E}{K-1}$ is not monotonous and decreases for $K > E/2$.
Second, the parametric complexity $\log\mathcal{R}^n_{\mathcal{M}_K}$ term has no closed-form formula and its best known approximations \citep{Kontkanen2009,Szpankowski98} are given for fixed $K$ as $n$ goes to infinity. These approximations cannot be used when $n \leq K \leq E$.
\end{remark}

\subsection{Proof of proposition \ref{proposition:non-consecutive-empty} }\label{apx:th-non-consecutive-empty-proof}
We begin by evaluating the expression of the cost variation $\Delta c$ after
the merge of two intervals. Let A and B be two adjacent intervals composed of
$E_A$, $E_B$   $\epsilon$-length elementary bins and of data counts $h_A$,
$h_B$ in a $K$-bin histogram. They are merged into a single interval composed
of $E_{A\cup B}= E_A + E_B$ $\epsilon$-length elementary bins with $h_{A\cup
  B}=h_A + h_B$ elements, creating a histogram with $K-1$ bins (we
consider only histograms compatible with the data). 

For this fusion, the variation for the \ENUMname criterion, $\Delta c$,  is given by
\begin{align}\label{eq:deltaMODL}
{\Delta \cMODL} =&   \cMODL(\mathcal{M}_{K-1}) - \cMODL(\mathcal{M}_{K}) \\
=& \log  \frac{K-1}{E+K}  + \log  \frac{K-1}{n+K-1} \nonumber\\
  &+\log ^* (K-1)- \log ^* K\nonumber\\ 
& + \log  \frac {h_A! h_B!}{(h_A + h_B)!} +  \log  \frac{(E_A+E_B)^{(h_A+h_B)}}{{E_A}^{h_A}\cdot {E_B}^{h_B}} \nonumber
\end{align}
\smallskip

The fusion of two intervals is advantageous if the code length for the histogram
with $K-1$ intervals is smaller than the code length for the histogram with $K$
intervals, that is if $\Delta \cMODL <0$.  Proposition
\ref{proposition:non-consecutive-empty} corresponds to an interesting
particular case. 
\begin{proposition}
The coding length of a histogram with two adjacent empty intervals is always longer than the coding length of a histogram with no consecutive empty intervals.
\end{proposition}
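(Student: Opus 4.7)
\textbf{Proof proposal for Proposition~\ref{proposition:non-consecutive-empty}.} The plan is to use the explicit formula for the cost variation after a merge given in equation \eqref{eq:deltaMODL}, specialised to the case where both merged intervals are empty, and then to show that each surviving term is non-positive, with at least one strictly negative.

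More precisely, let $A$ and $B$ be the two adjacent empty intervals of a histogram $\mathcal{M}_K$, and let $\mathcal{M}_{K-1}$ be the histogram obtained by replacing $A\cup B$ with the single interval of length $E_A+E_B$ and count $h_{A\cup B}=0$. Setting $h_A=h_B=0$ in \eqref{eq:deltaMODL}, the multinomial contribution vanishes since $\log\frac{h_A!\,h_B!}{(h_A+h_B)!}=\log\frac{1}{1}=0$ and, using the convention $0\times\log 0=0$ (equivalently $x^0=1$), the interval-assignment term satisfies
\begin{equation*}
  \log\frac{(E_A+E_B)^{h_A+h_B}}{E_A^{h_A}\,E_B^{h_B}}=\log 1 = 0.
\end{equation*}
Therefore $\Delta\cMODL$ reduces to the three structural terms
\begin{equation*}
  \Delta\cMODL = \log\frac{K-1}{E+K}+\log\frac{K-1}{n+K-1}+\log^*(K-1)-\log^*K.
\end{equation*}

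It then suffices to check that each of these three contributions is non-positive, and that at least one is strictly negative. The first factor satisfies $K-1<E+K$ for every $E\geq 1$, so $\log\frac{K-1}{E+K}<0$ strictly. The second factor satisfies $K-1\leq n+K-1$, hence $\log\frac{K-1}{n+K-1}\leq 0$. For the last difference, I would invoke monotonicity of Rissanen's universal prior: since each iterated logarithm $\log_2^{(j)}$ is non-decreasing on the positive integers, the definition $\log^*K=\log_2\kappa_0+\sum_{j\geq 1}\max(\log_2^{(j)}(K),0)$ yields $\log^*(K-1)\leq\log^*K$. Adding the three bounds gives $\Delta\cMODL<0$, which is exactly the inequality claimed.

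I do not expect any serious obstacle in this argument. The only care points are (i) making explicit that compatibility of both histograms with $D$ is preserved because the merged interval still has count zero, so that we are genuinely comparing two finite coding lengths; and (ii) the tiny monotonicity argument for $\log^*$, which is immediate from its definition but should be stated since $\log^*$ is not strictly monotone in the way natural logarithm is. The strict negativity of the first term alone is enough to guarantee $\Delta\cMODL<0$, independently of $n$ and of the particular position of the pair of empty intervals.
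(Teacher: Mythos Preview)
Your proposal is correct and follows essentially the same route as the paper's proof: specialise the merge-cost formula \eqref{eq:deltaMODL} to $h_A=h_B=0$, observe that the multinomial and bin-indexing terms vanish, and conclude by checking the sign of the three remaining structural terms. Your treatment is in fact slightly more careful than the paper's, since you explicitly justify why the vanishing of the likelihood terms uses the $0\log 0$ convention and you note that strict negativity of a single term already suffices.
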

  \begin{proof}
We analyse the cost variation when two adjacent intervals are empty for the
\ENUMname criterion  ($h_A = 0 $, $h_B = 0$, so $h_A + h_B = 0$). From equation
\ref{eq:deltaMODL} we have
\begin{align*}
{\Delta \cMODL}&_{(h_A=0, h_B=0)}=\\
& \log  \frac{K-1}{E+K-1}  + \log  \frac{K-1}{n+K-1} \\
&+\log ^* (K-1) - \log ^* K
\end{align*}
\smallskip

Given that $n+K-1 > K-1$, we have ~$\log  \frac{K-1}{n+K-1}<0$. Similarly, $E+K-1>K-1$, so ~$\log  \frac{K-1}{E+K-1}<0$. Rissanen's universal code for integers ($\log ^* K$) is a monotone function, so  $\log ^* K-1 - \log ^* K<0$.

We have thus shown that, for the \ENUMname criterion, the cost variation is {\bf always strictly negative} after the merge of two adjacent empty intervals. This means that keeping a model with two adjacent empty bins is always more costly in terms of code length, so our search algorithm will systematically favor the merge of two consecutive empty bins.
\end{proof}

\subsection{Proof of proposition \ref{proposition:Kbound}}\label{apx:th-Kbound-proof}

\begin{proposition}
An optimal histogram has at most $2n-2$ intervals ($K^* \leq 2n-2$).
\end{proposition}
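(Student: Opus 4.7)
The plan is to combine a structural counting argument with two of the earlier propositions to derive the bound $K^* \leq 2n-2$. I would classify the intervals of an optimal histogram $\mathcal{M}^\star$ into empty ($h_k = 0$) and non-empty ($h_k \geq 1$) ones, and denote their counts by $K_E$ and $K_{NE}$. Since each non-empty interval accounts for at least one observation, we immediately have $K_{NE} \leq n$.

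Next I would establish two boundary facts. First, from the definition $c_0 = x_{\min} - \epsilon/2$, $c_K = x_{\max} + \epsilon/2$ together with Proposition~\ref{proposition:no-null-length} (which forces $c_1 > c_0$, hence $c_1 \geq x_{\min} + \epsilon/2$ by the grid $\mathcal{C}$) and symmetrically $c_{K-1} \leq x_{\max} - \epsilon/2$, the observations $x_{\min}$ and $x_{\max}$ lie respectively in $]c_0, c_1]$ and $]c_{K-1}, c_K]$. Therefore $h_1 \geq 1$ and $h_K \geq 1$, so the first and last intervals are non-empty. Second, by Proposition~\ref{proposition:non-consecutive-empty} no two adjacent intervals of an optimal histogram can both be empty. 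Combining the two, every empty interval must sit strictly between two non-empty ones, so $K_E \leq K_{NE} - 1$. Summing yields $K \leq 2K_{NE} - 1 \leq 2n - 1$.

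To sharpen the bound from $2n-1$ to $2n-2$, I need to rule out the extreme case $K = 2n-1$. Equality throughout the chain forces $K_{NE} = n$ (so each non-empty interval contains exactly one observation) and $K_E = n-1$, with exactly one empty interval between each pair of consecutive singleton intervals. This is precisely the histogram $\mathcal{M}_{K>n}$ of Proposition~\ref{proposition:single-bin-vs-singleton-empty}, whose coding length strictly exceeds that of the trivial single-interval histogram $\mathcal{M}_{K=1}$; hence such an $\mathcal{M}$ cannot be optimal and $K^\star \leq 2n-2$.

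The main obstacle I anticipate is pinning down the boundary fact that $h_1, h_K \geq 1$ rigorously: one must invoke both the specific construction of the grid $\mathcal{C}$ and Proposition~\ref{proposition:no-null-length} to argue that the leftmost (respectively rightmost) endpoint cannot be pushed past $x_{\min}$ (respectively $x_{\max}$). Once this is settled, the remainder of the proof is a short combinatorial count followed by a direct citation of Proposition~\ref{proposition:single-bin-vs-singleton-empty}.
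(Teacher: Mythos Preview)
Your proof is correct and follows essentially the same route as the paper: bound the non-empty intervals by $n$, use Proposition~\ref{proposition:non-consecutive-empty} to bound the empty ones, obtain $K\le 2n-1$, and then invoke Proposition~\ref{proposition:single-bin-vs-singleton-empty} to exclude the extremal $K=2n-1$ configuration. Your treatment is in fact more careful than the paper's, which silently assumes that empty intervals can only be ``separating''; you make this explicit by showing, via Proposition~\ref{proposition:no-null-length} and the grid definition, that the first and last intervals of an optimal histogram are necessarily non-empty.
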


\begin{proof}
  The maximal number of non empty intervals is obviously bounded by the number
  of observations $n$. Provided the data set has $n$ distinct observations and
  that $\epsilon$ is small enough, we can consider a histogram with $n$
  intervals containing each a single observation, separated by  empty
  intervals. According to proposition \ref{proposition:empty-MODL} an optimal
  histogram cannot have consecutive empty intervals and thus the maximal
  number of separating intervals is $n-1$. Therefore an optimal histogram
  cannot have more than $2n-1$ intervals.
  
  But proposition \ref{proposition:single-bin-vs-singleton-empty} shows that a
  histogram with a single interval is preferred by the criterion over a
  histogram consisting only of empty intervals and intervals containing each a
  single observation. Therefore the maximal model with $2n-1$ intervals cannot
  be optimal. Then the optimal number of intervals $K^*$ is bounded by $2n-2$.
\end{proof}
 
\setcounter{proposition}{7}
\subsection{Proof of proposition \ref{proposition:opt-cutpoint}}\label{apx:th-opt-cutpoint-proof}
\begin{proposition}
In an optimal histogram, each endpoint is at most $\epsilon$ away from one of the values of the data set.
\end{proposition}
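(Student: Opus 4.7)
The plan is to establish the contrapositive: if some endpoint $c_k$ of a histogram $\mathcal{M}$ lies strictly farther than $\epsilon$ from every observation, then $\mathcal{M}$ cannot be optimal. The boundary endpoints $c_0 = x_{\min} - \epsilon/2$ and $c_K = x_{\max} + \epsilon/2$ are automatically within $\epsilon/2$ of $x_{\min}$ and $x_{\max}$, so only the interior endpoints $c_1, \dots, c_{K-1}$ require an argument. Given such an offending $c_k$, the strategy is to exhibit a grid-legal perturbation of $c_k$ that strictly lowers $\cMODL$, contradicting optimality.

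The first step would be to observe that the two $\epsilon$-bins immediately adjacent to $c_k$, namely $]c_k - \epsilon, c_k]$ and $]c_k, c_k + \epsilon]$, are empty: their centers lie at distance $\epsilon/2$ from $c_k$, so any observation $x_i$ approximated into one of them would satisfy $|x_i - c_k| \leq |x_i - \widetilde{x_i}| + |\widetilde{x_i} - c_k| \leq \epsilon/2 + \epsilon/2 = \epsilon$, contradicting the standing assumption. Consequently, shifting $c_k$ by $\pm \epsilon$ on the grid leaves all counts $h_j$ invariant and affects only the bin-indexing term $\sum_j h_j \log E_j$ of the criterion; every other piece of $\cMODL$ depends solely on $K$, $E$, $n$ and the $h_j$ and is therefore unchanged.

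The core computation will be to combine the two perturbations. Assuming for the moment that both are grid-legal, i.e.\ $E_k \geq 2$ and $E_{k+1} \geq 2$, I would write the cost variations $\Delta_+$ and $\Delta_-$ explicitly and note that
\begin{equation*}
\Delta_+ + \Delta_- = h_k \log\!\left(1 - \frac{1}{E_k^2}\right) + h_{k+1} \log\!\left(1 - \frac{1}{E_{k+1}^2}\right) \leq 0,
\end{equation*}
with strict inequality as soon as $h_k + h_{k+1} > 0$. Since Proposition \ref{proposition:non-consecutive-empty} forbids two consecutive empty intervals, at least one of $h_k$ and $h_{k+1}$ is positive, so at least one of the two shifts strictly decreases $\cMODL$, which is the desired contradiction.

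The main obstacle is to dispose of the edge cases where one of the shifts is illegal, i.e.\ $E_k = 1$ or $E_{k+1} = 1$. If $E_k = 1$, then interval $k$ coincides with the empty bin $]c_k - \epsilon, c_k]$ and hence $h_k = 0$; Proposition \ref{proposition:non-consecutive-empty} then forces $h_{k+1} > 0$, and the right shift alone gives $\Delta_+ = h_{k+1} \log(1 - 1/E_{k+1}) < 0$. The case $E_{k+1} = 1$ is symmetric, and $E_k = E_{k+1} = 1$ is ruled out directly by the same proposition. The key subtlety throughout is that no single shift direction is universally improving; one must exploit the symmetric pair $\Delta_+ + \Delta_- \leq 0$ to guarantee the existence of an improving move.
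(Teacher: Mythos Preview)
Your argument is correct and rests on the same underlying fact as the paper's proof---namely that once the counts $h_k$ and $h_{k+1}$ are fixed, only the bin-indexing term $h_k\log E_k + h_{k+1}\log E_{k+1}$ varies with the position of $c_k$, and this term is concave in that position. The paper exploits this globally: it parameterises the cut-point location by $x$ over the entire data-free stretch between the two intervals, writes the variable part of the cost as $f_{h_k,h_{k+1}}(x)=h_k\log(E_k+x)+h_{k+1}\log(\widehat{E}-E_k-x)$, observes concavity, and concludes the minimum sits at one of the two endpoints of the feasible range (handling the degenerate monotone cases $h_k=0$ or $h_{k+1}=0$ separately). You instead run a local perturbation: the identity $\Delta_+ + \Delta_- = h_k\log(1-1/E_k^2)+h_{k+1}\log(1-1/E_{k+1}^2)\le 0$ is exactly the discrete second-difference form of that concavity, and guarantees one of the two unit shifts improves the criterion. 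Your approach buys a slightly more self-contained contradiction argument and avoids introducing a continuous parameter; the paper's approach buys a direct description of where the optimum actually lands (at one of the two extreme positions), which feeds into the subsequent Remarks. One minor point: your justification that the adjacent $\epsilon$-bins are empty via the approximation $\widetilde{x_i}$ is an unnecessary detour---compatibility is defined with the raw $x_i$, and $x_i\in\,]c_k-\epsilon,c_k+\epsilon]$ already gives $|x_i-c_k|\le\epsilon$ directly.
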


\begin{proof}
Let $\mathcal{M}^\star=(K, \{c_k\}_{0 \leq k \leq K}, \{h_k\}_{1 \leq k \leq
  K})$ be an optimal histogram built from a data set $D$.

As by definition, $c_0=x_{min}-\epsilon/2$ and $c_K = x_{max}+\epsilon/2$, the
property is valid for $c_0$ and $c_K$.

Let us now assume that $K > 1$ and focus on endpoints $c_k, 0 < k < K$, that
is on cut-points between adjacent intervals. 

Let $i, j=i+1$ be the index of two such adjacent intervals. We study the location
of the cut-point between intervals $i$ and $j$, $c_i$, while considering the
exterior boundaries of intervals $i$ and $j$ ($c_{i-1}$ and $c_j$) to be
fixed. We denote $L_{i,j}=c_j-c_{i-1}$. We consider also the data counts of
the intervals $h_i$ and $h_j$ to be fixed. In other words, we study to what
extent the cut-point between intervals $i$ and $j$ can be freely set at
grid positions in the empty space between the last data point in $i$ and the
first data point in $j$, as illustrated by Figure~\ref{fig:consecutive-vals-cut}.

Let $D_i$ (resp $D_j$) be the data point in interval $i$ (resp. $j$). We
define
\begin{equation*}
L_i=\min\{t\epsilon\mid \max D_i\leq c_{i-1}+t\epsilon\},  
\end{equation*}
and
\begin{equation*}
L_j=\min\{t\epsilon\mid \min D_j> c_{j}-t\epsilon\},
\end{equation*}
i.e. $L_i$ and $L_j$ are the minimum lengths of intervals that contain
respectively $D_i$ and $D_j$. 

In terms of elementary $\epsilon$-length bins, let $E_i=L_i/\epsilon,
E_j=L_j/\epsilon, \widehat{E}=L_{i,j}/\epsilon$.  
We note $x = l/\epsilon \in [0, \widehat{E}-E_i-E_j]$ the number  of
$\epsilon$-length bins at which the cut-point $c_i$ is placed.   
\begin{figure}[htbp]
\centering
\includegraphics[width=\linewidth]{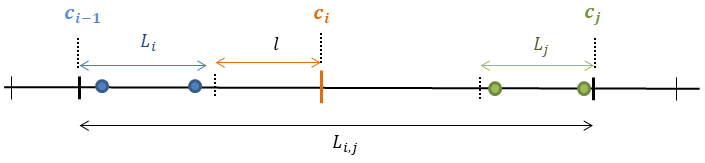}
\caption{Illustration and notations of the optimal cut-point between two
  values problem}
\label{fig:consecutive-vals-cut}
\end{figure}

In this setting, each different value of $x$ defines a different $K$-bin
histogram, $\mathcal{M}_K(x)$. Using the \ENUMname criterion, each model has the following cost : 
\begin{align*}
\cMODL(\mathcal{M}_K(x)|D)=& \log  \binom{E+K-1}{K-1} + \log  \binom{n+K-1}{K-1} \\
&+\log ^* K + \log  \frac{n!}{h_1!\ldots h_i!h_j!\ldots h_K!} \\
&+ \sum_{k < i} {h_k \log E_k} + h_i \log  (E_i+x) \\
&+ h_j \log  (\widehat{E}-(E_i+x)) + \sum_{k > j} {h_k \log E_k}\\
\end{align*}
Seeing as we focus on the frontier between just two intervals, the model index
term does not change. Since the data counts of each interval are preserved
regardless of the position of the interval's endpoints, the corresponding part
of the likelihood term will not change. In this setting, the sole term
responsible for the cost variation between models is:
\begin{equation*}
f_{h_i, h_j}(x) = h_i \log  (E_i+x) + h_j \log  (\widehat{E}-(E_i+x))      
\end{equation*}
From Proposition~\ref{proposition:non-consecutive-empty}, an optimal histogram
cannot contain two consecutive empty intervals, therefore at least one of the
two intervals is non empty. We study the three possible cases.

\begin{enumerate}
\item Let us first consider the case where both intervals are non empty
  ($h_i>0$ and $h_j>0$).  As $f_{h_i, h_j}$ is a concave function on the
  closed interval $[0, \widehat{E}-E_i-E_j]$, its minimum values are only
  reached at its extremities, that is for $x=0$ and $x= \widehat{E}-E_i-E_j$.

  We have thus shown that the best cut-point between the two intervals is
  placed at at most $\epsilon$, either from last value of the first interval
  or from the first value of the second interval.

\item  Let us now consider the case where the first interval is non empty,
  that is $h_i > 0$ and $h_j=0$. Then 
  $f_{h_i, h_j}(x) = h_i \log (E_i+x)$. In this case, the minimum value of
  $f_{h_i, h_j}(x)$ is obtained for $x=0$, that is with a cut-point placed at
  at most $\epsilon$ from last value of the first interval.

\item  Similarly, if the second interval is non empty, the cut-point is placed at
  at most $\epsilon$ from the first value of the second interval. Let us
  notice that in both cases, the empty interval is of maximal length.
\end{enumerate}
Overall, the endpoints of the optimal histogram are always at at most
$\epsilon$ from one value of the data set. 
\end{proof}

\begin{remark}
  Intuitively, the choice of which one of these two placements is best will
  also depend on the data count of each interval. As we have highlighted
  before, a part of the likelihood term will favour shorter dense intervals
  over large ones. See the proof of proposition
  \ref{proposition:singleton-bins} for an illustration in a simple case. 
\end{remark}

\begin{remark}\label{remark:proof-prop-refpr}
Notice that for any value $x(c_k)$ of the data set that is the closest one
from an endpoint $c_k$, $c_k$ is as  close as possible from $x(c_k)$. In
particular, if $x(c_k)$ is the last point in the interval $]c_{k-1}, c_k]$, we
have $c_k - x(c_k) < \epsilon$ with a strict inequality. Otherwise, the
interval $]c_{k-1}, c_k-\epsilon]$ still contains $x(c_k)$, with an endpoint
closer from $x(c_k)$. 
\end{remark}

\begin{remark}
Empty intervals are always as long as possible, with their two endpoints close
to values of the data set. 
\end{remark}

\setcounter{proposition}{4}

\subsection{Proof of proposition \ref{proposition:singleton-bins}}\label{apx:th-singleton-bins-proof}

  \begin{figure}[btp]
    \centering
\includegraphics[width=\linewidth]{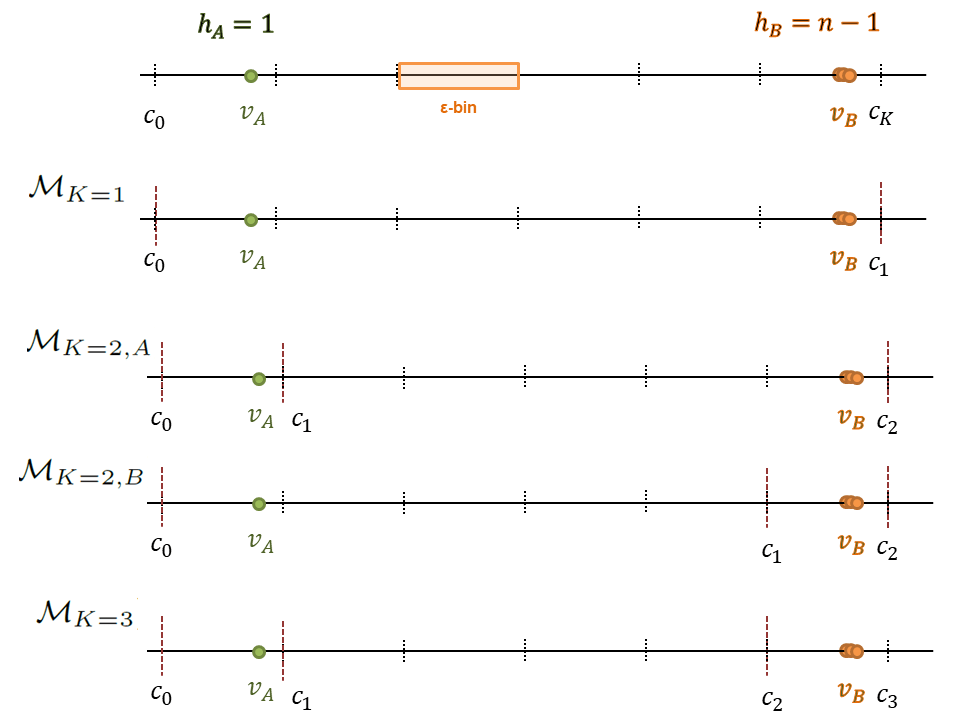}
    \caption{Graphical representation of the different cases involved in
      proposition \ref{proposition:singleton-bins}.}\label{fig:proposition:singleton-bins}
  \end{figure}

\begin{proposition}
There exist data sets such that the optimal histogram has at least one
  interval which contains only a single observation. 
\end{proposition}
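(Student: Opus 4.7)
The plan is to exhibit an explicit family of data sets where moving a single observation far from a cluster forces the optimum to isolate it. Concretely, fix $n\geq 3$ and consider a data set $D$ consisting of $n-1$ observations packed in a small fixed window $[a,b]$ together with a single ``outlier'' observation $x_n$ placed at a large distance $d$ to the right of $b$. For fixed $\epsilon$, the parameter $E=(x_{\max}-x_{\min})/\epsilon+1$ grows linearly with $d$, while the number of $\epsilon$-bins $E_c$ needed to cover the cluster stays bounded. Driving $d$ (equivalently $E$) to infinity is the asymptotic lever behind the argument.

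First I would compare two specific candidate histograms. Let $\mathcal{M}_1$ be the single-interval model ($K=1$, $h_1=n$), and let $\mathcal{M}_2$ be the two-interval model whose first interval covers the cluster (length $E_1\leq E_c+O(1)$, containing $n-1$ observations) and whose second interval contains only $x_n$ ($h_2=1$, length $E_2=E-E_1$). Using \eqref{eq:modl}, the bin-indexing terms are $n\log E$ for $\mathcal{M}_1$ versus $(n-1)\log E_1+\log E_2 = \log E + O(1)$ for $\mathcal{M}_2$, giving a likelihood gap of roughly $(n-1)\log E$ in favour of $\mathcal{M}_2$. The extra model-complexity cost of $\mathcal{M}_2$, namely $\log\binom{E+1}{1}+\log(n+1)+\log n+\log^*2-\log^*1$, adds only $\log E+O(\log n)$. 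Hence for $E$ sufficiently large, $\cMODL(\mathcal{M}_2|D)<\cMODL(\mathcal{M}_1|D)$ and the single-interval histogram cannot be optimal.

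To conclude that \emph{some} optimal histogram must have a singleton interval, I would invoke the already-established structural results to prune the candidate space: Proposition \ref{proposition:opt-cutpoint} restricts endpoints to within $\epsilon$ of data points, Proposition \ref{proposition:no-null-length} removes zero-length intervals, Proposition \ref{proposition:non-consecutive-empty} forbids two consecutive empty intervals, and Proposition \ref{proposition:Kbound} caps $K$ by $2n-2$. Under these constraints, any candidate that groups $x_n$ with at least one cluster point must contain an interval whose length is of order $E$ and whose count is at least $2$, contributing at least $2\log E+O(1)$ to $\sum_k h_k\log E_k$; this is strictly worse than $\mathcal{M}_2$'s $\log E+O(1)$ contribution on that stretch for $E$ large enough. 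Consequently every optimal histogram must isolate $x_n$ in an interval of its own.

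The main obstacle is the last step, because one must also dismiss configurations inserting an empty ``spacer'' between the cluster and the singleton, as suggested by Figure~\ref{fig:proposition:singleton-bins}. I would handle these by direct comparison with $\mathcal{M}_2$: the spacer changes the indexing terms only by bounded amounts ($\log^*(K{+}1)-\log^*K$ plus logarithmic binomial increments) and only reduces the two bin-indexing contributions by $O(1)$, so the two-interval model $\mathcal{M}_2$, or its variant with a spacer, still wins over any candidate in which $x_n$ shares its interval with another point. Either way, the optimum contains at least one interval with exactly one observation, establishing the claim.
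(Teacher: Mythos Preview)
Your construction and the paper's share the same core idea---place one observation far from a dense cluster so that isolating it becomes unavoidable---but the paper makes the cluster a \emph{single repeated value} (two distinct values total, with frequencies $1$ and $n-1$). This is a decisive simplification: with only two distinct values, Proposition~\ref{proposition:opt-cutpoint} forces every cut point to lie within $\epsilon$ of one of them, and combined with Proposition~\ref{proposition:non-consecutive-empty} this leaves only the cases $K\in\{1,2,3\}$ to check by direct computation, with no asymptotics in $E$ required. Your more general cluster (possibly $n-1$ distinct points) forces an asymptotic argument and a sweep over all $K\leq 2n-2$, which is heavier but still valid in principle.

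Your sketch does have a rigor gap in the pruning step. Comparing the candidate's $\geq 2\log E$ bin-indexing contribution ``on that stretch'' to $\mathcal{M}_2$'s $\log E$ is a local comparison of one term, not of total costs; a candidate might in principle recoup the loss via the multinomial term, the cluster-side bin terms, or a smaller model-indexing term. What you need to add is the observation that all remaining terms differ from those of $\mathcal{M}_2$ by quantities bounded independently of $E$, while the model-indexing term $\log\binom{E+K-1}{K-1}$ only grows with $K$, so the $\log E$ gap on the big interval indeed dominates for $E$ large. Once this accounting is spelled out your argument closes; the paper's two-value construction simply sidesteps the whole issue by reducing to three explicit cases.
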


\begin{proof}
  Let us consider a data set with $n$ observations and only two distinct values
  $v_A, v_B$ of frequencies $h_A=1, h_B=n-1$. We choose $v_A$ and $v_B$ such
  that $v_B-v_A\geq 3\epsilon$. According to proposition
  \ref{proposition:non-consecutive-empty} an optimal histogram cannot contain
  successive empty intervals and thus we need only to consider three cases:
  histograms with one, two or three intervals. Those cases are illustrated on
  Figure \ref{fig:proposition:singleton-bins}.

For $K=1$, we have 
\begin{eqnarray*}
\cMODL(\mathcal{M}_{K=1})&=& n \log  E + \log ^* 1.
\end{eqnarray*}

For $K=2$, the optimal split is necessarily within $\epsilon$ of $v_A$ or
$v_B$ according to proposition \ref{proposition:opt-cutpoint}. If the split is
close to $v_A$, we have $E_A=1, E_B=E-1$ and
\begin{eqnarray*}
\cMODL(\mathcal{M}_{K=2, A})&=& \log ^* 2 + \log(E+1) + \log(n+1)\\
  && + \log n + (n-1)\log (E-1)\\
	&=& n\log E + o(n).
\end{eqnarray*}
If the split is close to $v_B$,  we have $E_A=E-1, E_B=1$ and
\begin{eqnarray*}
\cMODL(\mathcal{M}_{K=2,B})&=& \log ^* 2 + \log(E+1) + \log(n+1)\\
  && + \log n + \log (E-1)\\
	&=& 2 \log E + 2 \log n + O(1).
\end{eqnarray*}

For $K=3$, the optimal split is necessarily right next to each value and we
thus have two non-empty intervals composed $E_A=1, E_B=1$ $\epsilon$-bins,
surrounding an empty interval of $E-2$ $\epsilon$-bins (this is again a
consequence of proposition \ref{proposition:opt-cutpoint}). We get

\begin{eqnarray*}
\cMODL(\mathcal{M}_{K=3})&=& \log ^* 3 + \log \frac{(E+1)(E+2)}{2}\\
  && + \log \frac{(n+1)(n+2)}{2} + \log n\\
	&=& 2 \log E + 3 \log n + O(1).
\end{eqnarray*}

Therefore, even for rather small $n$ and $E$, the model cost is minimal for
$K=2$ with the split within $\epsilon$ from $v_B$, i.e. for the histogram
where the first interval contains the singleton value $v_A$. 
\end{proof}

\subsection{Proof of proposition \ref{proposition:empty-bins}}\label{apx:th-empty-bins-proof}

\begin{proposition}\label{proposition:empty-MODL}
There exist data sets such that the optimal histogram has at least one
interval that does not contain any observation. 
\end{proposition}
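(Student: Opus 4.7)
The plan is to exhibit an explicit data set whose optimal histogram contains an empty interval. I would take $n$ even with $n/2$ copies of a value $v_A$ and $n/2$ copies of a value $v_B$, chosen so that $v_B - v_A$ is much larger than $\epsilon$, i.e.\ so that the number of elementary bins $E$ is much larger than $n$. The candidate $\mathcal{M}^\star_{K=3}$ has three intervals of respective $\epsilon$-bin lengths $1$, $E-2$, $1$: a singleton bin containing all the $v_A$'s, an empty middle interval, and a singleton bin containing all the $v_B$'s.

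Since each non-empty interval has $\epsilon$-bin length $1$ and (using $0 \log 0 = 0$) the middle interval contributes nothing, the data likelihood term vanishes: $\sum_{k=1}^3 h_k \log E_k = 0$. Consequently,
\begin{equation*}
\cMODL(\mathcal{M}^\star_{K=3} \mid D) = \log^* 3 + \log\binom{E+2}{2} + \log\binom{n+2}{2} + \log\binom{n}{n/2},
\end{equation*}
which is of order $n \log 2 + O(\log E + \log n)$. In contrast, $\cMODL(\mathcal{M}_{K=1} \mid D) = n \log E + O(1)$, so for $E$ sufficiently large the single-interval histogram is strictly worse than $\mathcal{M}^\star_{K=3}$.

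For $K=2$, Proposition \ref{proposition:opt-cutpoint} forces the cut-point to lie within $\epsilon$ of $v_A$ or of $v_B$, and Proposition \ref{proposition:no-null-length} forbids zero-length intervals, leaving only candidates of $\epsilon$-bin lengths $(1, E-1)$ or $(E-1, 1)$. In either case, the long interval contains $n/2$ data points, contributing $(n/2) \log(E-1)$ to the likelihood, which dominates the cost of $\mathcal{M}^\star_{K=3}$ when $E$ is large. For $K \geq 4$, all the $v_A$'s sit in the leftmost $\epsilon$-bin and all the $v_B$'s sit in the rightmost $\epsilon$-bin, so any additional cut-point must either fall inside one of those $\epsilon$-bins (impossible on the grid) or split the middle empty region, producing two adjacent empty intervals ruled out by Proposition \ref{proposition:non-consecutive-empty}. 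Hence $\mathcal{M}^\star_{K=3}$ is the overall optimum for this data set, and it contains an empty interval.

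The main obstacle is the careful bookkeeping of the cost comparisons, in particular ensuring that the hidden constants in the $O(\log E + \log n)$ terms do not interfere with the asymptotic argument. Once the candidate set has been narrowed down by Propositions \ref{proposition:no-null-length}, \ref{proposition:non-consecutive-empty}, and \ref{proposition:opt-cutpoint}, the remaining comparisons reduce to a handful of explicit inequalities that can be verified by direct calculation for $E$ large enough relative to $n$.
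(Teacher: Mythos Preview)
Your proposal is correct and follows essentially the same approach as the paper: the same two-valued data set with $h_A=h_B=n/2$, the same reduction of the search to $K\in\{1,2,3\}$ via Propositions \ref{proposition:no-null-length}, \ref{proposition:non-consecutive-empty} and \ref{proposition:opt-cutpoint}, and the same asymptotic cost comparison showing the three-interval model with an empty middle bin wins. The only cosmetic difference is that you spell out the $K\geq 4$ elimination explicitly, whereas the paper simply refers back to the argument used for Proposition~\ref{proposition:singleton-bins}.
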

\begin{proof}
Let us consider a data set with $n$ observations and only two values $v_A,
v_B$ of frequencies $h_A= h_B=\frac{n}{2}$. We choose $v_A$ and $v_B$ such
  that $v_B-v_A\geq 3\epsilon$. For the same reasons as in the
proof of proposition \ref{proposition:singleton-bins} we need only to consider three cases:
  histograms with one, two or three intervals. 

For $K=1$, we have 
\begin{eqnarray*}
\cMODL(\mathcal{M}_{K=1})&=& n \log  E + \log ^* 1.
\end{eqnarray*}

For $K=2$, the optimal split is necessarily within $\epsilon$ of $v_A$ or
$v_B$ according to proposition \ref{proposition:opt-cutpoint}. Because of the
symmetry of the setting, we can set the split to be near $v_B$.  Thus we have
two intervals composed of $E_A=E-1, E_B=1$ $\epsilon$-bins.

We get
\begin{eqnarray*}
\cMODL(\mathcal{M}_{K=2})&=& \log ^* 2 + \log(E+1) + \log(n+1)\\
  && + \log \frac{n!}{h_A! h_B!} + h_A \log (E-1)
\end{eqnarray*}

To analyse this quantity, we use the approximation given in \cite{Grunwald07}
(formula~4.36) which states that for $\theta \in]0, 1[$
\begin{equation*}
\log {{n}\choose{\theta n}}=  n H(\theta) -\frac{1}{2} \log (2 \pi n \mathrm{var}(\theta)) + O\left(\frac{1}{n}\right),
\end{equation*}
with $H(\theta)=-\theta \log \theta - (1-\theta) \log (1-\theta)$ and 
$\mathrm{var}(\theta) = \theta (1-\theta)$.

Using $h_A=h_B=\frac{1}{2}$, we obtain
\begin{equation*}
 \log \frac{n!}{h_A! h_B!}= n \log 2 -\frac{1}{2} \log \left(\pi \frac{n}{2}\right)+O\left(\frac{1}{n}\right),
\end{equation*}
and thus
\begin{equation*}
  \cMODL(\mathcal{M}_{K=2})=n \log (2 \sqrt E) + \log E + \frac{1}{2} \log n +O(1).
\end{equation*}
Finally, as in the proof of proposition \ref{proposition:singleton-bins}, the
optimal histogram with $K=3$ intervals consists in  two non-empty intervals,
each with $E_A=1, E_B=1$ $\epsilon$-bins, surrounding an empty interval
composed of $E-2$ $\epsilon$-bins. Therefore 
\begin{eqnarray*}
\cMODL(\mathcal{M}_{K=3})&=& \log ^* 3 + \log \frac{(E+1)(E+2)}{2}\\
  && + \log \frac{(n+1)(n+2)}{2} \\
	&& + \log \frac{n!}{h_A! h_B!}\\
	&=& n \log 2 + 2 \log E + \frac{3}{2} \log n + O(1).
\end{eqnarray*}

Therefore, even for rather small $n$ and $E$, the model cost is minimal for $K=3$, for the histogram where the second interval is empty and has $(E-2) \cdot\epsilon$.
\end{proof}

\section{Asymptotic behaviour of the MODL criterion when $\epsilon \rightarrow
  0$}\label{apx:limit-histogram-proof}
\subsection{proof of Theorem \ref{th:limit-histogram}}
\begin{theorem}
  Let $D$ be a data set with $n$ observations. There
  exists two positive values $C(D)$ and $E(D)$ that depends only on $D$ such
  that for all $\epsilon\leq E(D)$ for any
  \emph{optimal} histogram $\mathcal{M}^\star$ have
\begin{equation}
  \begin{split}
    \left| \cMODL(\mathcal{M}^\star|D)
      -\left\{K-1+n-S(\mathcal{M}^\star,D)\right\}\log\frac{1}{\epsilon}\right|\\\leq
    C(D).
  \end{split}
\end{equation}

\end{theorem}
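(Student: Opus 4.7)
My plan is to evaluate each term of $\cMODL(\mathcal{M}^\star|D)$ separately, bounding the $\epsilon$-independent pieces via Proposition~\ref{proposition:Kbound} and isolating the $\log(1/\epsilon)$ contributions using the geometric information provided by Proposition~\ref{proposition:opt-cutpoint}. Proposition~\ref{proposition:Kbound} gives $K\leq 2n-2$, so the three terms $\log^*K$, $\log\binom{n+K-1}{K-1}$ and $\log\bigl(n!/(h_1!\cdots h_K!)\bigr)$ are each bounded by a constant $C_0(D)$ depending only on $n$. For the cut-point prior, expanding $\log\binom{E+K-1}{K-1}=\sum_{j=1}^{K-1}\log(E+j)-\log((K-1)!)$ and substituting $E=1+L/\epsilon$ yields $\log(E+j)=\log(1/\epsilon)+\log(L+(1+j)\epsilon)$, so for $\epsilon\leq 1$ one has $\log\binom{E+K-1}{K-1}=(K-1)\log(1/\epsilon)+O_D(1)$.

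The real work is the likelihood term. Writing $\log E_k=\log(1/\epsilon)+\log(c_k-c_{k-1})$ and noting that singular intervals satisfy $E_k=1$ (whence $h_k\log E_k=0$) while non-singular ones with $h_k>0$ account for exactly $n-S(\mathcal{M}^\star,D)$ observations, I would decompose
\begin{equation*}
\sum_k h_k\log E_k=\bigl(n-S(\mathcal{M}^\star,D)\bigr)\log\frac{1}{\epsilon}+\sum_{\substack{I_k\ \text{non-singular}\\ h_k>0}}h_k\log(c_k-c_{k-1}).
\end{equation*}
Since $c_k-c_{k-1}\leq L+\epsilon$, the residual sum is bounded above uniformly by $n\log(L+1)$. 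The entire statement thus reduces to a uniform \emph{lower} bound: for $\epsilon$ small enough in terms of $D$, every non-singular interval with $h_k>0$ in an optimal histogram has length at least some $\delta(D)>0$.

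To establish that lower bound, let $\delta_0(D)$ denote the minimum gap between distinct observations and restrict to $\epsilon<\delta_0(D)/4$. Any non-singular interval with $h_k>0$ containing two distinct data values automatically has length $\geq\delta_0(D)/2$. The only remaining case is an interval carrying several copies of a single value $x\in D$ but with one or more extra empty $\epsilon$-bins. Proposition~\ref{proposition:opt-cutpoint} forces each endpoint to lie within $\epsilon$ of some data value, so either one endpoint is within $\epsilon$ of an observation distinct from $x$ (and the minimum-gap assumption then yields the desired length), or both endpoints are within $\epsilon$ of $x$ itself, in which case $E_k\leq 2$. I would rule out this residual configuration by a local exchange, shifting the internal empty bin of $I_k$ across the boundary into the neighbor: the likelihood of $I_k$ drops by $h_k\log E_k\geq h_k\log 2$ while the model prior is unchanged, and the neighbor's likelihood grows by at most $h_m\log(1+1/E_m)=O_D(1)$, giving a net improvement that contradicts optimality for $\epsilon$ below an explicit threshold.

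The main obstacle I foresee is precisely this last step when the candidate neighbor is itself dense (say a singular interval with many repeated observations), so that a single-bin shift into it is not favorable. My intended remedy is a split-plus-absorb argument: compare $\mathcal{M}^\star$ with the histogram obtained by splitting $I_k$ into a singular interval at $x$ and an empty interval carrying the spare bin, and exploit Proposition~\ref{proposition:non-consecutive-empty} to argue that the extra empty interval can be merged further along the structure without creating two adjacent empty intervals; the net cost change is dominated by $-h_k\log E_k+O_D(1)$ and becomes strictly negative for $\epsilon$ small enough. Once the key lemma is in place, combining the three contributions yields $\cMODL(\mathcal{M}^\star|D)=\bigl(K-1+n-S(\mathcal{M}^\star,D)\bigr)\log(1/\epsilon)+O_D(1)$, with an explicit $C(D)$ absorbing all bounded terms, which is exactly the announced inequality.
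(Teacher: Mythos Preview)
Your overall strategy---bound the three $\epsilon$-free terms via Proposition~\ref{proposition:Kbound}, extract $(K-1)\log(1/\epsilon)$ from the cut-point prior, and split the likelihood term according to whether an interval is singular---matches the paper's decomposition into $c_1+c_2+c_3$ almost exactly. The divergence is in how you handle the ``hard case'' (a non-singular interval whose data all equal a single value $x$), and here your proposed arguments do not go through.

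Your local-exchange step computes a cost variation $-h_k\log 2 + h_m\log(1+1/E_m)$ that is \emph{independent of $\epsilon$}; the phrase ``contradicts optimality for $\epsilon$ below an explicit threshold'' is therefore empty, and when the neighbour is itself a dense singular interval ($E_m=1$, $h_m>h_k$) the variation is positive. Your fallback split-plus-absorb fares worse: splitting $I_k$ increases $K$ by one and hence adds a $\log\binom{E+K}{K}/\binom{E+K-1}{K-1}=\log(E+K)-\log K$ term, which contributes $\log(1/\epsilon)+O_D(1)$, not $O_D(1)$; the gain $-h_k\log 2$ is fixed, so the balance goes the wrong way as $\epsilon\to 0$. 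Merging the newly created empty bin with a neighbour just reproduces the original exchange.

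The paper avoids all of this by observing that Proposition~\ref{proposition:opt-cutpoint} already excludes the configuration once $\epsilon\leq D_{\min}/4$, where $D_{\min}$ is the smallest gap between distinct data values. If $I_k$ contains only copies of $x$ and $E_k=2$, one of its endpoints sits at distance $3\epsilon/2$ from $x$; since the nearest distinct observation is at least $D_{\min}>3\epsilon$ away, that endpoint is farther than $\epsilon$ from every data value, contradicting Proposition~\ref{proposition:opt-cutpoint} (the paper records the needed sharpening as Remark~\ref{remark:proof-prop-refpr}). Equivalently, the concavity argument inside the proof of Proposition~\ref{proposition:opt-cutpoint} already places every optimal cut-point at one of the two extreme positions, so no separate exchange lemma is required. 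Rather than lower-bounding $c_k-c_{k-1}$ directly, the paper works with the auxiliary quantity $D_{k,\epsilon}$ (the distance between the data points nearest the two endpoints), uses $|L_{k,\epsilon}-D_{k,\epsilon}|\leq 2\epsilon$, and bounds $\log(L_{k,\epsilon}/D_{k,\epsilon})$ uniformly; the residual case $D_{k,\epsilon}=0$ is then identified with singular intervals via the observation above.
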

\begin{proof}
  Let $D$ be a fixed data set with $n$ observations. For any $\epsilon$ such
  that $\dfrac{L}{\epsilon}$ is an integer,
  $\mathcal{M}_{\epsilon}^\star=(K, (c_k)_{0\leq k\leq K}, (h_k)_{1 \leq k
    \leq K})$ be a optimal histogram constructed on the
  $E=\dfrac{L}{\epsilon}+1$ regular grid of $\epsilon$-bins.
The \ENUMname criterion for
$\mathcal{M}_{\epsilon}^\star$ is given by: 
\begin{eqnarray*}
\cMODL(\mathcal{M}_{\epsilon}^\star|D) &=&  \log ^* K + \log  \binom{n+K-1}{K-1} \\
&&+ \log  \binom{E+K-1}{K-1} \\
&&+ \log  \frac{n!}{h_1!... h_{K}!} + \sum^{K}_{k=1} h_k \log  E_{k,\epsilon},
\end{eqnarray*}
where the $E_{k,\epsilon}$ are the sizes of the intervals expressed in terms
of $\epsilon$-bins. 

More precisely, let $c_{0, \epsilon}, c_{1, \epsilon}, \ldots, c_{K,
  \epsilon}$ be the end 
points of the $K$  intervals of $\mathcal{M}_{\epsilon}^\star$. By definition of the
grid, we have
\[
L_{k, \epsilon} = c_{k, \epsilon} - c_{k-1, \epsilon} = E_{k, \epsilon} \times
\epsilon, 1 \leq k \leq K
\]
and
\[
L + \epsilon = c_{K_\epsilon, \epsilon} - c_{0, \epsilon} = E \times
\epsilon.
\]
Using 
\begin{equation*}
\log  \binom{E+K-1}{K-1}=\sum_{k=1}^{K-1}\log (E+k)-\log(K-1)!,
\end{equation*}
and the notations above, we have
\begin{eqnarray*}
\cMODL(\mathcal{M}_{\epsilon}^\star|D)
&=&  \log ^* K + \log  \binom{n+K-1}{K-1} \\
&&+ \log  \frac{n!}{h_1!... h_{K}!} - \log (K-1)!\\
&&+ \sum_{k=1}^{K-1} \log \left(\frac{L}{\epsilon} +k+1\right) \\
&&+ \sum^{K}_{k=1} {h_k \log  \frac{L_{k, \epsilon}}{\epsilon}}.
\end{eqnarray*}
Obviously, the main influence of $\epsilon$ on the criterion is through the
last two terms. Let us first notice that $\forall k,\ 1\leq k\leq K-1$ 
\begin{equation*}
\log\left(\frac{L}{\epsilon} +k+1\right)=\log\frac{1}{\epsilon}+ \log
(L+(k+1)\epsilon). 
\end{equation*}
and thus
\begin{multline}\label{eq:fulllength}
  \sum_{k=1}^{K-1}\log \left(\frac{L}{\epsilon} +k+1\right)=\\
  (K-1)\log\frac{1}{\epsilon}+\sum_{k=2}^{K}\log (L+k\epsilon).
\end{multline}
To analyse the last term $\sum^{K}_{k=1} {h_k \log  \frac{L_{k, \epsilon}}{\epsilon}}$, let us define $D_{k,\epsilon}$ by
\begin{equation*}
D_{k,\epsilon}=\arg\min_{x\in D}|c_{k,\epsilon}-x|-\arg\min_{x\in D}|c_{k-1,\epsilon}-x|,
\end{equation*}
that is the distance between the data points that are the closest to the boundaries
of interval $]c_{k-1,\epsilon},c_{k,\epsilon}]$. By definition, we have
\begin{equation*}
|L_{k, \epsilon}- D_{k,\epsilon}|\leq \min_{x\in
  D}|c_{k,\epsilon}-x|+\min_{x\in D}|c_{k-1,\epsilon}-x|. 
\end{equation*}
By proposition~\ref{proposition:opt-cutpoint} we know that 
\begin{equation*}
  \forall 0\leq k \leq K,\ \min_{x\in D}|c_{k,\epsilon}-x|\leq \epsilon, 
\end{equation*}
as $\mathcal{M}_{\epsilon}^\star$ is optimal. This shows that
\begin{equation*}
 |L_{k, \epsilon}- D_{k,\epsilon}|\leq 2\epsilon. 
\end{equation*}
Using $D_{k,\epsilon}$, we have
\begin{align}\label{eq:length}
 & \sum^{K}_{k=1} h_k \log \frac{L_{k, \epsilon}}{\epsilon}=\\
 & \sum_{D_{k,\epsilon}=0}h_k \log \frac{L_{k, \epsilon}}{\epsilon}+
  \sum_{D_{k,\epsilon}>0}h_k\left(
    \log\frac{1}{\epsilon}+\log\frac{L_{k,\epsilon}}{D_{k,\epsilon}}+\log D_{k,\epsilon}\right).\notag
\end{align}
Using equations \eqref{eq:fulllength} and \eqref{eq:length}, we can decompose
the \ENUMname criterion $\cMODL(\mathcal{M}_{\epsilon}^\star|D)$ into the sum of the following three
terms:
\begin{eqnarray*}
c_{1}(\mathcal{M}_{\epsilon}^\star|D) &=&  \log ^* K + \log  \binom{n+K-1}{K-1} \\
                                        &&+ \log  \frac{n!}{h_1!... h_{K}!} - \log (K-1)!\\
  &&+\sum_{D_{k,\epsilon}=0}h_k \log \frac{L_{k, \epsilon}}{\epsilon}\\
&&+ \sum_{D_{k, \epsilon}> 0} {h_k \log D_{k, \epsilon}}, \\
c_{2}(\mathcal{M}_{\epsilon}^\star|D) &=&  \sum_{k=2}^{K} {\log (L + k\epsilon)} 
  + \sum_{D_{k, \epsilon}> 0} {\log \frac{L_{k, \epsilon}}{D_{k, \epsilon}}}.\\
c_{3}(\mathcal{M}_{\epsilon}^\star|D) &=& (K-1) \log \frac{1}{\epsilon} 
  + \sum_{D_{k, \epsilon}> 0} {h_k \log \frac{1}{\epsilon}},
\end{eqnarray*}
As $\mathcal{M}_{\epsilon}^\star$ is optimal, $K\leq 2n-2$ (proposition
\ref{proposition:Kbound}) and as a consequence
$c_{1}(\mathcal{M}_{\epsilon}^\star|D)$ is upper and lower bounded by
constants that depend only on the data set.

This is obviously the case of the terms $\log^\star K$,
$\log  \binom{n+K-1}{K-1}$ and $- \log (K-1)!$. The term
$\log \frac{n!}{h_1!... h_{K}!}$ can take only a finite number of positive values for a
fixed $n$ and any $K\leq 2n-2$ and is therefore bounded by the largest of
those values. Moreover, when $D_{k,\epsilon}=0$,
$\epsilon\leq L_{k, \epsilon}\leq 2\epsilon$ and thus
\begin{equation*}
0\leq \sum_{D_{k,\epsilon}=0}h_k \log \frac{L_{k, \epsilon}}{\epsilon} \leq n\log 2.
\end{equation*}
It is also obvious that $D_{k,\epsilon}\leq L$ and thus 
\begin{equation*}
  \sum_{D_{k, \epsilon}> 0} {h_k \log D_{k, \epsilon}}\leq n\log L. 
\end{equation*}
To lower bound this quantity, we introduce
\begin{equation}\label{eq:dmin:def}
D_{\min}=\min\{|x_i-x_j|\mid x_i\in D, x_j\in D, x_i\neq x_j\},
\end{equation}
which depends only on the data set $D$ and is strictly positive (by hypothesis
on $D$). By definition
for all $k$ such that $D_{k, \epsilon}> 0$, $D_{k, \epsilon}\geq D_{\min}$,
and therefore
\begin{equation*}
  \sum_{D_{k, \epsilon}> 0} {h_k \log D_{k, \epsilon}}\geq \min(\log D_{\min},0). 
\end{equation*}

Thus there exists $C_1(D)\geq 0$ such that for all $\epsilon>0$,
$-C_1(D)\leq c_{1}(\mathcal{M}_{\epsilon}^\star|D)\leq C_1(D)$.

The second term $c_{2}(\mathcal{M}_{\epsilon}^\star|D)$ can also be upper and
lower bounded with a minimal condition on
$\epsilon$. Indeed we have
\begin{equation*}
(K-1)\log L\leq \sum_{k=2}^{K} {\log (L + k\epsilon)} \leq (K-1)\log(L+K\epsilon).
\end{equation*}
Assuming that $\epsilon\leq L$ and with $K\leq 2n-2$, we have
\begin{equation*}
(2n-3)\log L \leq \sum_{k=2}^{K} {\log (L + k\epsilon)} \leq (2n-3)\left(\log L+ \log (2n-1)\right).
\end{equation*}

We also have for $D_{k, \epsilon}>0$
\begin{equation*}
  \begin{array}{rcccl}
    -2\epsilon&\leq&L_{k, \epsilon}-D_{k, \epsilon}&\leq&2\epsilon\\
    -2\dfrac{\epsilon}{D_{k, \epsilon}}&\leq & \dfrac{L_{k, \epsilon}}{D_{k,
                                              \epsilon}}-1&\leq&
                                                                 2\dfrac{\epsilon}{D_{k, \epsilon}}\\
    1-2\dfrac{\epsilon}{D_{k, \epsilon}}&\leq& \dfrac{L_{k,
                                               \epsilon}}{D_{k,\epsilon}}&\leq&1+2\dfrac{\epsilon}{D_{k,
                                                                                \epsilon}}\\
    1-2\dfrac{\epsilon}{D_{\min}}&\leq& \dfrac{L_{k,\epsilon}}{D_{k,\epsilon}}&\leq&1+2\dfrac{\epsilon}{D_{\min}}
  \end{array}
\end{equation*}
Thus if we assume $\epsilon\leq \frac{D_{\min}}{4}<L$, we have
  \begin{equation*}
-\log 2\leq \log\left(\dfrac{L_{k,\epsilon}}{D_{k,\epsilon}}\right)\leq \log \frac{3}{2}.
  \end{equation*}
  Thus if   $\epsilon\leq \frac{D_{\min}}{4}$, we have
  \begin{equation*}
    \begin{array}{rcccl}
 -K\log 2&\leq&    \displaystyle \sum_{D_{k, \epsilon}> 0} {\log \frac{L_{k,
                 \epsilon}}{D_{k,    \epsilon}}}&\leq& K
                                                       \log\frac{3}{2}
      \\
  -(2n-2)\log 2&\leq&\displaystyle\sum_{D_{k, \epsilon}> 0} {\log \frac{L_{k,
                 \epsilon}}{D_{k,    \epsilon}}}
          &\leq& (2n-2)\log \frac{3}{2}
  \end{array}
  \end{equation*}
Thus there exists $C_2(D)\geq 0$ such that for all $K\leq 2|D|-2$ and for all
$0<\epsilon\leq \frac{D_{\min}}{4}$,
$-C_2(D)\leq c_{2}(\mathcal{M}_{\epsilon}^\star|D)\leq C_2(D)$.

The last term $c_3$ is the only one that depends on $\epsilon$ in a non
bounded way. It can be interpreted in a more direct way using
\begin{equation*}
 \sum_{D_{k, \epsilon}> 0} h_k=n-\sum_{D_{k, \epsilon}=0} h_k,
\end{equation*}
and characterising intervals such that $h_k>0$ and $D_{k, \epsilon}=0$. Let
$I_k$ be such an interval and let $x$ be a point from $D$ in $I_k$. When
$D_{k, \epsilon}=0$, $L_{k,\epsilon}=\epsilon$ or
$L_{k,\epsilon}=2\epsilon$. Combined with the assumption that $\epsilon\leq
\frac{D_{\min}}{4}$, all data points in $I_k$ take the same value $x$.

Using proposition \ref{proposition:opt-cutpoint}, we can then rule out the
case $L_{k,\epsilon}=2\epsilon$. Indeed we have $I_k= ]c_{k-1},
c_k]$. According to proposition \ref{proposition:opt-cutpoint} both $c_{k-1}$
and $c_k$ must be at most distant of $\epsilon$ from a data point. Considering
$D_{k, \epsilon}=0$ this must be the same value $x'$. If
$c_k-c_{k-1}=2\epsilon$, then the only possibility is that $c_k=x'+\epsilon$
and $c_{k-1}=x'-\epsilon$, and thus $x'=x$, the only value taken by data points
in $I_k$. However based on Remark \ref{remark:proof-prop-refpr} above, we
should then have $c_k=x$. Thus, $L_{k,\epsilon}=\epsilon$ and $I_k$ is a
singular interval as per definition \ref{def:singular-intervals}. Therefore
$\sum_{D_{k, \epsilon}=0} h_k=S(\mathcal{M}_{\epsilon}^*,D)$, which  
concludes the proof. 
\end{proof}

\subsection{proof of Corollary \ref{cor:limit-histogram}}
\begin{corollary}
Let $D$ be a data set with $n$ distinct observations. Then for $\epsilon$
sufficiently small, the optimal histogram build on $\epsilon$ bins for the
\ENUMname criterion is the trivial one with a single interval
\begin{equation*}
\mathcal{M}=(1,
\{x_{\min}-\frac{\epsilon}{2}, x_{\max}+\frac{\epsilon}{2}\},n).
\end{equation*}
\end{corollary}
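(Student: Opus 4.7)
The plan is to use Theorem~\ref{th:limit-histogram} to cut down the competitors of the trivial histogram to a small finite list of shapes, then dispatch those shapes by direct computation. First, $\cMODL(\mathcal{M}_1|D) = n\log E$, since every term other than the bin-index term vanishes when $K=1$. Writing $\log E = \log(1/\epsilon) + \log(L+\epsilon)$, one obtains $\cMODL(\mathcal{M}_1|D) \leq n\log(1/\epsilon) + C_1(D)$ for $\epsilon \leq 1$, where $C_1(D)=n\log(L+1)$ depends only on $D$. For any other compatible histogram $\mathcal{M}$, Theorem~\ref{th:limit-histogram} provides the matching lower bound $\cMODL(\mathcal{M}|D) \geq (K-1+n-S)\log(1/\epsilon) - C(D)$ once $\epsilon \leq E(D)$. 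Because the $n$ observations are distinct, each singular interval contains exactly one observation, so $S$ equals the number of singular intervals and in particular $S\leq K$. Comparing the two bounds, any $\mathcal{M}$ no worse than $\mathcal{M}_1$ must satisfy $(K-1-S)\log(1/\epsilon) \leq C(D)+C_1(D)$, which for $\epsilon$ small enough forces $S \in \{K-1,K\}$.

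Next, I rule out the two extreme cases. If $S=K$ every interval is singular of length $\epsilon$, so $K\epsilon = L+\epsilon$; combined with $K\leq n$ this forces $\epsilon \geq L/(n-1)$, impossible for small enough $\epsilon$. If $S=K-1$ there is a single non-singular interval $I_0$, and I claim additionally $K\leq 3$. The key geometric step is that, once $\epsilon < D_{\min}/2$ with $D_{\min}$ as in~\eqref{eq:dmin:def}, two singulars cannot share an endpoint: they would then contain two distinct observations lying within a combined span of length strictly less than $2\epsilon < D_{\min}$, contradicting the very definition of $D_{\min}$. Since the $K-1$ singulars together with $I_0$ tile $[c_0,c_K]$, singulars may sit only immediately to the left or immediately to the right of $I_0$, at most one on each side, giving $S\leq 2$ and $K\leq 3$.

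Only a handful of shapes survive: $K=2$ with the singular at $x_{(1)}$ or $x_{(n)}$, and $K=3$ with singulars at both extreme data points. For each one I would substitute $E_k=1$ on the singulars and $E_k=E-S$ on $I_0$ into the closed form of $\cMODL$ and compute $\Delta = \cMODL(\mathcal{M}|D) - n\log E$. The $\log E$ contributions coming from $\log\binom{E+K-1}{K-1}$ and from $(n-S)\log(E-S)$ cancel asymptotically, and $\Delta$ converges to a strictly positive constant depending only on $n$ and $S$ (of the form $\log^*(S+1)-\log S!+\log\binom{n+S}{S}+\log\frac{n!}{(n-S)!}$). Taking $\epsilon$ small enough to make each of these finitely many differences positive concludes the proof. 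The main obstacle is the geometric step pinning $K\leq 3$: it combines the bound $S=K-1$ coming from Theorem~\ref{th:limit-histogram} with the $D_{\min}/2$ threshold and the topology forced by having a single non-singular interval. The final arithmetic is routine but requires carefully tracking the cancellation of the $\log E$ terms.
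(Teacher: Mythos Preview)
Your approach is essentially the paper's: both use Theorem~\ref{th:limit-histogram} together with the non-adjacency of singular intervals (once $\epsilon<D_{\min}/2$) to reduce the competitors of $\mathcal{M}_1$ to the two shapes $(K,S)\in\{(2,1),(3,2)\}$, and then finish by direct computation showing the residual constant is strictly positive. Two small cleanups: Theorem~\ref{th:limit-histogram} is stated only for \emph{optimal} histograms, so invoke it on a putative optimum rather than on ``any other compatible histogram''; and $\cMODL(\mathcal{M}_1|D)=\log^*1+n\log E$, not $n\log E$, so your limiting constant should read $\log^*(S{+}1)-\log^*1-\log S!+\log\binom{n+S}{S}+\log\frac{n!}{(n-S)!}$, which is still strictly positive.
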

\begin{proof}
Let us consider two optimal histograms $\mathcal{M}_{K,\epsilon}^\star$ and
$\mathcal{M}_{K',\epsilon}^\star$ with $K\neq K'$. We have
\begin{multline*}
    \bigl| \cMODL(\mathcal{M}_{K,\epsilon}^\star|D)
      -\cMODL(\mathcal{M}_{K',\epsilon}^\star|D)-\\\log\frac{1}{\epsilon}(K-K'+S(\mathcal{M}_{K',\epsilon}^*,D)-S(\mathcal{M}_{K,\epsilon}^*,D))\bigr|\leq
    2C(D).
\end{multline*}
As $K$ (resp. $K'$) and $S(\mathcal{M}_{K,\epsilon}^*,D)$
(resp. $S(\mathcal{M}_{K',\epsilon}^*,D)$) are integers, the minimum non zero
value of
\begin{equation*}
|K-K'+S(\mathcal{M}_{K',\epsilon}^*,D)-S(\mathcal{M}_{K,\epsilon}^*,D)|
\end{equation*}
is 1. Therefore when $\epsilon<e^{-2C(D)}$, the sign of
\begin{equation*}
\cMODL(\mathcal{M}_{K,\epsilon}^\star|D)  -\cMODL(\mathcal{M}_{K',\epsilon}^\star|D),
\end{equation*}
is the one of $K-K'+S(\mathcal{M}_{K',\epsilon}^*,D)-S(\mathcal{M}_{K,\epsilon}^*,D)$
as long as this quantity is not zero. In other words, histograms can be
compared based on this difference as long as $\epsilon$ is small
enough. Our goal is to show that $\mathcal{M}_{1,\epsilon}^\star$ is
optimal. This histogram is preferred over $\mathcal{M}_{K,\epsilon}^*$ when
\begin{align*}
  1-K+S(\mathcal{M}_{K,\epsilon}^*,D)-S(\mathcal{M}_{1,\epsilon}^*,D)&<0,\\
  S(\mathcal{M}_{K,\epsilon}^*,D)&<K-1,
\end{align*}
as $\mathcal{M}_{1,\epsilon}^\star$  does not contain singular intervals. 

Let us now focus on the assumption that the $n$ values in the data set are
distinct. This implies that a singular interval $I_k$ is
associated to $h_k=1$ and thus $S(\mathcal{M}_{K,\epsilon}^*,D)$ is exactly
the number of singular intervals. This number is controlled by the fact that
when $\epsilon$ is small enough, a histogram cannot have adjacent singular
intervals. Indeed singular intervals have a maximal width of $\epsilon$. Let
$I_k$ and $I_{k+1}$ be two such adjacent intervals, with the associated values
$x_i\in I_k$ and $x_j\in I_j$. Then
\begin{equation*}
D_{\min}\leq |x_j-x_i|\leq 2\epsilon,
\end{equation*}
with $D_{\min}$ has defined in equation \eqref{eq:dmin:def}. The lower bound
is induced by the fact that $I_k$ and $I_{k+1}$ contain only a single value
each (by definition) and are adjacent: there is no value from the data set in
$]x_i,x_j[$. The inequalities can be fulfilled simultaneously only if
$\epsilon\geq \frac{D_{\min}}{2}$ and thus when $\epsilon<\frac{D_{\min}}{2}$,
singular intervals cannot be adjacent. We assume in the rest of the proof that
$\epsilon<\min\left(\frac{D_{\min}}{2}, e^{-2C(D)}\right)$. Then 
\begin{equation*}
K\geq 2  S(\mathcal{M}_{K,\epsilon}^*,D)-1,
\end{equation*}
as we need ``in between'' non singular intervals to separate singular
ones. Therefore we have
\begin{equation*}
K-  S(\mathcal{M}_{K,\epsilon}^*,D)\geq S(\mathcal{M}_{K,\epsilon}^*,D)-1,
\end{equation*}
and if $S(\mathcal{M}_{K,\epsilon}^*,D) \geq 3$, 
\begin{align*}
S(\mathcal{M}_{K,\epsilon}^*,D)\leq K-2.
\end{align*}
This shows that $\mathcal{M}_{1,\epsilon}^*$ is always preferred to  by the
\ENUMname criterion to histograms with $S(\mathcal{M}_{K,\epsilon}^*,D) \geq 3$. 

We need now to discuss the remaining cases, i.e., situations when
$S(\mathcal{M}_{K,\epsilon}^*,D)\leq 2$. If $K\geq
S(\mathcal{M}_{K,\epsilon}^*,D)+2$, $\mathcal{M}_{1,\epsilon}^*$ is
preferred and thus we have two interesting particular cases to handle:
\begin{enumerate}
\item $K=2$ and $S(\mathcal{M}_{2,\epsilon}^*,D)=1$: this is a
  particular case with a single singular interval completed by a single large
  non singular one;
\item  $K=3$ and $S(\mathcal{M}_{3,\epsilon}^*,D)=2$: this is again a particular
  case where we have two singular intervals separated by a single large
  non singular interval.
\end{enumerate}
In both cases, the dominating term of the \ENUMname criterion is
$n\log\frac{1}{\epsilon}$, exactly as for $\mathcal{M}_{1,\epsilon}^*$ and we
need to compare with more precision the values of the criterion to choose the
optimal histogram.

For a single interval, we have
\begin{equation}\label{eq:one:interval:eps}
\cMODL(\mathcal{M}_{1,\epsilon}^\star|D)=  \log ^*1+n\log\left(\frac{L}{\epsilon}+1\right).
\end{equation}
We use the fact that if $\lambda>0$, $\rho>0$ and $\kappa$ do not depend on
$\epsilon$, for $\epsilon$ large enough such that
$\frac{\rho}{\epsilon}+\kappa>0$, 
\begin{equation}\label{eq:log}
\lambda\log\left(\frac{\rho}{\epsilon}+\kappa\right)=\lambda\log\rho+\lambda\log
\frac{1}{\epsilon}+o(\epsilon). 
\end{equation}
Thus we have
\begin{equation*}
  \cMODL(\mathcal{M}_{1,\epsilon}^\star|D)=  n\log\frac{1}{\epsilon}+\log ^*1+n\log L+o(\epsilon).                                   
\end{equation*}
For $K=2$ and $S(\mathcal{M}_{2,\epsilon}^*,D)=1$, we have
\begin{align*}
  \cMODL(\mathcal{M}_{2,\epsilon}^\star)=& \log ^*2+\log (n+1)+\log n+\log \left(\frac{L}{\epsilon}+2\right)\\
&  +(n-1)\log \frac{L}{\epsilon}.
\end{align*}
Using equation \eqref{eq:log} to handle the two logarithmic terms
with $\epsilon$, we have 
\begin{align*}
  \cMODL(\mathcal{M}_{2,\epsilon}^\star|D)=& n\log\frac{1}{\epsilon}\\
                                         &+\log ^*2+n\log L\\
                                         &+\log (n(n+1))+o(\epsilon).
\end{align*}
Finally we have for $K=3$ and $S(\mathcal{M}_{2,\epsilon}^*,D)=2$
\begin{align*}
  \cMODL(\mathcal{M}_{3,\epsilon}^\star|D)=& \log ^*3+\log\frac{(n+1)(n+2)}{2}+\\
                                     &+\log
                                       \left(\frac{L}{\epsilon}+2\right)+\log
                                       \left(\frac{L}{\epsilon}+3\right)
  -\log 2\\
&  +(n-2)\log \left(\frac{L}{\epsilon}-1\right)\\
  &+\log n(n-1),
\end{align*}
Using again equation \eqref{eq:log}, we have
\begin{align*}
  \cMODL(\mathcal{M}_{3,\epsilon}^\star|D)=&n\log\frac{1}{\epsilon}\\
                                     &+\log ^*3+n\log L\\
                                     &+\log \frac{(n-1)n(n+1)(n+2)}{4} 
  +o(\epsilon).
\end{align*}
When $\epsilon$ converges to 0,
$\cMODL(\mathcal{M}_{k,\epsilon}^\star|D)-n\log\frac{1}{\epsilon}-n\log L$
reduces to $\log^\star k$ plus some positive terms when $k>1$. As $\log^\star$
is an increasing function, when $\epsilon$ is mall enough,
$\cMODL(\mathcal{M}_{1,\epsilon}^\star|D)$ becomes smaller than
$\cMODL(\mathcal{M}_{k,\epsilon}^\star|D)$ for $k>1$ and thus
$\mathcal{M}_{1,\epsilon}^\star$ is the optimal histogram. 
\end{proof}

\section{Illustration of the asymptotic  behaviour} \label{apx:convergence-rate}
We have:
\begin{eqnarray*}
\cMODL(\mathcal{M}_{2, \epsilon}|D_n(\alpha,\theta)) &=&  \log ^* 2 + \log (n+1) + \log (E+1)\\ 
 &&+ \log  \frac{n!}{(n \theta)!(n (1-\theta))!} + n \theta \log \alpha E \\
 &&+ n (1-\theta) \log (1-\alpha) E,\\
&=&  \log ^* 2 + \log (n+1) + \log (E+1) \\
  && + n H(\theta) -\frac{1}{2} \log (2 \pi n \mathrm{var}(\theta)) + O(1/n) \\
  && + n \log E +  n \theta \log \alpha\\ 
  && + n (1-\theta) \log (1-\alpha),
\end{eqnarray*}
where we used the same approximation from \cite{Grunwald07} as in Section
\ref{apx:th-empty-bins-proof}. Using equation \eqref{eq:one:interval:eps}, we obtain
\begin{align*}
  \Delta(n,\epsilon, \alpha, \theta)=& \cMODL(\mathcal{M}^\star_{2, \epsilon}|D_n(\theta, \alpha))- \cMODL(\mathcal{M}^\star_{1,
                                       \epsilon}|D_n(\theta, \alpha)) \\
  =&\log ^* 2 - \log ^* 1 + \log (n+1) + \log (E+1) \\
  & + n H(\theta) -\frac{1}{2} \log (2 \pi n \mathrm{var}(\theta)) + O(1/n) \\
  & + n \theta \log \alpha + n (1-\theta) \log (1-\alpha).
\end{align*}
This can be simplified into
\begin{align*}
\Delta(n,\epsilon, \alpha, \theta)=& \log \left(E+1\right)\\
  		& + n \left( H(\theta) + \theta \log \alpha + (1-\theta) \log (1-\alpha)\right)\\
  		& + O(\log n).
\end{align*}
Finally we conclude using
\begin{align*}
 D_{KL}(\theta\| \alpha)&=\theta\log\frac{\theta}{\alpha} +(1-\theta)
                          \log\frac{1-\theta}{1-\alpha},\\
  &=-H(\theta)-\theta \log \alpha - (1-\theta) \log (1-\alpha).
\end{align*}

\section{Benchmarking and comparison of MDL methods}\label{apx:exp-MDL}
The first series of figures of the appendix compares the \NMLname  , \ENUMname
and \GENUMname  criteria. The three methods are evaluated on different sample
sizes of a Normal (figure \ref{fig:comparison-MDL-normal}, Cauchy (figure
\ref{fig:comparison-MDL-cauchy}), uniform (figure
\ref{fig:comparison-MDL-uniform}), triangle (figure
\ref{fig:comparison-MDL-triangle}), triangle mixture (figure
\ref{fig:comparison-MDL-tmix}) and Gaussia mixture (figure
\ref{fig:comparison-MDL-claw}) distributions. 

We highlight that all results are means over 10 experiments for each sample
size. All results are shown in log scale. The standard deviations of the
metrics are represented asymmetrically in the graphs to because of the log scale.

\begin{figure}
\centering
\caption{Comparison between MDL methods over a Normal distribution of different sample sizes \label{fig:comparison-MDL-normal}}
\setkeys{Gin}{width=0.3\textwidth}
\subfloat[Number of intervals,
          \label{fig:intervals-MDL-normal}]{\includegraphics{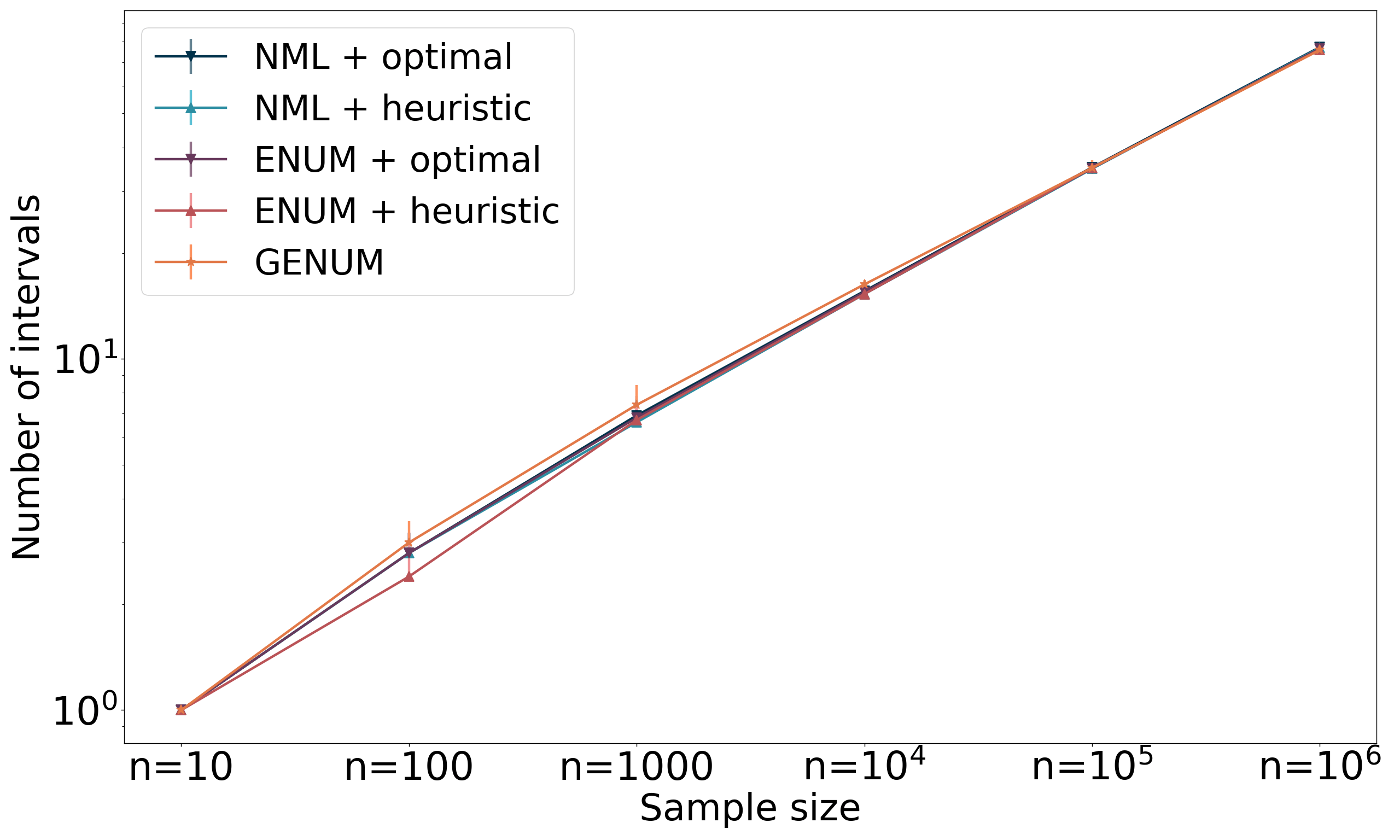}}
    \hfill
\subfloat[Computation time,
          \label{fig:time-MDL-normal}]{\includegraphics{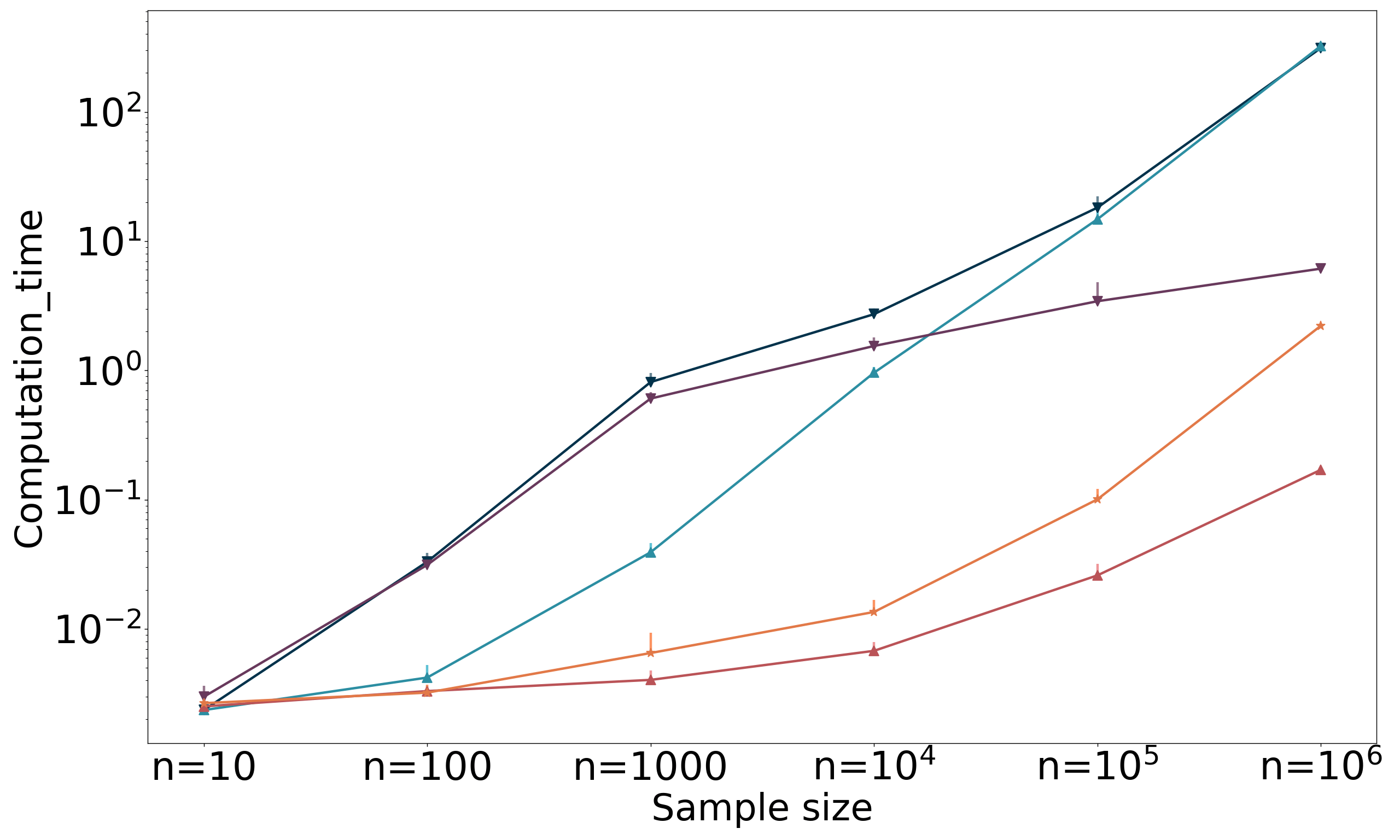}}
    \hfill
\subfloat[Hellinger distance,
          \label{fig:hd-MDL-normal}]{\includegraphics{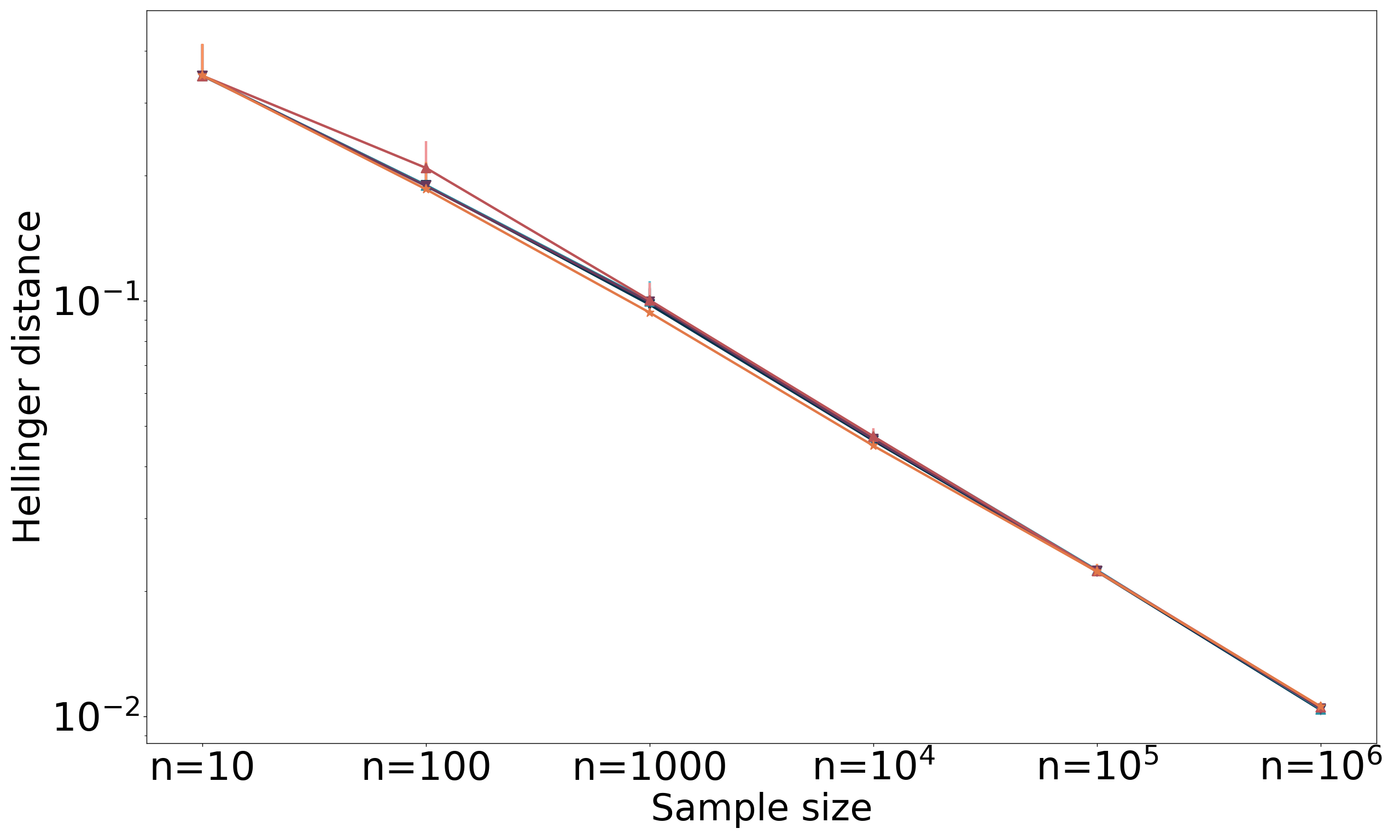}}

\end{figure}

\begin{figure}
\centering
\caption{Comparison between MDL methods over a Cauchy distribution of different sample size \label{fig:comparison-MDL-cauchy}}
\setkeys{Gin}{width=0.3\textwidth}
\subfloat[Number of intervals,
          \label{fig:intervals-MDL-cauchy}]{\includegraphics{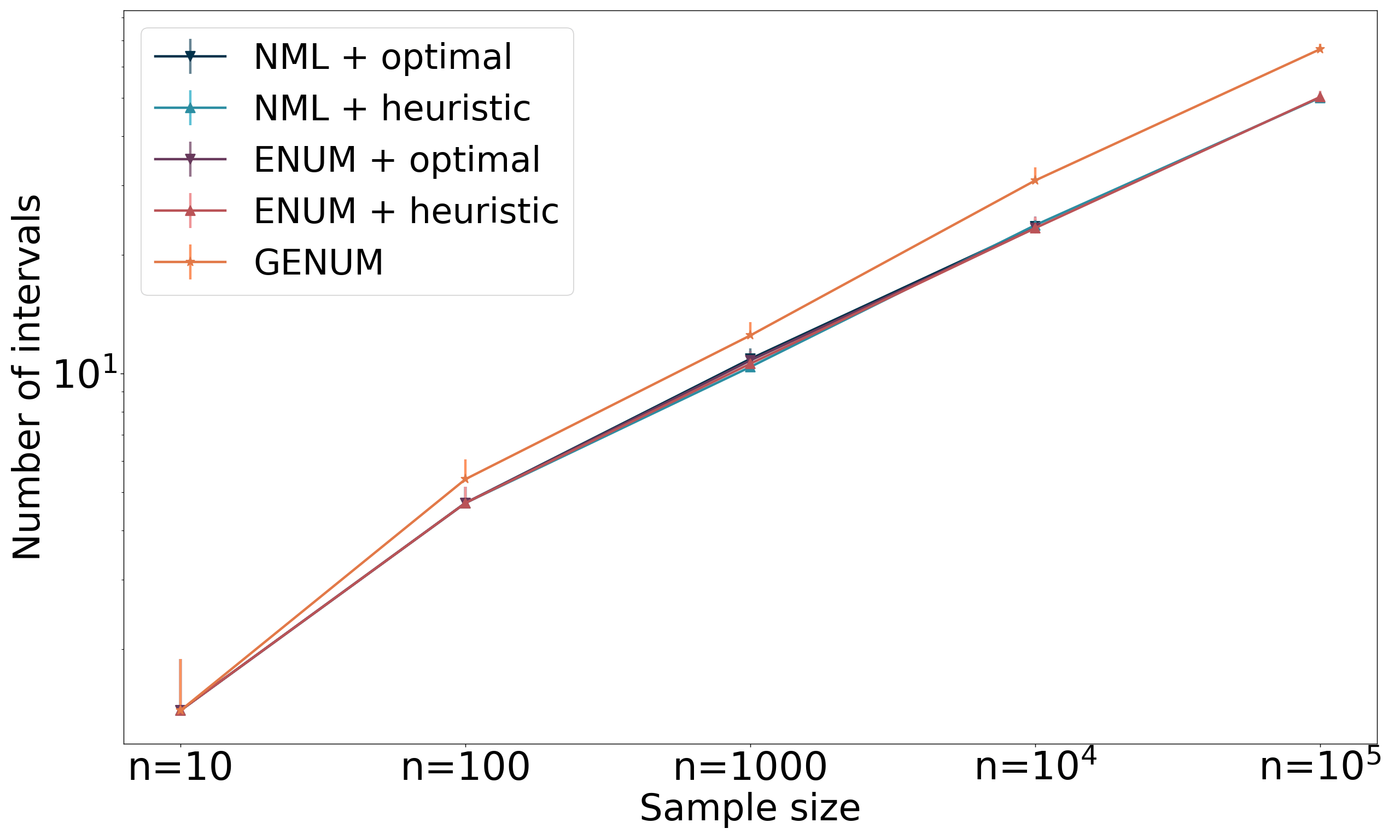}}
    \hfill
\subfloat[Computation time,
          \label{fig:time-MDL-cauchy}]{\includegraphics{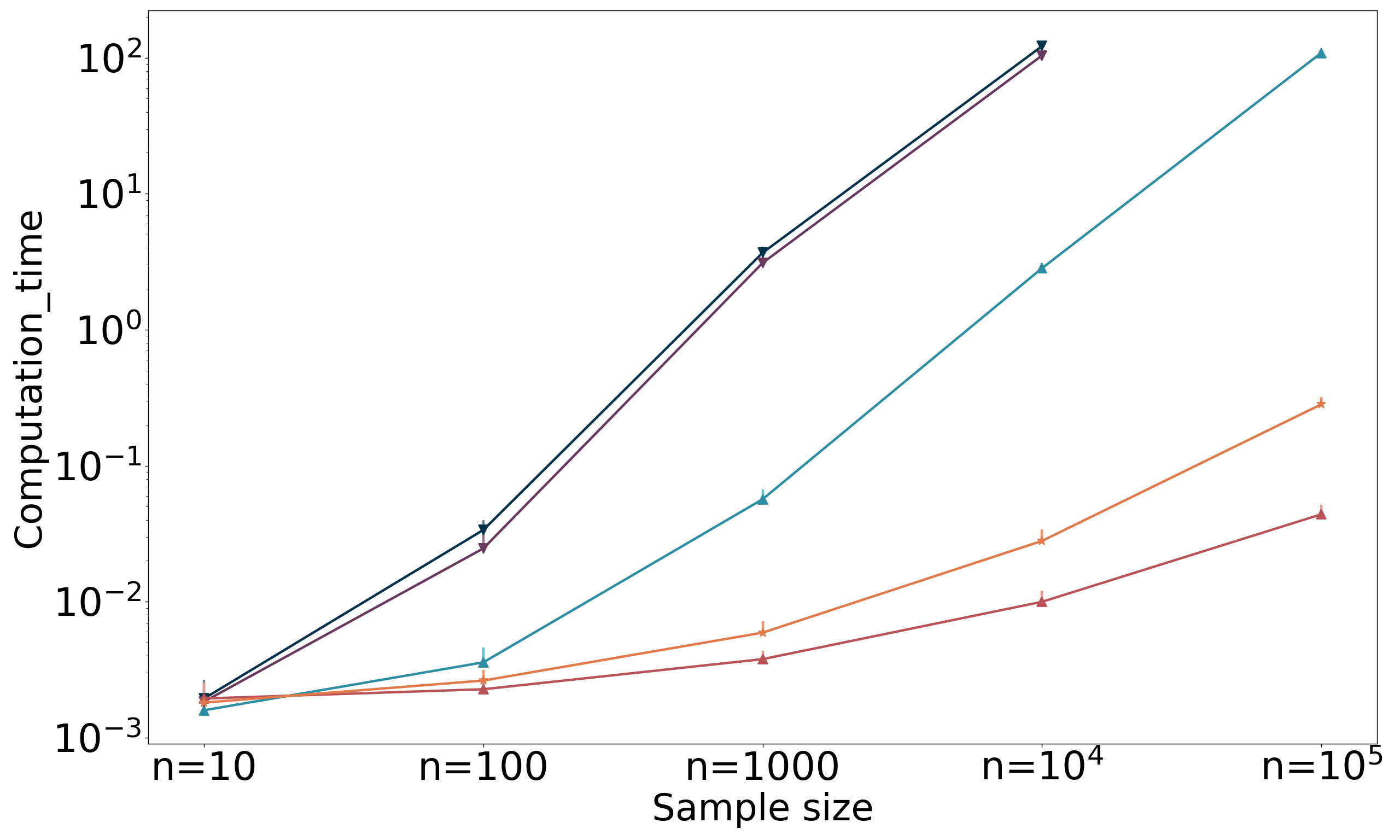}}
    \hfill
\subfloat[Hellinger distance,
          \label{fig:hd-MDL-cauchy}]{\includegraphics{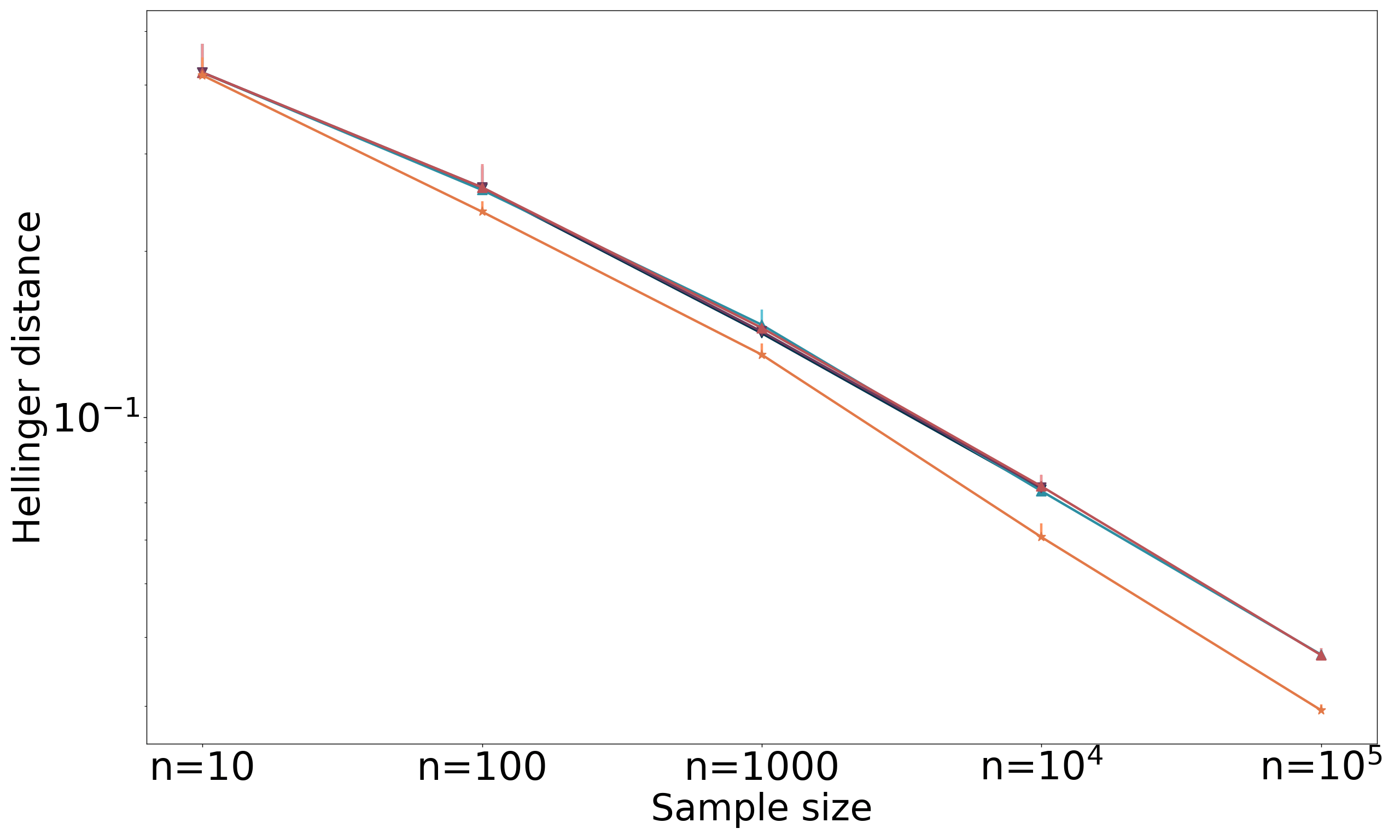}}
\end{figure}

\begin{figure}
\centering
\caption{Comparison between MDL methods over a uniform distribution of different sample size \label{fig:comparison-MDL-uniform}}
\setkeys{Gin}{width=0.3\textwidth}
\subfloat[Number of intervals,
          \label{fig:intervals-MDL-uniform}]{\includegraphics{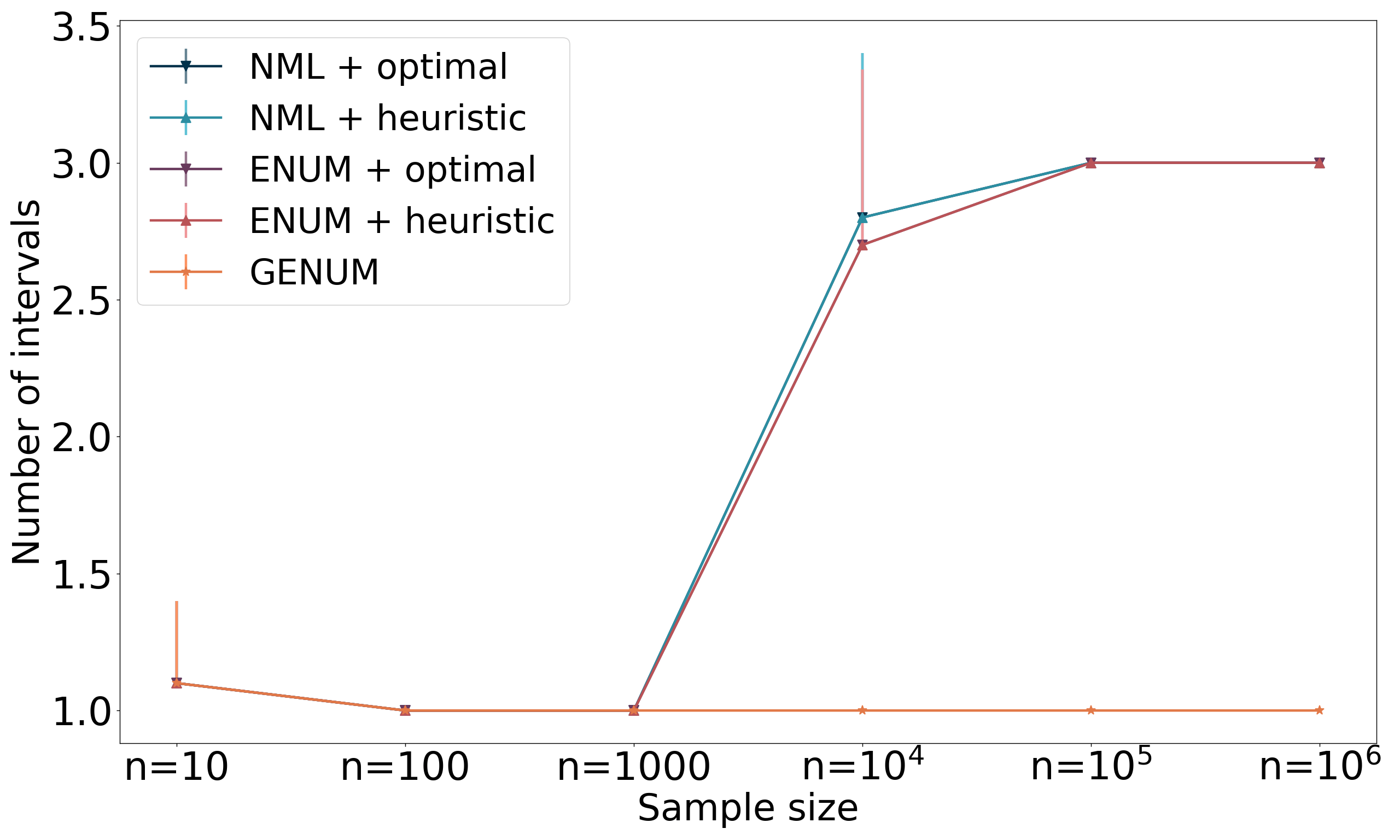}}
    \hfill
\subfloat[Computation time,
          \label{fig:time-MDL-uniform}]{\includegraphics{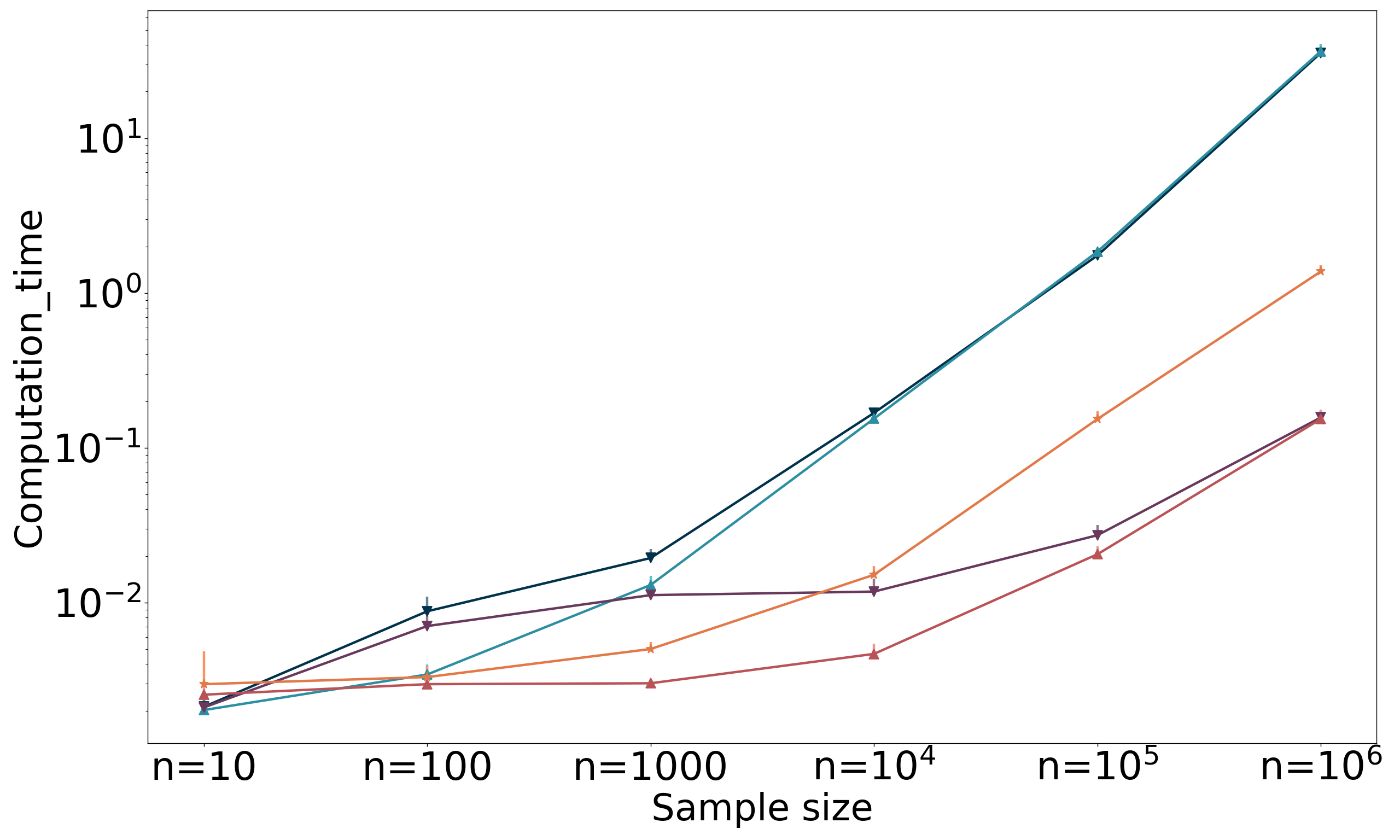}}
    \hfill
\subfloat[Hellinger distance,
          \label{fig:hd-MDL-uniform}]{\includegraphics{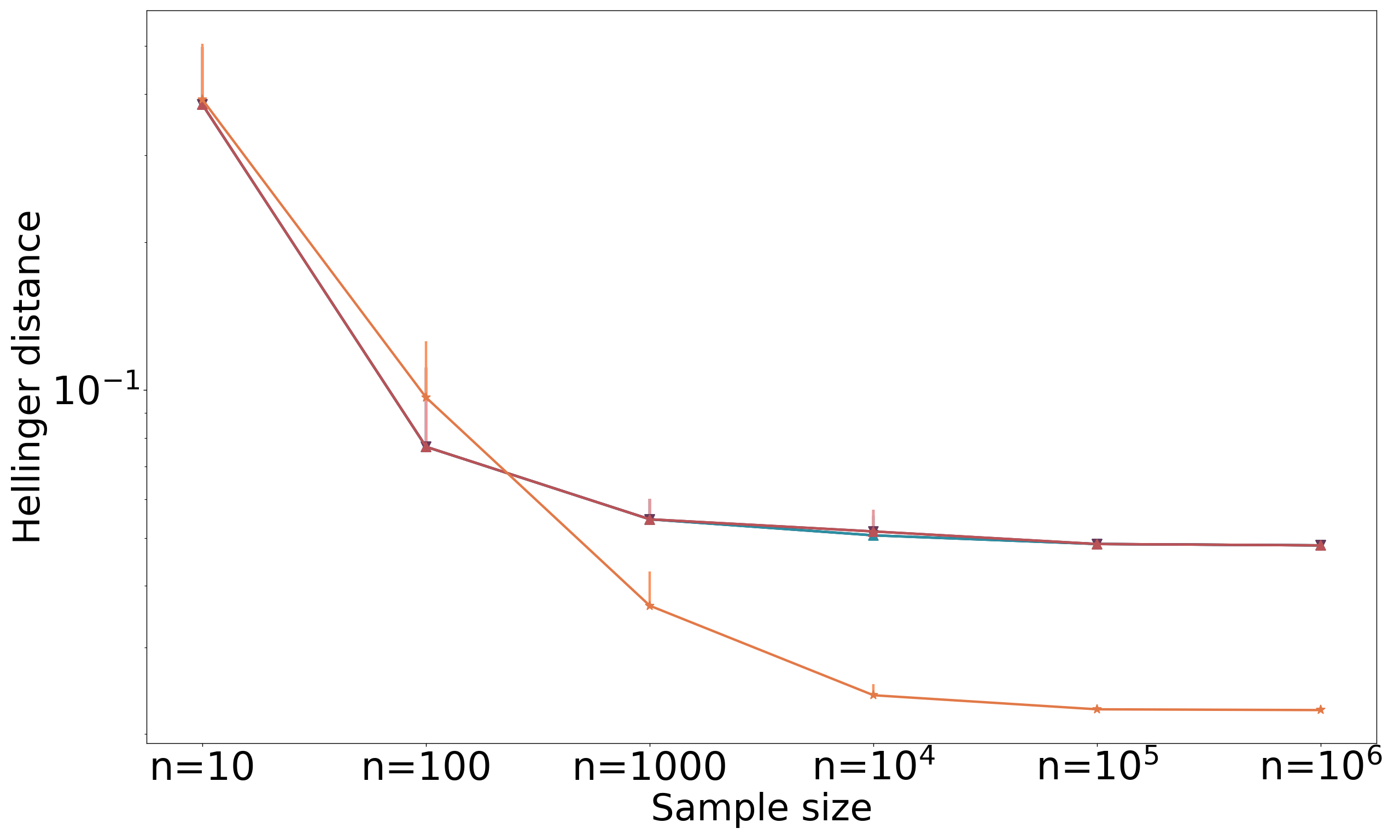}}
\end{figure}

\begin{figure}
\centering
\caption{Comparison between MDL methods over a Triangle distribution of different sample sizes \label{fig:comparison-MDL-triangle}}
\setkeys{Gin}{width=0.3\textwidth}
\subfloat[Number of intervals,
          \label{fig:intervals-MDL-triangle}]{\includegraphics{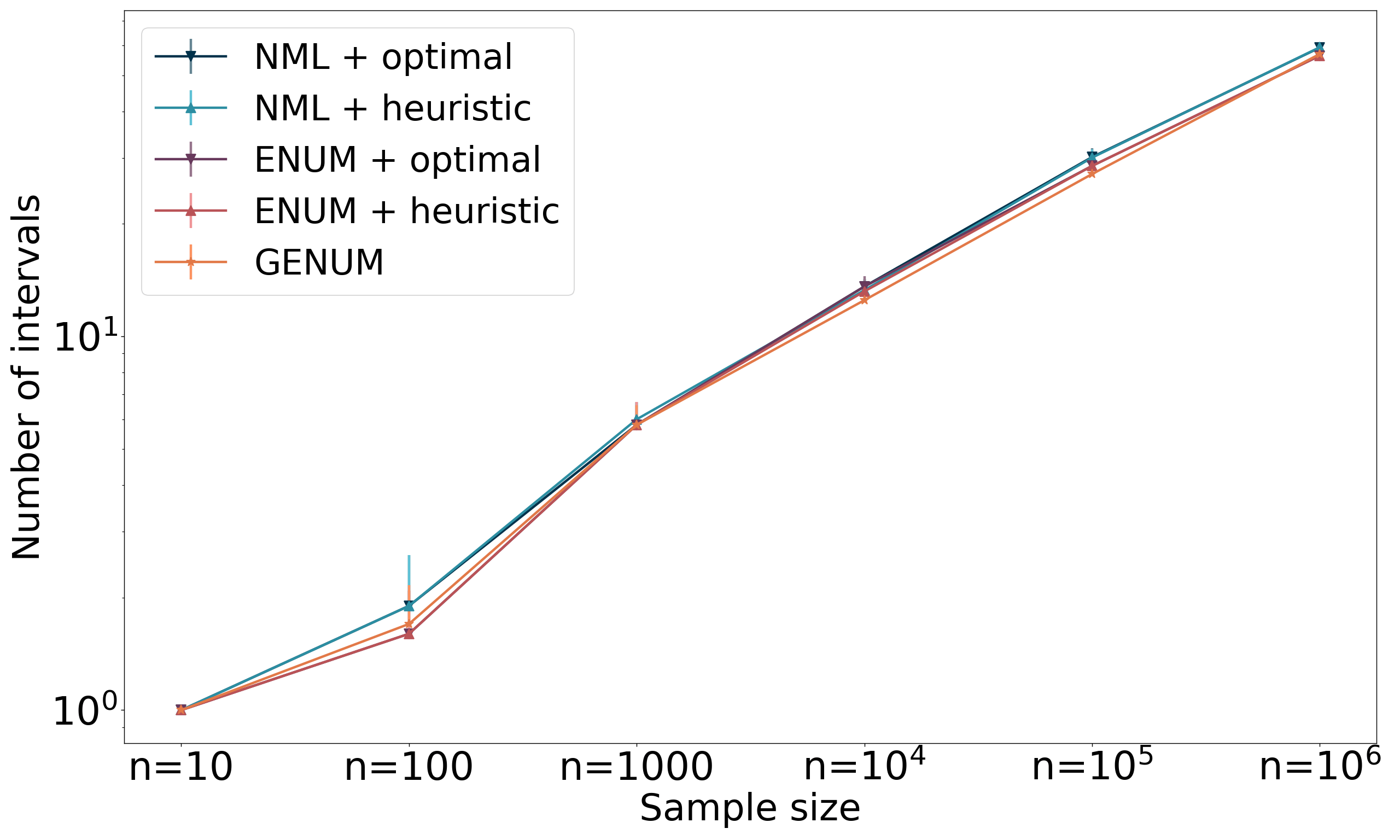}}
    \hfill
\subfloat[Computation time,
          \label{fig:time-MDL-triangle}]{\includegraphics{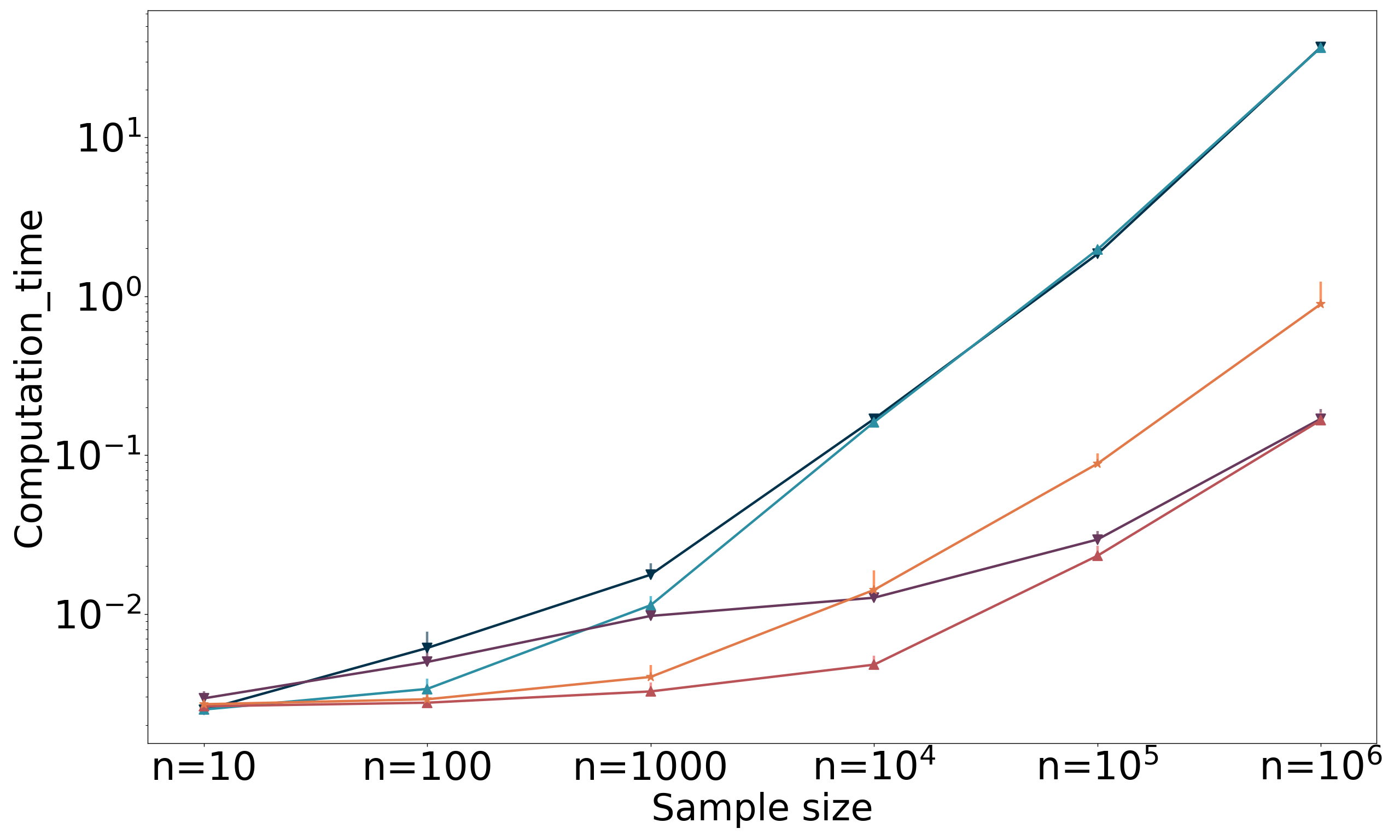}}
    \hfill
\subfloat[Hellinger distance,
          \label{fig:hd-MDL-triangle}]{\includegraphics{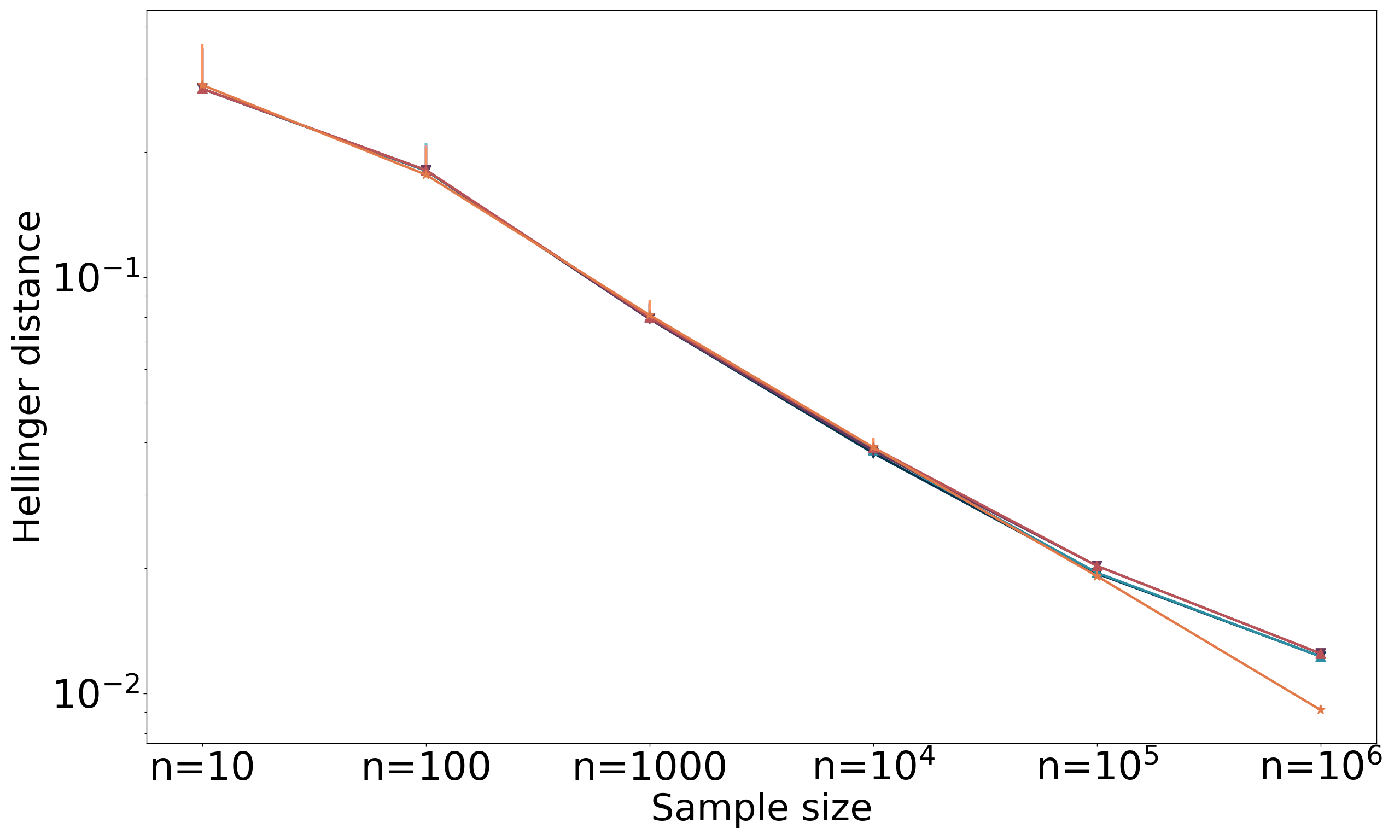}}
\end{figure}

\begin{figure}
\centering
\caption{Comparison between MDL methods over a mixture of 4 Triangle distributions, of different sample sizes \label{fig:comparison-MDL-tmix}}
\setkeys{Gin}{width=0.3\textwidth}
\subfloat[Number of intervals,
          \label{fig:intervals-MDL-tmix}]{\includegraphics{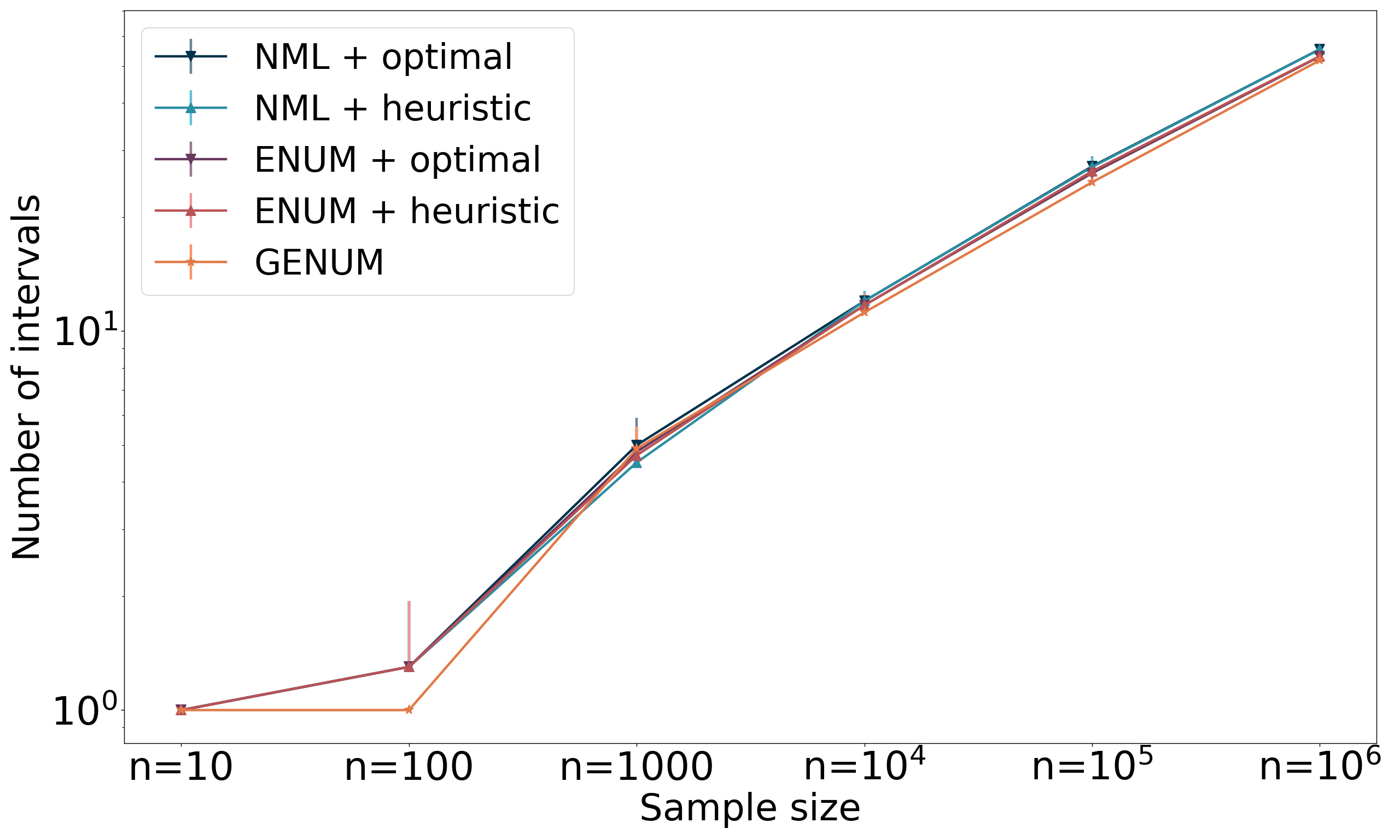}}
    \hfill
\subfloat[Computation time,
          \label{fig:time-MDL-tmix}]{\includegraphics{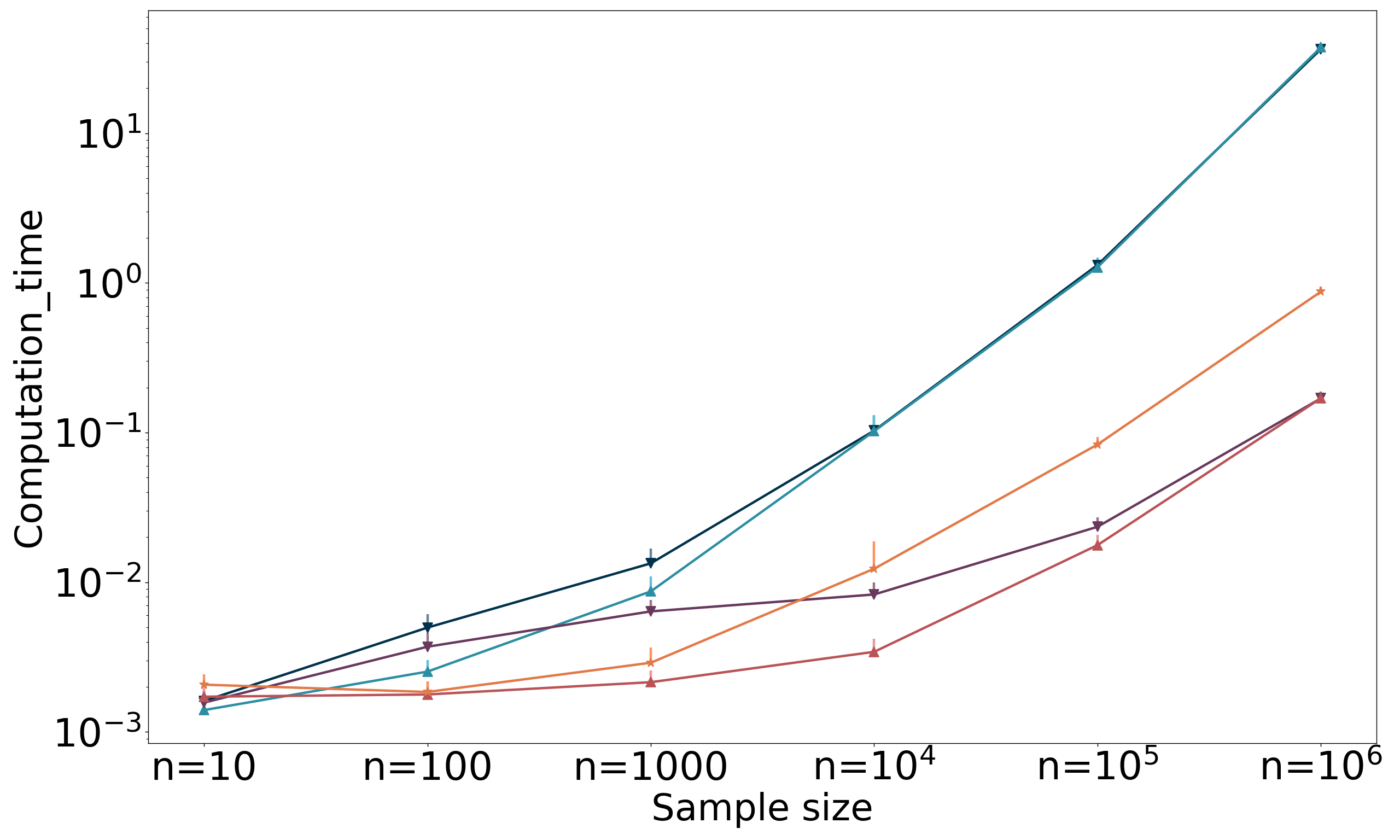}}
    \hfill
\subfloat[Hellinger distance,
          \label{fig:hd-MDL-tmix}]{\includegraphics{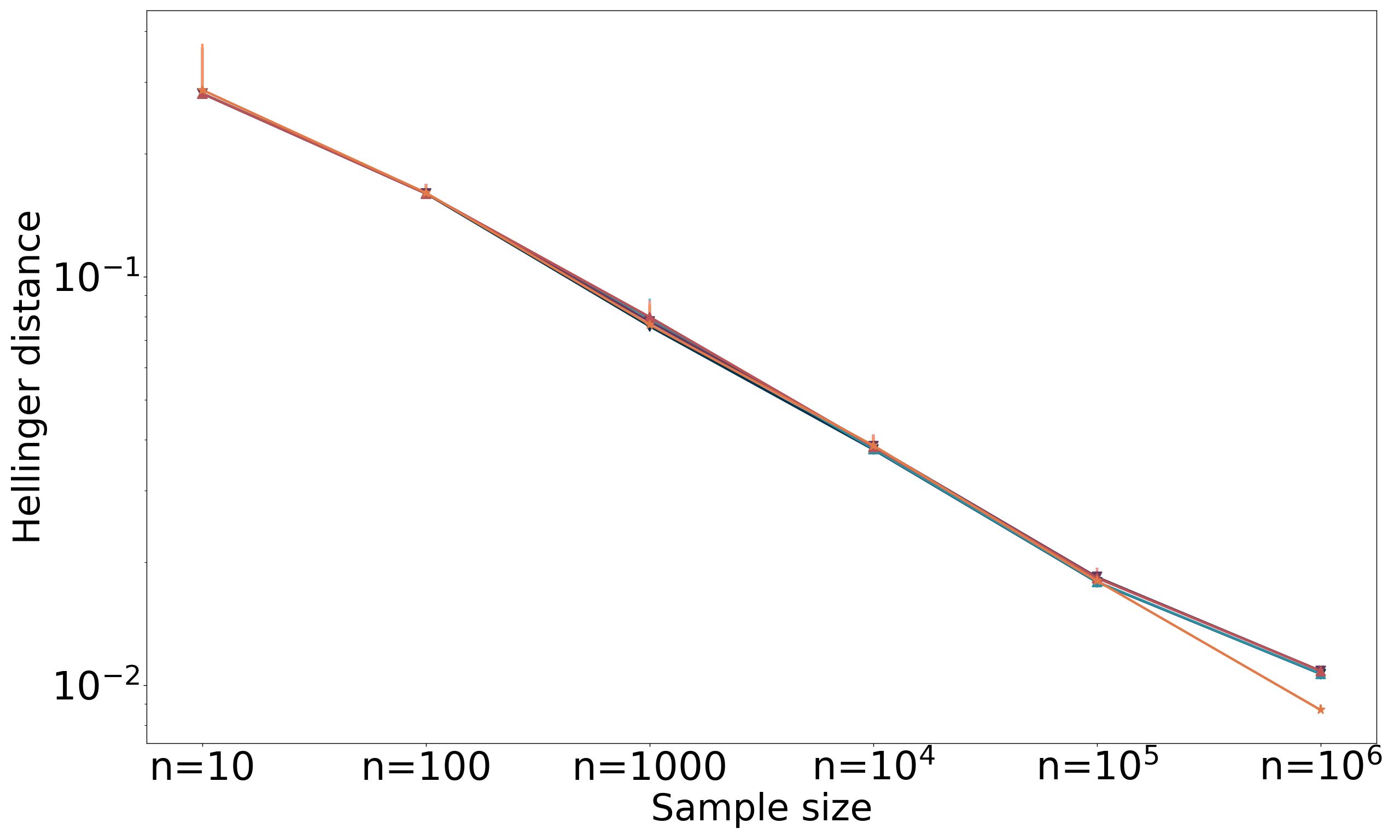}}
\end{figure}

\begin{figure}
\centering
\caption{Comparison between MDL methods over the Claw Gaussian mixture of different sample sizes \label{fig:comparison-MDL-claw}}
\setkeys{Gin}{width=0.3\textwidth}
\subfloat[Number of intervals,
          \label{fig:intervals-MDL-claw}]{\includegraphics{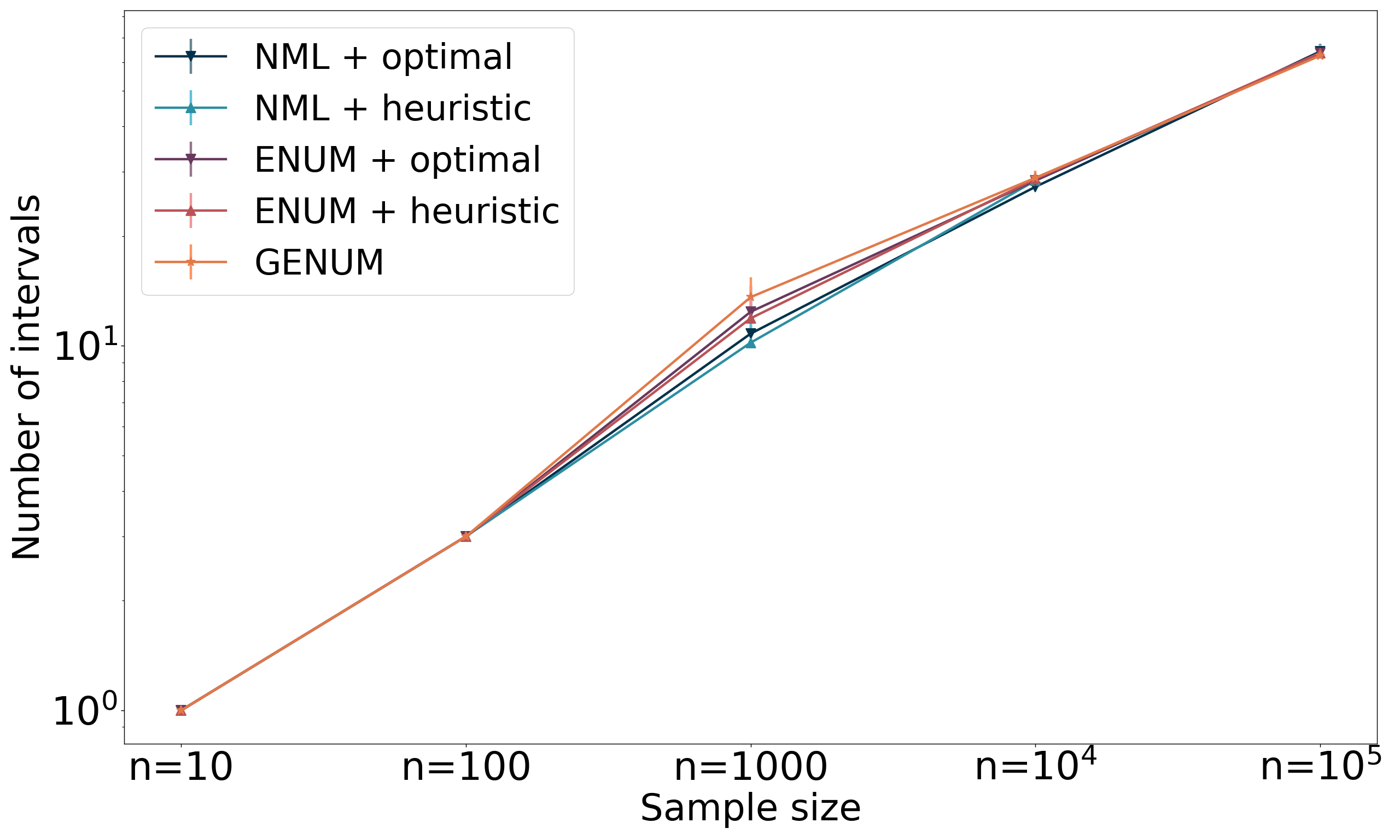}}
    \hfill
\subfloat[Computation time,
          \label{fig:time-MDL-claw}]{\includegraphics{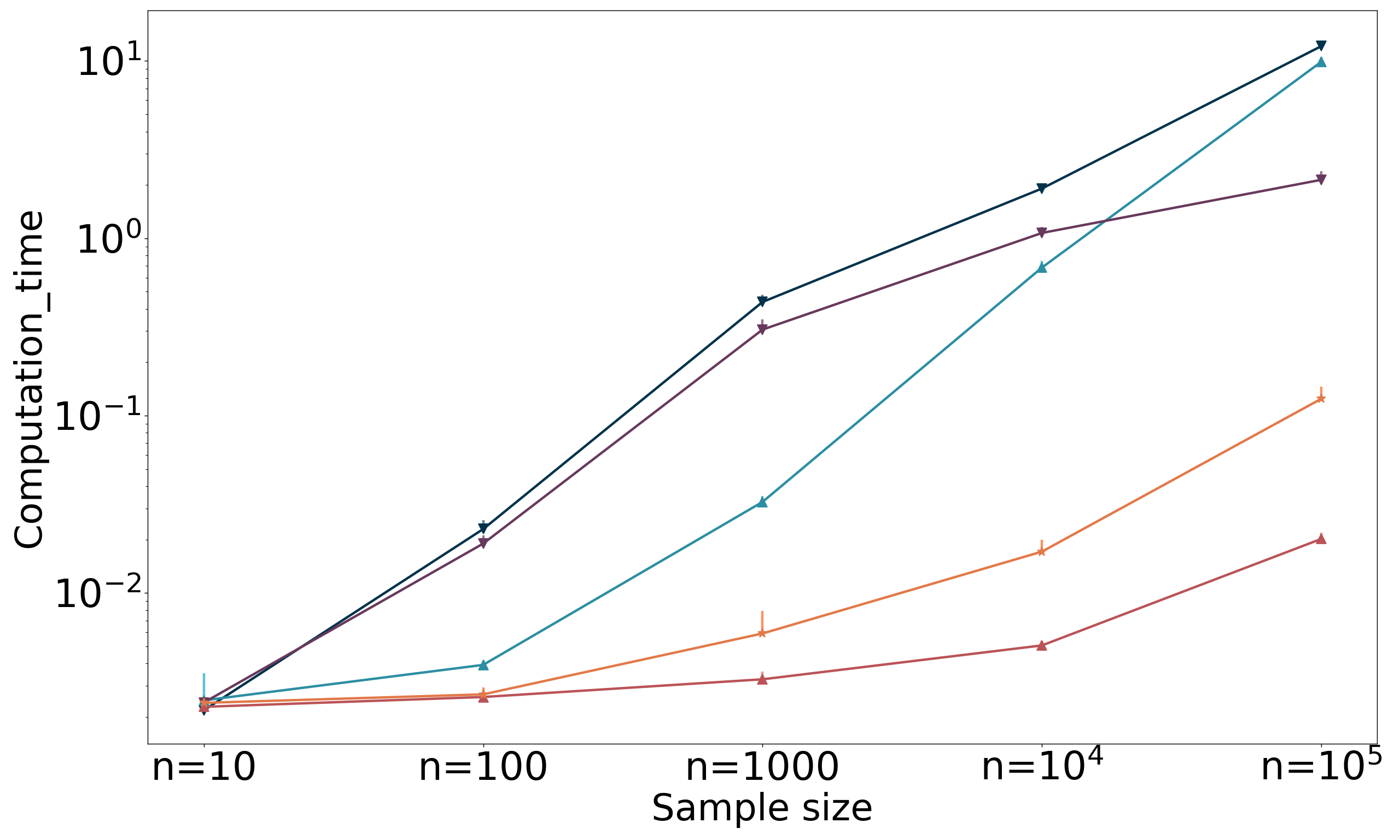}}
    \hfill
\subfloat[Hellinger distance,
          \label{fig:hd-MDL-claw}]{\includegraphics{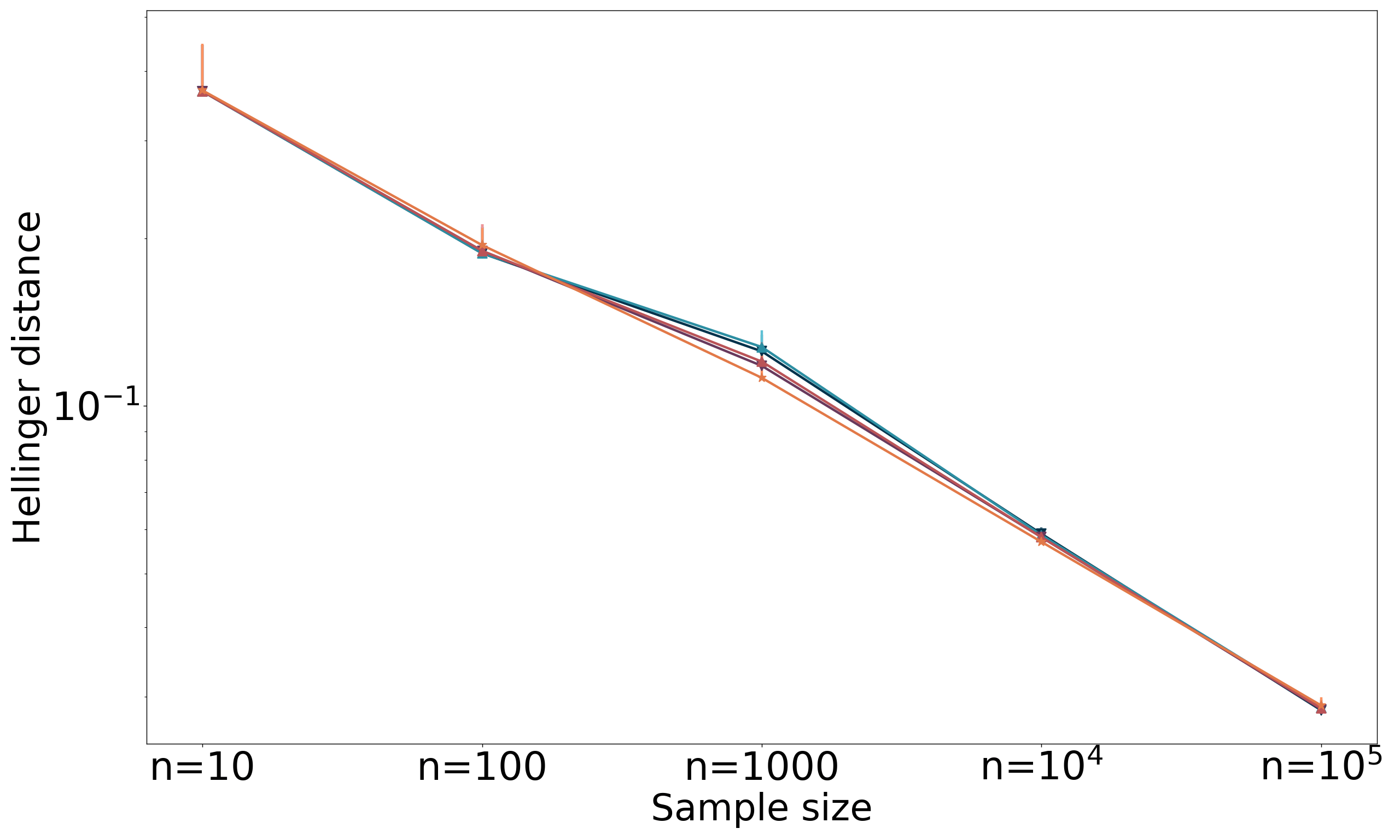}}
\end{figure}

\section{Role of the approximation accuracy,
  $\epsilon$}\label{sec:role-appr-accur-1}
As a complement to the theoretical analysis conducted in Section
\ref{sec:role-appr-accur}, we conducted a series of experiments to show the
effect in practice of using a too small value for $\epsilon$. We compare all
MDL methods over a Normal distribution of small size ($n=100$ and $n=1000$),
for $\epsilon$ ranging from $2.0$ to $10^{-8}$. 

Figures \ref{fig:sensitivity-normal-n100} and
\ref{fig:sensitivity-normal-n1000} summarise the results. For both sample
sizes we can observe that, as $\epsilon \rightarrow 0$, the number of
intervals in \NMLname   and \ENUMname  histograms decreases. Particularly, for
the $n=100$ distribution, the estimations models start at 3 intervals and
\NMLname   and \ENUMname  histograms slowly decrease to having a single
one. This observation is in line with Corollary \ref{cor:limit-histogram}.

In stark contrast, \GENUMname  remains at 3 intervals throughout. The same
consistency  can be observed in regards to computation times and Hellinger
distance. While the \NMLname   and \ENUMname  methods take more time and produce
lower quality results as the approximation accuracy increases, the \GENUMname
steadily produces quality results in less time.

Note also that the best value for $\epsilon$ is different for both sample
sizes. For a distribution of $n=100$ samples, having $\epsilon=0.1$ guarantees
the lowest Hellinger distance for the \NMLname   and \ENUMname
methods. For $n=1000$ samples, it is preferable to set $\epsilon=0.5$. When we
do not know anything about the nature of the data, there is no guarantee we
will make the right choice for \NMLname   and \ENUMname. On the other
hand, our granularised approach automatically selects $\epsilon^*\approx1.324$ and
$\epsilon^*\approx 0.178$ for samples sizes $n=100$ and $n=1000$
respectively. While these choices certainly do not achieve the lowest HD
technically possible, they have the merit of not being too far off. \GENUMname
histograms seem to select fair enough accuracy values for a fully automated
estimation.

These sensitivity experiments have shown that the value of $\epsilon$ cannot
be  overlooked as easily as thought: for some distributions and middle-range
sample sizes, the approximation accuracy can play an important role in
computation time and estimation quality. In an exploratory analysis context,
where little is known about the data, a truly fully automated approach such as
\GENUMname  is preferable than the other MDL methods. 

\begin{figure}
\centering
\caption{Comparison of MDL methods over a Normal distribution of size $n=100$ for different values of $\epsilon$ \label{fig:sensitivity-normal-n100}}
\setkeys{Gin}{width=0.3\textwidth}
\subfloat[Number of intervals,
          \label{fig:intervals-sensitivity-normal-n100}]{\includegraphics{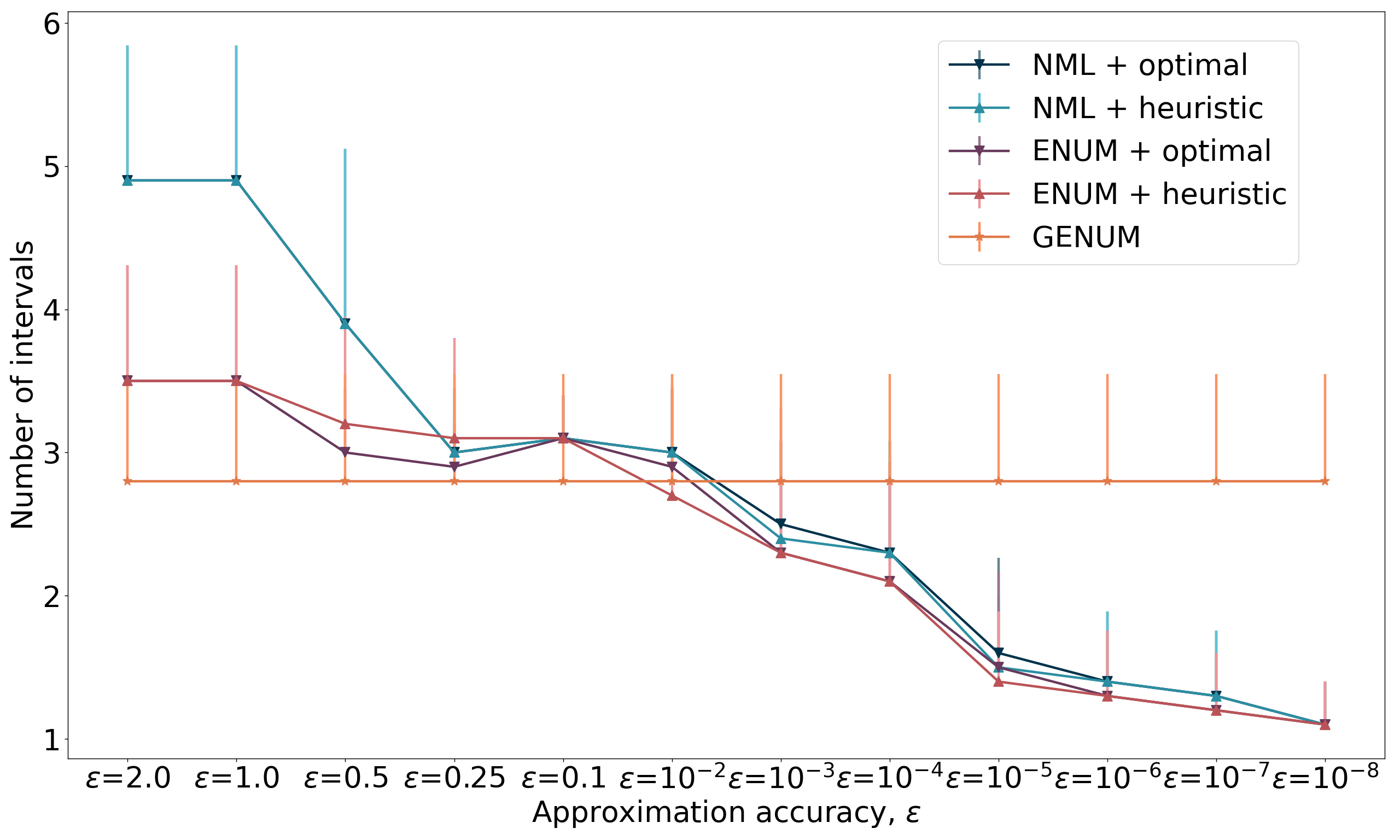}}
    \hfill
\subfloat[Computation time,
          \label{fig:time-sensitivity-normal-n100}]{\includegraphics{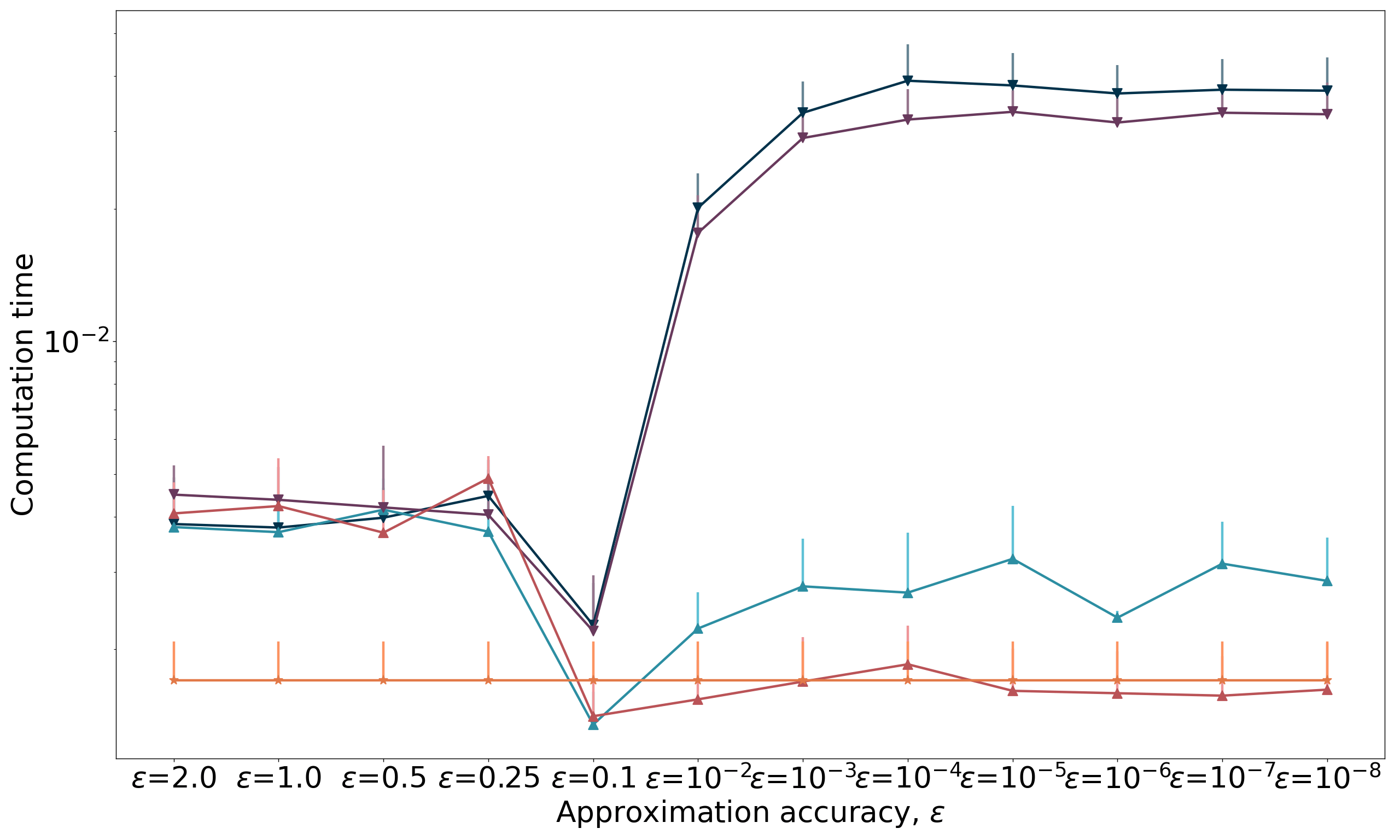}}
    \hfill
\subfloat[Hellinger distance,
          \label{fig:hd-sensitivity-normal-n100}]{\includegraphics{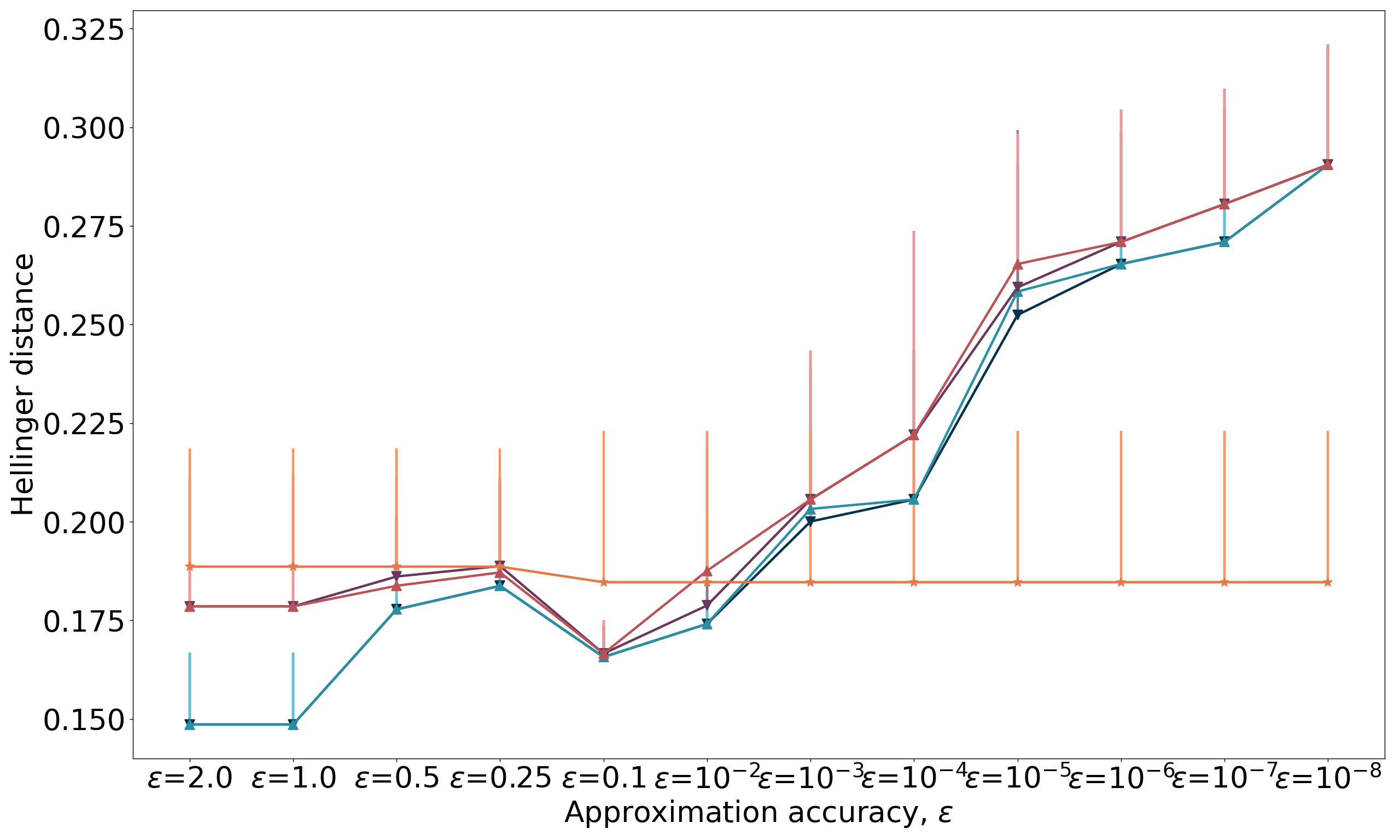}}

\end{figure}

\begin{figure}
\centering
\caption{Comparison of MDL methods over a Normal distribution of size $n=1000$ for different values of $\epsilon$ \label{fig:sensitivity-normal-n1000}}
\setkeys{Gin}{width=0.3\textwidth}
\subfloat[Number of intervals,
          \label{fig:intervals-sensitivity-normal-n1000}]{\includegraphics{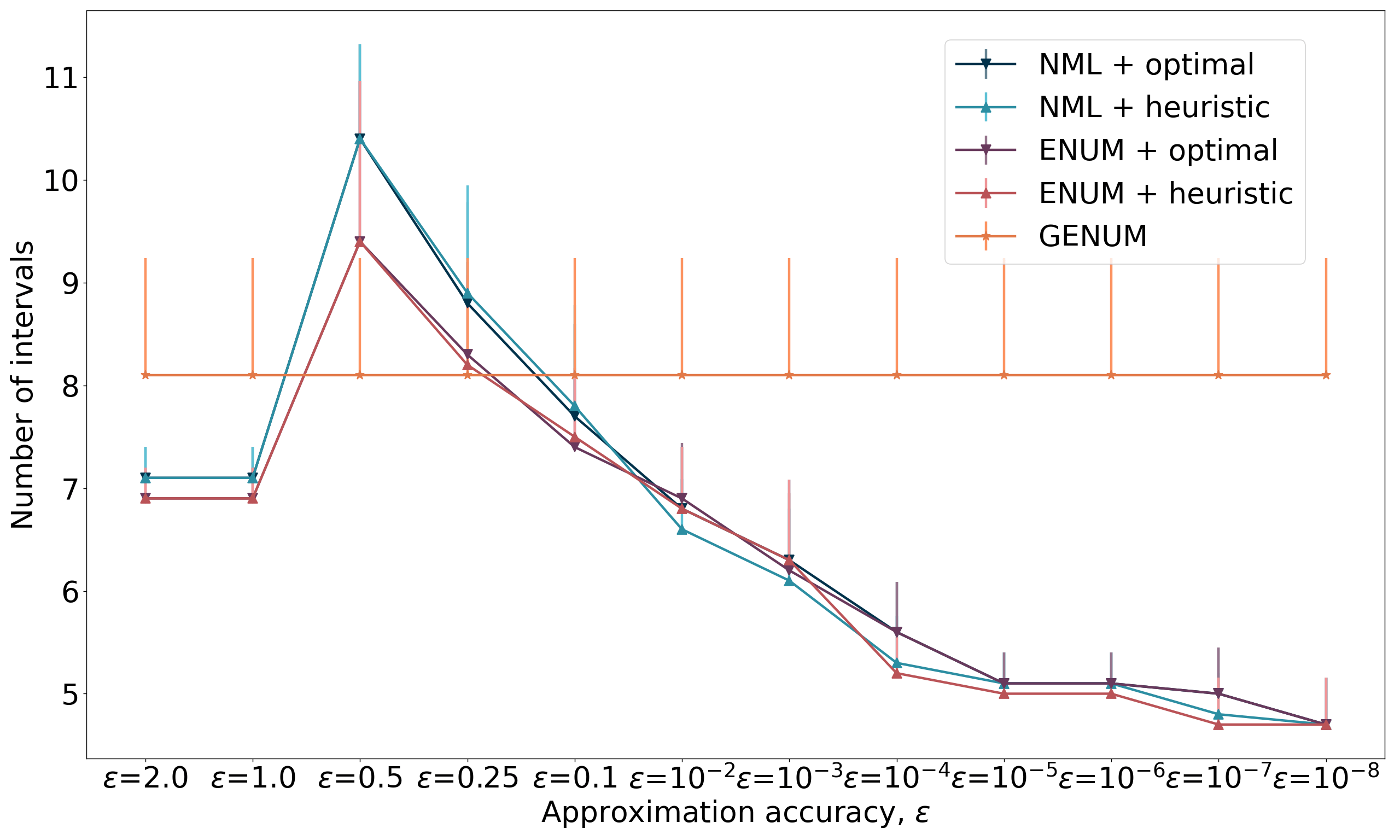}}
    \hfill
\subfloat[Computation time,
          \label{fig:time-sensitivity-normal-n1000}]{\includegraphics{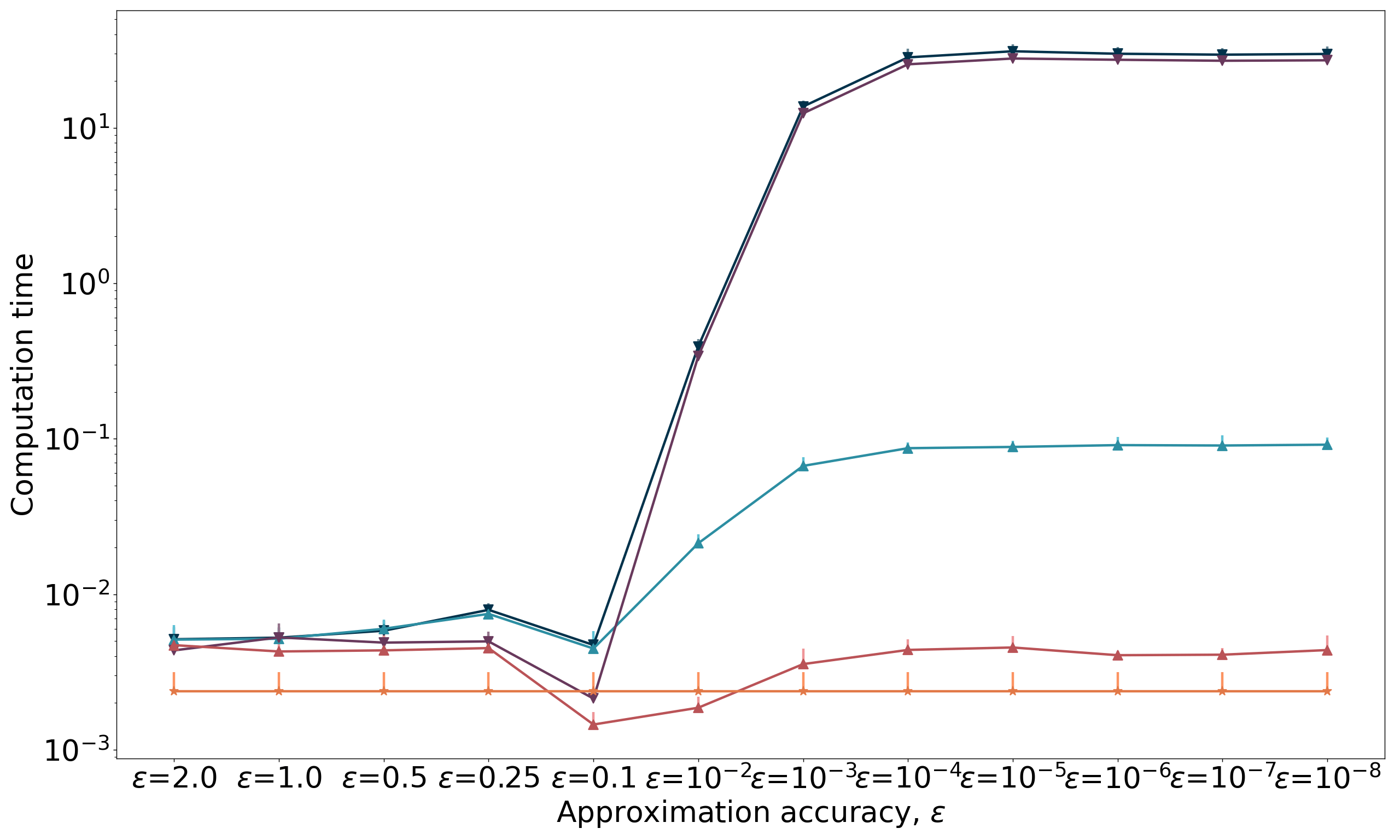}}
    \hfill
\subfloat[Hellinger distance,
          \label{fig:hd-sensitivity-normal-n1000}]{\includegraphics{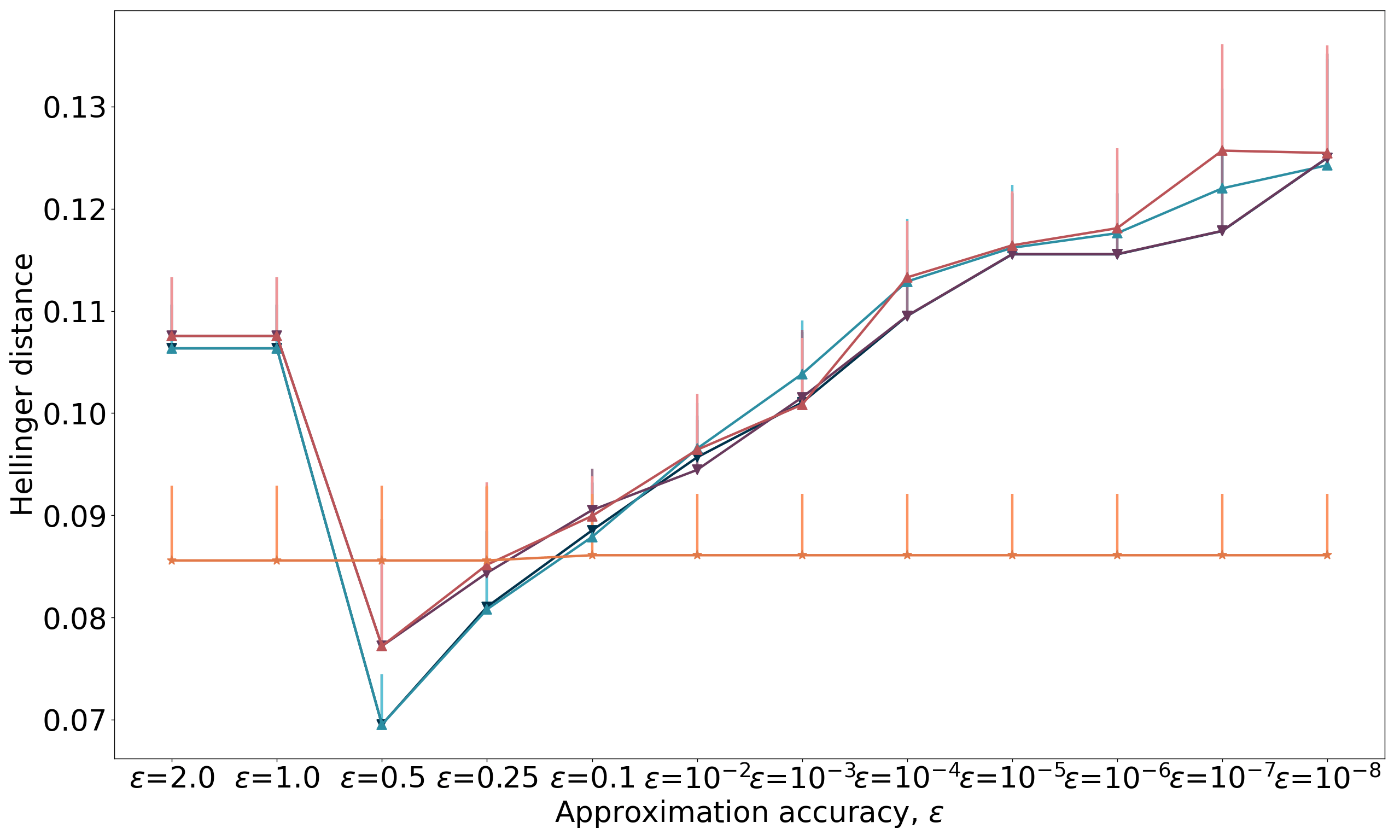}}

\end{figure}

\section{Benchmarking and comparison with other methods}\label{apx:exp-others}
The last series of figures compare the \NMLname and \GENUMname  criteria to
other 5 state-of-the-art methods for building histograms. 

 All 7 methods are evaluated on different sample sizes of a Normal (figures
 \ref{fig:normal-hists} and \ref{fig:comparison-others-normal}, Cauchy
 (figures \ref{fig:cauchy-hists} and \ref{fig:comparison-others-cauchy}),
 uniform (figures \ref{fig:uniform-hists} and
 \ref{fig:comparison-others-uniform}), triangle (figures
 \ref{fig:triangle-hists} and \ref{fig:comparison-others-triangle}), triangle
 mixture (figures \ref{fig:tmix-hists} and \ref{fig:comparison-others-tmix})
 and Gaussian mixture (figures \ref{fig:gmix-claw-hists} and
 \ref{fig:comparison-others-claw}) distributions.

We highlight that all results are means over 10 experiments for each sample
size. All results are shown in log scale. The standard deviations of the
metrics are represented asymmetrically in the graphs to because of the log scale.

\begin{figure}
  \centering  
  \caption{Different histograms obtained for a single Normal distribution of size $n=10^4$ \label{fig:normal-hists}}
  \setkeys{Gin}{width=0.25\textwidth}
  \subfloat[A Normal distribution ]
  {\includegraphics{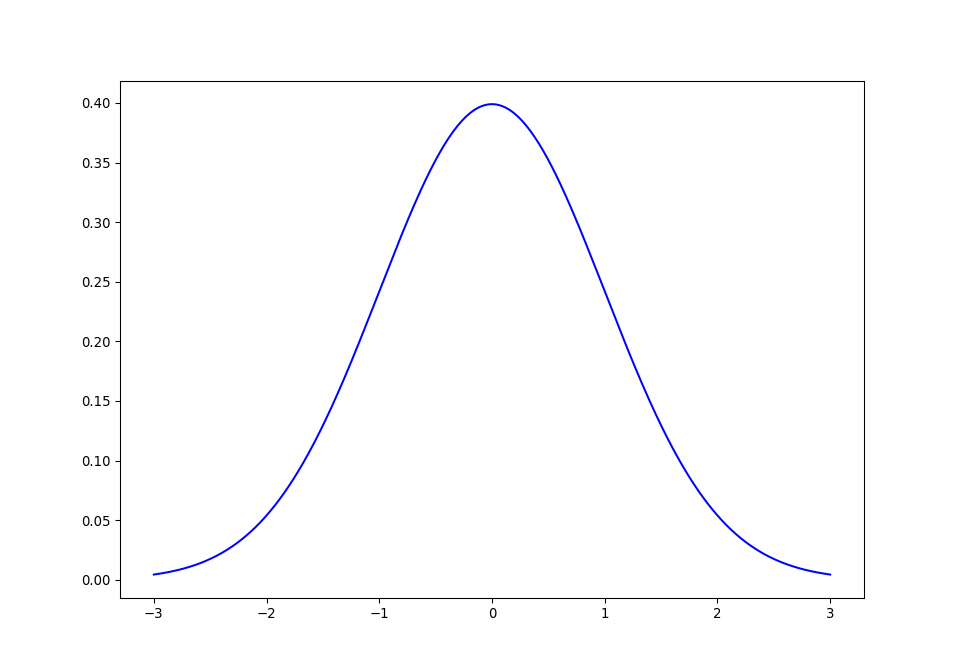}}%
	\hfill
	\subfloat[NML + heuristic ($K^*=16$)]
  {\includegraphics{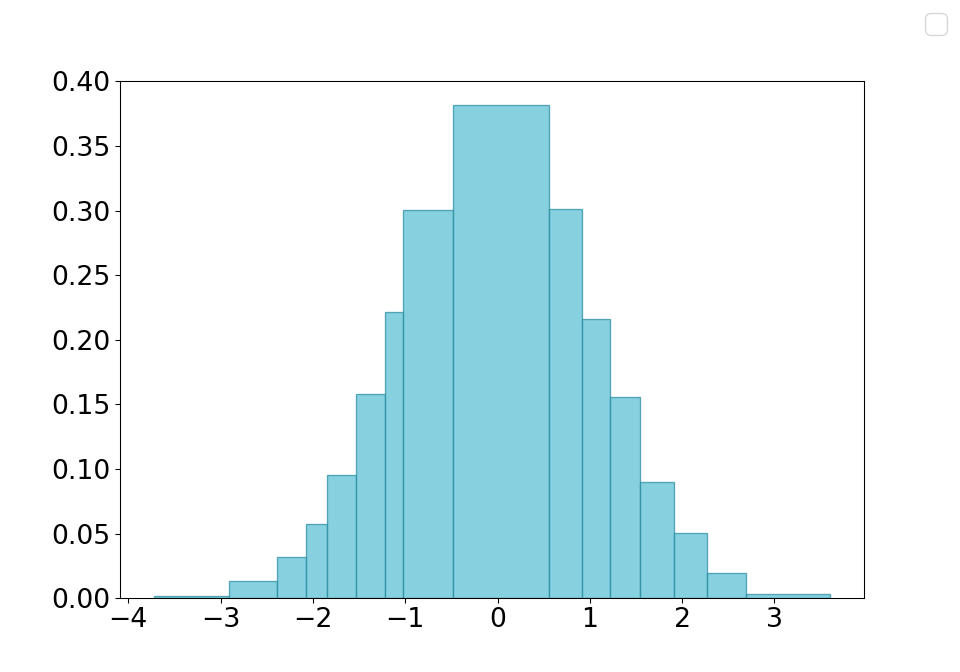}}%
	\hfill
	\subfloat[\GENUMname ($K^*=17$)]
  {\includegraphics{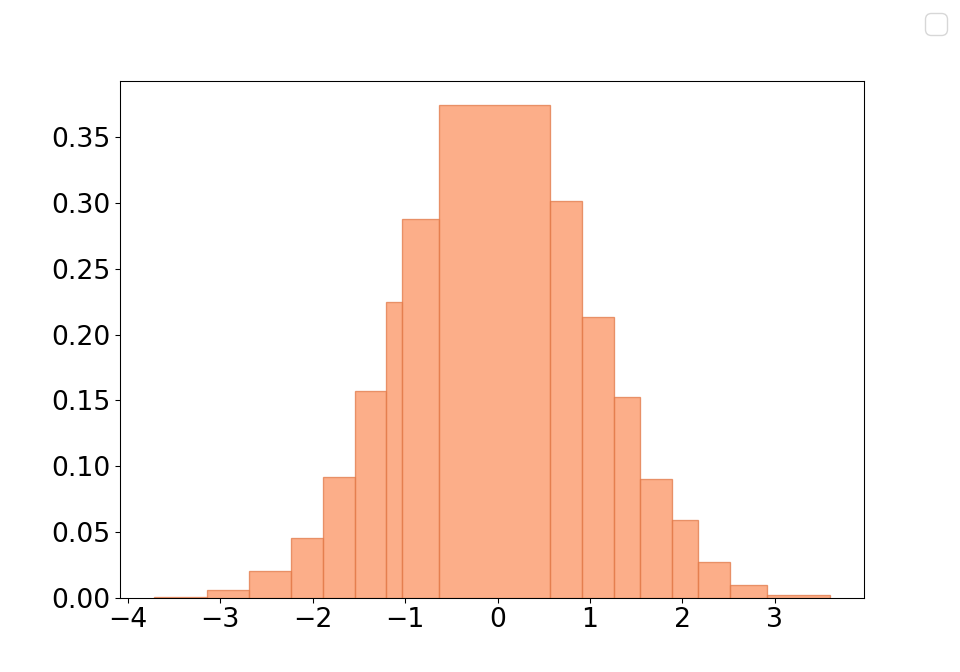}}%
	\hfill
	\subfloat[Bayesian blocks ($K^*=16$)]
  {\includegraphics{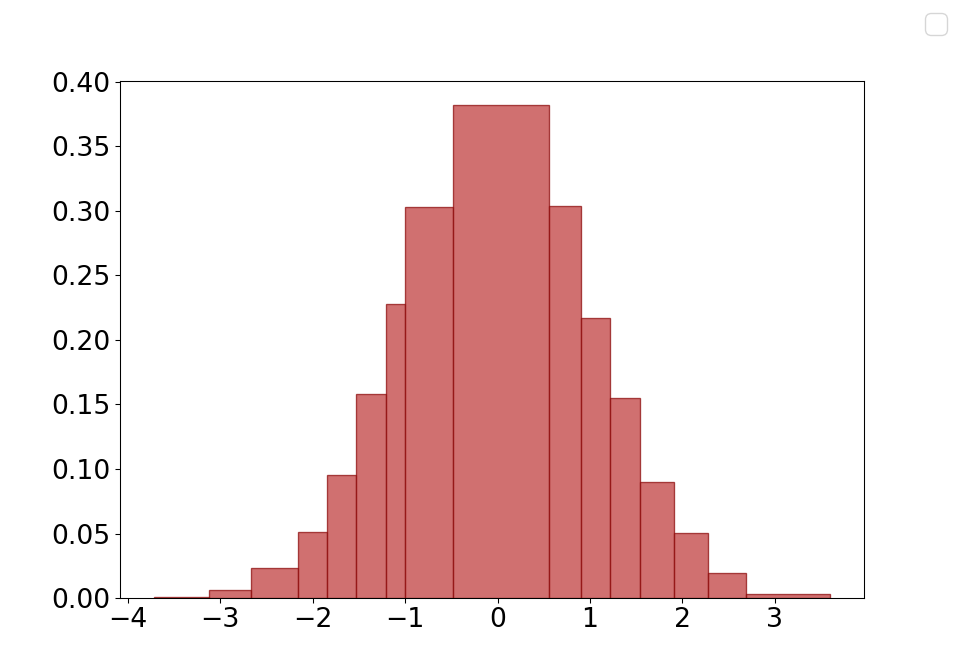}}%
	\hfill
	\subfloat[Taut string ($K^*= 68$)]
  {\includegraphics{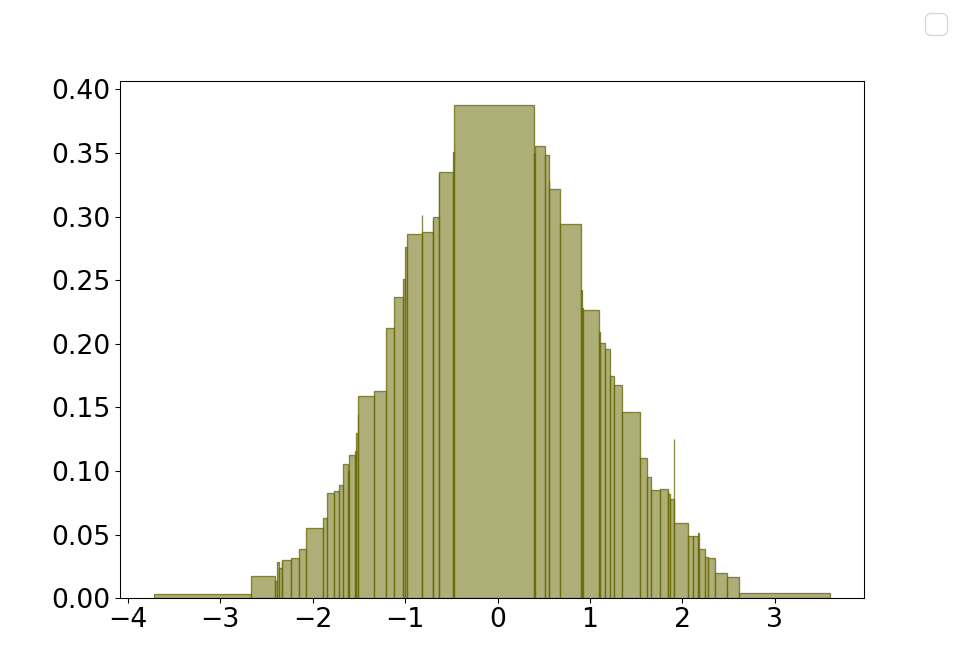}\label{fig:normal-TS}}%
	\hfill
	\subfloat[RMG ($K^*=35$)]
  {\includegraphics{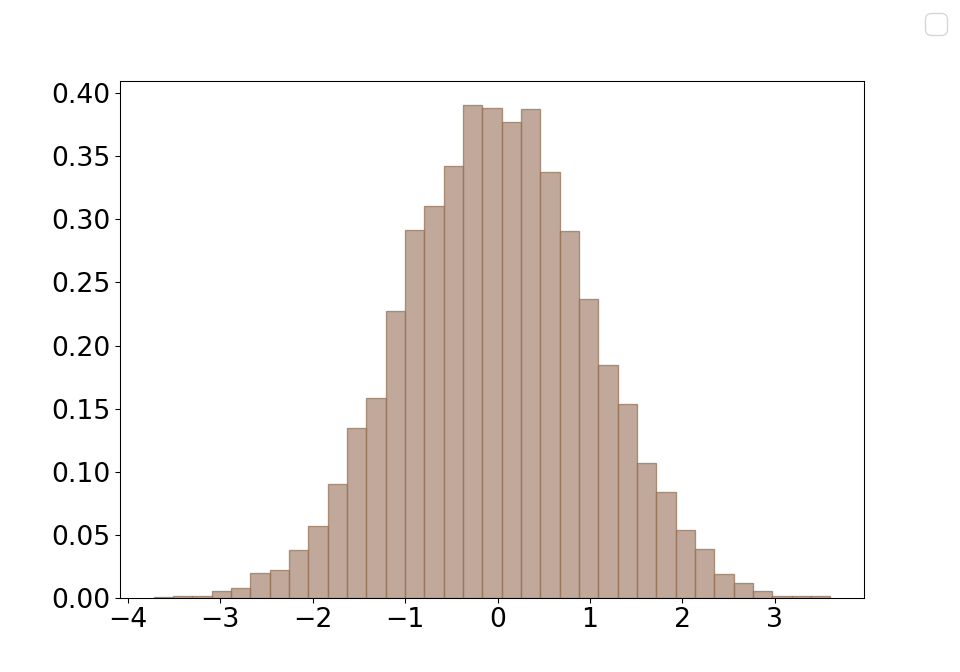}}%
	\hfill
	\subfloat[Sturges ($K^*=15$)]
  {\includegraphics{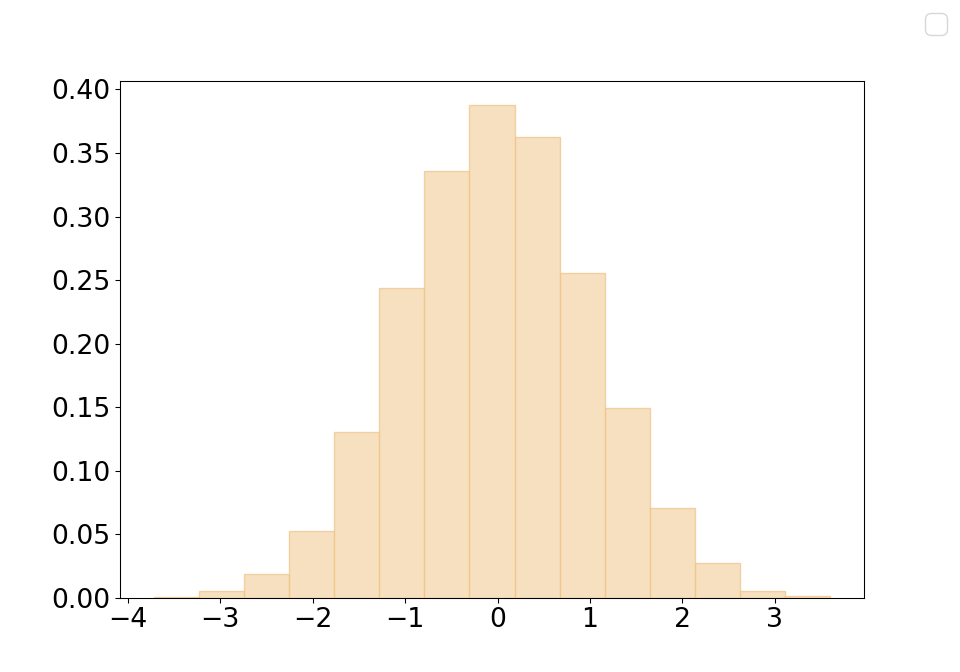}}%
	\hfill
	\subfloat[Freedman-Diaconis ($K^*= 59$)]
  {\includegraphics{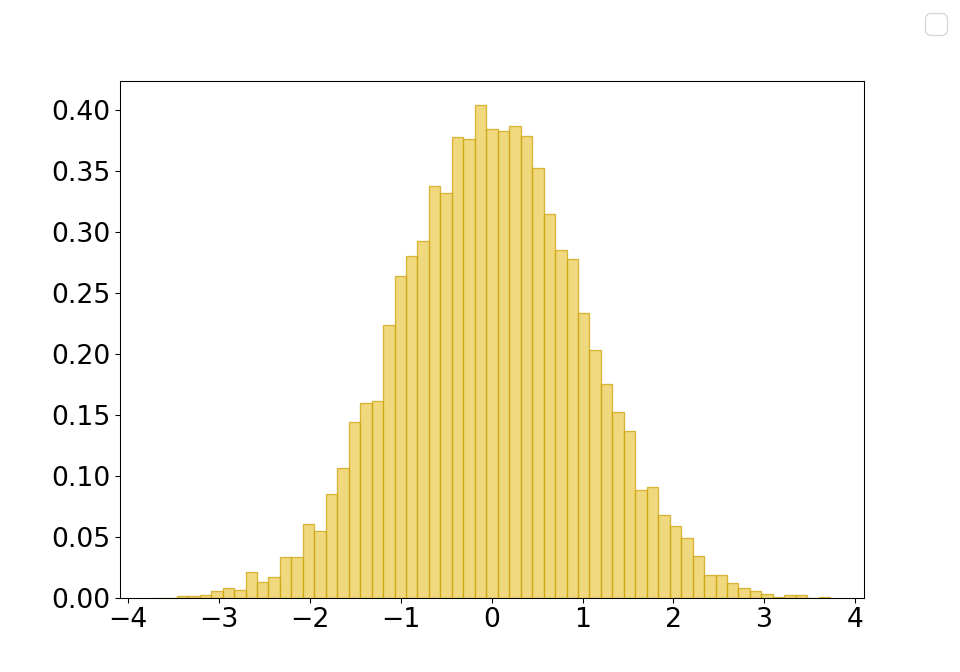}}
\end{figure}

\begin{figure}
\centering
\caption{Comparison with state-of-the-art methods over a Normal distribution of different sample sizes \label{fig:comparison-others-normal}}
\setkeys{Gin}{width=0.3\textwidth}
\subfloat[Number of intervals,
          \label{fig:intervals-others-normal}]{\includegraphics{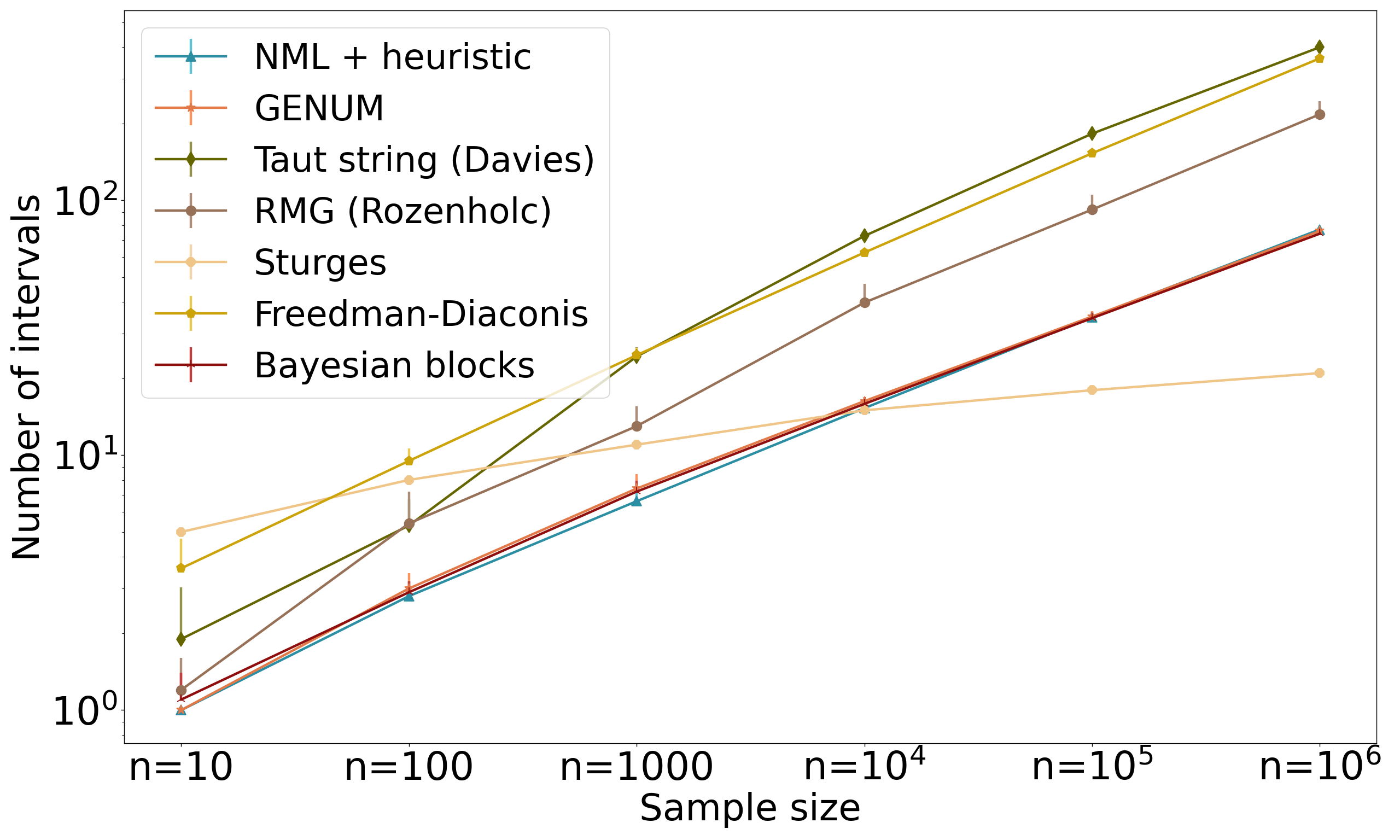}}
    \hfill
\subfloat[Computation time,
          \label{fig:time-others-normal}]{\includegraphics{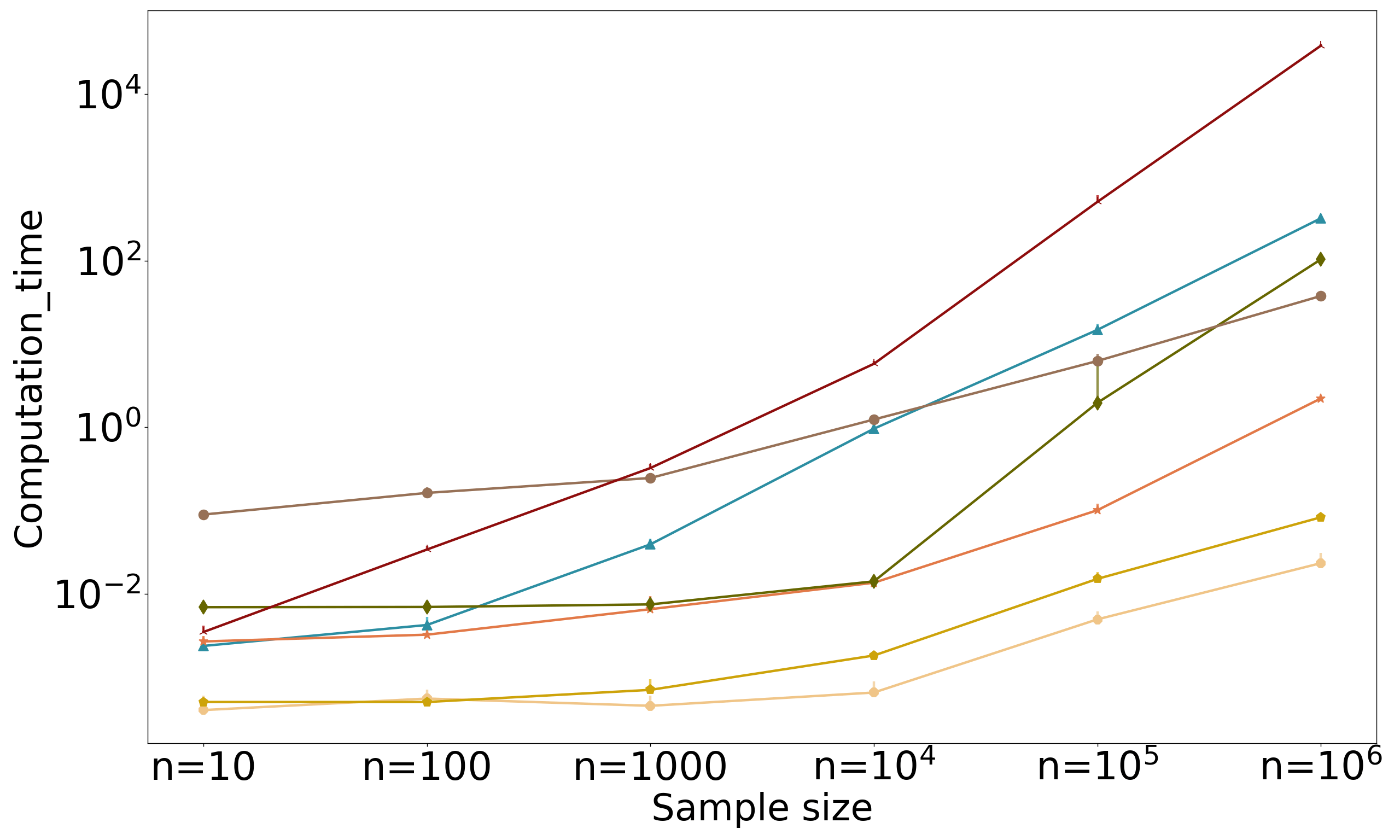}}
    \hfill
\subfloat[Hellinger distance,
          \label{fig:hd-others-normal}]{\includegraphics{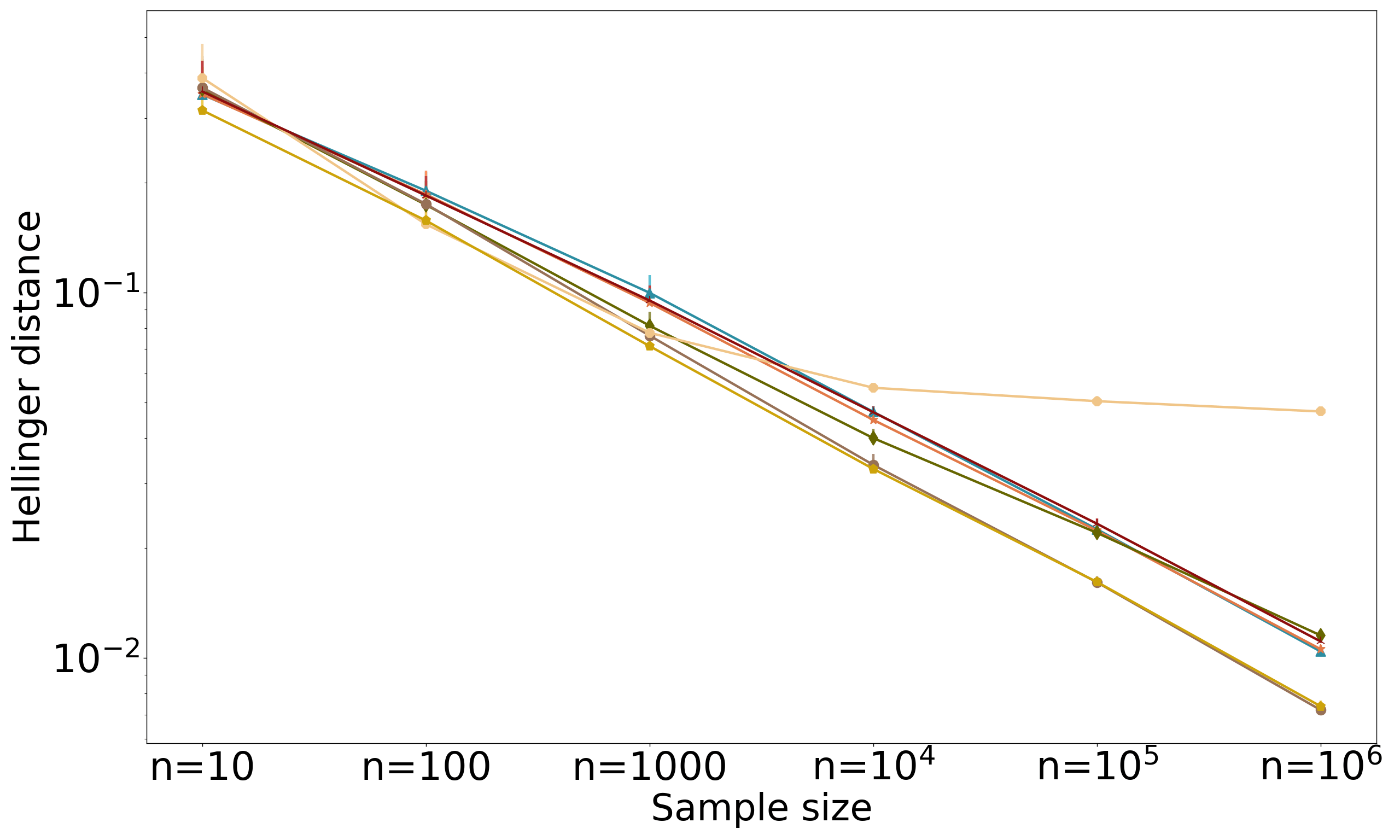}}

\end{figure}

\begin{figure}
  \centering  
  \caption{Different histograms obtained for a single  Cauchy distribution of size $n=10^4$ \label{fig:cauchy-hists}}
  \setkeys{Gin}{width=0.25\textwidth}
  \subfloat[A Cauchy distribution in $\log$ scale]
  {\includegraphics{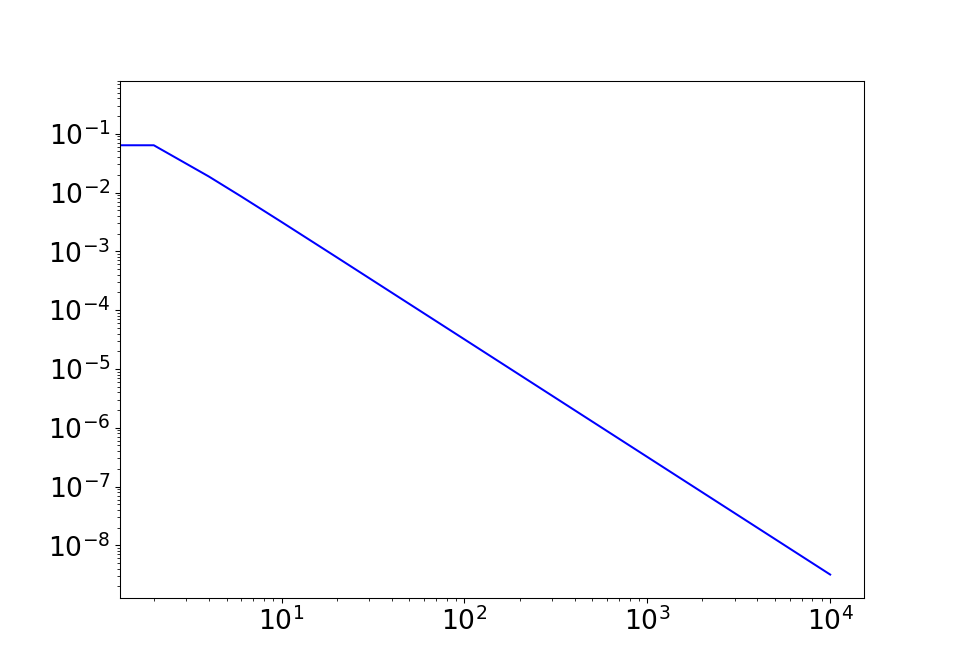}}%
	\hfill
	\subfloat[NML + heuristic ($K^*=24$)]
  {\includegraphics{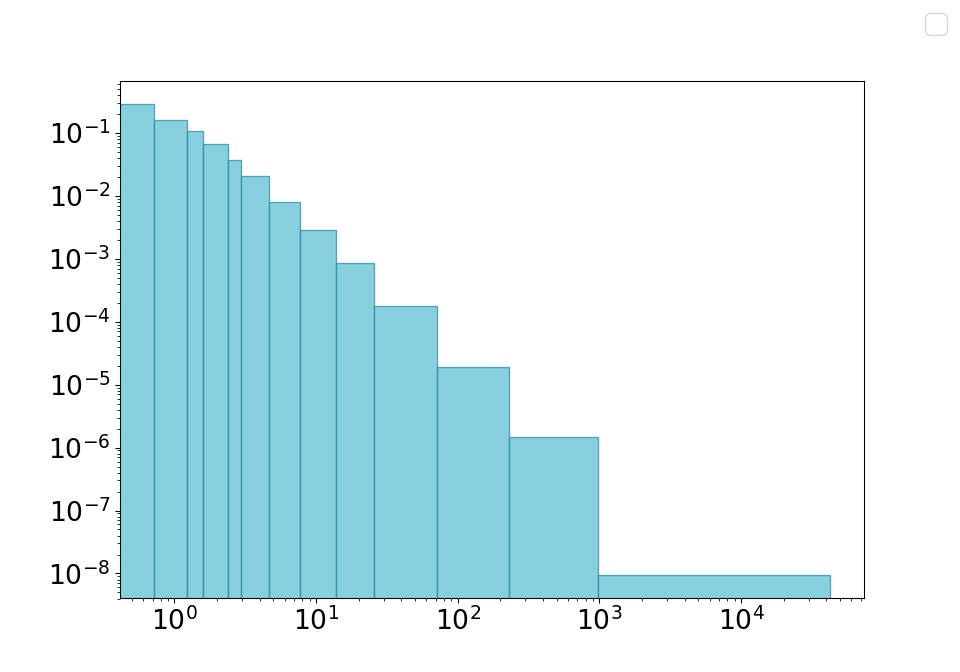}}%
	\hfill
	\subfloat[\GENUMname ($K^*=30$)]
  {\includegraphics{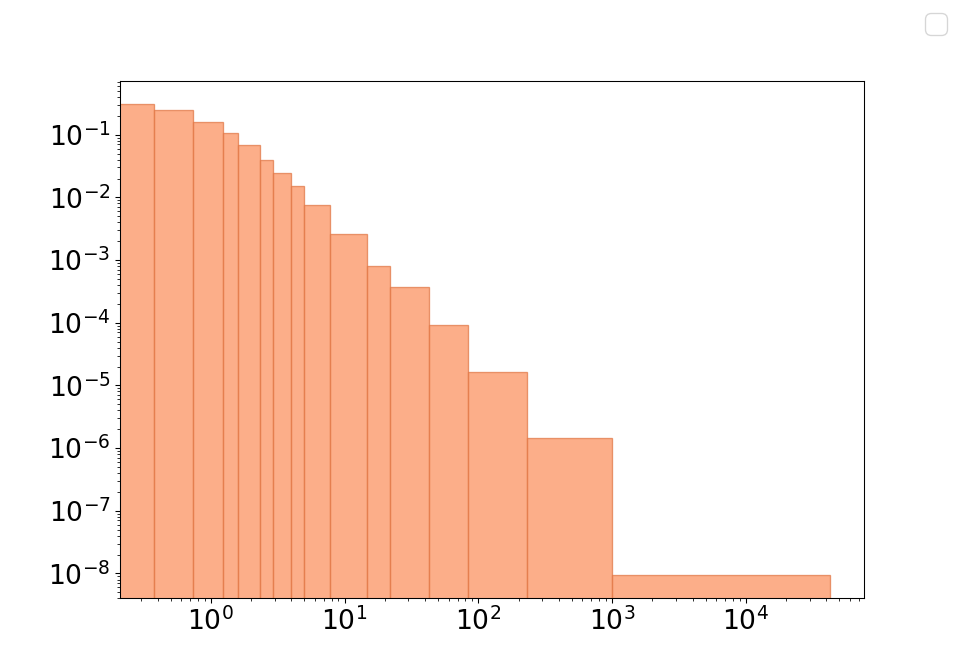}}%
	\hfill
	\subfloat[Bayesian blocks ($K^*=32$)]
  {\includegraphics{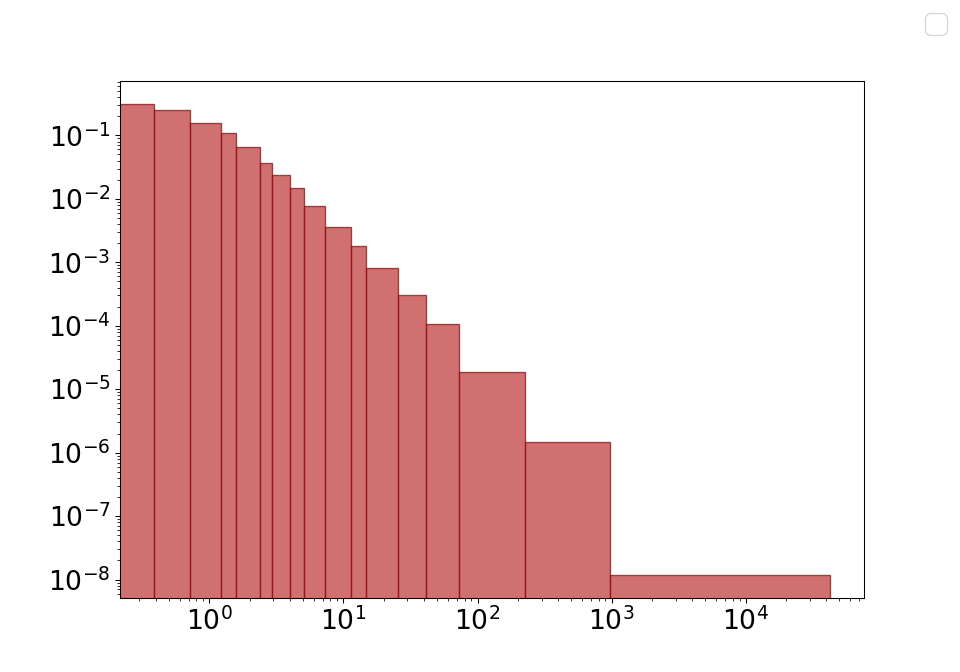}}%
	\hfill
	\subfloat[Taut string ($K^*= 141$)]
  {\includegraphics{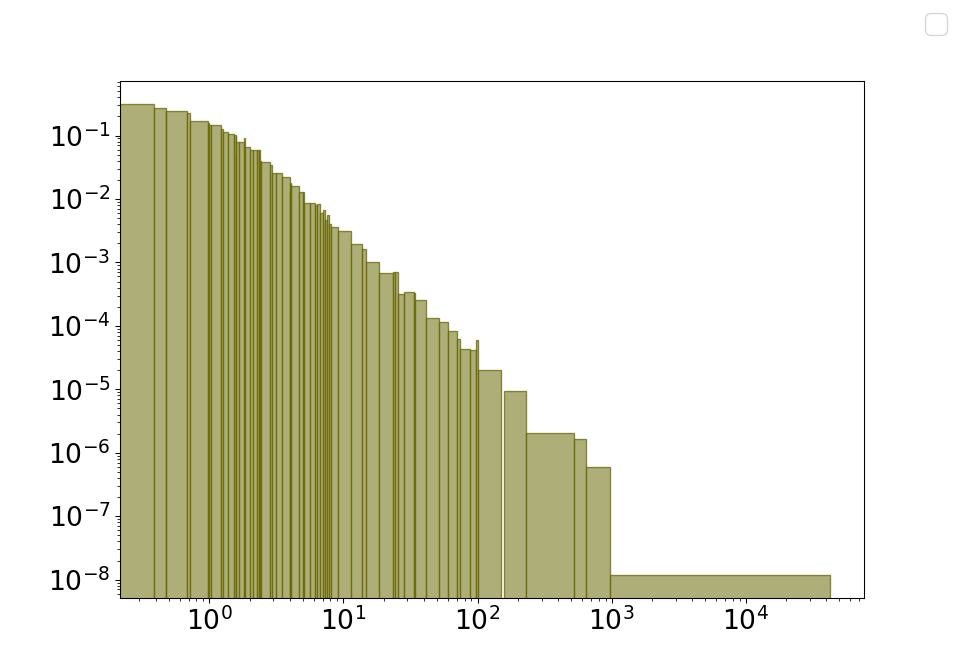}}%
	\hfill
	\subfloat[RMG ($K^*=31$)]
  {\includegraphics{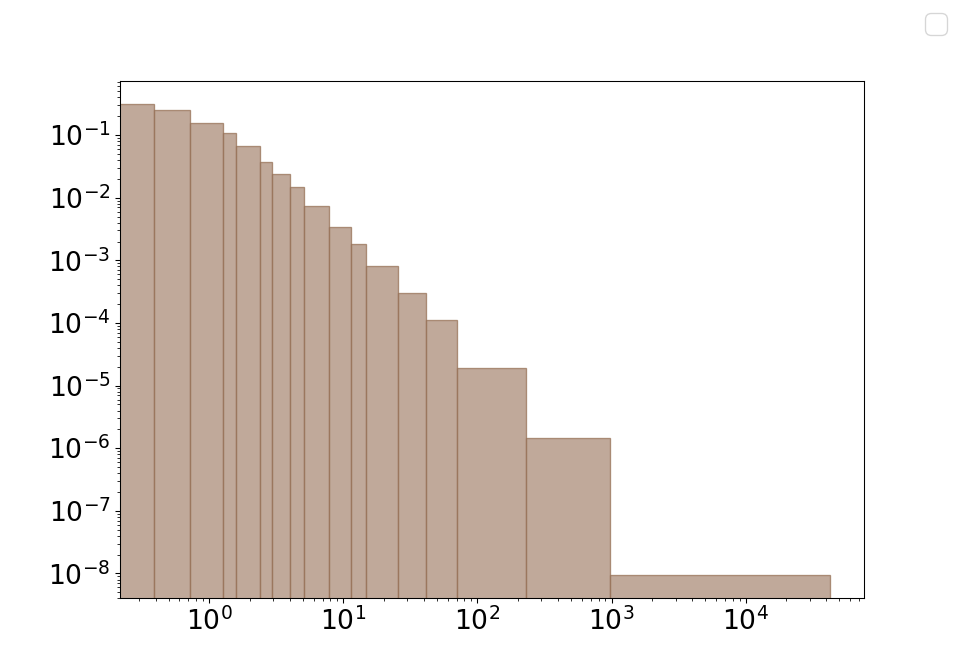}}%
	\hfill
	\subfloat[Sturges ($K^*=15$)]
  {\includegraphics{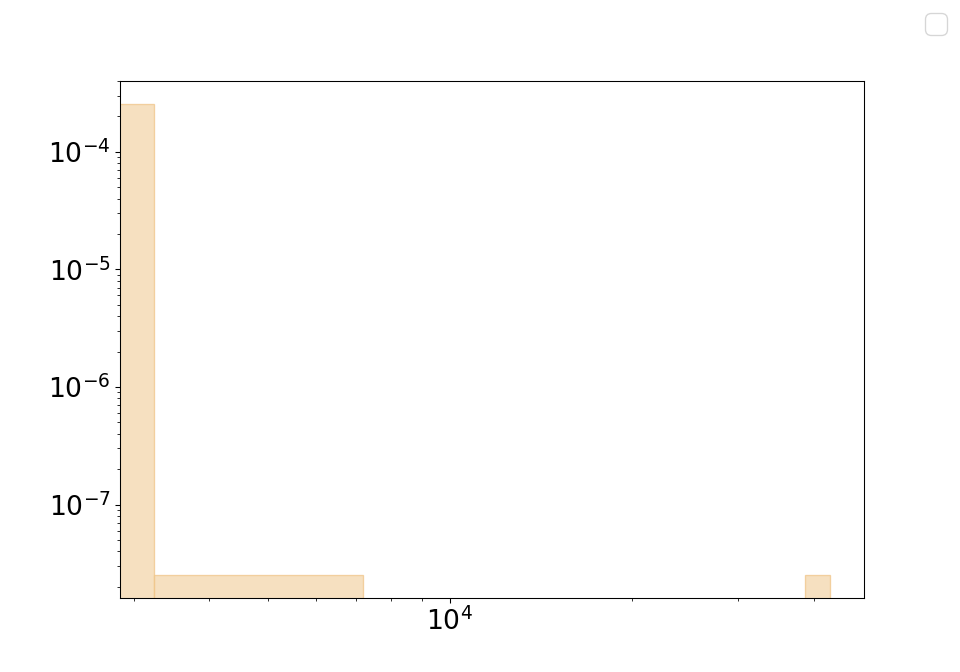}}%
	\hfill
	\subfloat[Freedman-Diaconis ($K^*= 319399 $)]
  {\includegraphics{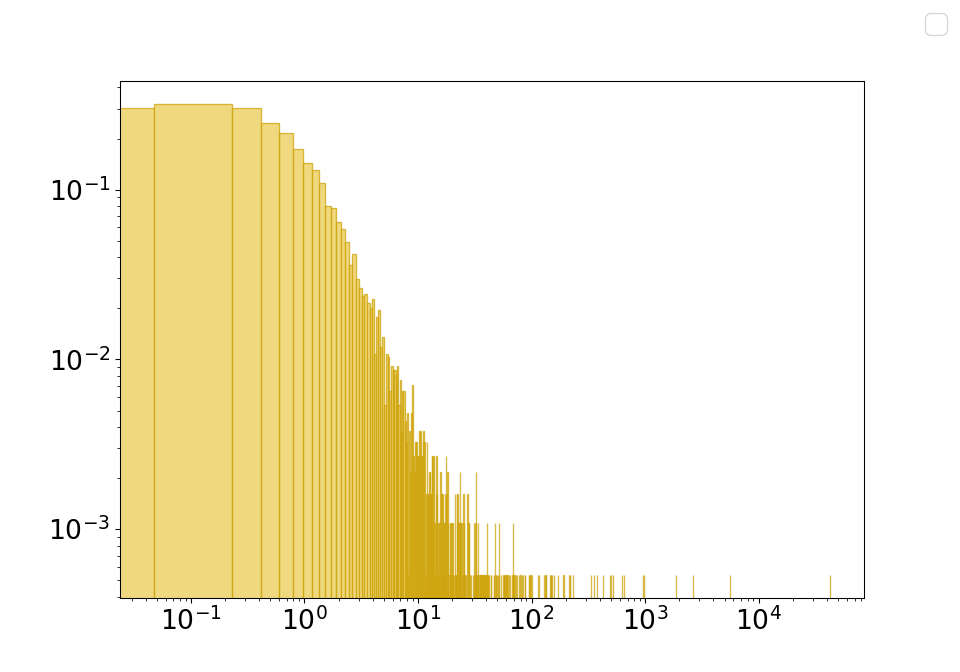}}
\end{figure}

\begin{figure}
\centering
\caption{Comparison with state-of-the-art methods over a Cauchy distribution of different sample size \label{fig:comparison-others-cauchy}}
\setkeys{Gin}{width=0.3\textwidth}
\subfloat[Number of intervals,
          \label{fig:intervals-others-cauchy}]{\includegraphics{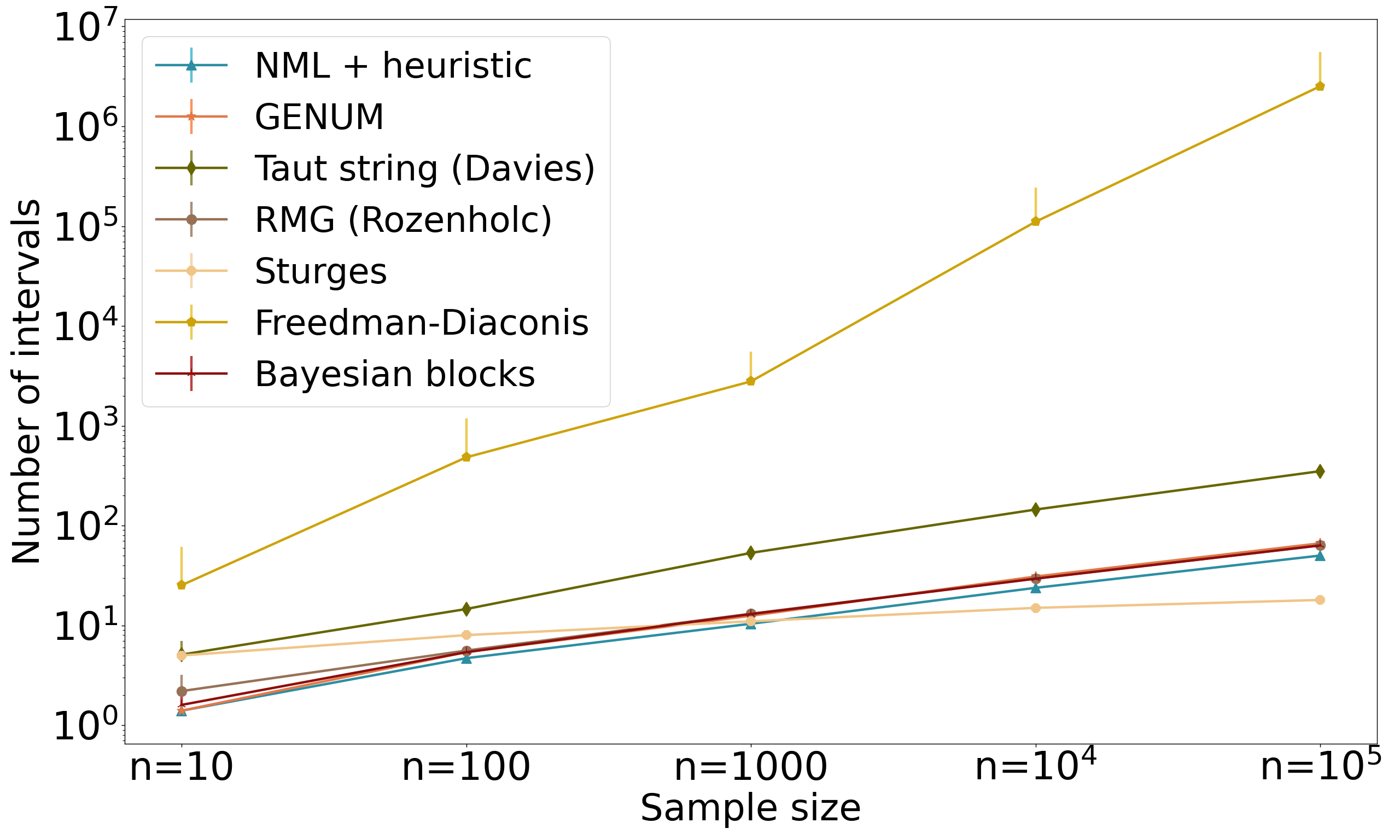}}
    \hfill
\subfloat[Computation time,
          \label{fig:time-others-cauchy}]{\includegraphics{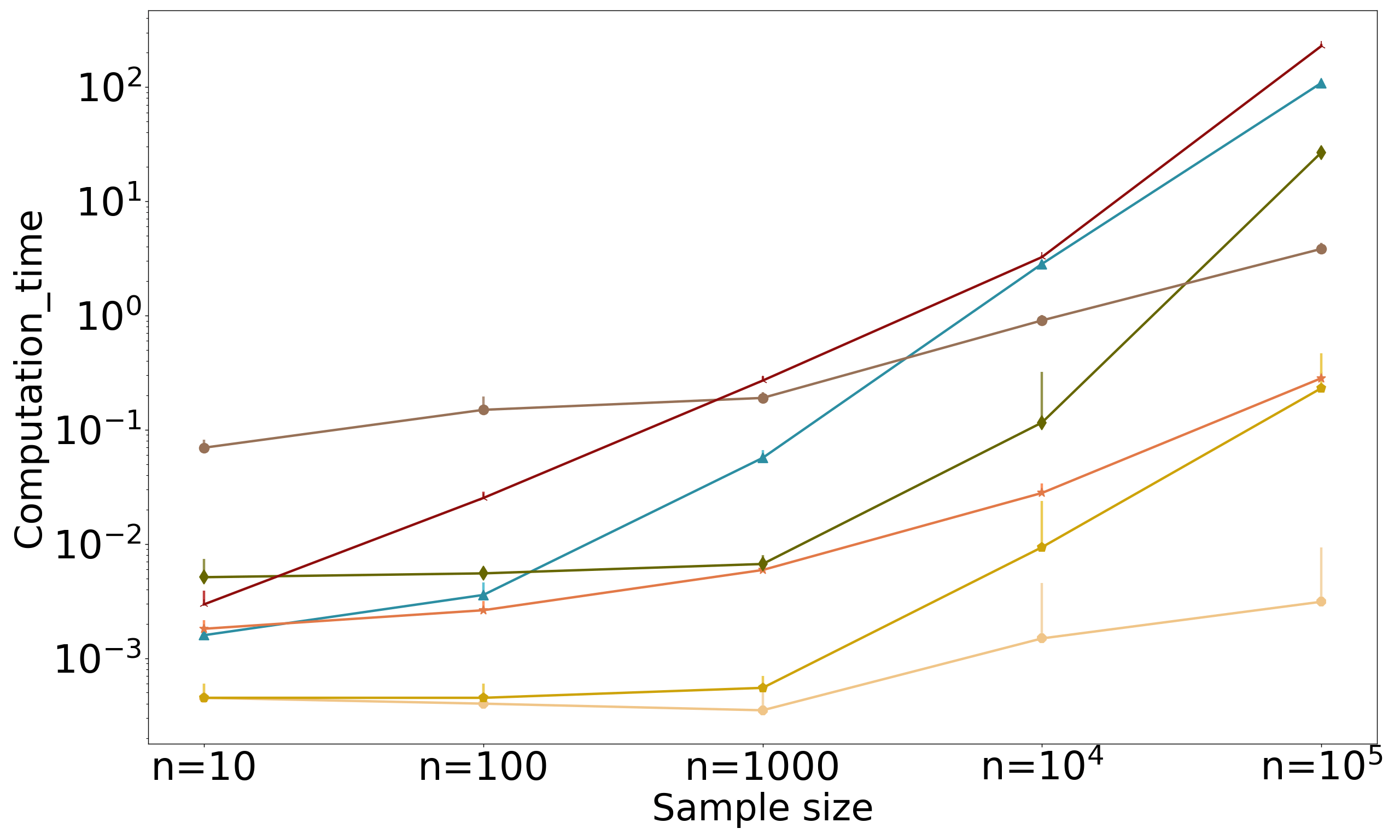}}
    \hfill
\subfloat[Hellinger distance,
          \label{fig:hd-others-cauchy}]{\includegraphics{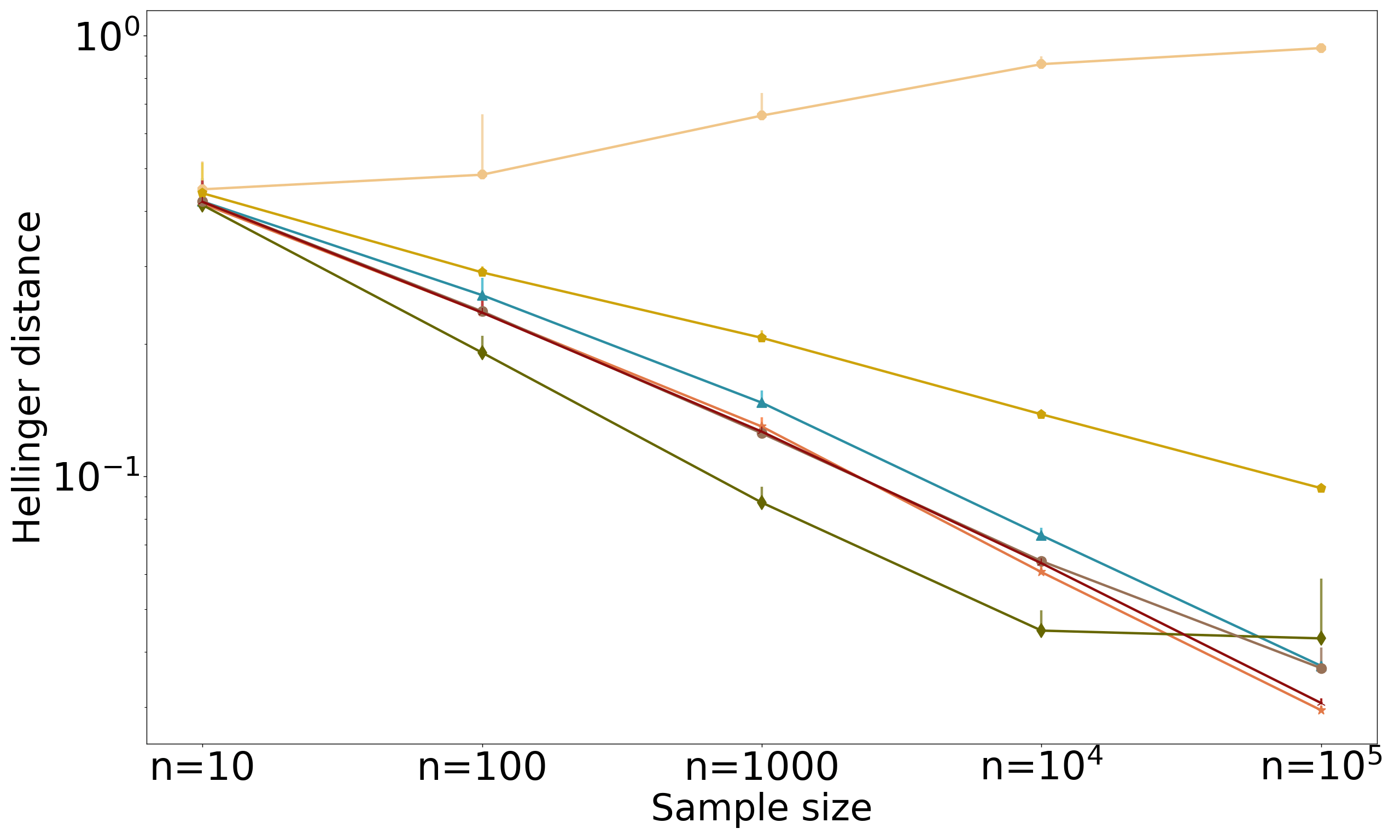}}

\end{figure}

\begin{figure}
  \centering  
  \caption{Different histograms obtained for a single uniform distribution of size $n=10^4$ \label{fig:uniform-hists}}
  \setkeys{Gin}{width=0.25\textwidth}
  \subfloat[A uniform distribution ]
  {\includegraphics{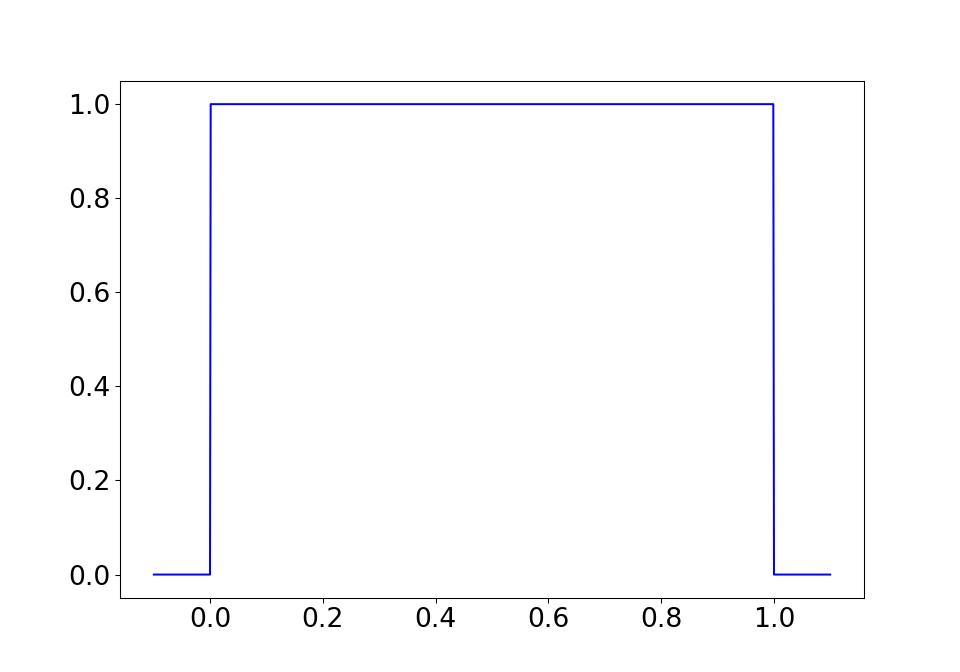}}%
	\hfill
	\subfloat[NML + heuristic ($K^*=3$)]
  {\includegraphics{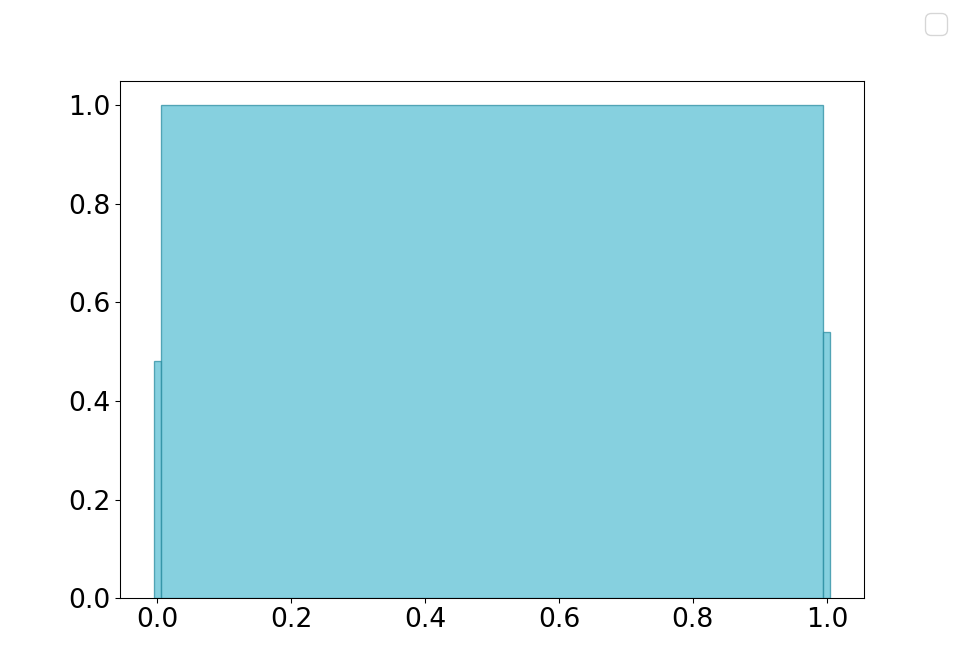}}%
	\hfill
	\subfloat[\GENUMname ($K^*=1$)]
  {\includegraphics{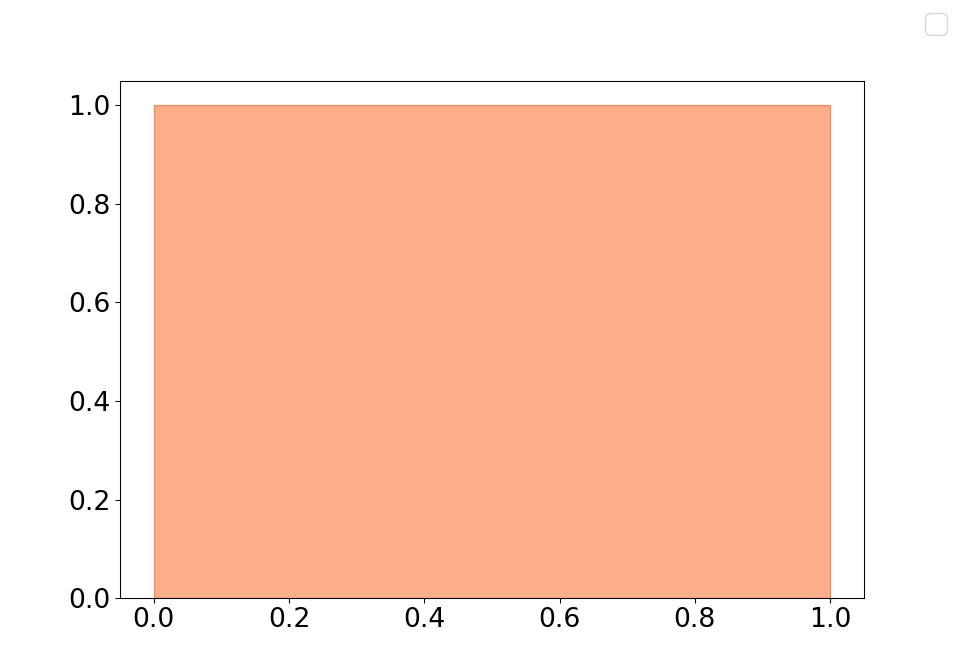}}%
	\hfill
	\subfloat[Bayesian blocks ($K^*=1$)]
  {\includegraphics{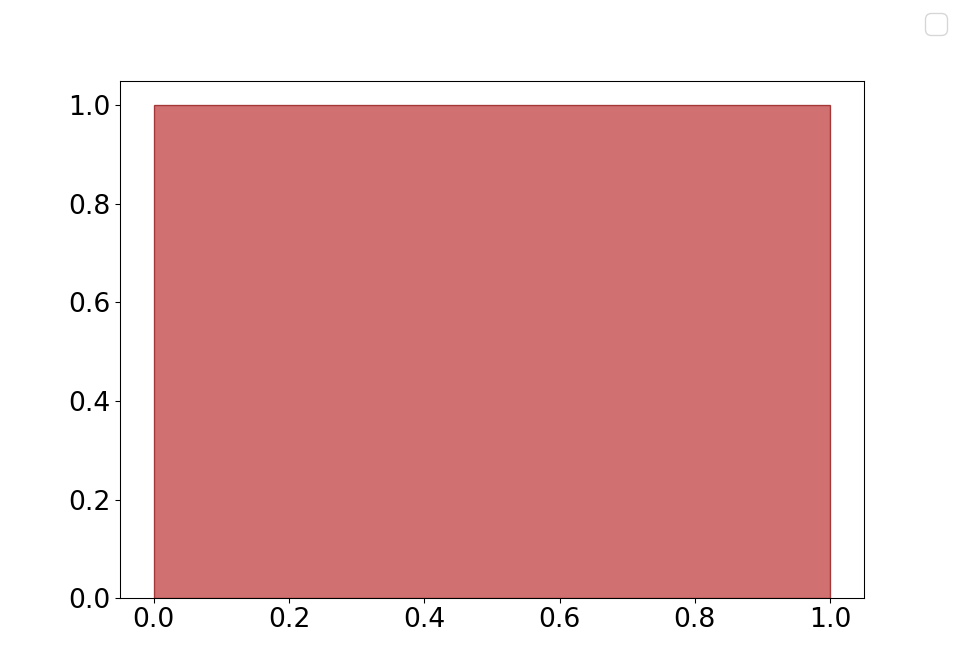}}%
	\hfill
	\subfloat[Taut string ($K^*= 1$)]
  {\includegraphics{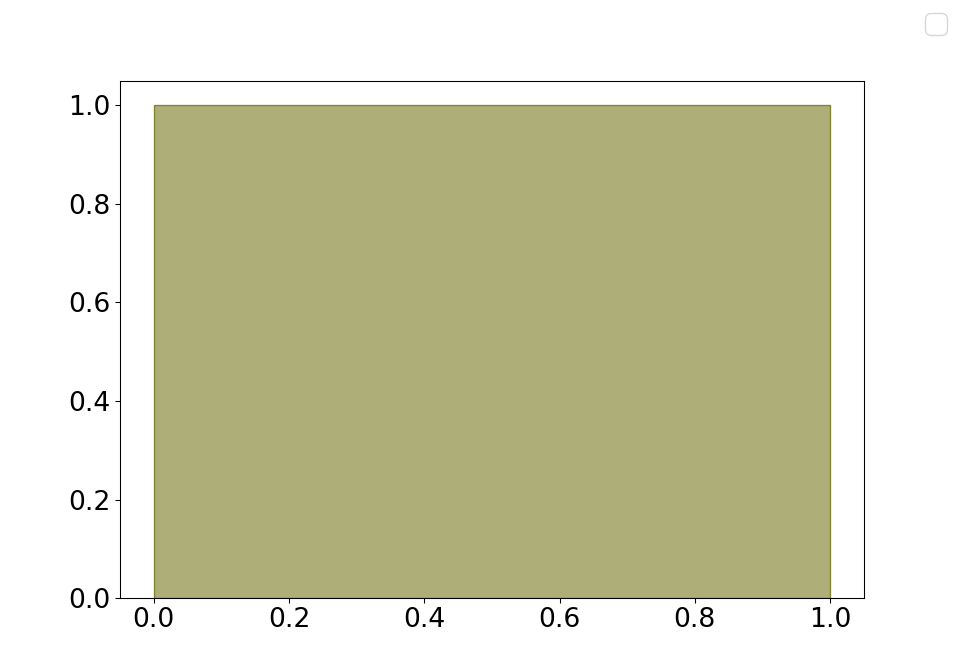}}%
	\hfill
	\subfloat[RMG ($K^*=1$)]
  {\includegraphics{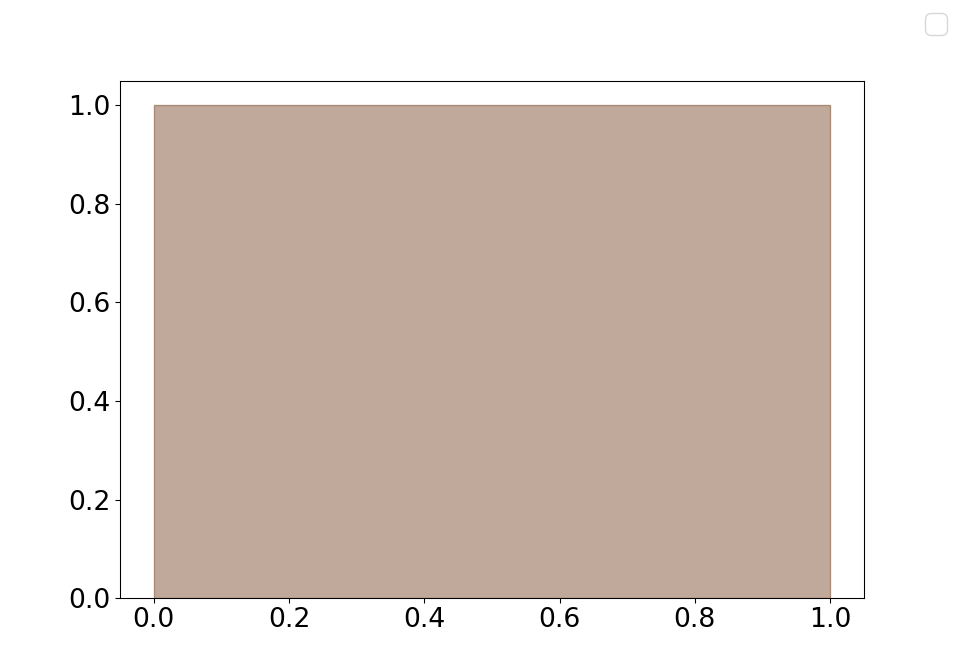}}%
	\hfill
	\subfloat[Sturges ($K^*=15$)]
  {\includegraphics{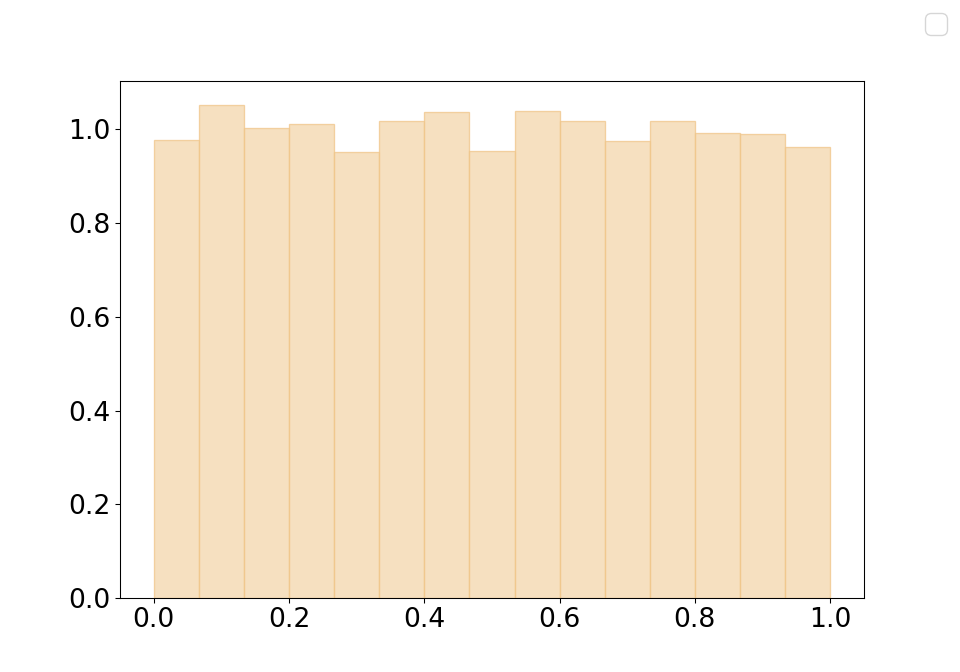}}%
	\hfill
	\subfloat[Freedman-Diaconis ($K^*= 22$)]
  {\includegraphics{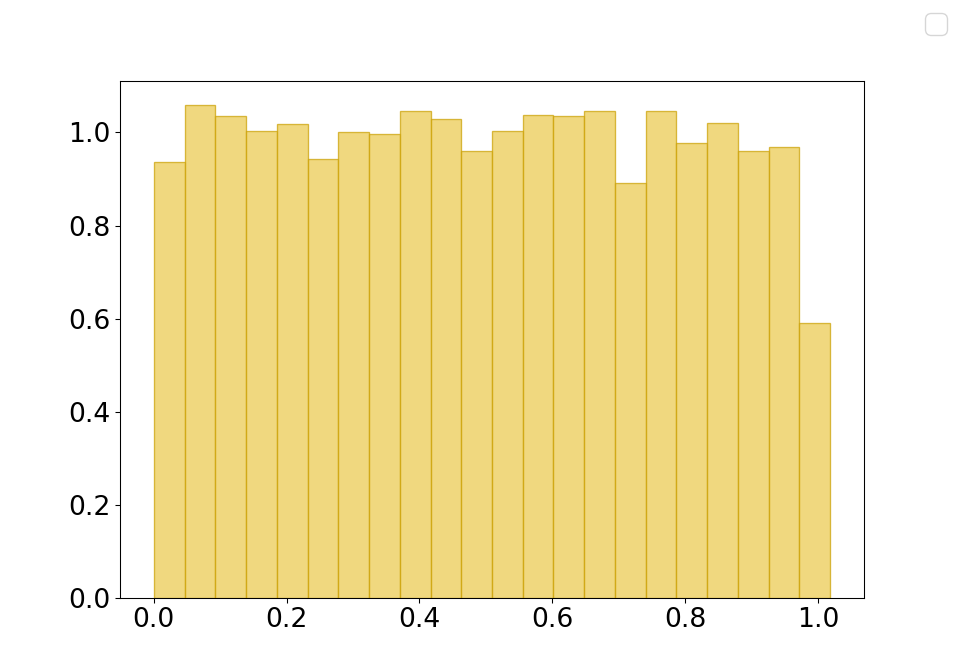}}
\end{figure}

\begin{figure}
\centering
\caption{Comparison with other methods over a uniform distribution of different sample size \label{fig:comparison-others-uniform}}
\setkeys{Gin}{width=0.3\textwidth}
\subfloat[Number of intervals,
          \label{fig:intervals-others-uniform}]{\includegraphics{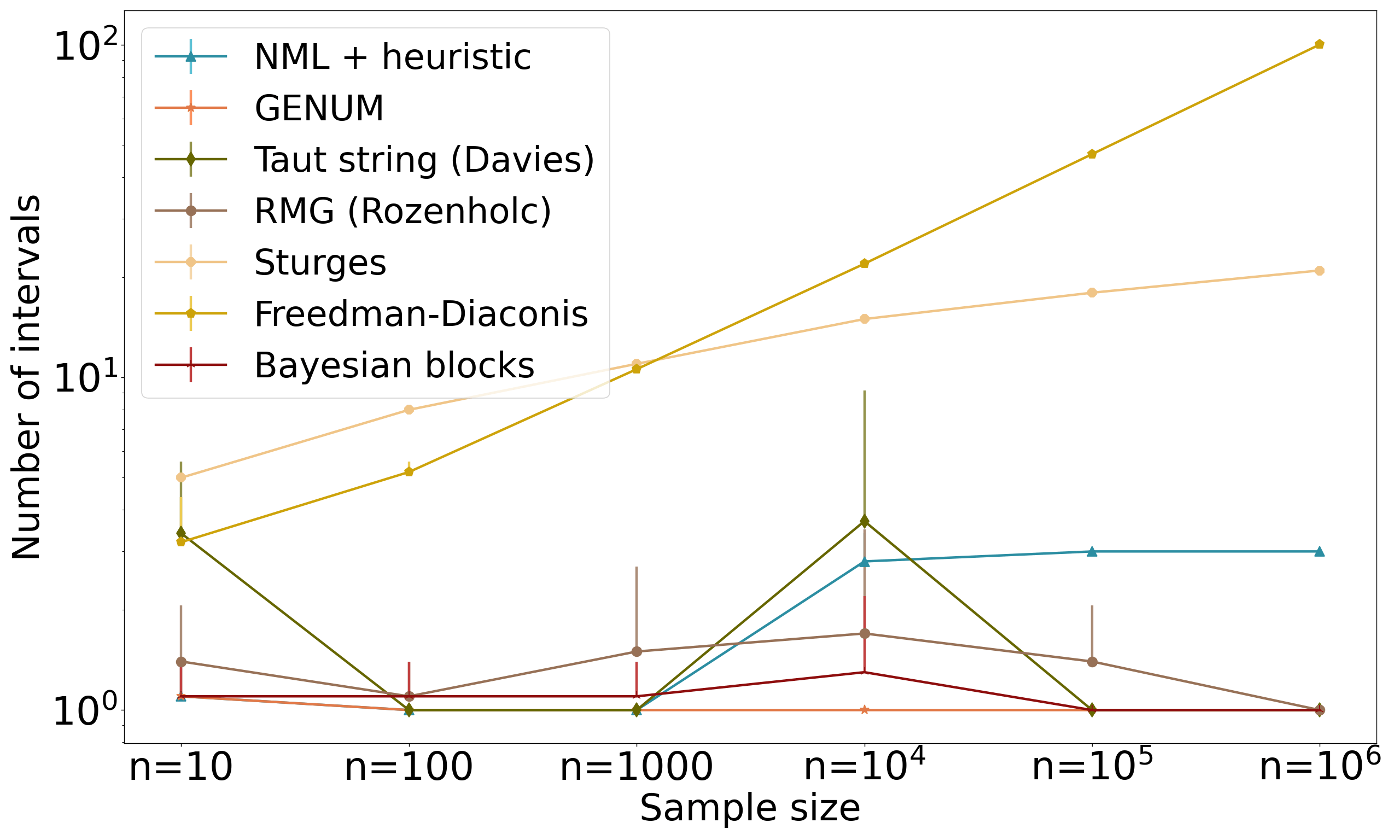}}
    \hfill
\subfloat[Computation time,
          \label{fig:time-others-uniform}]{\includegraphics{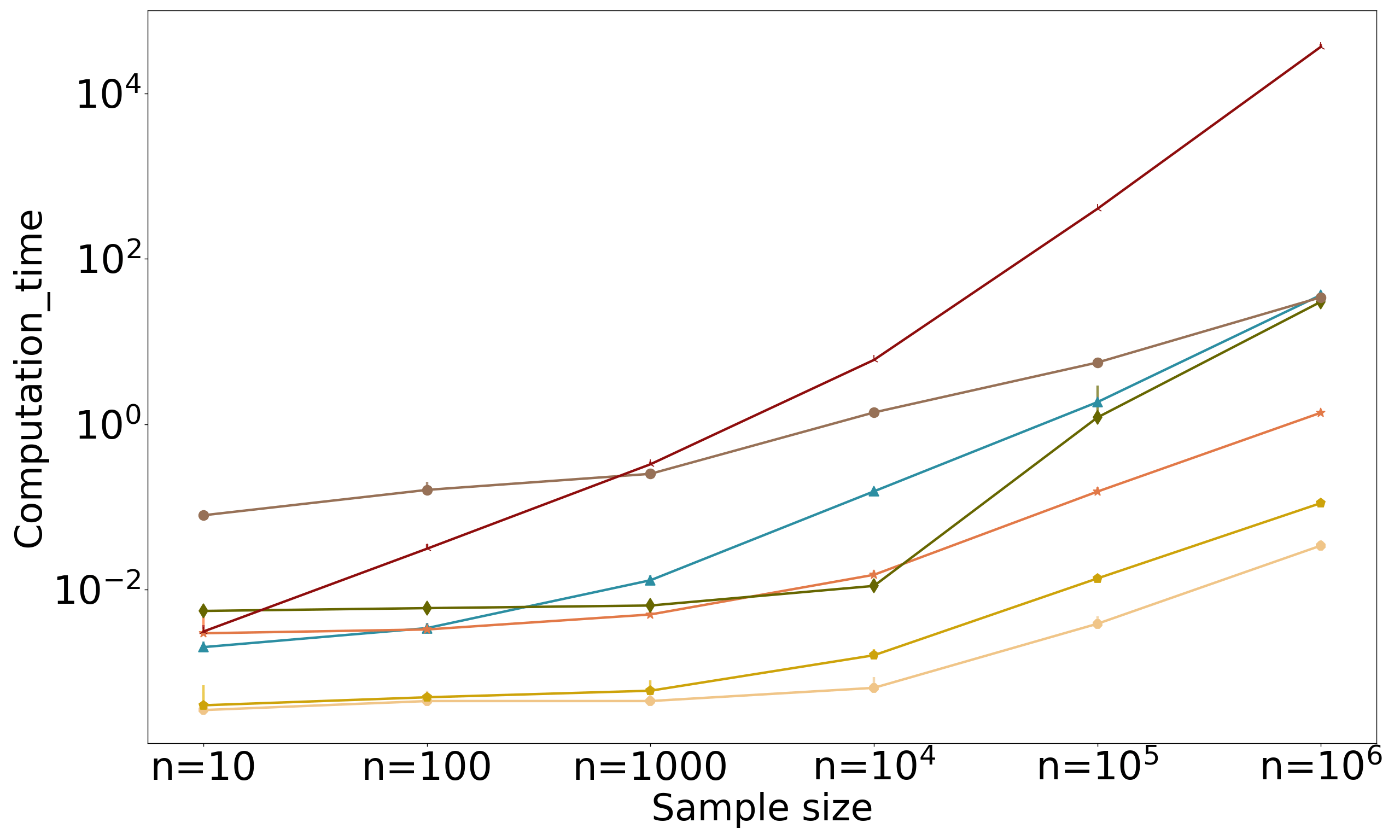}}
    \hfill
\subfloat[Hellinger distance,
          \label{fig:hd-others-uniform}]{\includegraphics{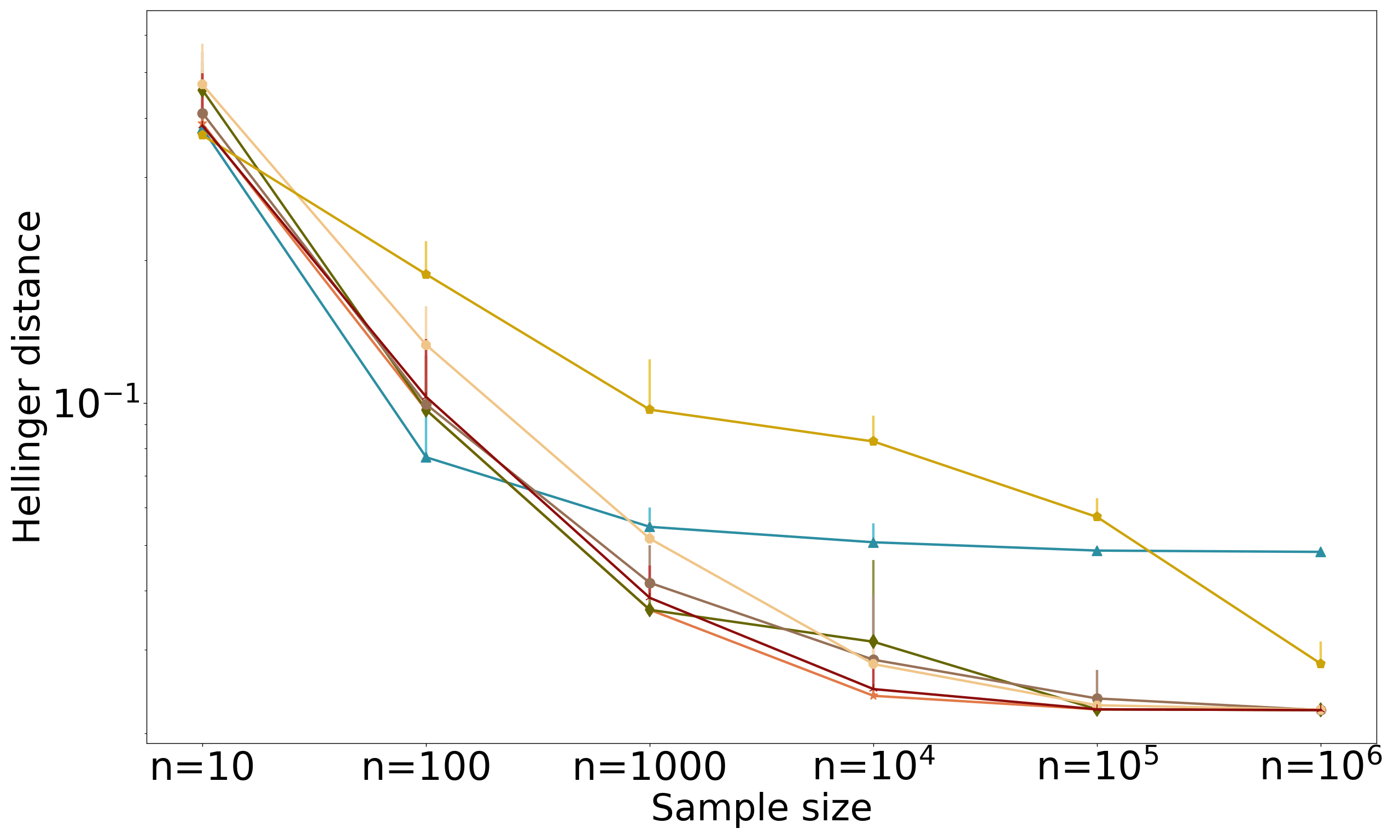}}
\end{figure}

\begin{figure}
  \centering  
  \caption{Different histograms obtained for a single Triangle distribution of size $n=10^4$ \label{fig:triangle-hists}}
  \setkeys{Gin}{width=0.25\textwidth}
  \subfloat[A triangle distribution ]
  {\includegraphics{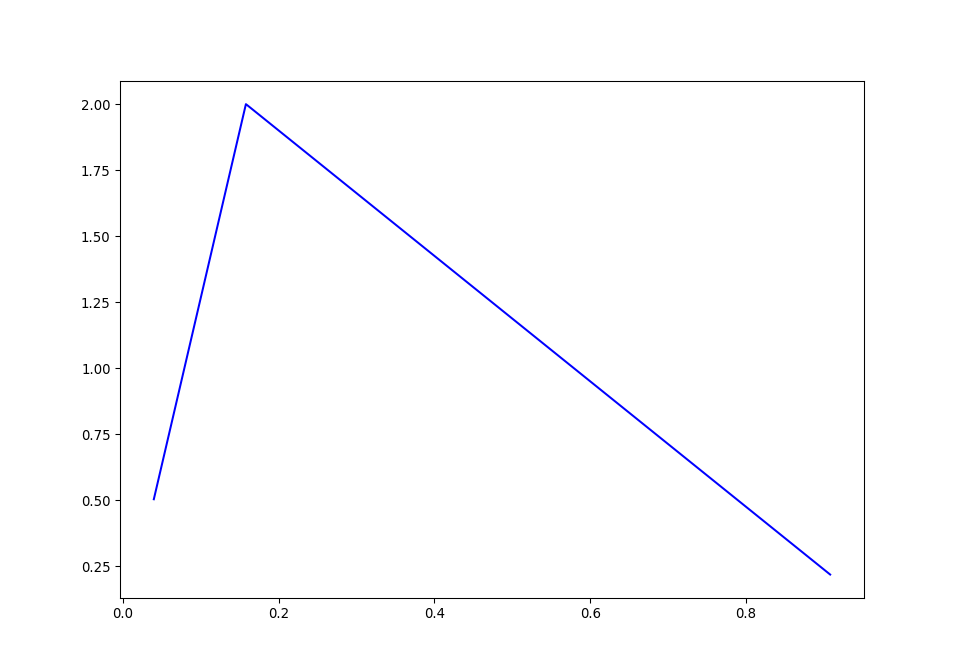}}%
	\hfill
	\subfloat[NML + heuristic ($K^*=13$)]
  {\includegraphics{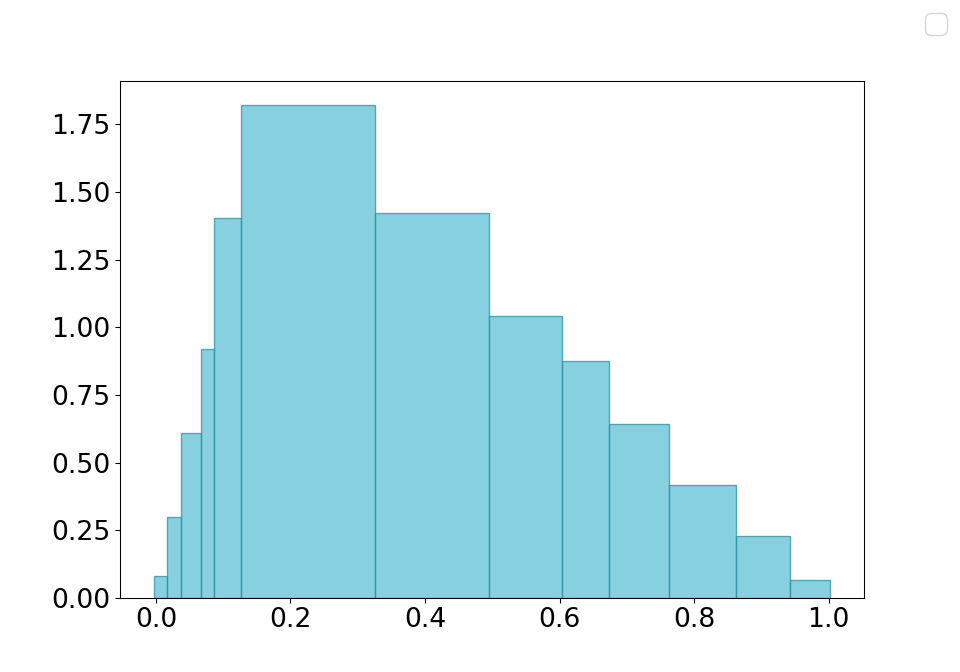}}%
	\hfill
	\subfloat[\GENUMname ($K^*=11$)]
  {\includegraphics{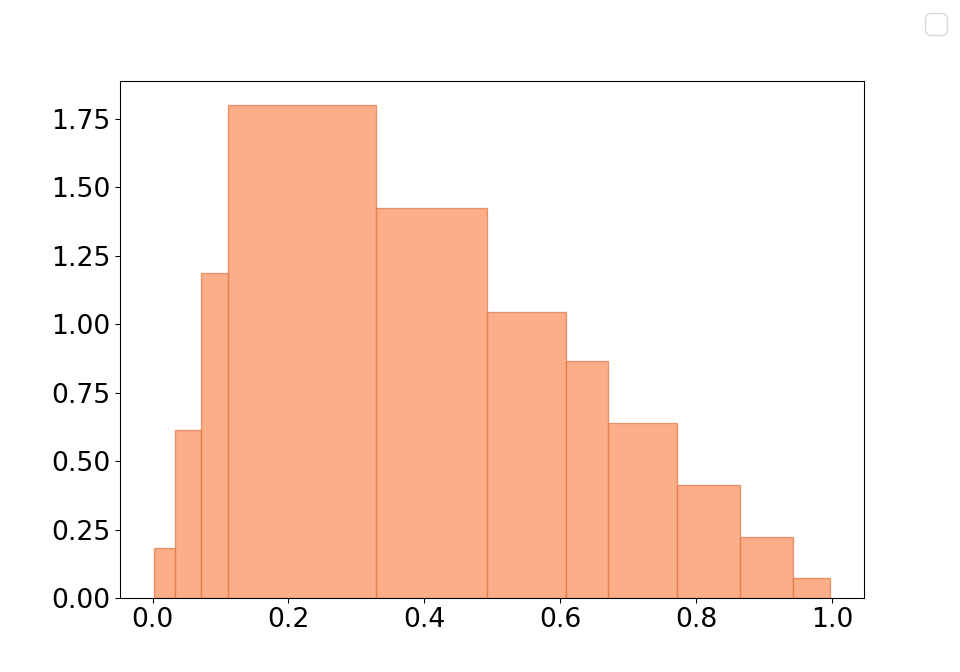}}%
	\hfill
	\subfloat[Bayesian blocks ($K^*=11$)]
  {\includegraphics{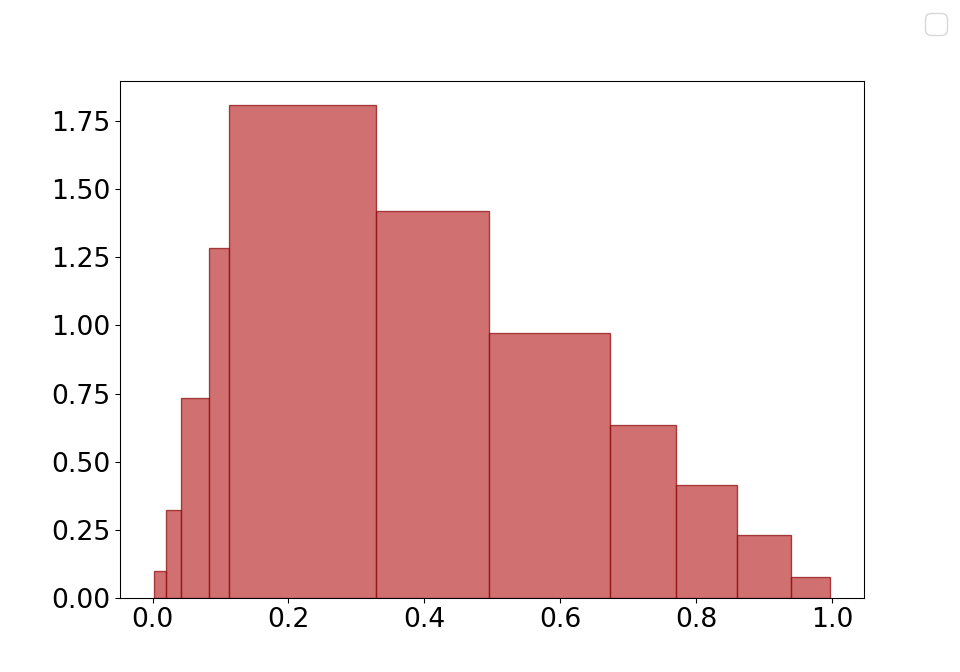}}%
	\hfill
	\subfloat[Taut string ($K^*= 43$)]
  {\includegraphics{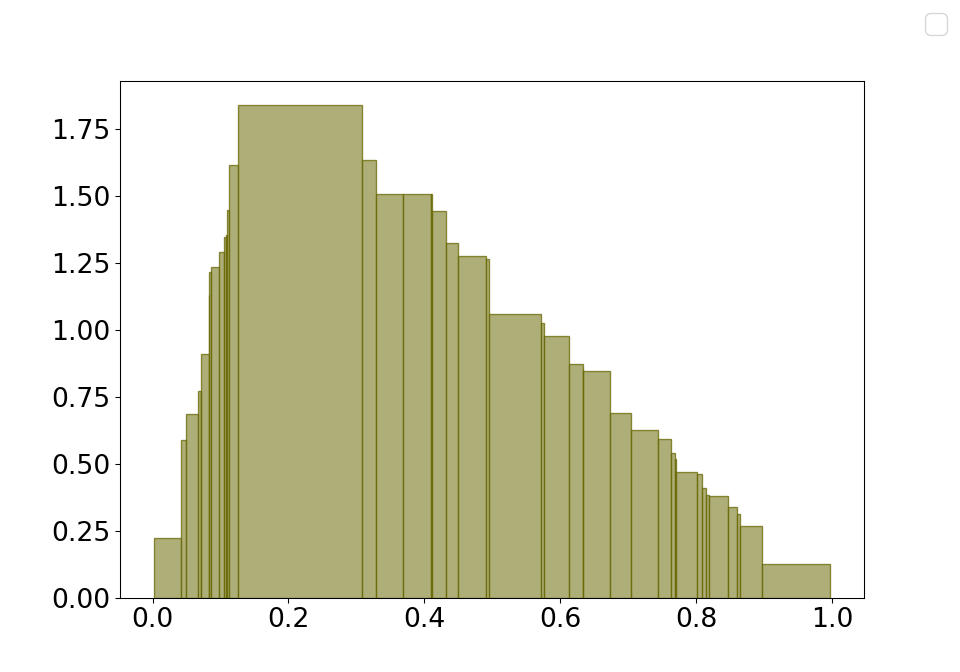}}%
	\hfill
	\subfloat[RMG ($K^*=25$)]
  {\includegraphics{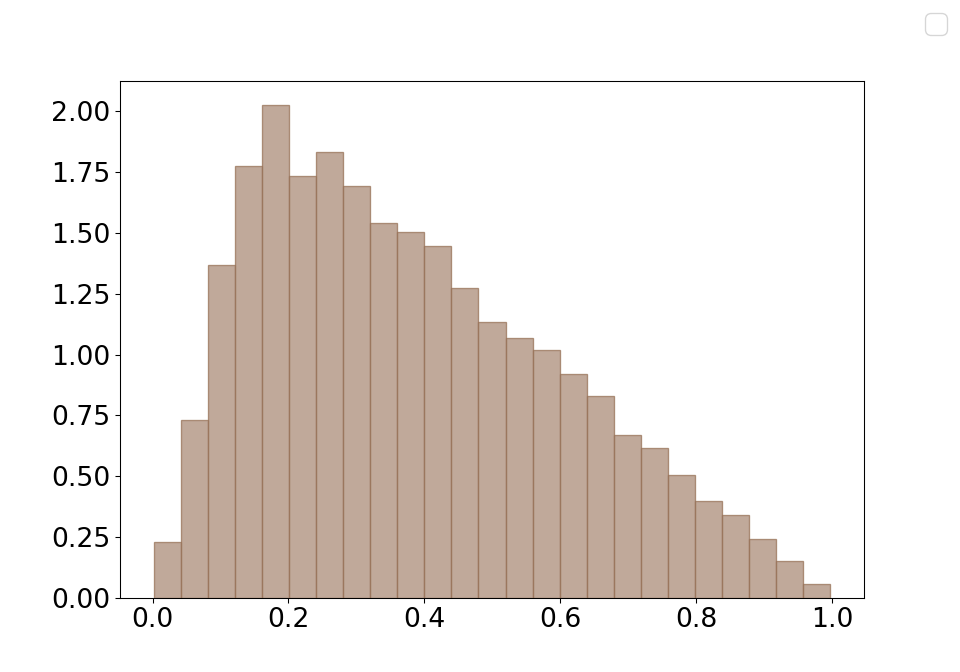}}%
	\hfill
	\subfloat[Sturges ($K^*=15$)]
  {\includegraphics{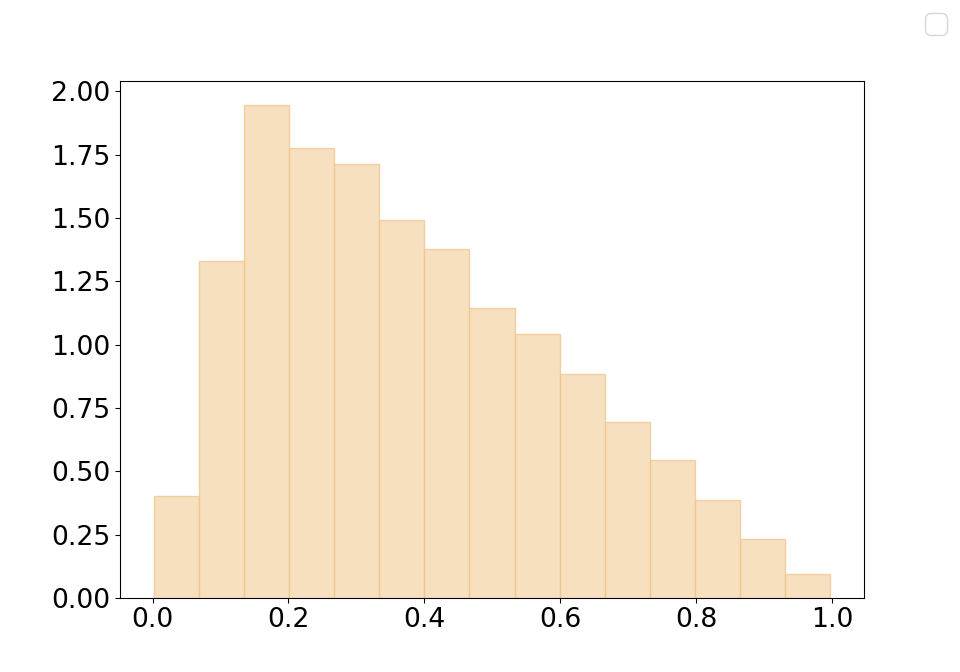}}%
	\hfill
	\subfloat[Freedman-Diaconis ($K^*= 33$)]
  {\includegraphics{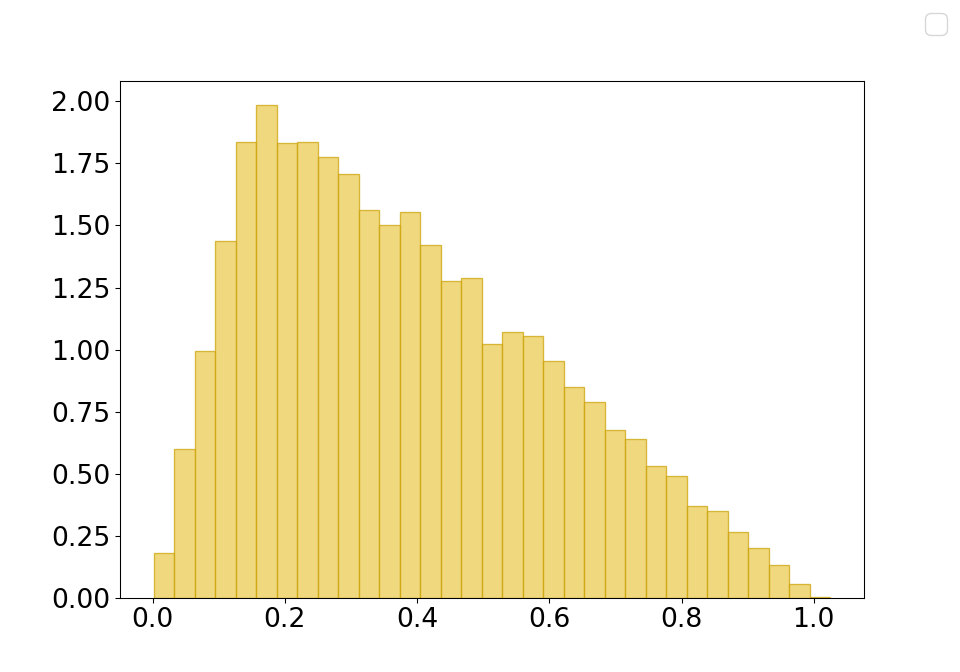}}
\end{figure}

\begin{figure}
\centering
\caption{Comparison with state-of-the-art methods over a Triangle distribution of different sample sizes \label{fig:comparison-others-triangle}}
\setkeys{Gin}{width=0.3\textwidth}
\subfloat[Number of intervals,
          \label{fig:intervals-others-triangle}]{\includegraphics{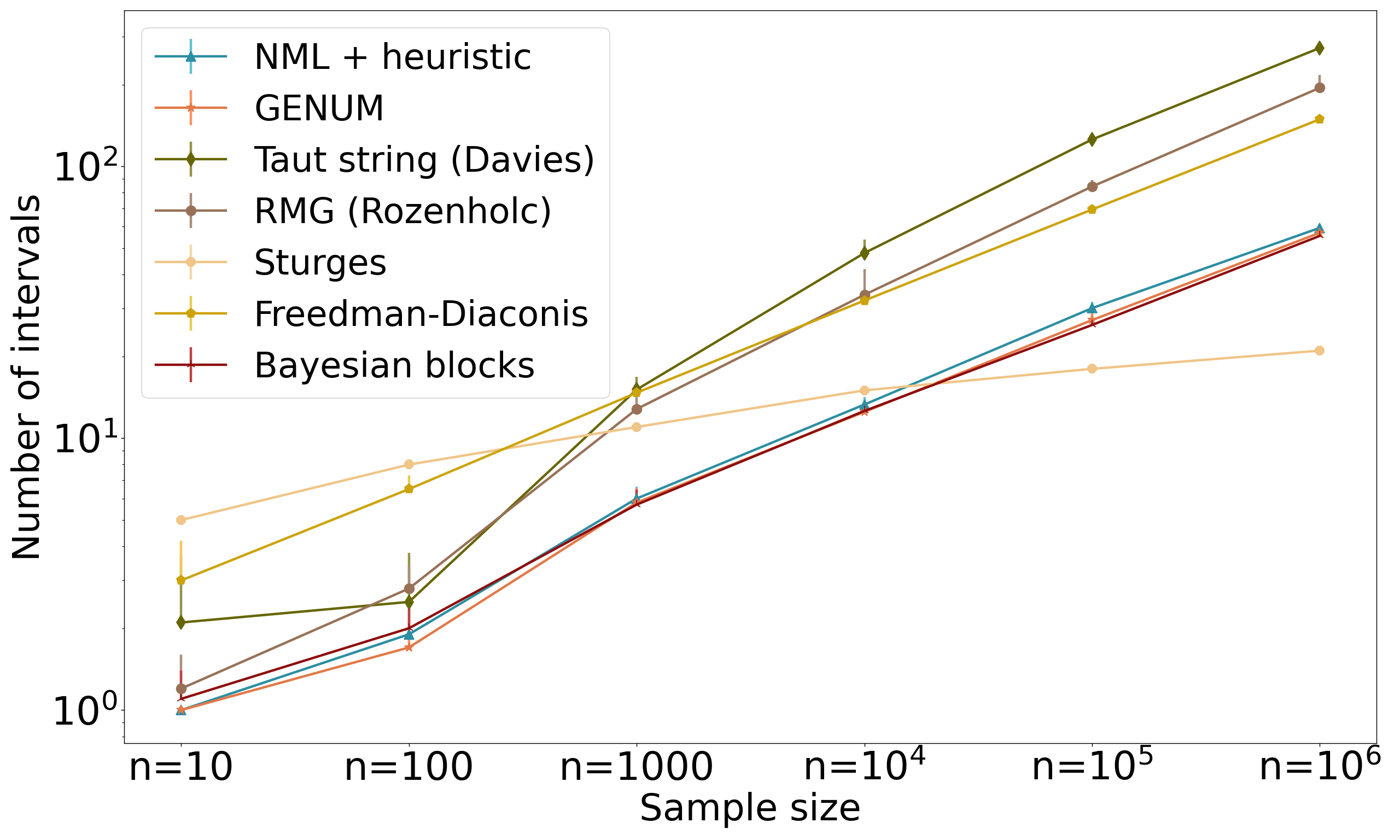}}
    \hfill
\subfloat[Computation time,
          \label{fig:time-others-triangle}]{\includegraphics{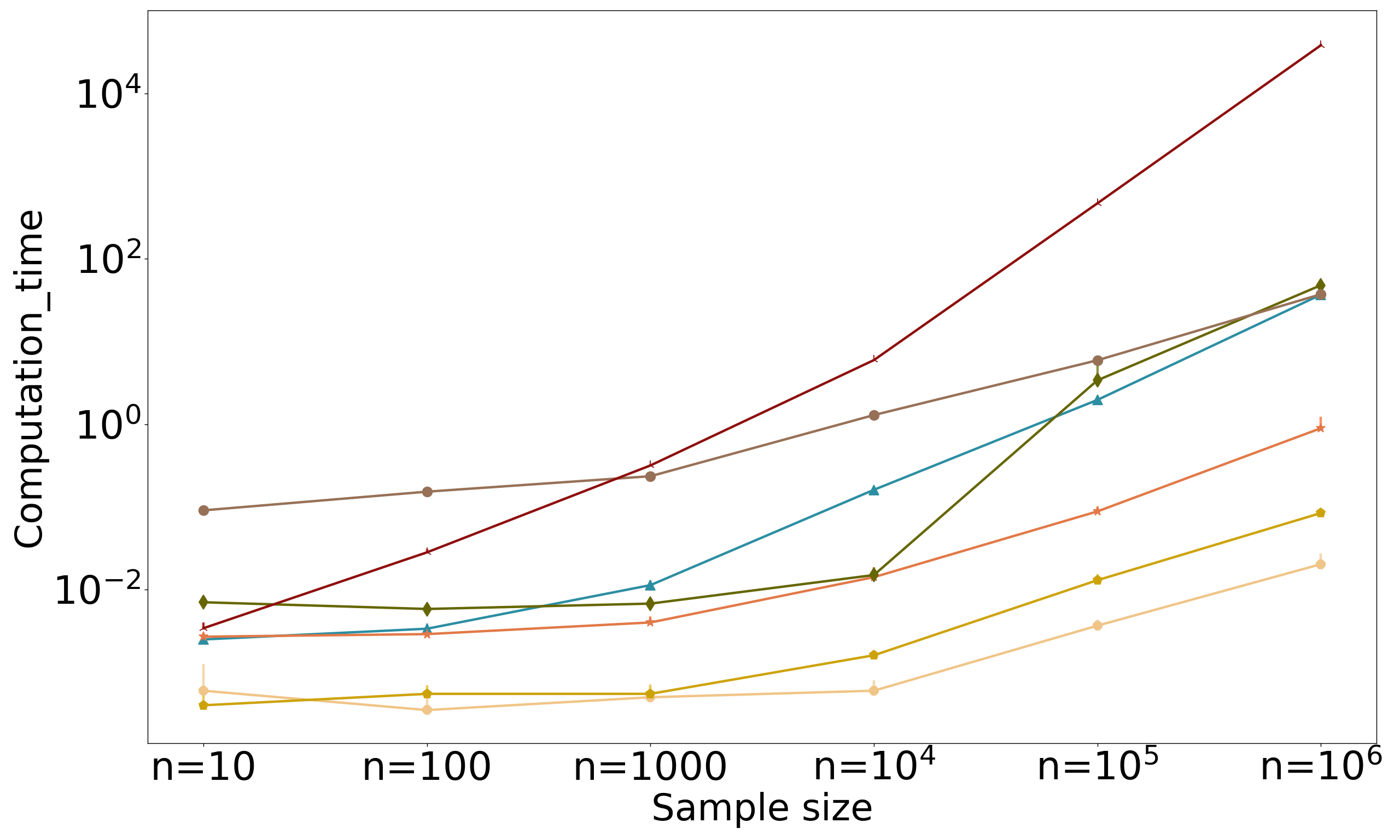}}
    \hfill
\subfloat[Hellinger distance,
          \label{fig:hd-others-triangle}]{\includegraphics{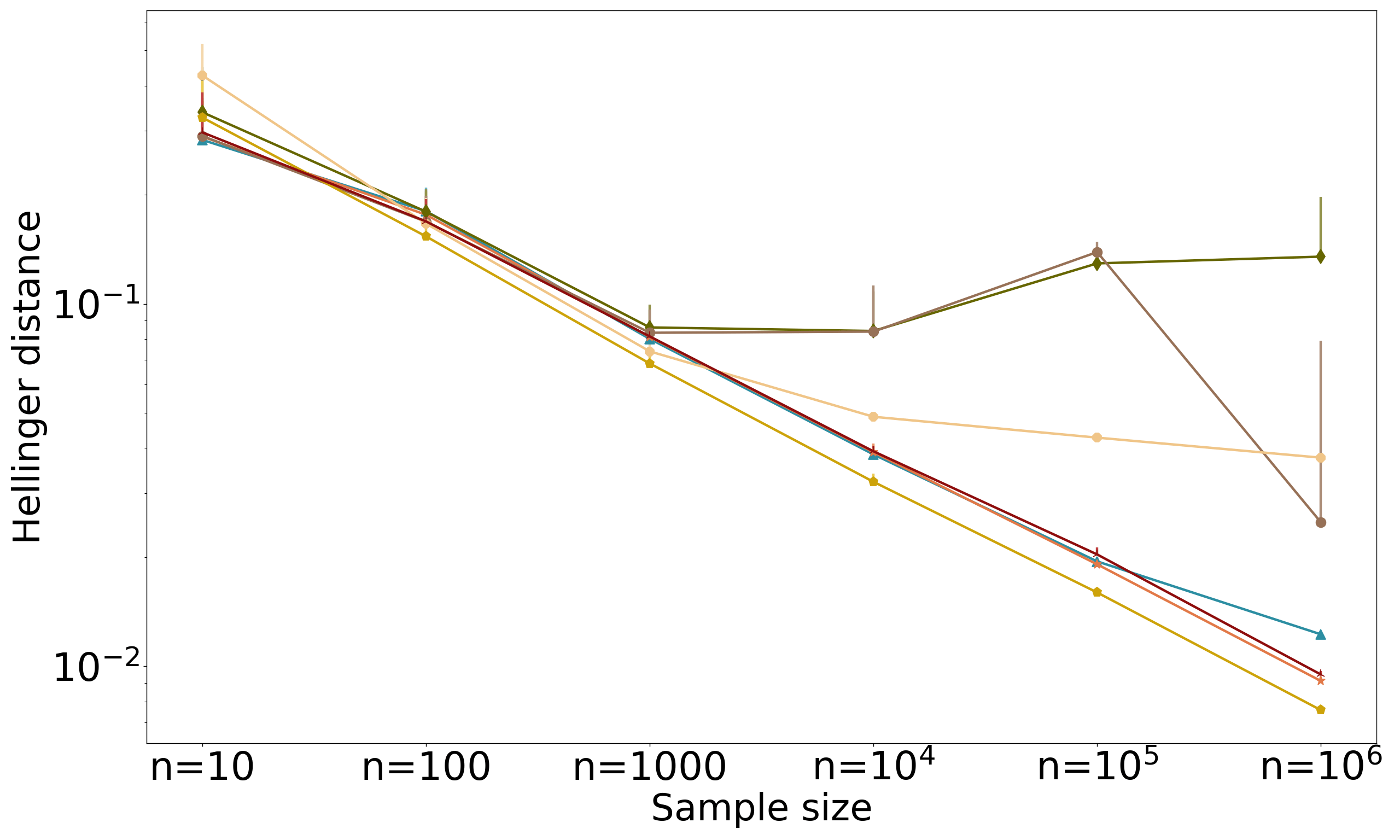}}

\end{figure}

\begin{figure}
  \centering  
  \caption{Different histograms obtained for a single mixture of 4 triangle distributions, of size $n=10^4$ \label{fig:tmix-hists}}
  \setkeys{Gin}{width=0.25\textwidth}
  \subfloat[A triangle mixture ]
  {\includegraphics{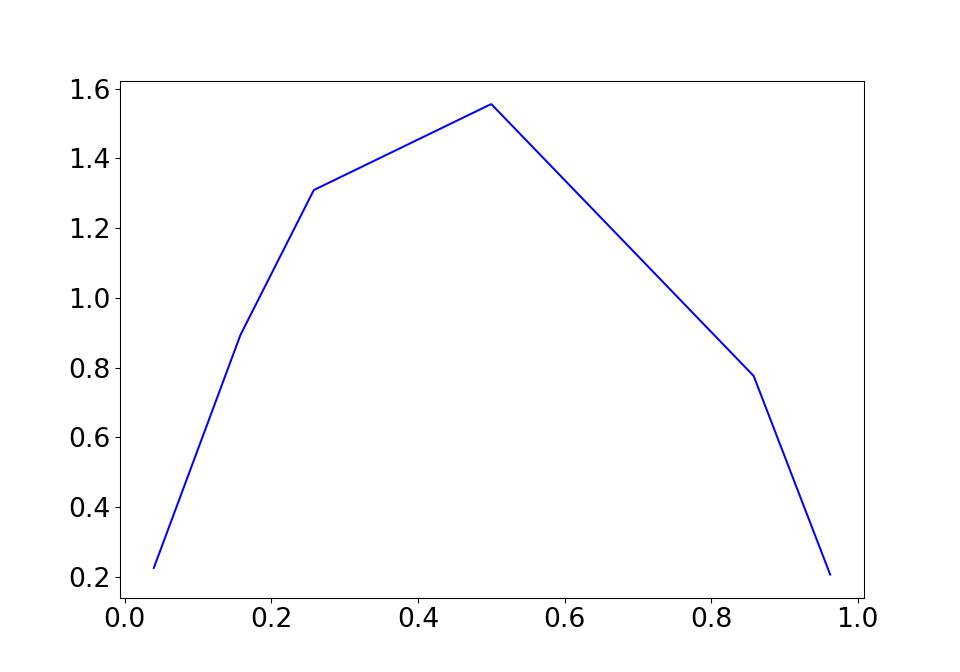}}%
	\hfill
	\subfloat[NML + heuristic ($K^*=12$)]
  {\includegraphics{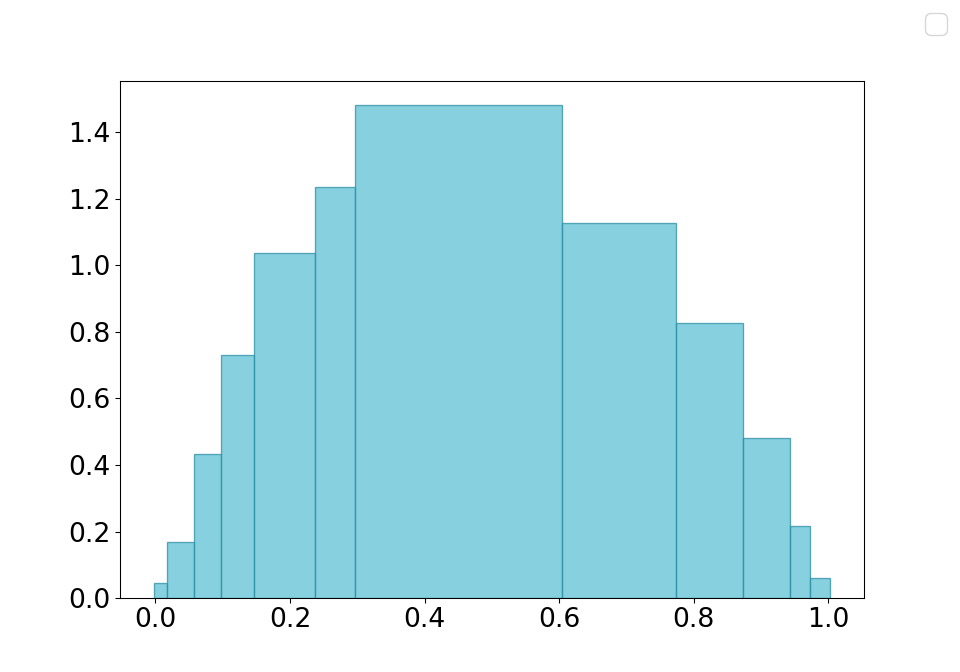}}%
	\hfill
	\subfloat[\GENUMname ($K^*=12$)]
  {\includegraphics{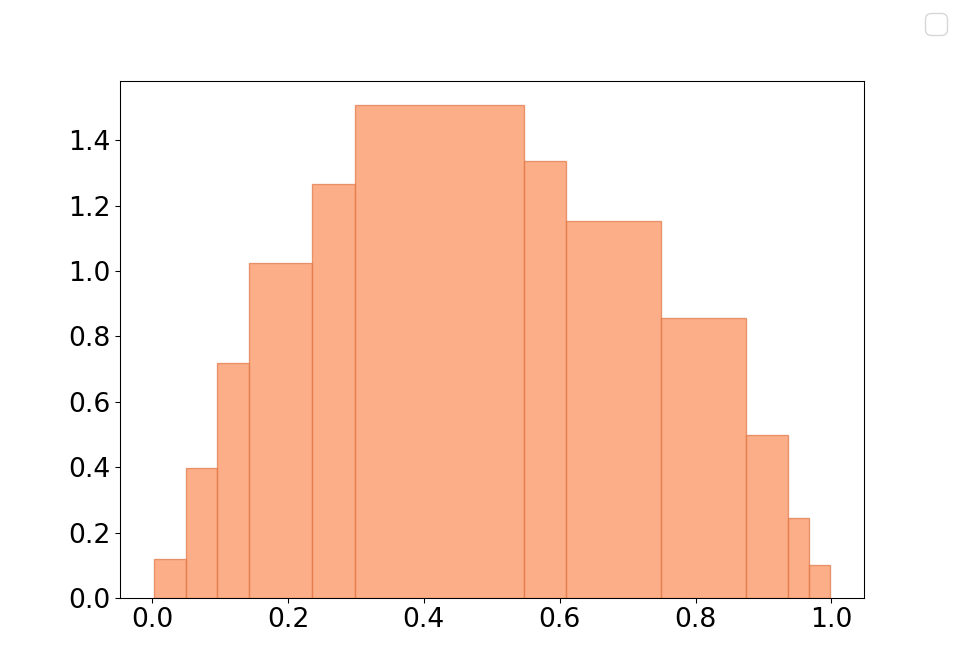}}%
	\hfill
	\subfloat[Bayesian blocks ($K^*=11$)]
  {\includegraphics{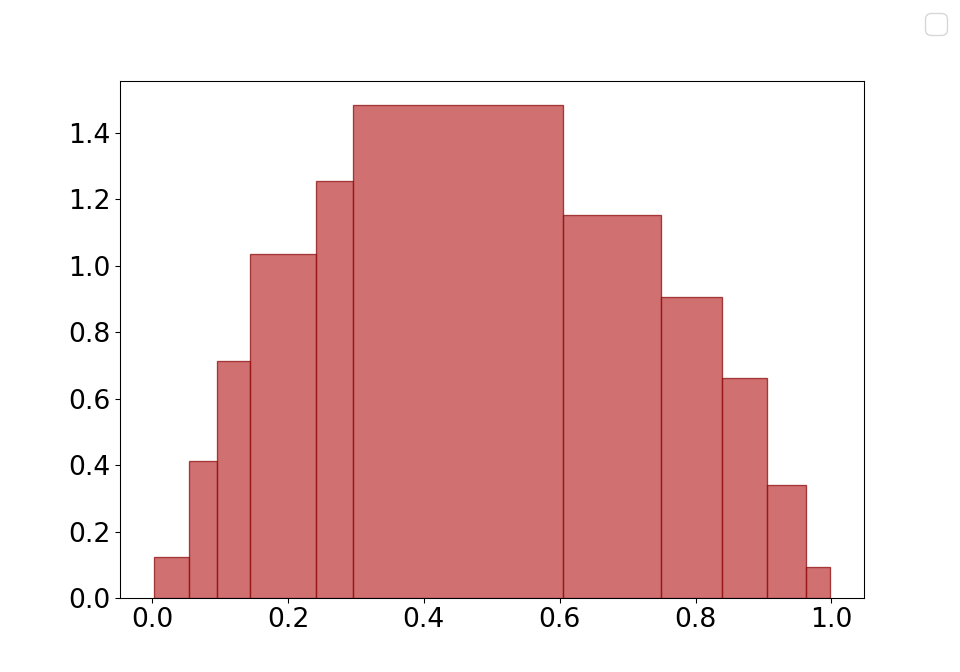}}%
	\hfill
	\subfloat[Taut string ($K^*= 53$)]
  {\includegraphics{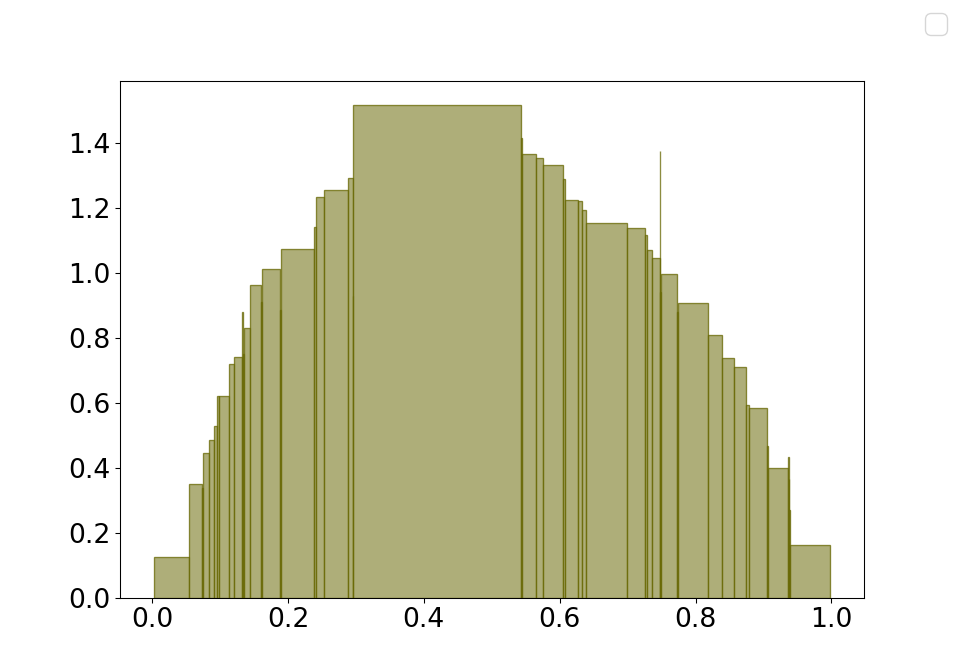}\label{fig:tmix-TS}}%
	\hfill
	\subfloat[RMG ($K^*=38$)]
  {\includegraphics{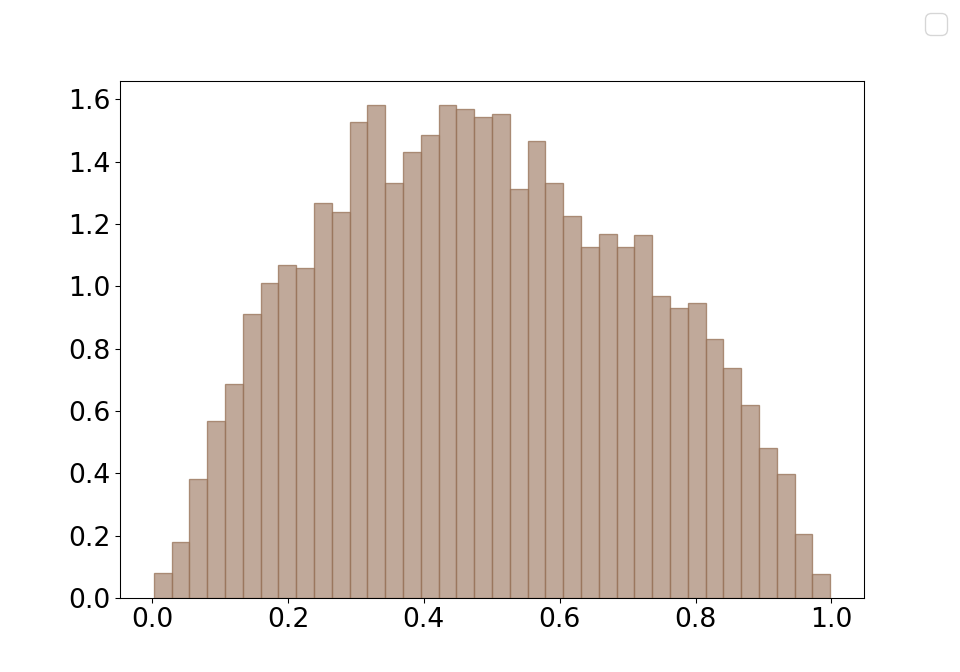}}%
	\hfill
	\subfloat[Sturges ($K^*=15$)]
  {\includegraphics{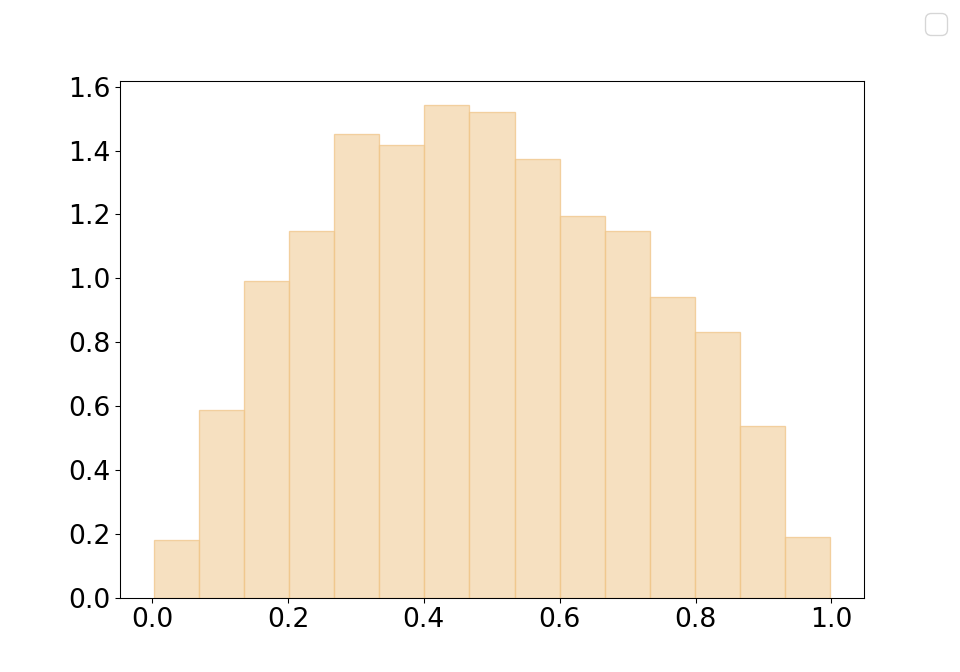}}%
	\hfill
	\subfloat[Freedman-Diaconis ($K^*= 31$)]
  {\includegraphics{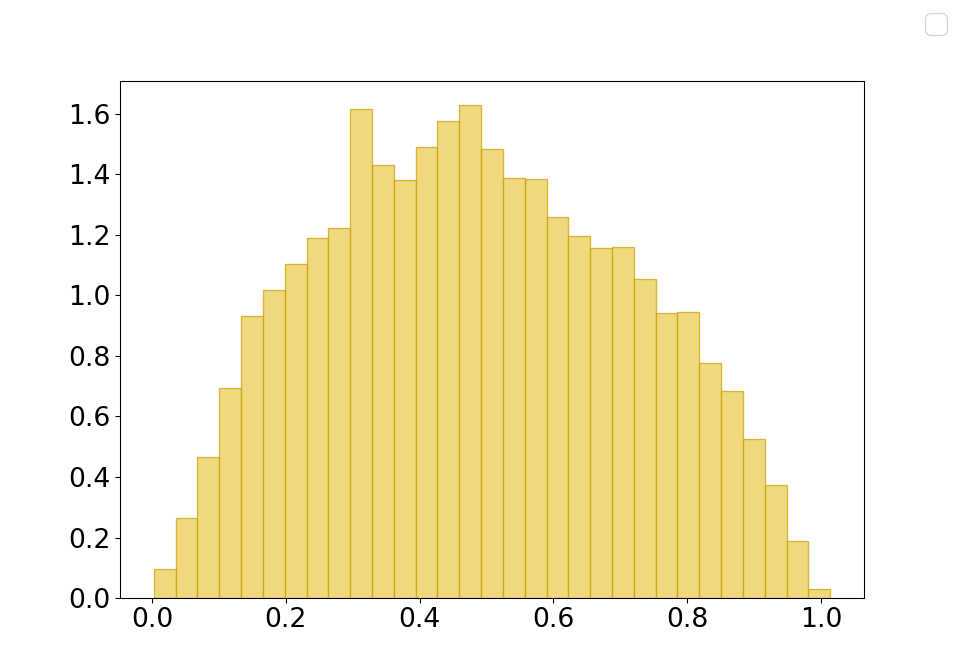}}
\end{figure}

\begin{figure}
\centering
\caption{Comparison with state-of-the-art methods over a mixture of 4 Triangle distributions, of different sample sizes \label{fig:comparison-others-tmix}}
\setkeys{Gin}{width=0.3\textwidth}
\subfloat[Number of intervals,
          \label{fig:intervals-others-tmix}]{\includegraphics{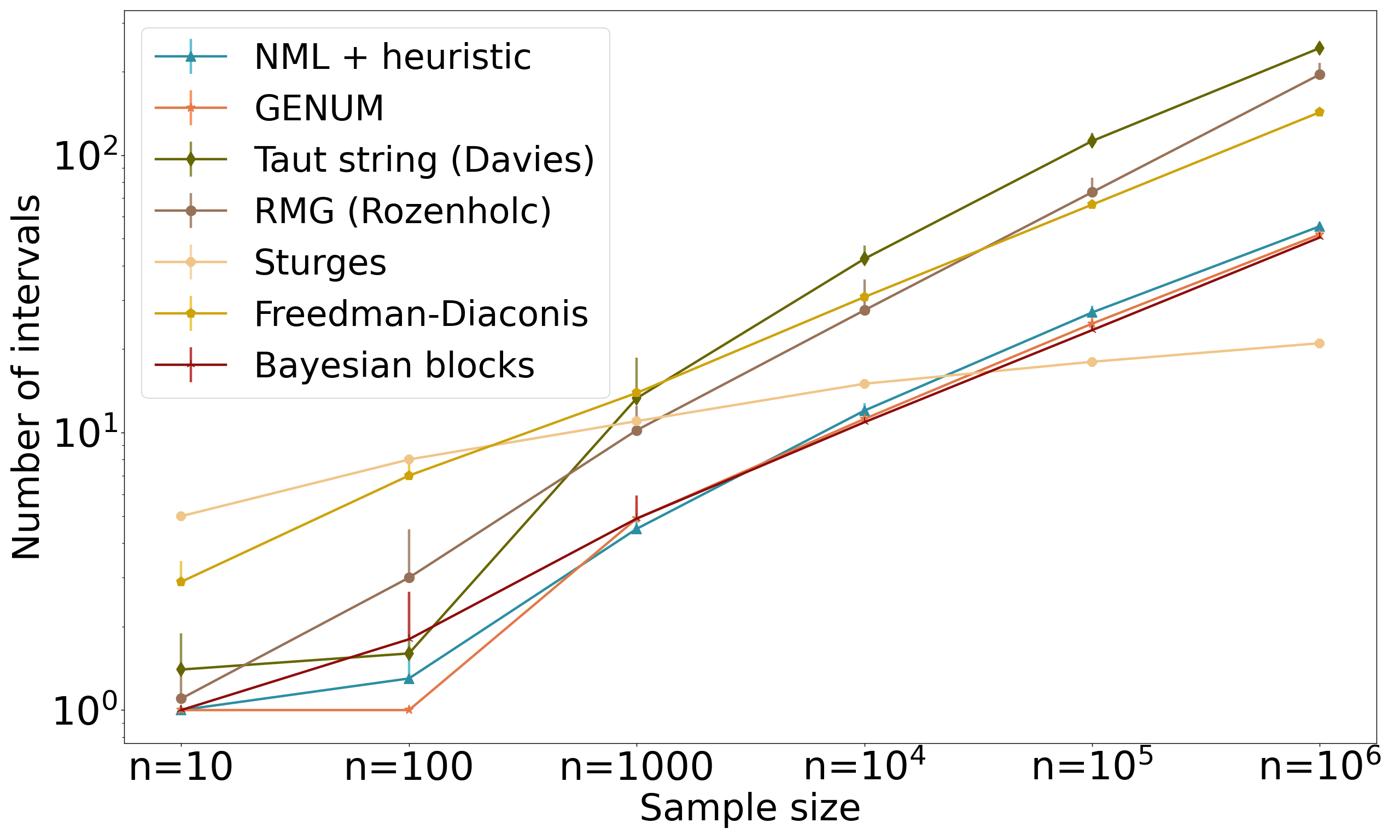}}
    \hfill
\subfloat[Computation time,
          \label{fig:time-others-tmix}]{\includegraphics{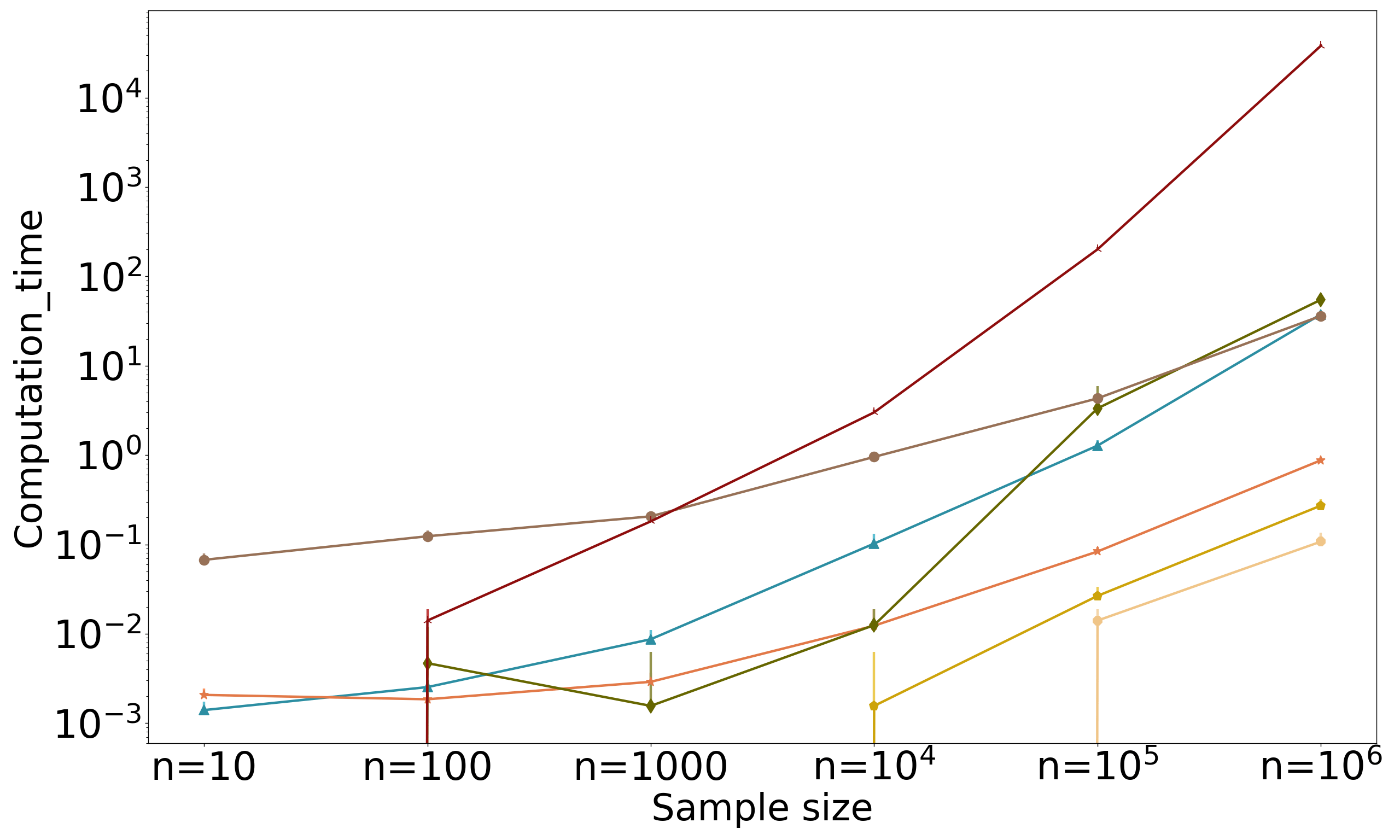}}
    \hfill
\subfloat[Hellinger distance,
          \label{fig:hd-others-tmix}]{\includegraphics{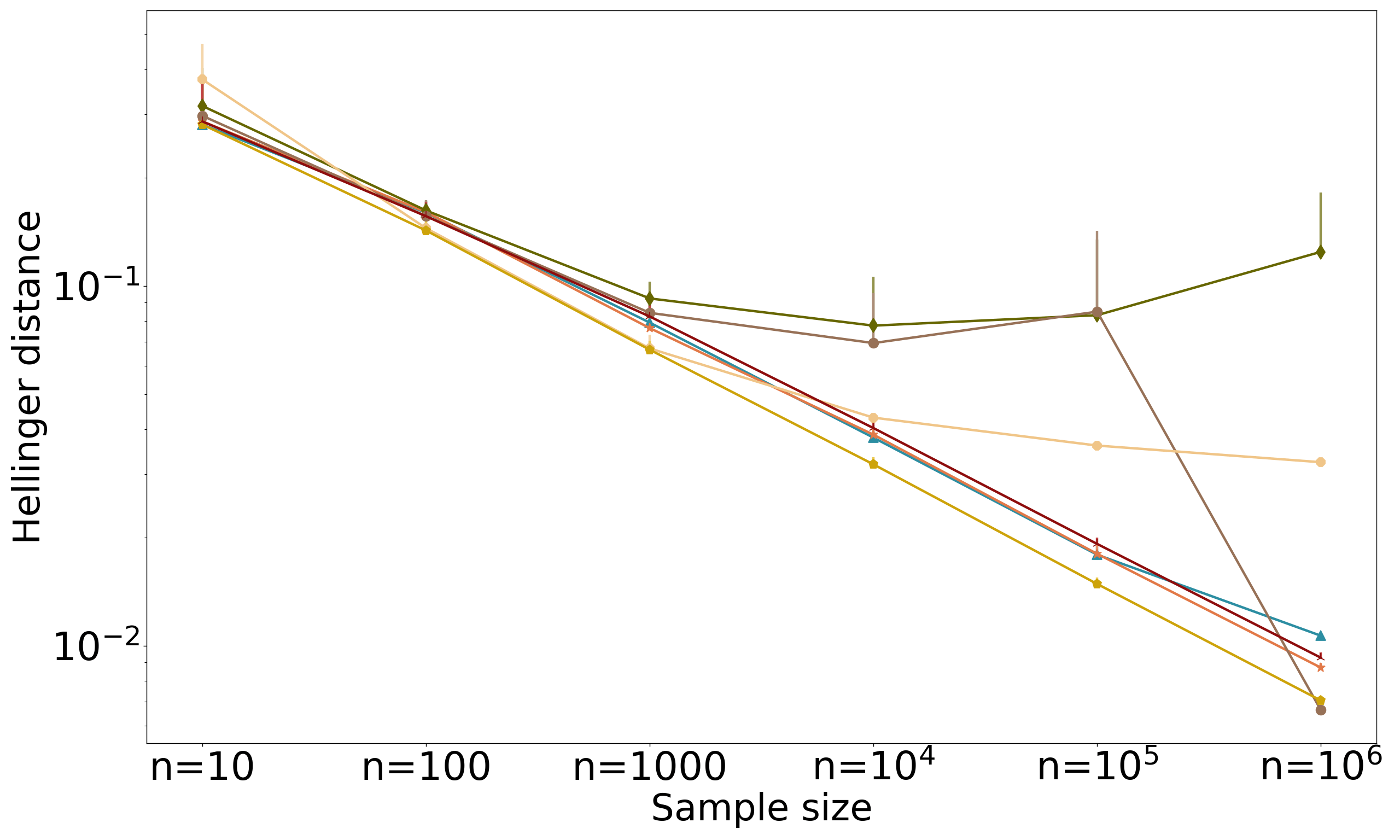}}

\end{figure}

 \begin{figure}
  \centering  
  \caption{Different histograms obtained for a single mixture of 4 triangle distributions, of size $n=10^4$ \label{fig:gmix-claw-hists}}
  \setkeys{Gin}{width=0.25\textwidth}
  \subfloat[Claw: a Gaussian mixture ]
  {\includegraphics{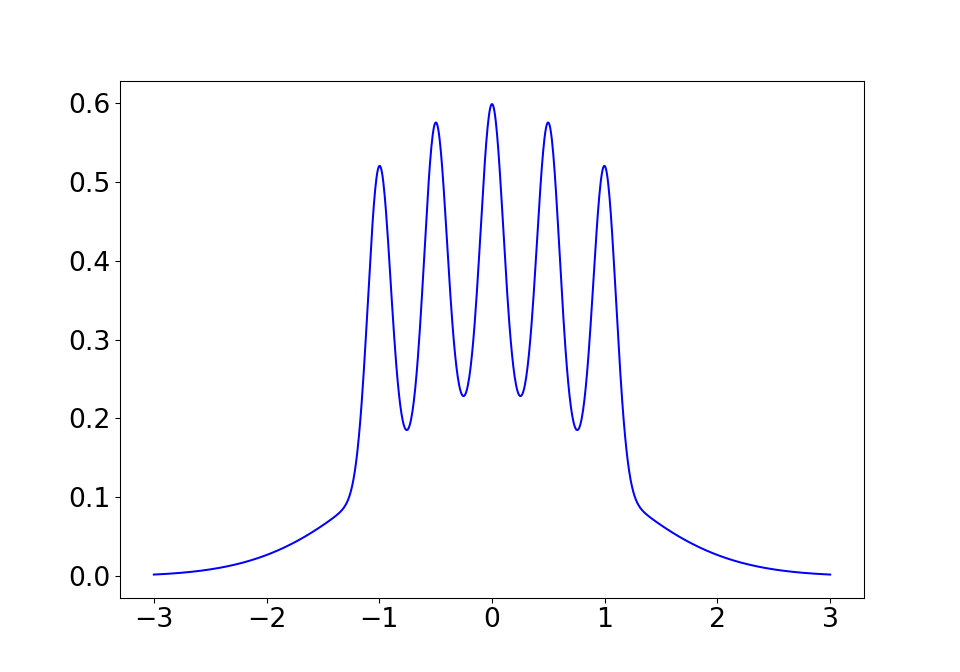}}%
	\hfill
	\subfloat[NML + heuristic ($K^*=44$)]
  {\includegraphics{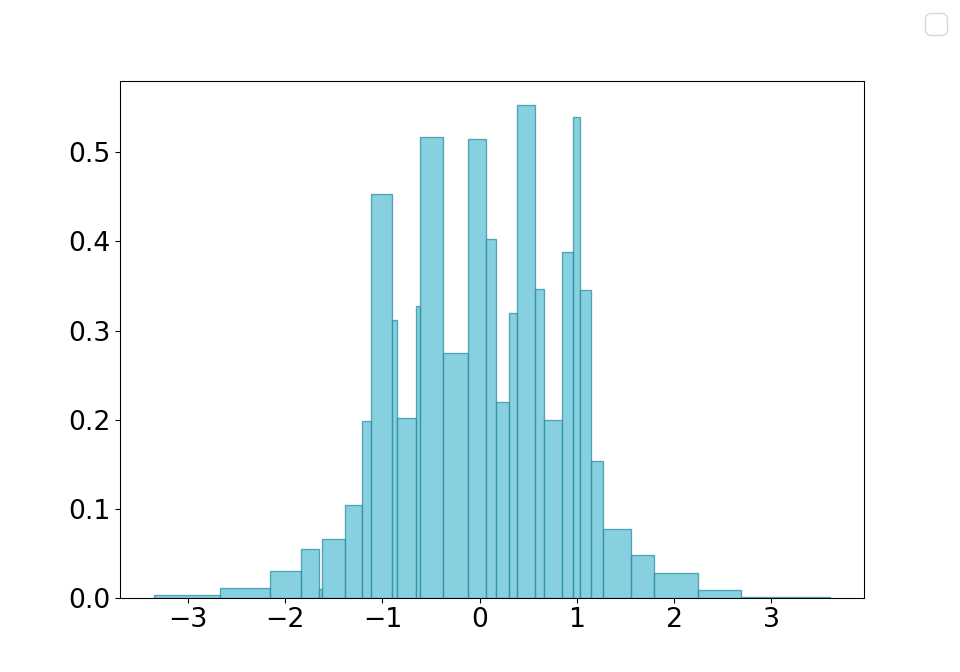}}%
	\hfill
	\subfloat[\GENUMname ($K^*=43$)]
  {\includegraphics{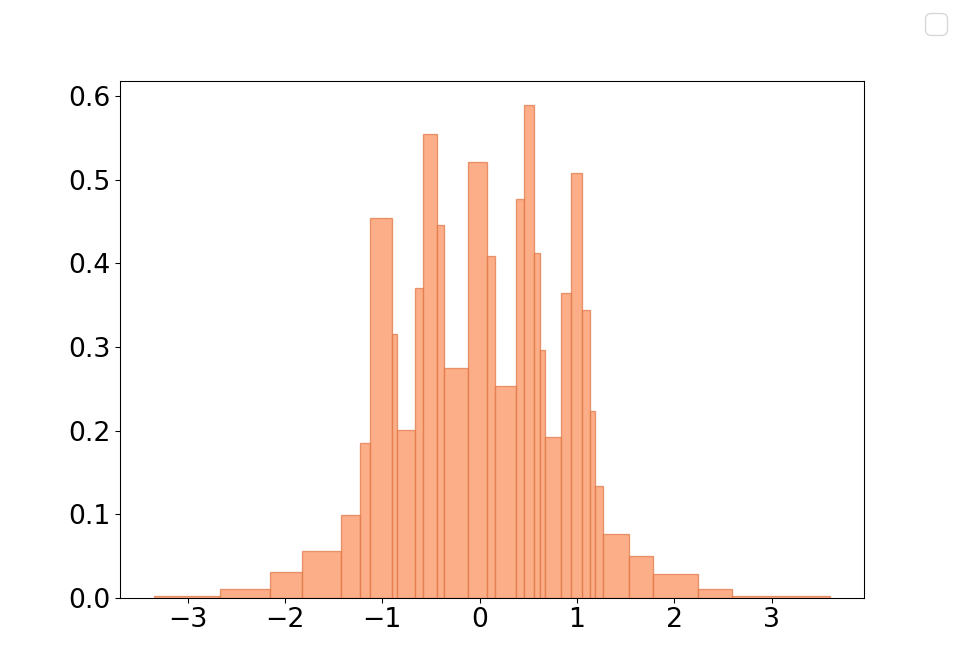}}%
	\hfill
	\subfloat[Bayesian blocks ($K^*=46$)]
  {\includegraphics{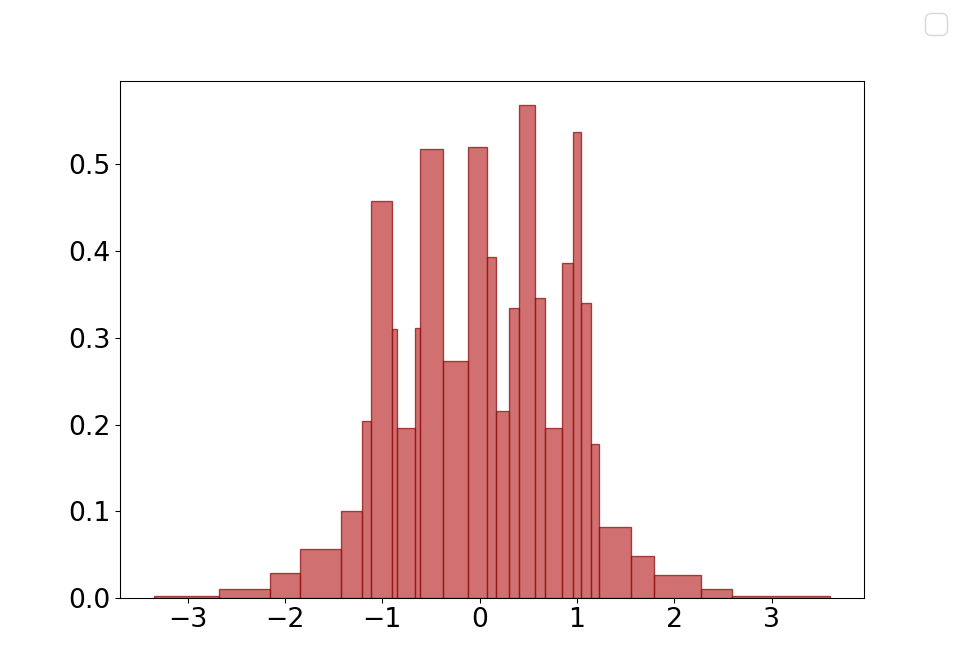}}%
	\hfill
	\subfloat[Taut string ($K^*= 205$)]
  {\includegraphics{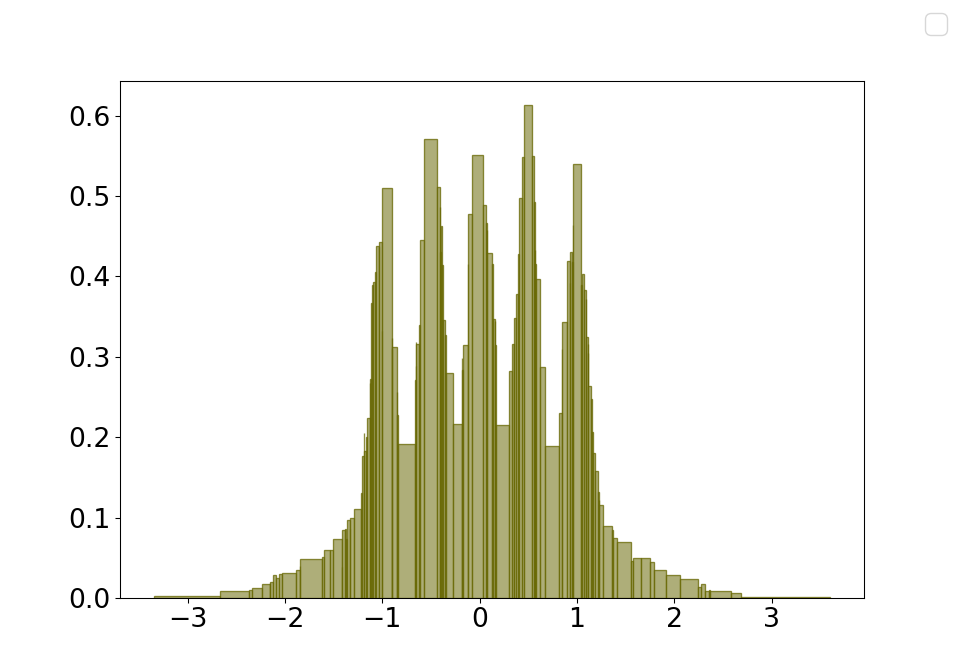}}%
	\hfill
	\subfloat[RMG ($K^*=50$)]
  {\includegraphics{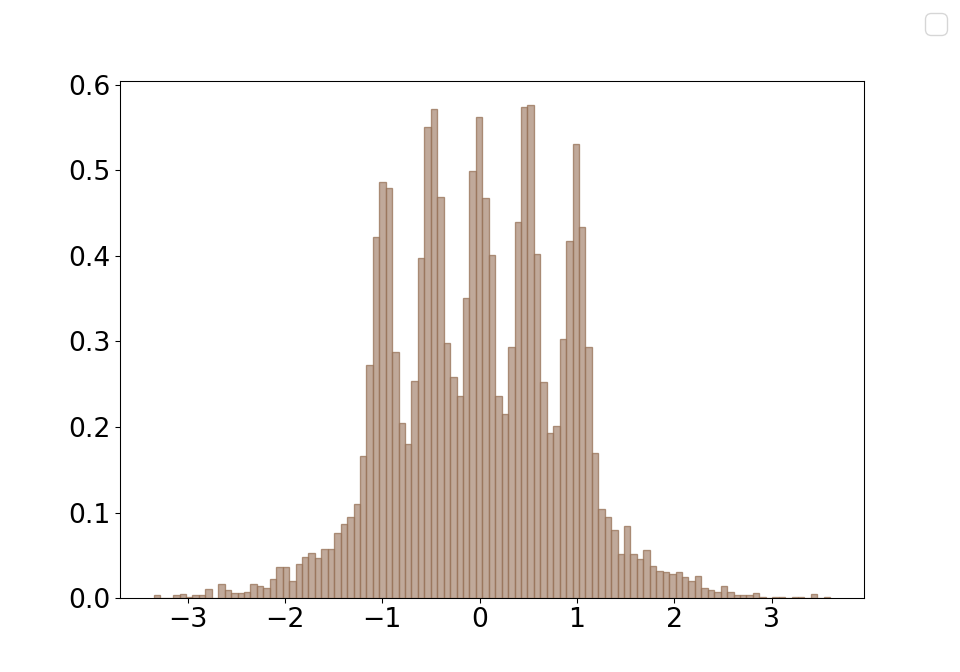}}%
	\hfill
	\subfloat[Sturges ($K^*=15$)]
  {\includegraphics{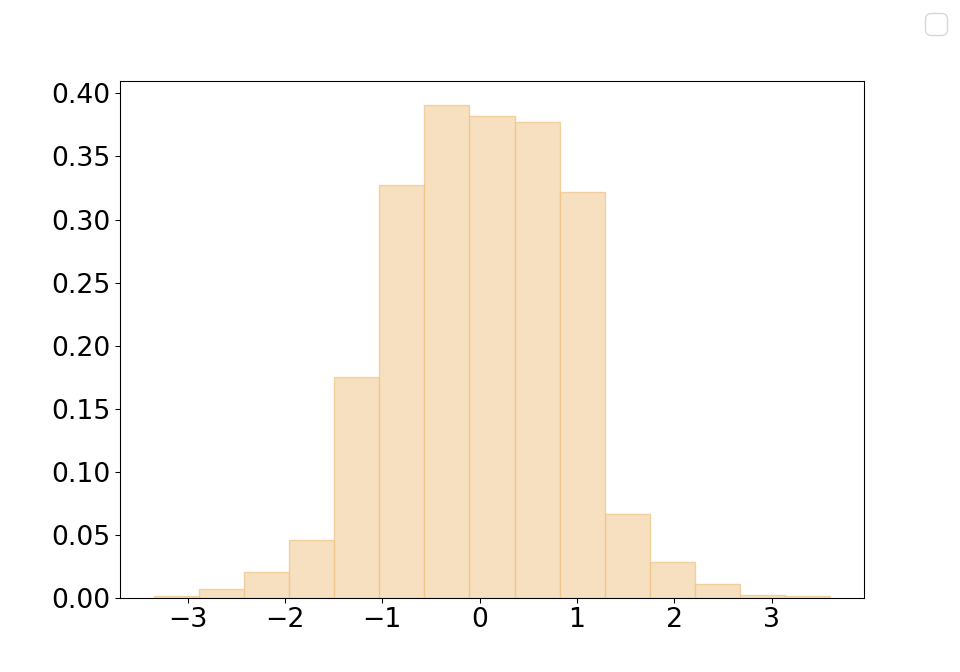}}%
	\hfill
	\subfloat[Freedman-Diaconis ($K^*= 24$)]
  {\includegraphics{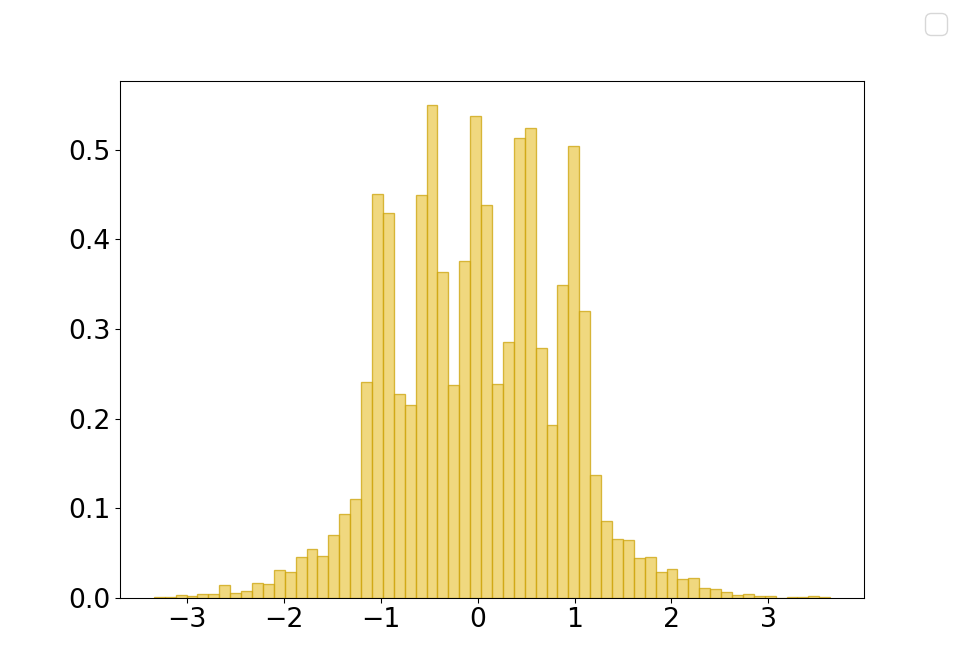}}
\end{figure}

\begin{figure}
\centering
\caption{Comparison with state-of-the-art methods over the Claw Gaussian mixture of different sample sizes \label{fig:comparison-others-claw}}
\setkeys{Gin}{width=0.3\textwidth}
\subfloat[Number of intervals,
          \label{fig:intervals-others-claw}]{\includegraphics{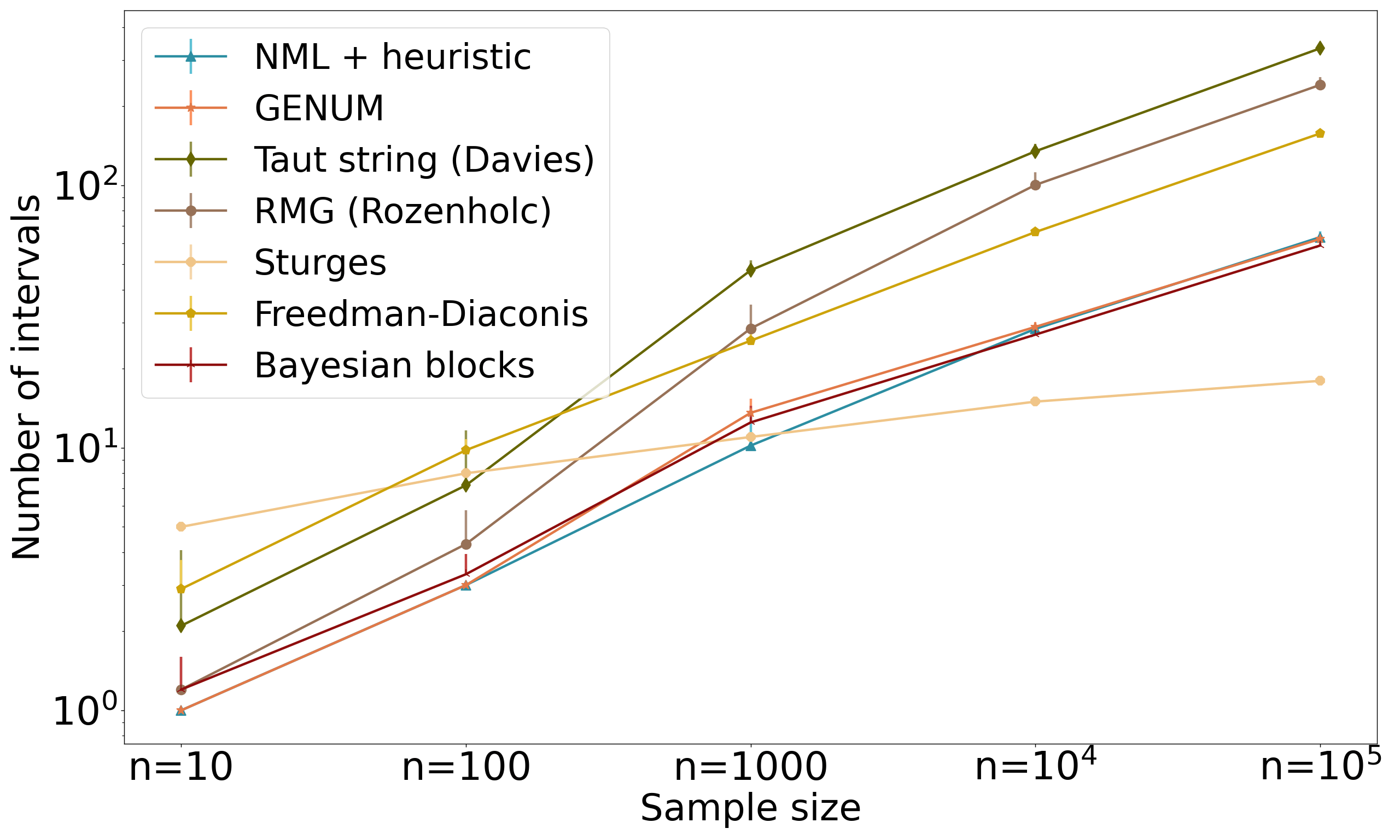}}
    \hfill
\subfloat[Computation time,
          \label{fig:time-others-claw}]{\includegraphics{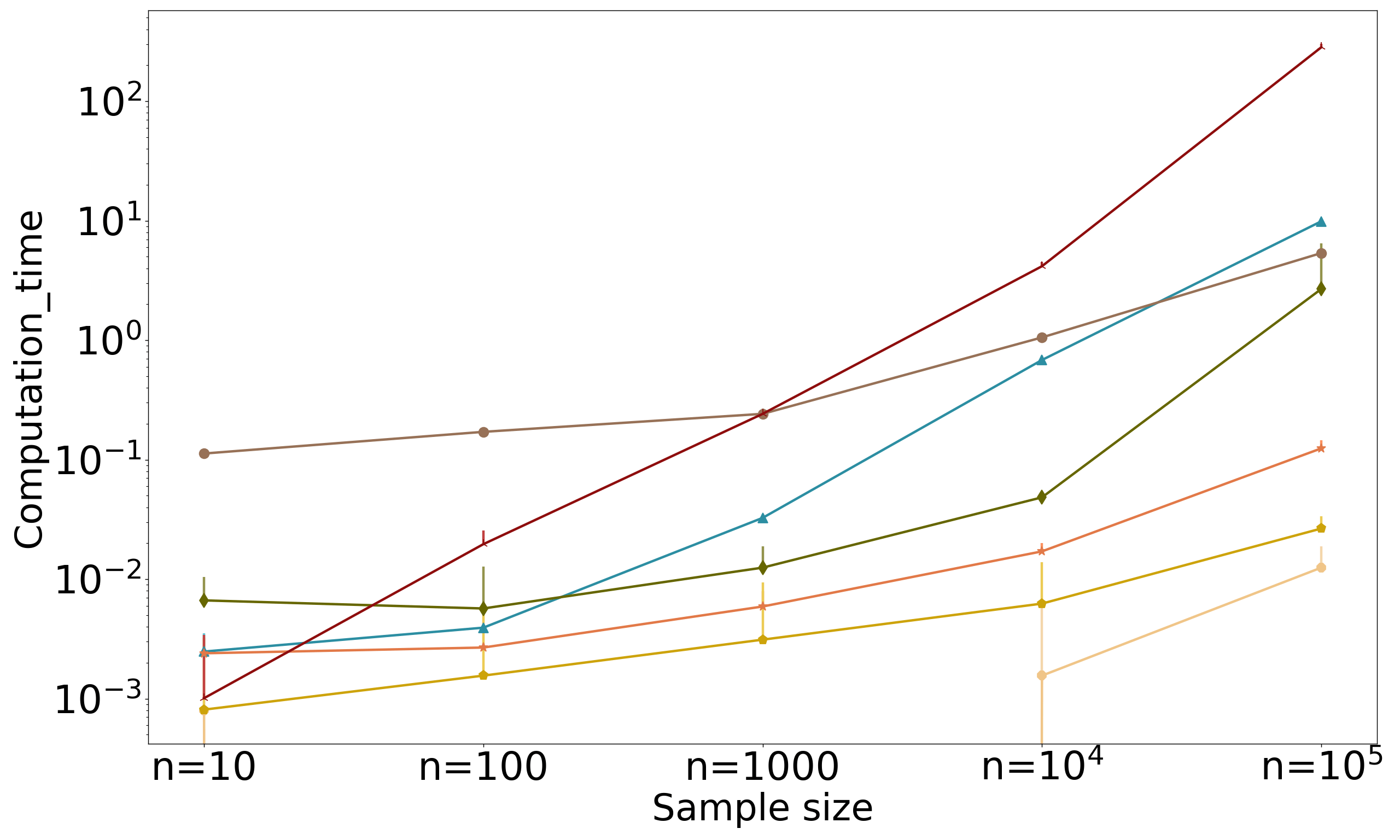}}
    \hfill
\subfloat[Hellinger distance,
\label{fig:hd-others-claw}]{\includegraphics{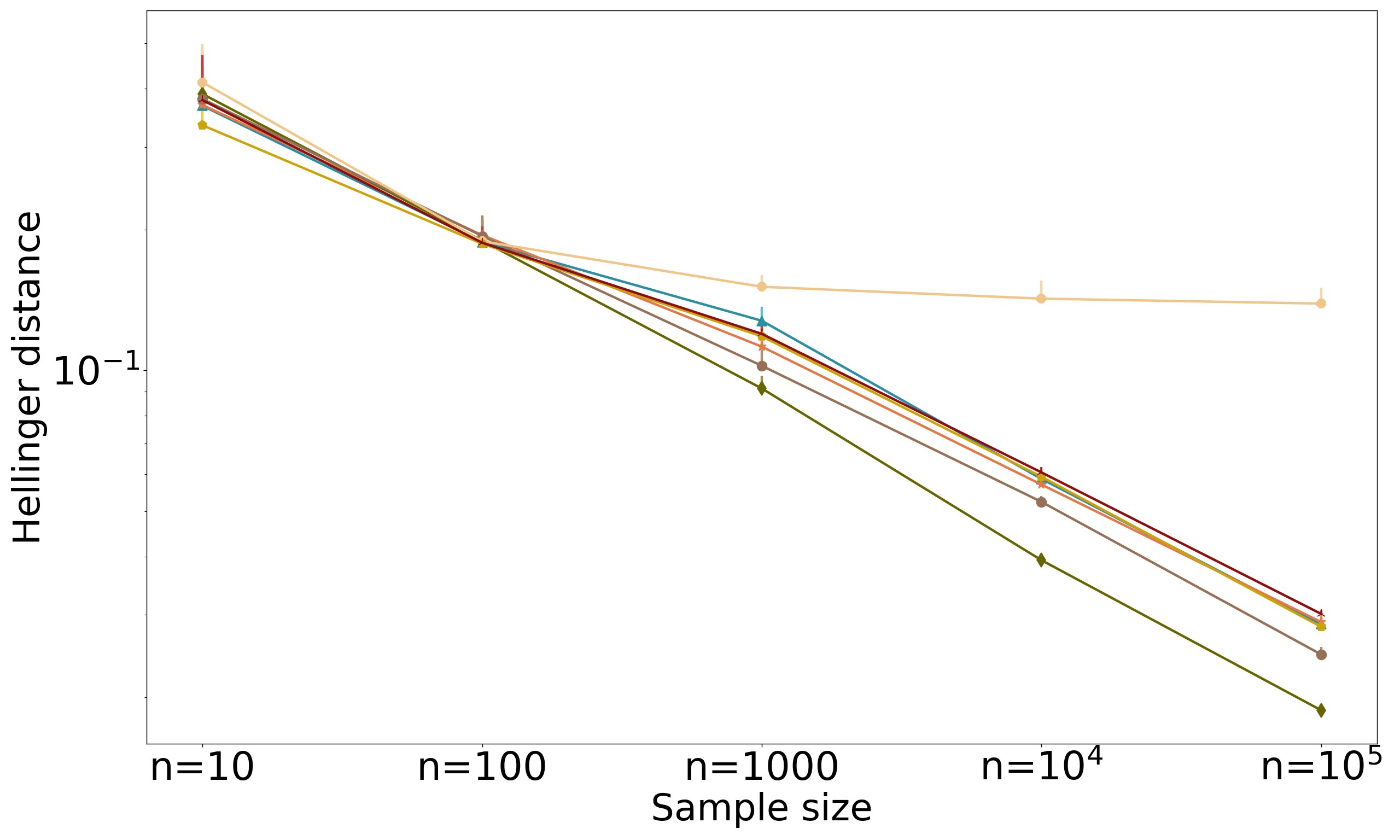}}
\end{figure}

\FloatBarrier
\mbox{}
\clearpage
        
\end{document}